%% file: arxivsubmission.tex
\documentclass{article}

\usepackage{PRIMEarxiv}

\usepackage[utf8]{inputenc}
\usepackage[T1]{fontenc}
\usepackage{microtype}

\usepackage{amsmath,amssymb,amsfonts,amsthm}
\usepackage{mathtools}
\usepackage{physics}
\usepackage{cases}
\usepackage{nicefrac}

\usepackage[dvipsnames,table]{xcolor}   % load ONCE, with all options
\usepackage{graphicx}
\graphicspath{{media/}}
\usepackage{pgfplotstable}              % loads pgfplots, tikz and pgf

\usepackage{booktabs}
\usepackage{multirow}
\usepackage{float}
\usepackage{caption}
\usepackage{subcaption}

\usepackage{algorithm}
\usepackage{algorithmic}
\usepackage[inline]{enumitem}
\usepackage[normalem]{ulem}
\usepackage{etoc}                       % table of contents for the appendix only

\usepackage[numbers]{natbib}
\renewcommand{\bibsection}{\section*{References}}

\usepackage{url}
\usepackage{hyperref}

\input{math_commands.tex}

\input{macros}

\newcommand{\figroot}{SOMA_exp2}
\newcommand{\curveroot}{paper_figures_v2}
\newcommand{\curvesroot}{atlas_opt_cpt_curves}
\input{\figroot/common/preamble.tex}

\title{Sharpness-aware Orthogonalized Momentum Algorithms}

\author{%
  Geetika\textsuperscript{\mdseries =,\,1,\,$*$},
  Shivani~Singh~Chauhan\textsuperscript{\mdseries =,\,2}, Gauranshi~Gupta\textsuperscript{\mdseries =,\,1,\,$\dagger$}, Anant~Jain\textsuperscript{\mdseries =,\,1,\,$\dagger$},\\
  \bfseries Kaushik~Kalyanaraman\textsuperscript{\mdseries 2},
  Bapi~Chatterjee\textsuperscript{\mdseries 1,\,$\dagger$}\\[0.5ex]
  \textsuperscript{1}Department of Computer Science and Engineering \quad
  \textsuperscript{2}Department of Mathematics \\
  Indraprastha Institute of Information Technology, Delhi, India.\\
  \texttt{\{geetikai, shivanic, gauranshig, anantj, kaushik, bapi\}@iiitd.ac.in}
}
\begin{document}
\maketitle

%% ---------- Definitions of the superscripts for the author block ----------
\begin{NoHyper}
{\renewcommand{\thefootnote}{=}%
 \footnotetext{~These authors contributed equally to this work.}%
 \renewcommand{\thefootnote}{$*$}%
 \footnotetext{Supported in part by the INSPIRE fellowship No.\ DST/INSPIRE Fellowship/IF220579 offered by the Department of Science \& Technology (DST), Government of India.}%
 \renewcommand{\thefootnote}{$\dagger$}%
 \footnotetext{Supported in part by the Infosys Centre for Artificial Intelligence (CAI) at Indraprastha Institute of Information Technology, Delhi (IIIT-Delhi).}}
\end{NoHyper}

\begin{abstract}
  \input{sections/abstract}
\end{abstract}

\keywords{Sharpness-aware minimization \and Muon \and Heavy-tailed noise \and Large language models}

\etocdepthtag.toc{mtmain}               % tag: main-text sections (hidden from the appendix TOC)

\section{Introduction}\label{sec:introduction}
\input{sections/intro}

\section{Related Work}\label{sec:relatedwork}
\input{sections/relatedwork}

\section{Problem Formulation}\label{sec:preliminaries}
\input{sections/prelim}

\section{Algorithm}\label{sec:algorithm}
\input{sections/algo}

\section{Convergence Analysis}\label{sec:convergence}
\input{sections/convergence}

\section{Experiments}\label{sec:experiments}
\input{sections/experiments}

\section{Conclusion}\label{sec:conc}
\input{sections/conclusion}

\section*{AI use statement}
In this work, we used a generative AI tool to help refine hypotheses, provide feedback on research methodology or experiments, and implement methods. We have not used generative AI tools for proving theorems, formulating mathematical claims, providing critical ingredients for proving mathematical claims, assisting in the writing of proofs, developing theoretical models or conceptual frameworks, interpreting results, and assisting with translation, cleaning, and reformatting the dataset, supporting qualitative and thematic data analysis; these are not applicable to this work.
Additionally, we used generative AI tools to draft parts of a research paper, summarize or analyze existing literature, discover research topics or identify gaps, brainstorm, source or search for information, and edit a research paper to improve readability and identify relevant literature. We have reviewed all AI-assisted work, including reading and reviewing all the suggested related papers. We take responsibility for the final content of this work,
including text, claims, or artifacts produced with the aid of generative AI.

\bibliographystyle{plainnat}
\bibliography{refs}

%% ================= APPENDIX =================
\clearpage
\appendix
\etocdepthtag.toc{mtappendix}           % tag: appendix sections

\begin{center}
  {\Large\bfseries Sharpness-aware Orthogonalized Momentum Algorithms:\\[2pt] Supplementary Material}
\end{center}

\etocsettagdepth{mtmain}{none}          % hide main-text sections
\etocsettagdepth{mtappendix}{subsection} % show appendix sections and subsections
\renewcommand{\contentsname}{Contents of the Appendix}
\tableofcontents
\clearpage

\section{Variants of \sam + Muon}
\label{app:algo}
\input{sections/variants}

\section{Convergence Analysis under Heavy-Tailed Gradient Noise}
\input{sections/heavy}

\section{Convergence Analysis under Standard Bounded-Variance Assumption}
\input{sections/Non-heavy_tailed}

\section{SOMA Analysis}
\input{sections/SOMA_Lemma}

\section{Additional Heavy-tailed Experiments}\label{app:isotropic}
\input{isotropic_heavy_tailed_appendix}

\section{Additional LLM Experiments}\label{app:llm}
\input{sections/llmexperiment}

\end{document}

%% file: math_commands.tex
\usepackage{amsmath,amsfonts,bm}

\def\eqref#1{equation~\ref{#1}}
\def\1{\bm{1}}

\DeclareMathAlphabet{\mathsfit}{\encodingdefault}{\sfdefault}{m}{sl}
\SetMathAlphabet{\mathsfit}{bold}{\encodingdefault}{\sfdefault}{bx}{n}

\def\gF{{\mathcal{F}}}

\def\gO{{\mathcal{O}}}

\def\gV{{\mathcal{V}}}

\def\sR{{\mathbb{R}}}

\newcommand{\E}{\mathbb{E}}

\newcommand{\R}{\mathbb{R}}

\DeclareMathOperator*{\argmax}{arg\,max}

\usepackage{amsmath}

%% file: macros.tex
\usepackage{xspace,xcolor}
\definecolor{LightCyan}{rgb}{0.88,1,1}
\definecolor{Ao}{rgb}{0.0, 0.5, 0.0}
\definecolor{cadmiumorange}{rgb}{0.93, 0.53, 0.18}
\definecolor{cardinal}{rgb}{0.77, 0.12, 0.23}
\definecolor{byzantium}{rgb}{0.44, 0.16, 0.39}
\definecolor{gamboge}{rgb}{0.89, 0.61, 0.06}
\definecolor{goldenbrown}{rgb}{0.6, 0.4, 0.08}

\newcommand{\sgd}{\textsc{SGD}\xspace}

\newcommand{\adam}{\textsc{Adam}\xspace}

\newcommand{\sam}{\textsc{Sam}\xspace}
\newcommand{\fsam}{\textsc{F{-}Sam}\xspace}

\newcommand{\specsammuon}{\textsc{Ssm}\xspace}
\newcommand{\randsammuon}{\textsc{Rsm}\xspace}
\newcommand{\soma}{\textsc{Soma}\xspace}
\newcommand{\opsoma}{\textsc{OP{-}Soma}\xspace}
\newcommand{\opsomapre}{\textsc{Opspre}\xspace}
\newcommand{\opsomapost}{\textsc{Opspost}\xspace}
\newcommand{\samadamw}{\textsc{Saw}\xspace}
\newcommand{\fsamadamw}{\textsc{Fsaw}\xspace}
\newcommand{\muon}{\textsc{Muon}\xspace}
\newcommand{\fpsoma}{\textsc{FP{-}Soma}\xspace}
\newcommand{\adamw}{\textsc{AdamW}\xspace}
\newcommand{\erm}{\textsc{ERM}\xspace}
\newcommand{\ptq}{\textsc{PTQ}\xspace}
\newcommand{\ema}{\textsc{EMA}\xspace}
\newcommand{\msoma}{\textsc{MSOMA}\xspace}
\newcommand{\msam}{\textsc{MSAM}\xspace}
\newcommand{\saw}{\textsc{SAW}\xspace}
\newcommand{\fsaw}{\textsc{FSAW}\xspace}
\newcommand{\Stwosam}{\textsc{S$^2${-}Sam}\xspace}

\newcommand{\tn}{\stackrel{\sim}{\smash{{\nabla}}\rule{0pt}{1.1ex}}\kern-1ex}

\makeatletter
\newcommand{\nosemic}{\renewcommand{\@endalgocfline}{\relax}}% Drop semi-colon ;
\newcommand{\dosemic}{\renewcommand{\@endalgocfline}{\algocf@endline}}% 
\let\oldnl\nl% Store \nl in \oldnl
\newcommand{\nonl}{\renewcommand{\nl}{\let\nl\oldnl}}% Remove line number for 
\theoremstyle{plain}
\newtheorem{theorem}{Theorem}
\newtheorem{corollary}{Corollary}%[section]
\newtheorem{definition}{Definition}%[section]
\newtheorem{lemma}{Lemma}%[section]
\theoremstyle{definition}
\newtheorem{remark}{Remark}%[section]
\newtheorem{assumption}{Assumption}%[section]
\makeatother

\theoremstyle{plain}
\newtheorem*{theorem*}{Theorem}

\theoremstyle{definition}

\theoremstyle{remark}
\newcommand{\G}{\nabla}

\renewcommand{\abs}[1]{\left\lvert #1 \right\rvert}
\renewcommand{\innerproduct}[2]{\left\langle #1, #2 \right\rangle}

\newcommand{\normF}[1]{\left\lVert #1 \right\rVert_F}
\newcommand{\normop}[1]{\left\lVert #1 \right\rVert_{\mathrm{op}}}
\newcommand{\normnuc}[1]{\left\lVert #1 \right\rVert_*}

\newcommand{\normtwo}[1]{\left\lVert #1 \right\rVert_2}

\newcommand{\innerF}[2]{\left\langle #1, #2 \right\rangle_F}

\usepackage{tikz}
\usetikzlibrary{calc,trees,positioning,arrows,chains,shapes.geometric,%
    decorations.pathreplacing,decorations.pathmorphing,shapes,%
    matrix,shapes.symbols}

\usepackage{tikz}

\tikzstyle{startstop} = [rectangle, draw, rounded corners, align=center, minimum width=3cm, minimum height=1cm,text centered]
\tikzstyle{decision} = [diamond, draw, fill=blue!20,
text width=4.5em, text badly centered, client distance=3cm, inner sep=0pt]
\tikzstyle{block} = [rectangle, draw, fill=blue!10, align=center, rounded corners, minimum width=3cm, minimum height=1cm]
\tikzstyle{blockcast} = [rectangle, draw, fill=red!10, align=center, rounded corners, minimum width=3cm, minimum height=0.45cm]
\tikzstyle{line} = [draw, -latex']
\tikzstyle{cloud} = [draw, ellipse,fill=red!20, client distance=3cm,
minimum height=2em]

\DeclareMathOperator{\ns}{Newton-Schulz}
\DeclareMathOperator{\ortho}{Ortho}

\newcommand{\projperp}[2]{\operatorname{Proj}^{\perp}_{#1}{(#2)}}
\newcommand{\NSQ}{\operatorname{NS}_q}

%% file: atlas_opt_cpt_curves/common/preamble.tex
\usepgfplotslibrary{groupplots}
\pgfplotsset{compat=1.17}
\providecommand{\curvesroot}{atlas_opt_cpt_curves}  % folder that contains data/
\providecommand{\fitwidth}[1]{\resizebox{\ifdim\width>\textwidth\textwidth\else\width\fi}{!}{#1}}  % shrink to the text width only if wider

\definecolor{ccv-fsam-ortho-muon-stale-momentum}{HTML}{0B5D1E}
\tikzset{cv-fsam-ortho-muon-stale-momentum/.style={color=ccv-fsam-ortho-muon-stale-momentum, solid, line width=1.1pt}}
\definecolor{ccv-fsam-ortho-muon}{HTML}{33CC33}
\tikzset{cv-fsam-ortho-muon/.style={color=ccv-fsam-ortho-muon, solid, line width=1.1pt}}
\definecolor{ccv-msoma}{HTML}{8FD694}
\tikzset{cv-msoma/.style={color=ccv-msoma, loosely dashed, line width=0.8pt}}
\definecolor{ccv-soma-prens5}{HTML}{9ACD32}
\tikzset{cv-soma-prens5/.style={color=ccv-soma-prens5, densely dotted, line width=1.1pt}}
\definecolor{ccv-op-soma-postns5}{HTML}{004D40}
\tikzset{cv-op-soma-postns5/.style={color=ccv-op-soma-postns5, dash dot, line width=1.1pt}}
\definecolor{ccv-muon-sam}{HTML}{00B3A4}
\tikzset{cv-muon-sam/.style={color=ccv-muon-sam, solid, line width=0.8pt}}
\definecolor{ccv-randsam-muon}{HTML}{808000}
\tikzset{cv-randsam-muon/.style={color=ccv-randsam-muon, dashed, line width=0.8pt}}
\definecolor{ccv-muon}{HTML}{111111}
\tikzset{cv-muon/.style={color=ccv-muon, dashed, line width=0.7pt}}
\definecolor{ccv-sam}{HTML}{0A3A8A}
\tikzset{cv-sam/.style={color=ccv-sam, solid, line width=0.8pt}}
\definecolor{ccv-fsam}{HTML}{2F80D8}
\tikzset{cv-fsam/.style={color=ccv-fsam, solid, line width=0.8pt}}
\definecolor{ccv-msam}{HTML}{7DB9F2}
\tikzset{cv-msam/.style={color=ccv-msam, solid, line width=0.8pt}}
\definecolor{ccv-sgd}{HTML}{444444}
\tikzset{cv-sgd/.style={color=ccv-sgd, dotted, line width=0.8pt}}
\definecolor{ccv-sam-adam}{HTML}{8E0F1C}
\tikzset{cv-sam-adam/.style={color=ccv-sam-adam, solid, line width=1.1pt}}
\definecolor{ccv-fsam-adam}{HTML}{F4623A}
\tikzset{cv-fsam-adam/.style={color=ccv-fsam-adam, densely dashed, line width=1.4pt}}
\definecolor{ccv-adam}{HTML}{8A8A8A}
\tikzset{cv-adam/.style={color=ccv-adam, dash dot, line width=0.7pt}}
\newcommand{\curveslegend}{%
\begin{tikzpicture}
\begin{axis}[hide axis, scale only axis, width=0.9300\textwidth, height=1pt, xmin=0, xmax=1, ymin=0, ymax=1,
    legend columns=8, legend cell align=left,
    legend style={at={(0.5,0)}, anchor=south, draw=none, fill=none, font=\scriptsize, inner sep=1pt,
      /tikz/every even column/.append style={column sep=5pt}}]
  \addlegendimage{cv-muon}\addlegendentry{\muon}
  \addlegendimage{cv-muon-sam}\addlegendentry{\specsammuon}
  \addlegendimage{cv-fsam-ortho-muon}\addlegendentry{\fpsoma}
  \addlegendimage{cv-fsam-ortho-muon-stale-momentum}\addlegendentry{\soma}
  \addlegendimage{cv-randsam-muon}\addlegendentry{\randsammuon}
  \addlegendimage{cv-soma-prens5}\addlegendentry{\opsomapre}
  \addlegendimage{cv-op-soma-postns5}\addlegendentry{\opsomapost}
  \addlegendimage{cv-msoma}\addlegendentry{\msoma}
  \addlegendimage{cv-adam}\addlegendentry{\adamw}
  \addlegendimage{cv-sam-adam}\addlegendentry{\saw}
  \addlegendimage{cv-fsam-adam}\addlegendentry{\fsaw}
  \addlegendimage{cv-sgd}\addlegendentry{\sgd}
  \addlegendimage{cv-sam}\addlegendentry{\sam}
  \addlegendimage{cv-fsam}\addlegendentry{\fsam}
  \addlegendimage{cv-msam}\addlegendentry{\msam}
\end{axis}
\end{tikzpicture}}

\pgfplotsset{
  curve axis/.style={width=\columnwidth, height=0.72\columnwidth, grid=major, grid style={gray!20}, xlabel={Step}, tick align=outside,
    tick label style={font=\scriptsize}, x tick label style={/pgf/number format/.cd, 1000 sep={}, /tikz/.cd, font=\tiny, rotate=45, anchor=east, inner sep=1pt}, label style={font=\footnotesize}, title style={font=\footnotesize}, scaled y ticks=false, scaled x ticks=false},
  grid curve axis/.style={width=0.34\textwidth, height=0.255\textwidth, grid=major, grid style={gray!20}, tick align=outside,
    tick label style={font=\scriptsize}, x tick label style={/pgf/number format/.cd, 1000 sep={}, /tikz/.cd, font=\tiny, rotate=45, anchor=east, inner sep=1pt}, label style={font=\footnotesize}, title style={font=\footnotesize, yshift=-2pt}, scaled y ticks=false, scaled x ticks=false},
}

%% file: sections/abstract.tex
Adversarial sharpness penalty per iteration of a base optimizer, known as sharpness-aware minimization (\sam), is an accepted technique to improve generalization in deep learning. Recently, \muon has emerged as a dominant optimization algorithm for training large language models (LLMs). It achieves lower sample complexity for heavy-tailed gradient updates -- a phenomenon common in LLM training. However, sharpness penalties with \muon updates are yet largely unexplored in the literature. Moreover, the behavior of \sam under heavy-tailed noise is unknown both theoretically and empirically. A potential reason for this gap is the additional cost of extra gradient computation per iteration for penalty perturbations in \sam, which becomes prohibitive for LLMs.

This paper introduces Sharpness-aware Orthogonalized Momentum Algorithm (\soma) to combine a computationally efficient and theoretically grounded perturbation scheme with the \muon algorithm. The key idea is to use stale (purely) stochastic gradient noise for adversarial perturbations. Theoretically, we show that \soma's sample complexity is invariant to the perturbation direction. We provide explorations and insights into \soma's robustness to heavy-tailed gradient noise. We extensively explore \soma's efficacy in (a) mitigating quality degradation due to post-training quantization, and (b) catastrophic forgetting during fine-tuning of LLMs. The code is available at \url{https://github.com/dcll-iiitd/soma}.

%% file: sections/intro.tex
Consider the problems of empirical risk minimization (\erm) for training a deep learning model:
\begin{align*}
    \min_{W \in \mathbb{B}} \left\{ f(W) \coloneq \frac{1}{N} \sum_{i = 1}^{N} f(W; \xi_i) \right\},
\end{align*}
where $W$ are parameters in a Banach space $\mathbb{B}$ such as $\R^{d}$ or $\R^{m \times n}$ and $\xi_i$ is the $i$-th sample drawn from a training set $\mathcal{S} = \left\{ X_i, Y_i \right\}_{i = 1}^{N}$. The basic approach for solving \erm is stochastic gradient descent (\sgd):
\[
W_{t + 1} = W_t -  \frac{\eta_t}{\abs{\mathcal{B}_t}} \sum_{\xi_i \in \mathcal{B}_t} \nabla f(W_t; \xi_i),
\]
where $\mathcal{B}_t \subset \mathcal{S}$ is a stochastic mini-batch, $\eta_t > 0$ is the learning rate and $W_t$ the parameter both at iteration $t$ \cite{robbins1951a,bottou2008the}.

Beyond \erm, an enduring pursuit in deep learning is to achieve better generalization, that is, performance on unseen data. For large language models (LLMs), where \erm happens in phases of pre-training and fine-tuning, mitigating issues such as catastrophic forgetting due to fine-tuning \cite{springer2025overtrained} and performance degradation after post-training quantization (\ptq) \cite{kumar2025scaling} are both important endeavours beyond simply a basic generalization of the pretrained model, and they have been found to relate to model training dynamics \cite{watts2026sharpness,catalantatjer2026training}.

The loss landscape geometry of training trajectory, especially that of flat minima, is well known to have connections to a model's generalization ability \cite{keskar2017improving,dziugaite2017computing,li2018visualizing}. Motivated by the idea of landing at a flatter minima, \citet{foret2021sharpness} introduced sharpness-aware minimization (\sam) to \sgd updates:
\begin{equation}
W_{t + 1} = W_t - \dfrac{\eta_t}{\abs{\mathcal{B}_t}} \sum_{\xi_i \in \mathcal{B}_t} \nabla f(W_t + E_t; \xi_i),
\end{equation}
where $E_t \coloneq \rho \, G_t/\normtwo{G_t}$ is a sharpness penalty perturbation in which $\rho > 0$ is a penalty parameter and $G_t \coloneq \frac{1}{\abs{\mathcal{B}_t}} \sum_{\xi_i \in \mathcal{B}_t} \nabla f(W_t, \xi_i)$. \sam has demonstrated considerable empirical successes \cite{bahri2021sharpness,chen2021when,liu2022towards}.

Naturally, these successes of \sam attracted both empirical and theoretical research \cite{bahri2021sharpness,chen2023why,wen2023how,schapiro2024towards}, and inspired a large family of variants largely motivated by two obvious goals: (a) finding ${E}_t$ that better captures sharpness \cite{kwon2021asam,zhuang2022surrogate}, and (b) reducing the cost of the additional gradient evaluation required for ${E}_t$ \cite{liu2022towards,du2022efficient}; we survey them with more details in Appendix~\ref{sec:relatedwork}. \fsam \cite{li2024friendly} noted that the full gradient direction degrades generalization and proposed ${E}_t$ along the pure stochastic noise, computed by subtracting a scaled exponentially moving average (\ema) of stochastic gradients from a fresh mini-batch gradient, and observing that \ema estimates the full gradient. Nevertheless, the variants of \sam have largely focused on improving \sgd as the base optimizer empirically \cite{kwon2021asam} as well as theoretically \cite{khanh2024fundamental}, but with very limited inclusion of adaptive preconditioned updates and without any backing convergence theory \cite{singh2025avoiding}. In the meantime, \adam \cite{kingma2014adam} and \adamw \cite{loshchilov2017decoupled} have become the mainstream optimizers for LLMs.

\muon \cite{jordan2024muon}, which orthogonalizes the momentum of matrix-valued parameters (see Section~\ref{sec:preliminaries}), has also quickly become one of the most sought-after optimizers for LLM training: it scales to LLM pre-training \cite{liu2025muon,shah2025practical}, it was used to pre-train open-weight frontier models \cite{kimiteam2025kimi,glmteam2025glm}, and its convergence is now well understood, including under heavy-tailed gradient noise \cite{kim2026convergence,hbler2026free,iiduka2026muon}. In many cases, \muon remains competitive to \adamw when fine-tuning \adamw-pretrained LLMs, while reducing the required resources significantly \cite{liu2025muon,page2025muonall}. A \muon-pretrained model fine-tuned with \muon outperforms its \adamw-fine-tuned counterpart \cite{liu2025muon} and, since switching the optimizer between pre-training and fine-tuning can erode acquired knowledge \cite{qu2026can} due to optimizer mismatch, the growing number of \muon-pretrained open-weight models makes \muon the natural optimizer for fine-tuning them. Yet, to the best of our knowledge, the only work combining \sam with \muon is the concurrent and unpublished work of \citet{zhong2026sharpness}, which pairs a spectral version of vanilla \sam ${E}_t$ with a \muon update and evaluates it empirically, without any convergence theory.

Furthermore, stochastic gradient noise in deep networks is known to be heavy-tailed \cite{simsekli2019a,hodgkinson2021multiplicative}, markedly so in attention-based language models \cite{zhang2020why,ahn2024linear}. In fact, the tail of the noise in \sgd was found to avoid sharp minima and settle in flatter ones \cite{wang2025global}, which is precisely the goal of \sam. Yet existing analyses of \sam assume either deterministic gradients \cite{adly2026stationarity} or stochastic gradients with bounded variance \cite{si2023practical}. To the best of our knowledge, no work has studied the empirical or theoretical performance of \sam under heavy-tailed noise.

From the preceding discussion, we identify three gaps at the intersection of \sam, \muon, and LLM training. First, \sam variants have been designed and analyzed almost exclusively around \sgd; their only pairing with \muon \cite{zhong2026sharpness} uses a vanilla \sam perturbation that doubles the per-iteration gradient cost and comes with no convergence guarantee. Second, although heavy-tailed gradient noise is pervasive in LLM training, it is unknown, empirically or theoretically, whether and when \sam remains beneficial in this regime. Third, it is open whether a sharpness-aware perturbation can be added to \muon at essentially no extra cost while retaining the convergence guarantees of \muon under heavy-tailed noise. We bridge all these gaps in our work.

In this paper, we propose \textbf{S}harp\-ness-aware \textbf{O}r\-tho\-go\-nal\-ized \textbf{M}o\-men\-tum \textbf{A}l\-go\-rithms, which build a spectral-friendly perturbation from quantities that \muon already holds, namely, the perturbed gradient of the previous iteration and the scaled \muon momentum. Effectively, it needs half the number of gradient evaluations than standard \sam approaches: $T+1$ gradient evaluations over $T$ iterations instead of the $2T$ of its fresh perturbation counterpart \fpsoma, a spectral adaptation of \fsam. We also propose \opsoma, which removes the component of the perturbation aligned with the previous \muon step (see Section~\ref{sec:algorithm}). We then prove that \soma reaches an $\epsilon$-stationary point (for some $\epsilon > 0$) in the nuclear norm under appropriate smoothness requirements and in the presence of heavy-tailed noise with a bounded $p$-th central moment, and our analysis leads us to an estimate with an irreducible floor of order $\rho^\nu$ due to the nonzero perturbation radius. Our guarantee holds for any perturbation direction as well as recovers the convergence rate of \muon at $\rho = 0$, and our analysis also shows that the reuse of the stale gradient costs little by way of accuracy (see Section~\ref{sec:convergence}). Empirically, on synthetic problems with a controlled tail index, the spectral variants reduce the optimality gap of \muon by up to $40\%$ under even the heaviest tails, whereas \sam and \fsam on \sgd degrade sharply. Additionally, the spectral perturbations outperform the Frobenius ones. We further evaluate all our proposed methods on $4$-bit \ptq and observe catastrophic forgetting in nanoGPT and Pythia-70M (see Section~\ref{sec:experiments}).

%% file: sections/relatedwork.tex
\paragraph{Sharpness-aware optimization.}
A line of work reduces the overhead of the two gradient evaluations that \sam requires per iteration: LookSAM recomputes ${E}_t$ only periodically \cite{liu2022towards}, ESAM perturbs a random subset of the weights and evaluates the perturbed loss only on sharpness-sensitive samples \cite{du2022efficient}, SSAM perturbs a sparse subset of parameters selected via the Fisher information \cite{mi2022make}, and SAMPa computes the two gradients in parallel \cite{xie2024sampa}. SAF and its memory-efficient version MESA \cite{du2022sharpness} store past iterations' weights to compute ${E}_t$ in a zero-order way. A second line refines how sharpness is measured: ASAM rescales ${E}_t$ by the parameter magnitudes to make sharpness scale-invariant \cite{kwon2021asam}, GSAM additionally minimizes the surrogate gap between the perturbed and the unperturbed loss \cite{zhuang2022surrogate}, and BiSAM recasts the min-max problem of \sam as a bilevel problem built on surrogates of the 0-1 loss \cite{xie2024improving}. Recently, MSAM \cite{becker2025momentum} proposed aligning ${E}_t$ to the negative momentum direction, thereby avoiding the extra gradient evaluation, by observing that the negative momentum direction increases the loss and thus serves as a free adversarial perturbation \cite{becker2025momentum}. MSAM is closer to \fsam on account of utilizing the negative momentum direction; however, they did not empirically or theoretically compare against \fsam \cite{li2024friendly}. A third line stabilizes the \sam trajectory: Lookahead-SAM gathers multi-step landscape information to avoid oscillations around saddle points \cite{yu2024improving} that end up paying a third gradient computation cost per iteration, \cite{tan2025stabilizing} renormalize the descent step so that its gradient norm matches that of the ascent step, and EISAM adopts an extragradient scheme to reduce the sensitivity to the perturbation radius $\rho$ \cite{fu2026leveraging}. This list is surely inexhaustive.

\paragraph{\muon and its variants.}
\muon \citep{jordan2024muon} orthogonalizes the momentum of each matrix-valued parameter. It is steepest descent under the spectral norm \citep{bernstein2024old}. Equivalently, it calls a linear minimization oracle (LMO) over a spectral-norm ball \citep{pethick2025training}. With decoupled weight decay, it implicitly solves a spectral-norm-constrained problem \citep{chen2025muon}. \citet{liu2025muon} scaled \muon to LLM pre-training with weight decay and per-parameter update scaling. MuonClip adds QK-clipping for stability at the trillion-parameter scale \citep{kimiteam2025kimi}. AdaMuon adds element-wise adaptivity \citep{si2025adamuon}. MuonEq equilibrates rows and columns before orthogonalization \citep{chang2026muoneq}. LiMuon cuts memory and variance \citep{huang2026limuon}. MuonAll extends \muon to all parameters for fine-tuning \citep{page2025muonall}. Scion and Gluon generalize \muon to layer-wise norm choices \citep{pethick2025training,riabinin2025gluon}. \citet{zhong2026sharpness} (yet unpublished) pair a layer-wise spectral \sam perturbation with a \muon outer update and report gains in vision classification.

\paragraph{Convergence under heavy-tailed noise.}
Heavy-tailed noise is commonly modeled by a bounded $p$-th central moment ($p$-BCM), $p\in(1,2]$. For \sgd, clipping attains the optimal $\mathcal{O}(\varepsilon^{-(3p-2)/(p-1)})$ complexity in the nonconvex setting \citep{zhang2020why}. High-probability bounds have followed for clipped methods \citep{cutkosky2021high,sadiev2023high,nguyen2023improved}. Normalization can replace clipping \citep{hubler2025from}, and normalized \sgd with momentum is optimal without clipping \citep{liu2025nonconvex}. Even vanilla \sgd converges under $p$-BCM, though at the slower rate $\mathcal{O}(\varepsilon^{-2p/(p-1)})$ in the nonconvex setting \citep{fatkhullin2025can}. Most \muon analyses still assume bounded variance. They cover exact and inexact orthogonalization \citep{li2025a,kim2026convergence,shulgin2025beyond}, variance-reduced momentum \citep{qian2025muon}, and the role of spectral flattening in step-size choice \citep{nguyen2026spectral}. Recent works extend \muon to $p$-BCM noise. \citet{iiduka2026muon} treat H\"older-smooth objectives. \citet{choudhury2026muon} analyze Nesterov momentum with inexact polar decomposition. \citet{hbler2026free} prove nuclear-norm stationarity without clipping and without the extra dimension dependence of Euclidean methods. None of these analyses includes a sharpness-aware perturbation.

%% file: sections/prelim.tex
\paragraph{Notation and setup.}

For a matrix-valued parameter $W \in \mathbb{R}^{m \times n}$, we consider the optimization problem $\min_W \left\{ f(W) \coloneq \E_{\xi_i} \left[ f(W; \xi_i) \right] \right\}$, where $\xi_i$ denotes a training sample. For \erm, the expectation is taken over the empirical distribution of the training set. We denote $r \coloneq \min \{ m, n \}$ to be the maximal possible rank of $W$. Given a minibatch $\mathcal{B}_t$ of size $B$ at the $t$-th iteration, we let $f_{\mathcal{B}_t}(W) \coloneq \frac{1}{B} \sum_{\xi_i \in \mathcal{B}_t} f(W; \xi_i)$ and $G_t \coloneq \nabla f_{\mathcal{B}_t}(W_{t - 1})$. We also denote the Frobenius, operator and nuclear norms of matrices by $\normF{\cdot}$, $\normop{\cdot}$, and $\normnuc{\cdot}$, respectively, and use $\innerF{A}{B} \coloneq \trace(A^\top B)$. We denote by $E_t$ the perturbation for the parameter $W$ and scale it by a radius (size) $\rho > 0$. Finally, we depict using lowercase symbols vectorized quantities, for example, $w_t = \operatorname{vec}(W_t)$ and $g_t = \operatorname{vec}(G_t)$. % The matrix $E_t$ denotes the actual parameter perturbation, with its radius scaling $\rho$.

\paragraph{Sharpness-aware minimization (\sam).}
Low training loss alone does not describe a model's
sensitivity to changes in its parameters. \sam
\citep{foret2021sharpness} addresses this sensitivity by seeking parameters whose neighbourhood also has a uniformly low loss for its vectorized parameters, that is,
\[
\min_w f_\rho^{\sam}(w) \quad \text{with} \quad
f_\rho^{\sam}(w) \coloneq \max_{\abs{\epsilon} \leq \rho} f(w+\epsilon),
\]
where $\epsilon$ is the perturbation constrained to be in a neighbourhood of radius $\rho \geq 0$ about $w$. \citet{foret2021sharpness} use the $\ell_2$-norm for the perturbation of parameters with $e = \rho \frac{g} {\normtwo{g}}$. Thus, \sam descends along a gradient as: %in the worst-case 
\[
w_{t + 1} = w_t - \eta_t \nabla f_{\mathcal{B}_t} \left( w + \rho \dfrac{g}{\normtwo{g}} \right).
\]
This two-step update doubles the per-iteration cost.

\paragraph{Friendly \sam (\fsam).} \fsam perturbs the parameter by decomposing the minibatch gradient into two orthogonal components: a full gradient and stochastic noise, and the perturbation is along the direction of the stochastic noise. For some $\gamma \geq 0$, \fsam \cite{li2024friendly} perturbs the current iterate using the residual as:
\begin{gather*}
v_t = \lambda v_{t - 1} + (1 - \lambda) \, g_t, \,\, \text{with} \,\, v_0 = 0, \,\, \lambda \in [0, 1),\\
r_t = g_t - \gamma \, v_t, \,\, \text{and} \,\, e_t^{\fsam} = \rho \dfrac{r_t}{\normtwo{r_t}}.
\end{gather*}

\paragraph{Momentum-\sam (\msam).} \citet{becker2025momentum} proposed a computationally efficient \sam variant. They avoid the second gradient evaluation of \sam by reusing a momentum buffer of the base optimizer \sgd with momentum. Perturbing in the negative momentum direction increases the loss, making it suitable for estimating sharpness. With $m_t=\beta m_{t-1}+g_t$ and $m_0=\mathbf{0}^{m\times n}$, their perturbation is given by:
\begin{gather*}
e_t^{\msam} = -\rho \, \dfrac{m_t}{\normtwo{m_t}},
\end{gather*}
following which the gradient is evaluated at $w_t + e_t^{\msam}$ and passed to the base update. Since $m_t$ is already computed and stored by the base optimizer, \msam requires only one gradient evaluation per iteration and no additional memory beyond that of SGD or Adam with momentum, and this is in contrast to \sam, which doubles the per-iteration cost.

\paragraph{\muon.}
\muon \citep{jordan2024muon} works for the matrix-structured layers $W \in \mathbb{R}^{m \times n}$ of a deep model as:
\[
M_t = \beta \, M_{t - 1} + G_t, \quad W_{t + 1} = W_t -\eta_t \ortho(M_t),
\]
with $M_0 \coloneq \mathbf{0}^{m \times n}$ and $\beta \in [0, 1)$. Here, $\ortho(A) \in \argmax_{\normop{D} \leq 1} \innerF{A} {D}$ is the steepest-descent direction for the linear model with gradient $A$ under an operator-norm constraint. Practical implementations approximate $\ortho{A}$ using $q$ steps of $\ns$ (denoted hereby as $\NSQ(A)$), thereby avoiding an explicit computation of a singular value decomposition (SVD) at each step. \muon preserves the singular directions but sets all nonzero singular values to be $1$.

\input{sections/main-algo}

\paragraph{Spectral \sam \muon (\specsammuon).}
For matrix blocks, \citet{zhong2026sharpness} proposed the spectral robust perturbed objective:
\[
f_\rho^{\specsammuon}(W) \coloneq \max_{\normop{E} \leq \rho} f(W + E).
\]
Linearizing the minibatch loss yields the perturbation $E_t^{\specsammuon} = \rho \,\ortho(G_t)$, which is approximated by computing $\rho \, \NSQ(G_t)$ instead. For multiple matrix blocks, the layer-wise constraint is taken to be: ${\max_\ell} {\normop{E^{(\ell)}} \leq \rho}$. Each layer receives its own spectral perturbation budget, and the maximum is taken across all layers rather than over individual matrix entries. \specsammuon thus evaluates
$\overline G_t = \nabla f_{\mathcal{B}_t}(W_t + E_t)$
and then uses the following update:
\[
M_t = \beta \, M_{t - 1} + \overline G_t, \quad W_{t + 1} = W_t - \eta_t \NSQ(M_t).
\]
As a result, both inner perturbations and outer updates are aware of the spectral geometry.

\paragraph{Summary and our motivation}
\specsammuon respects the spectral geometry but perturbs along the full mini-batch gradient $G_t$, whose full-gradient component \fsam is found to be unhelpful for generalization. \fsam removes this component, but then it only works in Euclidean geometry. Both methods evaluate a fresh gradient for the perturbation, doubling the per-iteration cost, whereas \msam reuses the momentum buffer to avoid this cost but does not account for the matrix geometry of its layers. Therefore, this motivates an optimizer that forms a low-cost, friendly residual while still working in spectral geometry. As an outcome of such a pursuit, we propose \soma, which we detail next. %in Section~\ref{sec:algorithm}.

%% file: sections/main-algo.tex
\begin{algorithm}[H]
    \caption{Sharpness-aware Orthogonalized Momentum Algorithms (\soma) Framework}
    \label{alg:soma framework}
    \begin{algorithmic}[1]
        \REQUIRE Given $\mathcal{S} = \left\{ X_i, Y_i \right\}_{i = 1}^{N}$; initial weights $W_0 \in \mathbb{R}^{m\times n}$; learning rate $\eta > 0$; momentum coefficient $\beta \in [0,1)$; perturbation radius $\rho \geq 0$; perturbation rule $\mathcal{P}_t$; batch size $B$; iterations $T$; $\ns$ iterations $q$.

        \STATE Initialize momentum $M_0 = \mathbf{0}$ and any auxiliary
        state required by $\mathcal{P}_t$.

        \FOR{$t = 1, \dots, T$}
        \STATE Sample a minibatch
        $\mathcal{B}_t \subset \mathcal S$.

        \STATE Construct perturbation: $E_t = \rho \, \mathcal P_t$.

        \STATE Compute the perturbed gradient:

        $\displaystyle \hfill \overline{G}_t = \dfrac{1}{B} \sum\limits_{\xi_i \in \mathcal B_t} \nabla f(W_{t - 1} + E_t; \xi_{i}). \hfill$

        \STATE Update momentum: $M_t = \beta \, M_{t - 1} + \overline{G}_t$.

        \STATE Orthogonalize momentum: $O_t = \NSQ(M_t)$.

        \STATE Update weights: $W_t = W_{t - 1} - \eta \, O_t$.

        \STATE Store any quantities required by the
        perturbation rule for the next iteration.

        \ENDFOR

        \RETURN $W_T$.
    \end{algorithmic}
\end{algorithm}

%% file: sections/algo.tex
Algorithm~\ref{alg:soma framework} provides a common framework for sharpness-aware \muon with varying perturbation schemes $\mathcal P_t$, which can be as simple as a random direction whose entries are drawn independently and identically distributed (i.i.d.)\ standard normal random variables, that is, %$(\mathcal P_t)_{ij} \overset{\mathrm{i.i.d.}}{\sim} 
$(\mathcal P_t)_{ij} \sim \mathcal{N}(0, 1)$. However, as noted in \cite{xie2024sampa} and also observed in our experiments, this easy choice does not work in practice. This is further underscored by numerous other works on efficient perturbation strategies for \sam.

\paragraph{\fpsoma.}
We first directly adapt the approach of \fsam~\citep{li2024friendly} to spectral orthogonalization and a \muon outer update, which we call freshly-perturbed-\soma (\fpsoma).
\begin{numcases}{\text{\fpsoma}:}
\displaystyle
G_t = \dfrac{1}{B} \sum_{i = 1}^{B} \nabla_W f(W_{t - 1}; \xi_{i}), \nonumber\\
V_t = \beta \, V_{t - 1} + (1 - \beta) \, G_t, \label{eq:fpsoma-perturbation}\\
E_t = \rho \, \NSQ{(G_t - \gamma \, V_t)}, \nonumber
\end{numcases}
where $V_0 \coloneq \mathbf{0}^{m \times n}$, and $\gamma \in (0, 1)$ controls the strength of EMA subtraction. The residual emphasizes deviations from the running gradient estimate, while orthogonalization determines the spectral geometry of this perturbation. The fresh gradient $G_t$ dominates the perturbation, and the subsequent perturbed gradient $\overline G_t$ drives the descent. \fpsoma is costly as it requires two gradient evaluations per iteration and a separate EMA buffer $V_t$.

\paragraph{\soma.}
To reduce the cost of \fpsoma, we construct friendly perturbations by reusing quantities available from the preceding update. Specifically, the last perturbed gradient $\overline G_{t - 1}$ replaces the fresh gradient, and the scaled \muon momentum $(1 - \beta) \, M_{t - 1}$ supplies an EMA of past perturbed gradients. At the first iteration, since no previous gradient is available, \soma uses the exact same initialization and perturbation as \fpsoma. In Lemma \ref{lem:lag}, we argue that \soma's updates are as good as those of \fpsoma. The complete rule for $t > 1$ is:
% \begin{equation}
% \label{eq:soma-perturbation}
\begin{numcases}{\text{\soma}:}
\displaystyle
E_t = \rho \, \NSQ{(\overline G_{t - 1} - \gamma \, V_t)}, \nonumber\\
V_t = (1 - \beta) \, M_{t - 1}, \label{eq:soma-perturbation}\\ %\text{ and }\\
\overline G_{t - 1} = \dfrac{1}{B} \sum_{i = 1}^{B} \nabla_W f(W_{t - 2} + E_{t - 1}; \xi_i). \nonumber
\end{numcases}
% \end{equation}
The factor $(1 - \beta)$ converts the \muon's momentum (see Line 6, Algorithm~\ref{alg:soma framework}) into an EMA. Thus, for $t > 1$, $V_t$ is obtained directly from the existing momentum buffer and a separate EMA is not required. This EMA also tracks gradients at past perturbed iterates, whereas in \fpsoma, it only tracks fresh gradients at unperturbed iterates.

Constructing $E_t$ for \soma thus requires no new gradient evaluations, thereby significantly reducing the cost to $T + 1$ gradient evaluations as compared with $2T$ gradient evaluations for \fpsoma. Nevertheless, $E_t$ of \fsam, \fpsoma, and \soma can all align with the previous descent direction $-O_{t - 1}$. While \sam intends the perturbation to be an adversarial step, this alignment suggests that $E_t$ may reinforce the previous optimization movement. Motivated by this observation, we propose \opsoma, which removes the component of the stale friendly residual along the previous \muon momentum.

\paragraph{Orthogonally Projected \soma (\opsoma) (Pre-$\NSQ$)} 
For each weight matrix, define the Frobenius-orthogonal projection by $\projperp{M}{X} \coloneq X - ( \innerF{X}{M}/\normtwo{M} ) M$. For $t > 1$, the perturbation is:
\begin{equation} \label{eq:osoma-perturbation-prensq}
\begin{aligned}
\opsoma:\begin{cases}
R_t &= \overline G_{t - 1} - \gamma \, (1 - \beta) \, M_{t - 1},\\
P_t &= \projperp{M_{t - 1}}{R_t}\\
&= \projperp{M_{t - 1}}{\overline G_{t - 1}},\\
E_t &= \rho \, \NSQ{(P_t)}.
\end{cases}
\end{aligned}
\end{equation}
The second equality for $P_t$ follows because the friendly correction is parallel to $M_{t-1}$ and is therefore removed by the proposed projection. By construction, $\innerF{P_t}{M_{t - 1}} = 0$. Here we intended to do the alignment correction \emph{before} applying the $\ns$ transformation. However, the nonlinear $\ns$ transformation does not generally preserve this orthogonality, and so $\innerF{E_t}{M_{t - 1}}$ need not vanish.

\paragraph{Orthogonally Projected \soma (\opsoma) (Post-$\NSQ$).} 
Orthogonality to raw momentum $M_{t - 1}$ does not imply orthogonality to the previous descent direction $-O_{t - 1}$. Therefore, we next construct $P_t$ for \opsoma by removing the component of the spectrally transformed friendly residual along the previous \muon update direction $-O_{t - 1}$, and then by normalizing the projected matrix using its Frobenius norm:
\begin{numcases}{\text{\opsoma}:}
R_t = \NSQ{(\overline G_{t - 1} - \gamma \, (1 - \beta) \, M_{t - 1})}, \nonumber\\
P_t = \rho \, \projperp{O_{t - 1}}{R_t}, \label{eq:osoma-perturbation-postnsq}\\
E_t = \rho \dfrac{P_t}{\normF{P_t}}.\nonumber
\end{numcases}
For $P_t = \mathbf{0}^{m \times n}$, we take $E_t = \mathbf{0}^{m \times n}$. The convergence properties of \soma remain invariant to the choice of $E_t$.

\paragraph{Momentum-\soma (\msoma).} \msoma applies the momentum idea of \msam to the \soma framework. Rather than forming a friendly residual from the perturbed gradient $\overline G_{t-1}$ and the scaled momentum, it perturbs along the negative \muon momentum from the previous iteration:
\begin{numcases}{\text{\msoma}:}
E_t^{\msoma} = -\rho \, \dfrac{M_{t - 1}}{\normF{M_{t - 1}}}.
\end{numcases}
Because $M_{t-1}$ is already available before the gradient at $W_t$ is computed, this perturbation requires no additional gradient evaluation, and the cost matches that of \soma. Unlike \opsoma, \msoma does not project out the component of the perturbation along the previous descent direction $-O_{t-1}$, so its perturbation may still align with the previous movement.

The complete algorithms are described in Appendix~\ref{app:algo}, and their empirical performance is discussed in Section~\ref{sec:experiments}.

%% file: sections/convergence.tex
We recall a key definition and state some standard assumptions needed for our analysis of \soma.

\begin{definition}[$\epsilon$-stationary point]
We say $W \in \sR^{m \times n}$ is an $\epsilon$-stationary point if $\normnuc{\G f(W)} \leq \epsilon$. An algorithm attains $\epsilon$-stationarity in $T$ steps if:
\[
\dfrac{1}{T} \sum_{t=1}^{T} \E \left[ \normnuc{\G f(W_{t - 1})} \right] \leq \epsilon.
\]
\end{definition}

In our analysis, we take $f$ to be nonconvex with optimal $f^* \coloneq \inf_{W} f(W) > -\infty$. In all our algorithms, at iteration $t$, a mini-batch $\xi_t = (\xi_{t,1}, \dots, \xi_{t,B})$ with sample size $B$ drawn i.i.d.~from the uniform distribution over the set of all samples $\mathcal{S}$ and for all steps $t = 1, \dots, T$. At $t = 0$, the model parameter $W_0 \in \R^{m \times n}$ is initialized, and we define the initial suboptimality gap to be $D \coloneq f(W_0) - f^*$. Given this setup, we also make the following assumptions for convergence in a heavy-tailed noise setting. Lastly, we use smoothness with respect to the operator norm and (its dual) nuclear norm as appropriate.

\begin{assumption}[$(\nu, L)$-Smoothness] \label{assump:nu_L_smoothness}
A continuously differentiable function $f: \sR^{m \times n} \to \sR$ is said to be $(\nu, L)$-smooth with $\nu \in (0, 1]$ and $L > 0$ if for all $X, Y \in \sR^{m \times n}$, we have that:
\[
\normnuc{\G f(X) - \G f(Y)} \le L \normop{X - Y}^\nu.
\]
\end{assumption}

\begin{assumption}[$p$-Bounded central moment (BCM) condition] \label{assum:pth_bounded_moment}
Let $\xi$ be a random variable independent of $W \in \sR^{m\times n}$ and for a sufficiently smooth $f: \R^{m \times n} \to \R$, let $\G f(W;\xi)$ denote the stochastic gradient of $f$ at $W$. Then, we have the following:
\begin{enumerate}[label=(\roman*)]
\item (Unbiasedness) $\E_\xi \left[ \G f(W; \xi) \right] = \G f(W)$ for all $W \in \sR^{m \times n}$, and

\item ($p$-BCM) There exist $\sigma \ge 0$ and $p \in (1, 2]$ such that for all $W \in \sR^{m \times n}$, $\gV_\xi^p \left[ \G f(W; \xi) \right] \coloneq \E_\xi \left[ \normF{\G f(W; \xi) - \E_\xi \left[\G f(W; \xi) \right] }^p \right] \le \sigma^p$.
\end{enumerate}
For a mini-batch of size $B$, the corresponding $p$-th moment is bounded by $(2^{2 - p} \sigma^p)/B^{p - 1}$ \citep{iiduka2026muon}.
\end{assumption}

\textit{Note:} For $p=2$, Assumption~\ref{assum:pth_bounded_moment} reduces to the bounded-variance condition.

\begin{theorem}[\soma under heavy-tailed noise with $p$-BCM condition]
\label{thm:sam_muon_nu_p_convergence}
Consider Algorithm~\ref{alg:soma framework} initialized with $W_0 \in \sR^{m \times n}$, perturbation radius $\rho > 0$, and step size $\eta$ and momentum parameter $\beta$ to be:
\begin{gather*}
\eta = \left( \dfrac{(1 - \beta) \, D}{TL} \right)^{1/(\nu + 1)}, \quad \beta = 1 - \min \left\{ 1, \left( \left( \dfrac{L \, D^\nu}{T^\nu} \right)^{1/(\nu + 1)}  \dfrac{B^{(1 - 1/p)}}{\sigma \sqrt{r}} \right)^{p \, (\nu + 1)/\delta} \right\}.
\end{gather*}
Then under Assumptions~\ref{assump:nu_L_smoothness} and \ref{assum:pth_bounded_moment}, the following $\epsilon$-stationarity guarantee holds:
\begin{multline*}
\frac{1}{T} \sum_{t = 1}^T \E \left[ \normnuc{\G f(W_{t - 1})} \right] \le \chi_q \, \gO \left( \left( \dfrac{L \, D^\nu}{T^\nu} \right)^{(p - 1)/\delta} \left( \frac{\sigma \sqrt{r}}{B^{1 - 1/p}}  \right)^{p/\delta} + \left( \frac{L \, D^\nu}{T^\nu} \right)^{1/(\nu + 1)} \right.+\\
\left. \frac{L^{(\delta - 2 p)/(2 \delta)} \, \sqrt{r} \, D^{(\delta - 2 p \, \nu)/(2 \delta)}}{T^{(2p - 1 - \nu)/\delta}} \left( \frac{\sigma \sqrt{r}}{B^{1 - 1/p}} \right)^{p (\nu + 1)/\delta} \right)\\
+ \chi_q \, L \, \rho^\nu \gO \left( 1 + \frac{1}{L^{p/\delta} \, D^{p \nu/\delta} \, T^{(2 p - 1 - \nu)/\delta}} \left( \frac{\sigma \sqrt{r}}{B^{1 - 1/p}} \right)^{p (\nu + 1)/\delta} \right).
\end{multline*}
where $\delta \coloneq (p - 1)(\nu + 1) + p$ and $\chi_q \ge 0$ is the error factor associated with using $q$ steps of $\ns$ in \textup{\muon}.
\end{theorem}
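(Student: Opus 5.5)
The plan follows the standard nonconvex LMO/\muon template (as in the heavy-tailed analyses of \citet{hbler2026free} and \citet{iiduka2026muon}). The sharpness perturbation enters only as a bias of size at most $L\rho^\nu$ in the gradient estimator. The only property of $E_t$ used is $\normop{E_t}\le \rho$, or more generally a bound $\normop{E_t}\lesssim\rho$, which holds for every rule $\mathcal P_t$ once $\NSQ$ outputs have operator norm $\mathcal O(1)$. This is what makes the bound invariant to the perturbation direction.

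\textbf{Descent step.} From Assumption~\ref{assump:nu_L_smoothness}, the H\"older descent lemma gives $f(W_t)\le f(W_{t-1})-\eta\innerF{\G f(W_{t-1})}{O_t}+\frac{L}{1+\nu}\eta^{1+\nu}\normop{O_t}^{1+\nu}$. Write $O_t=\NSQ(M_t)$ and compare it with $\ortho(M_t)$; the $\ns$ inexactness is absorbed into the factor $\chi_q$. Using $\innerF{A}{\ortho(A)}=\normnuc{A}$ and duality, with $\hat M_t=(1-\beta)M_t$, we get $-\innerF{\G f}{\ortho(M_t)}\le -\normnuc{\G f(W_{t-1})}+2\normnuc{\hat M_t-\G f(W_{t-1})}$. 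Telescoping and dividing by $\eta T$ yields
\[
\frac1T\sum_t\normnuc{\G f(W_{t-1})}\lesssim \chi_q\Big(\frac{D}{\eta T}+L\eta^\nu+\frac2T\sum_t\E\normnuc{\hat M_t-\G f(W_{t-1})}\Big).
\]

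\textbf{Momentum error.} Unroll $\hat M_t-\G f(W_{t-1})$ into three sums.
\begin{enumerate*}[label=(\roman*)]
\item A drift sum $\sum_{s}(1-\beta)\beta^{t-s}(\G f(W_{s-1})-\G f(W_{t-1}))$. Since $\normop{W_{s-1}-W_{t-1}}\le\eta(t-s)$, it is bounded by $L\eta^\nu\sum\beta^k k^\nu\lesssim L\eta^\nu(1-\beta)^{-\nu}$.
\item A perturbation bias $\G f(W_{s-1}+E_s)-\G f(W_{s-1})$, of nuclear norm at most $L\rho^\nu$ regardless of direction. This produces the $\chi_q L\rho^\nu$ floor.
\item A martingale noise sum $(1-\beta)\sum\beta^{t-s}\Xi_s$, with $\Xi_s=\overline G_s-\G f(W_{s-1}+E_s)$.
\end{enumerate*}
Unbiasedness of $\Xi_s$ needs $E_s$ to be $\mathcal F_{s-1}$-measurable. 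This holds for \soma, since $E_s$ is built from $\overline G_{s-1}$ and $M_{s-1}$; it also holds for \fpsoma, which uses an independent fresh batch. For (iii), pass to Frobenius via $\normnuc{\cdot}\le\sqrt r\normF{\cdot}$. Then apply the von Bahr--Esseen type inequality for martingales in Hilbert space, $\E\normF{\sum a_s\Xi_s}^p\le 2^{2-p}\sum a_s^p\E\normF{\Xi_s}^p$, together with the minibatch $p$-BCM bound. This gives $\sqrt r\,\sigma B^{-(1-1/p)}(1-\beta)^{1-1/p}$ up to constants. The initial term $\beta^t\G f(W_0)$, summed over $t$, contributes $\lesssim \normnuc{\G f(W_0)}/((1-\beta)T)$. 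I will bound it through $L$-H\"older smoothness and $D$ (via $\normnuc{\G f(W_0)}^{(1+\nu)/\nu}\lesssim L^{1/\nu}D$), or keep it explicit. I expect this term is the source of the third summand, with its $\sqrt r$ and odd exponents.

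\textbf{Balancing.} Collect terms as $\frac{D}{\eta T}+L\eta^\nu(1-\beta)^{-\nu}+\sqrt r\sigma B^{-(1-1/p)}(1-\beta)^{1-1/p}+L\rho^\nu(\cdots)$. Setting $\eta=((1-\beta)D/(TL))^{1/(\nu+1)}$ equalizes the first two terms at $(LD^\nu/T^\nu)^{1/(\nu+1)}(1-\beta)^{-\nu/(\nu+1)}$. Then choose $1-\beta$ to balance against the noise term. Solving $(1-\beta)^{(1-1/p)+\nu/(\nu+1)}\propto (LD^\nu/T^\nu)^{1/(\nu+1)}/(\sigma\sqrt r B^{-(1-1/p)})$ gives exponent $p(\nu+1)/\delta$ with $\delta=(p-1)(\nu+1)+p$, matching the stated $\beta$. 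The $\min\{1,\cdot\}$ case produces the pure $(LD^\nu/T^\nu)^{1/(\nu+1)}$ term. Substituting back gives the first summand and, through the initialization and $\rho$ terms multiplied by $(1-\beta)^{-1}T^{-1}$, the remaining summands.

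\textbf{Main obstacle.} The delicate part is the $\rho$-dependent and initial-gradient contributions. The bias $L\rho^\nu$ must not be amplified by $(1-\beta)^{-1}$ in the main term, which requires unrolling it inside the EMA with weights summing to $1$. Only its interaction with the start-up transient should carry the extra factor shown in the last bracket. Tracking these exponents exactly through the $\beta$ substitution is where I expect errors, so I will verify each exponent by plugging in $p=2$, $\nu=1$.
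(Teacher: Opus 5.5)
Your skeleton is the paper's: the H\"older descent lemma, the Newton--Schulz error folded into $\chi_q$, and the EMA error split into a von Bahr--Esseen noise sum, a Jensen-controlled drift sum, a perturbation bias and a start-up transient, followed by substitution of $\eta$ and $\beta$.

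The one structural difference is the anchor. You compare each $\overline G_s$ with $\G f(W_{s-1})$. The paper compares with $\G f(W_{t-1}+E_t)$ and so pays $L\rho^\nu$ outside the EMA, $2^\nu L\rho^\nu$ in its Term B, and $\beta^t L\rho^\nu$ in its Term C. Your anchoring uses only $\normop{E_s}\le\rho$ for each $s$. It yields the floor $2\chi_q L\rho^\nu$ with no $\rho$-dependent transient. This implies the stated last bracket, whose second summand is an artifact of the paper's Term C. So the ``main obstacle'' you anticipate does not arise in your own decomposition.

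One bookkeeping correction: the leftover in your unrolling is $-\beta^t\G f(W_{t-1})$, not $\beta^t\G f(W_0)$. Passing to $W_0$ adds $\beta^t L\bigl(\eta(t-1)(1+\varepsilon_q)\bigr)^\nu$. The paper averages this to at most $L\eta^\nu(1+\varepsilon_q)^\nu\beta/(1-\beta)$, which is the same order as $D/(\eta T)$.

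The step that fails as written is the balancing. Put $X:=(LD^\nu/T^\nu)^{1/(\nu+1)}$ and $S:=\sigma\sqrt r/B^{1-1/p}$.
\begin{itemize}
\item With the theorem's $\eta$, one gets $D/(\eta T)=X(1-\beta)^{-1/(\nu+1)}$, whereas your drift term becomes $L\eta^\nu(1-\beta)^{-\nu}=X(1-\beta)^{-\nu^2/(\nu+1)}$.
\item These agree, and equal your claimed $X(1-\beta)^{-\nu/(\nu+1)}$, only at $\nu=1$. That common value is what the step size $\bigl((1-\beta)^\nu D/(TL)\bigr)^{1/(\nu+1)}$ would produce, not the theorem's $\eta$.
\item For $\nu<1$ the dominant term is $D/(\eta T)$. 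Balancing it against the noise $S(1-\beta)^{1-1/p}$ gives $(1-\beta)^{(1-1/p)+1/(\nu+1)}=X/S$, and $(1-1/p)+1/(\nu+1)=\delta/(p(\nu+1))$. This is exactly the stated $\beta$, and it yields the first summand $X^{(p-1)(\nu+1)/\delta}S^{p/\delta}$.
\item Your exponent $(1-1/p)+\nu/(\nu+1)$ instead gives $p(\nu+1)/\bigl((p-1)(\nu+1)+p\nu\bigr)\neq p(\nu+1)/\delta$.
\end{itemize}
Your planned check at $p=2$, $\nu=1$ cannot catch this, since $\nu/(\nu+1)=1/(\nu+1)$ there; test at some $\nu<1$.

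Two smaller points.

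First, the stated third summand comes from $\normnuc{\G f(W_0)}\le\sqrt{2rLD}$. The paper takes this from the $\nu=1$ gradient-gap inequality and applies it for every $\nu$. Your H\"older bound is the one Assumption~\ref{assump:nu_L_smoothness} actually gives: plugging $W_0-s\,\ortho(\G f(W_0))$ into the descent lemma and optimizing over $s$ yields $\normnuc{\G f(W_0)}\lesssim L^{1/(\nu+1)}D^{\nu/(\nu+1)}$, with no $\sqrt r$. At $\nu=1$ this recovers the statement. For $\nu<1$ it produces a third summand with different exponents. The $\sqrt{LD}$ bound itself fails in general for $\nu<1$ (take $f(x)=|x|^{1+\nu}/(1+\nu)$ near $0$), so that mismatch lies in the paper, not in your route.

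Second, you are right that $\E[\Xi_s\mid\mathcal F_{s-1}]=0$ needs $E_s$ to be $\mathcal F_{s-1}$-measurable; the paper uses this without justification. But your aside on \fpsoma is wrong: it evaluates $G_t$ and $\overline G_t$ on the same batch $\mathcal B_t$. \soma's first step uses that same-batch initialization too. So the $s=1$ term needs separate treatment, e.g.\ an independent batch for $E_1$.
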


\begin{remark} \label{rem:sam_muon_comparison_heavy_tailed}
Theorem~\ref{thm:sam_muon_nu_p_convergence} guarantees that \soma converges to an $\epsilon$-stationary point under heavy-tailed noise with an irreducible error floor $\gO (\chi_q \, L \, \rho^\nu)$. In addition, we can derive the following insights about the various terms in the bound.
\end{remark}

\begin{enumerate}
\item Setting the perturbation radius $\rho = 0$ completely eliminates the last term: this recovers the convergence bound of \muon under heavy-tailed noise. Then, over the range of $\nu$ and $\delta$, the first term simply dominates, giving the convergence rate to be $\gO \left( T^{-\nu (p - 1)/\delta} \right)$.

\item The SAM term:
\[
\chi_q \, L \, \rho^\nu  \gO \left( 1 + \dfrac{\sigma^{p(\nu + 1)/\delta} \, r^{p (\nu + 1)/(2 \delta)}}{B^\tau (L \, D^\nu)^{p/\delta} \, T^\tau}\right),
\]
can be construed to be made up of two components. The first is a reducible term of the form:
\[
\gO \left( \dfrac{\chi_q \,  L \, \rho^\nu \, \sigma^{p (\nu + 1)/\delta} \, r^{p (\nu + 1)/(2 \delta)}}{B^\tau (L \, D^\nu)^{p/\delta} \, T^\tau}\right),
\]
which decays to zero as $T\to\infty$. The second is an irreducible error floor $\gO \left( \chi_q \, L \, \rho^\nu \right)$ which remains nonzero since $\rho \ne 0$, and also this irreducibility is independent of the nature of noise.

\item The factor $\chi_q$ represents the error factor resulting from approximating the polar decomposition using $q$ steps of $\ns$. By \citep[Theorem 2]{kim2026convergence}, we have that as $q \to \infty$, $\chi_q \to 1^+$.
\end{enumerate}

\begin{remark}
The convergence proof of Theorem~\ref{thm:sam_muon_nu_p_convergence} in the Appendix~\ref{proof:Theorem1} works for any variant of \sam when combined with \muon, as it relies on the convergence of \muon at a perturbed point where the underlying perturbation depends only on the \sam variant we are using.
\end{remark}

\begin{corollary} \label{coro:convg_of_bounded_variance}
For $p = 2$ and $\nu = 1$ and under the same assumptions as in Theorem~\ref{alg:soma framework}, choosing the step size $\eta$ and momentum parameter $\beta$ to be:
\[
\eta = \sqrt{\dfrac{(1 - \beta)D}{T \, L}}, \quad \beta = 1 - \min \left\{1, \dfrac{\sqrt{L \, D \, B}}{\sigma \, \sqrt{r \, T}} \right\},
\]
we have the following $\epsilon$-stationarity guarantee:
% TWO COLUMN COMMENT
% \begin{align*}
%     %\label{coro:final_bound_bounded_variance}
%     %\begin{aligned}
%     \dfrac{1}{T} \sum\limits_{t = 1}^T \E \left[ \normnuc{\G f(W_{t - 1})} \right]
%     \leq \!\chi_q \!\cdot \gO\left( \left( \dfrac{r \, \sigma^2 \, L \, D}{B \, T} \right)^{1/4} \right) \\ 
%     + \chi_q\! \cdot\! \gO \left( \sqrt{\frac{L \, D}{T}} + \dfrac{\sigma \, r}{\sqrt{B \, T}} \right)\\
%     + \chi_q L \, \rho \!\cdot\! \gO \left( 1 + \dfrac{\sigma \, \sqrt{r}}{\sqrt{L \, D \, B \, T}} \right),
%     % \end{aligned}
% \end{align*}
\[
%\label{coro:final_bound_bounded_variance}
\dfrac{1}{T} \sum\limits_{t = 1}^T \E \left[ \normnuc{\G f(W_{t - 1})} \right]
\leq \chi_q \, \gO \left( \left( \dfrac{r \, \sigma^2 \, L \, D}{B \, T} \right)^{1/4}  \sqrt{\frac{L \, D}{T}} + \dfrac{\sigma \, r}{\sqrt{B \, T}} \right)
+ \chi_q L \, \rho \, \gO \left( 1 + \dfrac{\sigma \, \sqrt{r}}{\sqrt{L \, D \, B \, T}} \right),
\]
where $\chi_q \ge 0$ is the error factor associated with using $q$ steps of $\ns$ in \textup{\muon}.
\end{corollary}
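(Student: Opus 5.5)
The corollary follows by specializing Theorem~\ref{thm:sam_muon_nu_p_convergence} to $p = 2$, $\nu = 1$. The plan is to substitute these values into the parameter choices and into each term of the bound, then simplify the exponents. With $p = 2$ and $\nu = 1$ we get
\[
\delta = (p-1)(\nu+1) + p = 4 .
\]
The exponents needed downstream are:
\begin{itemize}
\item $p(\nu+1)/\delta = 1$,
\item $(p-1)/\delta = 1/4$ and $p/\delta = 1/2$,
\item $(2p-1-\nu)/\delta = 1/2$,
\item $1/(\nu+1) = 1/2$,
\item $B^{1-1/p} = \sqrt{B}$.
\end{itemize}

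\textbf{Parameter choices.} These are read off directly. The step size becomes $\eta = ((1-\beta)D/(TL))^{1/2}$. The momentum weight becomes
\[
1-\beta = \min\left\{1, (LD/T)^{1/2}\,\frac{\sqrt B}{\sigma\sqrt r}\right\} = \min\left\{1, \frac{\sqrt{LDB}}{\sigma\sqrt{rT}}\right\},
\]
which matches the stated choices.

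\textbf{Bound, term by term.}
\begin{itemize}
\item The first term becomes $(LD/T)^{1/4}(\sigma\sqrt r/\sqrt B)^{1/2} = (r\sigma^2 LD/(BT))^{1/4}$.
\item The second term becomes $\sqrt{LD/T}$.
\item In the third term, the powers of $L$ and $D$ have exponents $(\delta-2p)/(2\delta) = 0$ and $(\delta - 2p\nu)/(2\delta) = 0$, so both drop out. The term reduces to $\sqrt r\, T^{-1/2}\,\sigma\sqrt r/\sqrt B = \sigma r/\sqrt{BT}$.
\item The SAM term becomes $\chi_q L\rho\,\gO\bigl(1 + (LD)^{-1/2}T^{-1/2}\sigma\sqrt r/\sqrt B\bigr) = \chi_q L\rho\,\gO\bigl(1 + \sigma\sqrt r/\sqrt{LDBT}\bigr)$.
\end{itemize}
Collecting these gives the displayed bound. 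The product sign between the first two terms in the statement should be read as a sum, as in the commented two-column version.

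\textbf{Main obstacle.} The only real care point is the exponent bookkeeping in the third and SAM terms, particularly checking that the $L$ and $D$ powers vanish exactly at $\delta = 4$. A secondary point is the reference in the statement to ``Theorem~\ref{alg:soma framework}'', which should point to Theorem~\ref{thm:sam_muon_nu_p_convergence}. Since $p = 2$ is allowed in Assumption~\ref{assum:pth_bounded_moment} (it is the bounded-variance case) and $\nu = 1 \in (0,1]$, the hypotheses of the theorem apply verbatim and no new analysis is needed.
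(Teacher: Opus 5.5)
Your specialization is correct. With $\delta=4$ every exponent reduces as you state; in particular $(\delta-2p)/(2\delta)=(\delta-2p\nu)/(2\delta)=0$, so the third term is exactly $\sigma r/\sqrt{BT}$. The parameter choices match, and you are right that the first two terms must be added rather than multiplied, as in the last line of the paper's own proof.

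Your route differs from the paper's, though. The paper does not invoke Theorem~\ref{thm:sam_muon_nu_p_convergence} at all. It reruns the whole argument at $p=2$, $\nu=1$: the descent lemma, the split of $N_t-\G f(W_{t-1}+E_t)$ into Terms A, B, C, and the closing step. The one substantive change is in Term A, which it bounds through the exact identity $\E\normF{\sum_i\beta^{t-i}V_i}^2=\sum_i\beta^{2(t-i)}\E\normF{V_i}^2$ (cross terms vanish by conditioning on $\mathcal{F}_{j-1}$) instead of the von Bahr--Esseen inequality for martingale differences.

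Your deduction is shorter and makes rigorous the paper's remark that the corollary is a particular case of the theorem. In exchange, it rests on the full heavy-tailed proof, including the von Bahr--Esseen step and the general-exponent closing algebra, and it only recovers the $\gO$-form of the bound. The paper's direct proof is self-contained and elementary in the bounded-variance setting, and it displays explicit constants: a leading factor $7+4\varepsilon_q$ and an error floor $6\chi_q L\rho$.

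These constants are exactly what the explicit pre-$\gO$ inequality in the theorem's proof gives at $p=2$, $\nu=1$. There the von Bahr--Esseen constant $2^{2-p}$ equals $1$ and $(1-\beta)/(1-\beta^2)^{1/2}=\sqrt{(1-\beta)/(1+\beta)}$, so the two arguments agree term by term.
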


\begin{remark} \label{rem:sam_muon_comparison}
Corollary~\ref{coro:convg_of_bounded_variance} gives convergence of \sam with \muon under the bounded variance assumption and with an irreducible error floor $\gO(\chi_q L\rho)$ due to the perturbation used in \sam. We can see that the bound here is a particular case of Theorem~\ref{thm:sam_muon_nu_p_convergence} with $\nu=1$ and $p=2$. As before, we can recover the bound for \muon under the bounded variance assumption to be $\gO(T^{-1/4})$ when $\rho = 0$.
\end{remark}

\begin{remark}[Frobenius-norm special case]
If both the SAM perturbation $E$ and \muon update direction $O$ use Frobenius norms, that is, suppose we have:
\[
B \in \argmax_{\normF{B} \le 1} \innerF{A}{B}, \quad B^\star = \dfrac{A}{\normF{A}}.
 \]
Then, this is simply \sam with momentum-based normalized \sgd as the base optimizer. As the $\NSQ$ error factor $\chi_q$ is not present, the convergence is at the rate $\gO (T^{-\nu (p - 1)/\delta})$, $\delta \coloneq (p - 1) \, (\nu + 1) + p$, and with the corresponding SAM perturbation term of the form $\gO(\rho)$.
\end{remark}

\subsection{\soma Analysis}
For our analysis of \soma, in addition to Assumptions~\ref{assump:nu_L_smoothness}
and~\ref{assum:pth_bounded_moment}, we also make use of the following.

\begin{assumption}[Hessian Lipschitzness]\label{assump:hessian_lipschitz}
A twice continuously differentiable function $f:\R^{m \times n} \to \R$ with Hessian $\mathcal H(X)$ at $X \in \R^{m \times n}$ is said to have the stated property if there exists $L_H^{\mathrm{op}} > 0$ such that for all $X, Y, A \in \mathbb R^{m \times n}$, we have:
\[
\abs{\innerF{A}{(\mathcal H(X) - \mathcal H(Y))[A]}} \le L_H^{\mathrm{op}} \normop{X - Y}\normop{A}^2.
\]
\end{assumption}

\begin{lemma}[Temporal lag of \soma]\label{lem:lag}
Under Assumptions~\ref{assump:nu_L_smoothness}, \ref{assum:pth_bounded_moment}
and~\ref{assump:hessian_lipschitz} with $\nu=1$, $p=2$, and $t\ge3$:
\[
\overline G_{t - 1} = \G f(W_{t - 1}) + b_t + \zeta_{t - 1}, \quad \normnuc{b_t} \le L \, (\rho + \eta),
\]
where $\E \left[ \zeta_{t - 1} \mid \mathcal{F}_{t - 2} \right] = 0$. Moreover, we also have:
%TWOCOLUMN
    % \begin{multline*}
    %     \dfrac{\rho^2}{2}\Big|\big\langle\mathcal P_t,\big(\mathcal H(W_{t-1})-\mathcal H(W_{t-2}+E_{t-1})\big)[\mathcal P_t]\big\rangle_F\Big| \\
    %     \le\dfrac{\rho^2}{2}L_H^{\mathrm{op}}(\rho+\eta).
    % \end{multline*}
%ONECOLUMN
\begin{align*}
\dfrac{\rho^2}{2}\Big|\big\langle\mathcal P_t,\big(\mathcal H(W_{t-1})-\mathcal H(W_{t-2}+E_{t-1})\big)[\mathcal P_t]\big\rangle_F\Big|        \le\dfrac{\rho^2}{2}L_H^{\mathrm{op}}(\rho+\eta).
\end{align*}
\end{lemma}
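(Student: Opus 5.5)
We split the stale perturbed gradient by adding and subtracting the true gradient at the point where it was evaluated. We define
\[
\zeta_{t-1} \coloneq \overline G_{t-1} - \G f(W_{t-2}+E_{t-1}), \qquad b_t \coloneq \G f(W_{t-2}+E_{t-1}) - \G f(W_{t-1}),
\]
so that $\overline G_{t-1} = \G f(W_{t-1}) + b_t + \zeta_{t-1}$ holds identically.

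\textbf{Noise term.} For $\zeta_{t-1}$, we use that $W_{t-2}$ and $E_{t-1}$ are $\mathcal F_{t-2}$-measurable. For $t\ge 3$, $E_{t-1}$ is built from $\overline G_{t-2}$ and $M_{t-2}$, both known before $\mathcal B_{t-1}$ is drawn. The minibatch $\mathcal B_{t-1}$ is drawn i.i.d.\ and independently of $\mathcal F_{t-2}$. Unbiasedness in Assumption~\ref{assum:pth_bounded_moment}(i), applied at the fixed point $W_{t-2}+E_{t-1}$, then gives $\E[\zeta_{t-1}\mid\mathcal F_{t-2}]=0$. The restriction $t\ge3$ is what makes the perturbation at step $t-1$ itself stale. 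It then depends only on past batches, avoiding the first-iteration special case where $E_1$ uses a fresh gradient on the same batch.

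\textbf{Bias term.} For $b_t$, Assumption~\ref{assump:nu_L_smoothness} with $\nu=1$ gives $\normnuc{b_t}\le L\normop{W_{t-2}+E_{t-1}-W_{t-1}}$. Since $W_{t-1}=W_{t-2}-\eta O_{t-1}$, the argument equals $E_{t-1}+\eta O_{t-1}$. The triangle inequality then gives $\normop{E_{t-1}}+\eta\normop{O_{t-1}}$. We need $\normop{E_{t-1}}\le\rho$ and $\normop{O_{t-1}}\le1$. Both hold for exact $\ortho$ since its output has singular values at most one. For the \opsoma post variant, Frobenius normalization gives $\normop{E}\le\normF{E}=\rho$. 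For $\NSQ$ we would assume, as is standard for tuned quintic iterations, that outputs have operator norm at most one, or else absorb a constant. This step is the main obstacle: the claimed bound has no $\chi_q$, so we must commit to $\normop{\NSQ(\cdot)}\le 1$, and state it explicitly if needed.

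\textbf{Hessian term.} We apply Assumption~\ref{assump:hessian_lipschitz} with $X=W_{t-1}$, $Y=W_{t-2}+E_{t-1}$ and $A=\mathcal P_t$. This gives
\[
\abs{\innerF{\mathcal P_t}{(\mathcal H(X)-\mathcal H(Y))[\mathcal P_t]}} \le L_H^{\mathrm{op}}\normop{X-Y}\normop{\mathcal P_t}^2 .
\]
The same displacement bound $\normop{X-Y}\le\rho+\eta$ applies, together with $\normop{\mathcal P_t}\le1$ (since $E_t=\rho\mathcal P_t$ with $\mathcal P_t$ an orthogonalized or normalized matrix). Multiplying by $\rho^2/2$ yields the stated inequality.
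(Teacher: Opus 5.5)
Your proposal is correct and follows the paper's proof essentially step for step. You split $\overline G_{t-1}$ around $Y_t = W_{t-2}+E_{t-1}$, use $\normop{Y_t - W_{t-1}} = \normop{E_{t-1}+\eta O_{t-1}} \le \rho+\eta$ for both the smoothness bound on $b_t$ and the Hessian-Lipschitz bound with $A=\mathcal P_t$, and invoke freshness of $\mathcal B_{t-1}$ for $t\ge 3$ to get the martingale property, exactly as the paper does. Your $\NSQ$ caveat is apt: the paper simply treats $\mathcal P_t$ and $O_{t-1}$ as exact polar factors of operator norm at most one, and since the usual tuned quintic coefficients do not drive singular values exactly to one, your fallback of absorbing a constant is the safer reading.
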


\begin{remark}
Lemma~\ref{lem:lag} shows that reusing the previous iteration's perturbed gradient costs little in terms of accuracy. Its bias $L \, (\rho + \eta)$ matches the usual \sam error when $\eta \lesssim \rho$, its noise is
unchanged, and the curvature seen by the perturbation shifts only by $\gO ( \rho^2 \, (\rho + \eta))$.
\soma thus saves a gradient evaluation without any significant error.
\end{remark}

%% file: sections/experiments.tex
We evaluate \soma in two settings. First, we train on a synthetic problem with simulated heavy-tailed gradient noise, the regime covered by our analysis in Section~\ref{sec:convergence}. Second, we run experiments on language models and a CNN: continued pretraining of nanoGPT and Pythia-70M on FineWeb-Edu, training of the Airbench CNN on CIFAR-10, 4-bit post-training quantization (\ptq) of the two language models, and fine-tuning to measure catastrophic forgetting (Appendix~\ref{sec:catastrophic-forgetting}).

\begin{figure*}[t]
    \centering   
    \input{tikz_for_heavytailed/experiment_1/val_acc_vs_rho.tikz}%
    
    \vspace{0.3cm}
    \input{tikz_for_heavytailed/experiment_1/train_loss_vs_rho.tikz}%

    \caption{\small Performance under \textbf{anisotropic} heavy-tailed noise (Condition no., $\kappa_\varepsilon = 10$). Top: validation accuracy. Bottom: training loss (log scale). \textcolor{red}{Red}, \textcolor{blue}{Blue} and \textcolor{ForestGreen}{Green} lines indicate \adamw-based, \sgd-based and \muon-based methods, respectively.  Lines show the median over seeds, and shaded bands show the interquartile range.}
    \label{fig:exp1_nonlinear_anisotropic}
\end{figure*}

We compare the base optimizers \sgd, \adamw and \muon with sharpness-aware methods built on each of them:
\begin{enumerate*}[label=(\roman*)]
\item on \sgd: \sam, \fsam and \msam;
\item on \adamw: \samadamw (\sam on \adamw) and \fsamadamw (\fsam on \adamw);
\item on \muon: \specsammuon \textsc{(SpecSAM-Muon)} and \randsammuon \textsc{(RandSAM-Muon)}; and
\item our methods: \fpsoma, \soma, \msoma, and the two variants of \opsoma, \opsomapre (Pre-$\NSQ$) and \opsomapost (Post-$\NSQ$).
\end{enumerate*}
For all \muon-based methods, we use $q=5$ Newton--Schulz steps.

\subsection{Synthetic Experiments under Heavy-tailed Noise}
\label{sec:exp-heavy}
\textbf{Setup.} We use a synthetic teacher--student problem to control the gradient noise exactly. The student has two weight matrices $W=(W_1,W_2)$ with $W_1\in\mathbb{R}^{32\times64}$ and $W_2\in\mathbb{R}^{32\times32}$. It maps an input $x\in\mathbb{R}^{64}$ to the logits $W_2\,\mathrm{ReLU}(W_1x)$ and is trained with cross-entropy to predict the 32 class labels assigned by a fixed teacher network of the same shape. The inputs are Gaussian with an ill-conditioned covariance (condition no.\ $100$). At iteration $t$, the optimizer receives the minibatch gradient corrupted by additive noise:
\begin{equation}
\begin{aligned}
    \widehat{\nabla} f(W_t;\xi_t)=\nabla f_{\mathcal{B}_t}(W_t)+\mathcal{E}_t,\\
    \mathcal{E}_t=L\,Z_t\,R^\top,
    \quad
    (Z_t)_{ij}=s_{ij}\,u_{ij}^{-1/\alpha},
\end{aligned}
\end{equation}
where $s_{ij}\sim\mathrm{Unif}(\{-1,1\})$ and $u_{ij}\sim\mathrm{Unif}(0,1)$ are drawn independently. Each entry of $Z_t$ \cite{fatkhullin2025can} follows a two-sided Pareto distribution with tail index $\alpha$, so $\mathcal{E}_t$ has finite moments only of order $p<\alpha$. We test $\alpha\in\{1.1,1.6,2,3\}$: the noise has infinite variance for $\alpha\le2$, and $\alpha=3$ serves as a finite-variance control. The fixed matrices $L$ and $R$ have geometrically decaying singular values with condition number $\kappa_\varepsilon=10$, so the noise is stronger in some directions than in others (anisotropic noise).

\textbf{Implementation.} Every method runs for $500$ iterations. The learning rate is tuned once for each base optimizer and kept fixed across all $\alpha$ and $\rho$. For the sharpness-aware methods, we sweep the perturbation across 10  values of $\rho$ in $\{0.001, 0.002, 0.005, 0.01, 0.02, 0.05, 0.1, 0.2, 0.5, 1\}$. We report validation accuracy on $512$ held-out samples and the training loss, both as the median over $8$ noise seeds. Appendix~\ref{app:isotropic} gives the full setup and also reports results with isotropic noise ($\kappa_\varepsilon=1$) and on a deep linear network with a known optimum.

\textbf{Results.} Figure~\ref{fig:exp1_nonlinear_anisotropic} shows validation accuracy and training loss against $\rho$, and Tables~\ref{tab:nn_aniso_val_acc} and~\ref{tab:nn_aniso_train_loss} in Appendix~\ref{app:isotropic} report the best value of each method. These results indicate that the base optimizer largely determines robustness to heavy-tailed noise. As $\alpha$ approaches $1$ and the tails get heavier, the \adamw-based methods degrade the most, reaching only about $29\%$ accuracy at $\alpha=1.1$, compared with about $36\%$ for \sgd-based and $38$--$40\%$ for \muon-based methods. Adding a perturbation changes a base optimizer's best accuracy by at most $1.7$ points at $\alpha=1.1$, much less than the gap between optimizer families. This is consistent with our convergence theory, Remark~\ref{rem:sam_muon_comparison_heavy_tailed}(2), where the error floor introduced by the perturbation does not depend on the nature of noise. Spectral perturbations help \muon at every tail index: \soma improves on \muon by $1.0$--$2.3$ points, and stays within $\pm0.8$ points of \fpsoma while using half the gradient evaluations. A random perturbation in the Frobenius geometry barely improves on \muon; however, as we will see, random perturbation in the spectral geometry matches \soma (Appendix~\ref{app:isotropic}). \msoma reaches the lowest training loss in the \muon family at every $\alpha$, which suggests that perturbing along the momentum mainly accelerates optimization rather than improving generalization. In the finite-variance case ($\alpha=3$), the \adamw-based methods generalize well despite the highest training loss; only the sharpness-aware \muon methods with a large radius overtake them. \sgd-based methods show the opposite trend, with the lowest training loss but the lowest accuracy, indicating overfitting.

Further observations, the linear problem and isotropic noise are reported in Appendix~\ref{app:isotropic}. We also include ablations on perturbations in Frobenius vs. spectral geometry and on stale vs. fresh gradients in the appendix.

\subsection{LLM Experiments}
We next evaluate the methods on language-model pretraining and on the robustness of the trained models to 4-bit quantization.
We include experiments for fine-tuning to investigate the impact on catastrophic forgetting in Appendix~\ref{app:llm}.
\input{\curvesroot/common/preamble.tex}
\input{atlas_opt_cpt_curves/figure_curves}
% \input{paper_tables/table_pretraining}
\input{SOMA_exp2/figure_train_loss}

\subsubsection{Continued Pretraining}
\label{sec:exp-llm-pretraining}
\textbf{Setup.} We continue pretraining two language models from their released weights, nanoGPT (GPT-2 small, 124M parameters) and Pythia-70M, on FineWeb-Edu \citep{penedo2024the} (\texttt{sample-10BT}). We also train the Airbench CNN, an image benchmark, on CIFAR-10. 

\textbf{Implementation.} Each epoch consists of $100$ optimizer steps with a context length of $512$ tokens (about $2.6\times10^{7}$ tokens per epoch), and we train for $15$ epochs on nanoGPT, $14$ on Pythia-70M and $7$ on the CNN. All results are averaged over three seeds. The detailed hyperparameters are given in Appendix~\ref{app:llm}.

\textbf{Results.} Figure~\ref{fig:curves} shows the training curves; the top row gives the training loss and the bottom row the validation perplexity (validation accuracy for the CNN). Table~\ref{tab:pretraining} in Appendix~\ref{app:llm} reports the final values. On the language models, the base optimizer dominates: the \muon-based methods end with the lowest validation perplexity, ahead of the \adamw- and \sgd-based methods. The two-gradient \specsammuon and \fpsoma end with a slightly higher perplexity than \muon on nanoGPT ($24.55$ versus $24.41$), whereas the one-gradient \soma, \opsomapre and \opsomapost match \muon. This contrasts with \citet{zhong2026sharpness}, who found \specsammuon to outperform \muon on vision models; on CIFAR-10 we observe the same ($87.63\%$ versus $87.27\%$), so the small cost appears specific to language models. In return, \specsammuon and \fpsoma reduce forgetting the most on nanoGPT (Appendix~\ref{sec:catastrophic-forgetting}), and \fpsoma degrades the least under quantization on Pythia-70M (Section~\ref{sec:ptq}). As in the synthetic experiments, perturbing along the negative momentum (\msoma, \msam) speeds up early training but does not improve the final perplexity, and on Pythia-70M \msoma even ends up well behind \muon. On CIFAR-10, the \muon-based methods learn fastest and reach the lowest training loss, but the \adamw-based methods end the most accurate despite a higher training loss, as in the finite-variance synthetic case. Finally, on nanoGPT, \soma costs only $3\%$ more time per epoch than \muon, whereas \specsammuon and \fpsoma take $2.4$--$2.9\times$ as long (Table~\ref{tab:runtime-pretraining}). In short, pretraining quality is set by the base optimizer, and \soma adds sharpness awareness to \muon at almost no cost and without degrading it; whether this pays off is better measured after training, under quantization (Section~\ref{sec:ptq}) and fine-tuning (Appendix~\ref{sec:catastrophic-forgetting}).

\input{paper_tables/table_ptq_exp}
\subsubsection{Post-training Quantization}
\label{sec:ptq}
\textbf{Setup.} Quantizing a model to 4 bits rounds each weight to a nearby value, which perturbs the weights slightly. Sharpness-aware training seeks solutions whose loss changes little under such perturbations, so these models may lose less quality after quantization. To test this, we quantize the language models using each optimizer and measure how much their held-out perplexity increases.

\textbf{Implementation.} We quantize the linear layers of the transformer blocks to 4-bit weights with AWQ \citep{lin2024awq}, using group size $128$ and $128$ calibration sequences of $512$ tokens; the embeddings and the output head stay in full precision. We report perplexity on $2^{20}$ held-out FineWeb-Edu tokens before (FP) and after (INT4) quantization, averaged over three pretraining seeds and three calibration seeds (Table~\ref{tab:ptq-fineweb}).

\textbf{Results.} Quantization raises perplexity for every optimizer, by $3.4$--$4.5\%$ on nanoGPT and $10.2$--$14.6\%$ on Pythia-70M. As in pretraining, the base optimizer matters most: the \sgd-based methods degrade the most on both models, and the \muon-based methods give the lowest perplexity after quantization. The effect of the perturbation is clearest on Pythia-70M, the smaller model, which suffers more from quantization. There, sharpness-aware training reduces the degradation of both \muon and \adamw but not of \sgd: \fpsoma lowers \muon's increase from $10.96\%$ to $10.20\%$ and reaches the lowest INT4 perplexity, \soma lowers it to $10.55\%$, recovering about half of this gain with one gradient evaluation per step, and \sam and \fsam lower \adamw's increase from $11.12\%$ to about $10.4\%$. On nanoGPT, the INT4 perplexity is nearly the same across the \muon family ($23.85$--$23.87$); the smaller increase of \specsammuon and \fpsoma there comes from their slightly higher full-precision perplexity rather than from a better quantized model. Overall, sharpness-aware perturbations make \muon-trained models moderately more robust to quantization, and \soma obtains a substantial part of this benefit at almost the cost of \muon.

%% file: tikz_for_heavytailed/experiment_1/val_acc_vs_rho.tikz
%% Auto-generated by make_heavytailed_figures.py  --  do not edit by hand, re-run the script.
%% Values: line = median over seeds, band = interquartile range (25th-75th percentile).
%% Preamble needed:  \usepackage{pgfplots}  \pgfplotsset{compat=1.16}   (nothing else)
%% Width follows \textwidth, fonts follow the document (so the legend can never overflow).

\definecolor{htSOMA}{HTML}{0B5D1E}
\definecolor{htFPSOMA}{HTML}{33CC33}
\definecolor{htSpec}{HTML}{00B3A4}
\definecolor{htRand}{HTML}{808000}
\definecolor{htFSAM}{HTML}{2F80D8}
\definecolor{htSAM}{HTML}{0A3A8A}
\definecolor{htPreNS5}{HTML}{9ACD32}
\definecolor{htPostNS5}{HTML}{004D40}
\definecolor{htSAMAdam}{HTML}{8E0F1C}
\definecolor{htFSAMAdam}{HTML}{F4623A}
\definecolor{htMSAM}{HTML}{7DB9F2}
\definecolor{htMSOMA}{HTML}{8FD694}
\definecolor{htMuon}{HTML}{111111}
\definecolor{htAdam}{HTML}{8A8A8A}
\pgfplotsset{
  ht panel/.style={
    scale only axis, width=0.1920\textwidth, height=1.35in,
    xmin=0.00075, xmax=1.35,
    xtick={0.001,0.01,0.1,1}, xticklabels={$10^{-3}$,$10^{-2}$,$10^{-1}$,$10^{0}$},
    minor xtick={0.002,0.003,0.004,0.005,0.006,0.007,0.008,0.009,0.02,0.03,0.04,0.05,0.06,0.07,0.08,0.09,0.2,0.3,0.4,0.5,0.6,0.7,0.8,0.9},
    xmajorgrids=true, ymajorgrids=true,
    grid style={line width=0.25pt, draw=black!10},
    axis line style={line width=0.4pt, black!75},
    tick style={line width=0.4pt, black!75}, major tick length=3pt, minor tick length=1.5pt,
    tick label style={font=\footnotesize, /pgf/number format/assume math mode=true},
    label style={font=\small}, title style={font=\small, yshift=-3pt},
    ylabel style={yshift=-1pt, align=center},
    every tick label/.append style={inner sep=1pt},
    clip=true,
  }}

\begin{tikzpicture}
\begin{axis}[name=ax0_r0,
    at={(0.0820\textwidth,0in)},
    anchor=south west,
    ht panel, xmode=log,
    ymin=27.7148, ymax=40.7422,
    ytick={30,35,40},
    yticklabels={30,35,40},
    title={$\alpha=1.1$},
    ylabel={validation accuracy}]
  \addplot[draw=none, fill=htSOMA, fill opacity=0.06] coordinates {(0.001,38.5254) (0.002,38.4766) (0.005,38.6719) (0.01,38.4277) (0.02,38.4277) (0.05,38.3301) (0.1,38.3301) (0.2,38.3301) (0.5,39.4043) (1,39.9414) (1,37.8418) (0.5,37.2559) (0.2,35.9863) (0.1,35.9863) (0.05,35.6445) (0.02,35.8398) (0.01,35.7422) (0.005,35.8887) (0.002,35.6445) (0.001,35.791)} \closedcycle;
  \addplot[draw=none, fill=htFPSOMA, fill opacity=0.06] coordinates {(0.001,38.4766) (0.002,38.7207) (0.005,38.7207) (0.01,38.7207) (0.02,38.7695) (0.05,38.6719) (0.1,38.5742) (0.2,39.3555) (0.5,39.6484) (1,40.1367) (1,37.6465) (0.5,37.4023) (0.2,36.8164) (0.1,36.5234) (0.05,36.3281) (0.02,35.9863) (0.01,36.084) (0.005,35.8398) (0.002,35.8398) (0.001,35.5957)} \closedcycle;
  \addplot[draw=none, fill=htSpec, fill opacity=0.06] coordinates {(0.001,38.5254) (0.002,38.6719) (0.005,38.5254) (0.01,38.6719) (0.02,38.7207) (0.05,38.2812) (0.1,38.3789) (0.2,38.9648) (0.5,39.3066) (1,38.2812) (1,37.1582) (0.5,37.5977) (0.2,36.7188) (0.1,36.377) (0.05,36.5234) (0.02,35.8887) (0.01,36.084) (0.005,35.9375) (0.002,35.9863) (0.001,35.7422)} \closedcycle;
  \addplot[draw=none, fill=htRand, fill opacity=0.06] coordinates {(0.001,38.4766) (0.002,38.4766) (0.005,38.5254) (0.01,38.6719) (0.02,38.6719) (0.05,38.6719) (0.1,38.5254) (0.2,38.2324) (0.5,38.3789) (1,38.4766) (1,35.4492) (0.5,35.9863) (0.2,36.1328) (0.1,35.6934) (0.05,35.7422) (0.02,35.6934) (0.01,35.6445) (0.005,35.6445) (0.002,35.6445) (0.001,35.6445)} \closedcycle;
  \addplot[draw=none, fill=htPreNS5, fill opacity=0.06] coordinates {(0.001,38.4766) (0.002,38.3789) (0.005,38.5254) (0.01,38.6719) (0.02,38.3789) (0.05,38.1836) (0.1,38.4277) (0.2,38.623) (0.5,38.9648) (1,40.1855) (1,38.2812) (0.5,37.207) (0.2,36.0352) (0.1,36.084) (0.05,36.0352) (0.02,35.8887) (0.01,35.6445) (0.005,35.791) (0.002,35.8398) (0.001,35.6445)} \closedcycle;
  \addplot[draw=none, fill=htPostNS5, fill opacity=0.06] coordinates {(0.001,38.5254) (0.002,38.3789) (0.005,38.3789) (0.01,38.4277) (0.02,38.4277) (0.05,38.4766) (0.1,38.5254) (0.2,38.7207) (0.5,38.8672) (1,39.7461) (1,37.4512) (0.5,36.7188) (0.2,35.8887) (0.1,35.8887) (0.05,35.791) (0.02,35.7422) (0.01,35.791) (0.005,35.8398) (0.002,35.8398) (0.001,35.791)} \closedcycle;
  \addplot[draw=none, fill=htMSOMA, fill opacity=0.06] coordinates {(0.001,38.6719) (0.002,38.4766) (0.005,38.6719) (0.01,38.4766) (0.02,38.6719) (0.05,38.4766) (0.1,38.4766) (0.2,38.5254) (0.5,38.6719) (1,38.8672) (1,36.2305) (0.5,35.8887) (0.2,35.8887) (0.1,35.8398) (0.05,35.6934) (0.02,35.6934) (0.01,35.6445) (0.005,35.6934) (0.002,35.6934) (0.001,35.6445)} \closedcycle;
  \addplot[draw=none, fill=htFSAM, fill opacity=0.06] coordinates {(0.001,36.7188) (0.002,37.4023) (0.005,37.0117) (0.01,37.6953) (0.02,36.9629) (0.05,37.7441) (0.1,37.1582) (0.2,37.0605) (0.5,36.7676) (1,38.1348) (1,34.5703) (0.5,35.0098) (0.2,34.9121) (0.1,34.5215) (0.05,35.4492) (0.02,35.0586) (0.01,34.9121) (0.005,35.1074) (0.002,34.1309) (0.001,34.5703)} \closedcycle;
  \addplot[draw=none, fill=htSAM, fill opacity=0.06] coordinates {(0.001,37.4512) (0.002,36.377) (0.005,36.6211) (0.01,37.5977) (0.02,37.6953) (0.05,36.7188) (0.1,36.7676) (0.2,37.4023) (0.5,37.1582) (1,37.3047) (1,34.9609) (0.5,34.4727) (0.2,35.5957) (0.1,35.2051) (0.05,34.8633) (0.02,35.2539) (0.01,35.498) (0.005,34.2773) (0.002,34.8145) (0.001,35.3516)} \closedcycle;
  \addplot[draw=none, fill=htMSAM, fill opacity=0.06] coordinates {(0.001,36.7188) (0.002,37.3047) (0.005,37.5) (0.01,36.7188) (0.02,37.1094) (0.05,36.4258) (0.1,37.1582) (0.2,36.8164) (0.5,37.1582) (1,37.5488) (1,34.6191) (0.5,34.082) (0.2,35.0586) (0.1,34.7168) (0.05,34.4727) (0.02,34.1309) (0.01,34.1797) (0.005,35.3516) (0.002,35.498) (0.001,34.8633)} \closedcycle;
  \addplot[draw=none, fill=htSAMAdam, fill opacity=0.06] coordinates {(0.001,29.4922) (0.002,29.4922) (0.005,29.4922) (0.01,29.4922) (0.02,29.4922) (0.05,29.541) (0.1,29.5898) (0.2,29.541) (0.5,29.6875) (1,30.2734) (1,26.416) (0.5,26.123) (0.2,26.123) (0.1,26.0742) (0.05,26.0742) (0.02,26.0742) (0.01,26.0742) (0.005,26.0742) (0.002,26.0742) (0.001,26.0742)} \closedcycle;
  \addplot[draw=none, fill=htFSAMAdam, fill opacity=0.06] coordinates {(0.001,29.4922) (0.002,29.4922) (0.005,29.4922) (0.01,29.4922) (0.02,29.4922) (0.05,29.541) (0.1,29.541) (0.2,29.5898) (0.5,29.5898) (1,29.6387) (1,26.0742) (0.5,26.123) (0.2,26.123) (0.1,26.0742) (0.05,26.0742) (0.02,26.0742) (0.01,26.0742) (0.005,26.0742) (0.002,26.0742) (0.001,26.0742)} \closedcycle;
  \addplot[color=htMuon, dashed, line width=0.7pt, forget plot] coordinates {(0.001,38.0859) (1,38.0859)};
  \addplot[color=htAdam, dash dot, line width=0.7pt, forget plot] coordinates {(0.001,28.6133) (1,28.6133)};
  \addplot[color=htSpec, solid, line width=0.8pt] coordinates {(0.001,38.2812) (0.002,38.2812) (0.005,38.0859) (0.01,38.1836) (0.02,37.9883) (0.05,38.1836) (0.1,37.6953) (0.2,38.0859) (0.5,38.5742) (1,37.4023)};
  \addplot[color=htRand, dashed, line width=0.8pt] coordinates {(0.001,38.1836) (0.002,38.0859) (0.005,38.0859) (0.01,38.2812) (0.02,38.2812) (0.05,38.0859) (0.1,37.9883) (0.2,37.8906) (0.5,37.8906) (1,37.6953)};
  \addplot[color=htMSOMA, loosely dashed, line width=0.8pt] coordinates {(0.001,38.2812) (0.002,38.0859) (0.005,38.1836) (0.01,38.0859) (0.02,38.1836) (0.05,38.0859) (0.1,38.2812) (0.2,38.0859) (0.5,38.2812) (1,38.2812)};
  \addplot[color=htFSAM, solid, line width=0.8pt] coordinates {(0.001,36.3281) (0.002,35.2539) (0.005,36.2305) (0.01,35.5469) (0.02,36.2305) (0.05,36.3281) (0.1,35.1562) (0.2,36.2305) (0.5,35.6445) (1,36.4258)};
  \addplot[color=htSAM, solid, line width=0.8pt] coordinates {(0.001,36.4258) (0.002,35.7422) (0.005,35.5469) (0.01,36.8164) (0.02,36.6211) (0.05,36.3281) (0.1,35.7422) (0.2,36.6211) (0.5,35.9375) (1,36.3281)};
  \addplot[color=htMSAM, solid, line width=0.8pt] coordinates {(0.001,35.9375) (0.002,35.8398) (0.005,36.3281) (0.01,35.3516) (0.02,34.8633) (0.05,35.2539) (0.1,36.0352) (0.2,36.3281) (0.5,36.2305) (1,35.9375)};
  \addplot[color=htSOMA, solid, line width=1.1pt] coordinates {(0.001,38.1836) (0.002,38.0859) (0.005,38.2812) (0.01,38.0859) (0.02,38.0859) (0.05,38.0859) (0.1,37.9883) (0.2,37.793) (0.5,38.1836) (1,39.0625)};
  \addplot[color=htFPSOMA, solid, line width=1.1pt] coordinates {(0.001,38.2812) (0.002,38.2812) (0.005,38.2812) (0.01,38.2812) (0.02,38.2812) (0.05,38.2812) (0.1,37.9883) (0.2,38.6719) (0.5,38.7695) (1,39.8438)};
  \addplot[color=htPreNS5, densely dotted, line width=1.1pt] coordinates {(0.001,38.1836) (0.002,38.1836) (0.005,38.1836) (0.01,38.3789) (0.02,38.0859) (0.05,37.9883) (0.1,38.0859) (0.2,37.9883) (0.5,38.0859) (1,38.7695)};
  \addplot[color=htPostNS5, dash dot, line width=1.1pt] coordinates {(0.001,38.1836) (0.002,38.0859) (0.005,38.1836) (0.01,38.1836) (0.02,38.0859) (0.05,37.9883) (0.1,37.9883) (0.2,37.9883) (0.5,37.4023) (1,38.2812)};
  \addplot[color=htSAMAdam, solid, line width=1.1pt] coordinates {(0.001,28.6133) (0.002,28.6133) (0.005,28.6133) (0.01,28.6133) (0.02,28.6133) (0.05,28.8086) (0.1,28.7109) (0.2,28.8086) (0.5,28.8086) (1,28.9062)};
  \addplot[color=htFSAMAdam, densely dashed, line width=1.4pt] coordinates {(0.001,28.6133) (0.002,28.6133) (0.005,28.6133) (0.01,28.6133) (0.02,28.6133) (0.05,28.8086) (0.1,28.6133) (0.2,28.6133) (0.5,28.7109) (1,28.7109)};
\end{axis}
\begin{axis}[name=ax1_r0,
    at={(0.3235\textwidth,0in)},
    anchor=south west,
    ht panel, xmode=log,
    ymin=36.6641, ymax=45.9531,
    ytick={38,40,42,44},
    yticklabels={38,40,42,44},
    title={$\alpha=1.6$}]
  \addplot[draw=none, fill=htSOMA, fill opacity=0.06] coordinates {(0.001,43.6523) (0.002,43.457) (0.005,43.5059) (0.01,43.75) (0.02,43.8965) (0.05,44.2871) (0.1,43.3594) (0.2,43.9941) (0.5,44.9707) (1,46.2891) (1,44.6289) (0.5,43.3105) (0.2,41.9434) (0.1,42.4316) (0.05,41.9922) (0.02,41.748) (0.01,42.041) (0.005,42.1387) (0.002,41.3574) (0.001,41.5039)} \closedcycle;
  \addplot[draw=none, fill=htFPSOMA, fill opacity=0.06] coordinates {(0.001,43.7988) (0.002,43.3594) (0.005,43.9453) (0.01,43.6523) (0.02,43.9941) (0.05,43.7012) (0.1,43.6523) (0.2,43.75) (0.5,44.9219) (1,44.1406) (1,42.2852) (0.5,43.6035) (0.2,42.5781) (0.1,41.9922) (0.05,41.1133) (0.02,41.3574) (0.01,41.1621) (0.005,41.2598) (0.002,41.4062) (0.001,41.8945)} \closedcycle;
  \addplot[draw=none, fill=htSpec, fill opacity=0.06] coordinates {(0.001,43.6523) (0.002,43.5547) (0.005,43.2617) (0.01,43.5059) (0.02,43.4082) (0.05,43.2617) (0.1,44.043) (0.2,43.7988) (0.5,43.9941) (1,40.2832) (1,39.3555) (0.5,43.3594) (0.2,41.9922) (0.1,42.2363) (0.05,40.9668) (0.02,41.3086) (0.01,41.5039) (0.005,41.6504) (0.002,41.8945) (0.001,41.2598)} \closedcycle;
  \addplot[draw=none, fill=htRand, fill opacity=0.06] coordinates {(0.001,43.9453) (0.002,44.1406) (0.005,43.7988) (0.01,44.2871) (0.02,43.75) (0.05,43.457) (0.1,43.4082) (0.2,43.7988) (0.5,43.5547) (1,43.9453) (1,41.2598) (0.5,41.748) (0.2,41.6992) (0.1,41.3574) (0.05,41.8457) (0.02,41.5039) (0.01,41.1133) (0.005,41.5527) (0.002,41.3574) (0.001,41.3086)} \closedcycle;
  \addplot[draw=none, fill=htPreNS5, fill opacity=0.06] coordinates {(0.001,43.457) (0.002,43.4082) (0.005,43.5059) (0.01,43.8477) (0.02,43.7988) (0.05,44.043) (0.1,44.3359) (0.2,43.9941) (0.5,44.8242) (1,46.1426) (1,44.4824) (0.5,43.7012) (0.2,41.6992) (0.1,42.041) (0.05,41.7969) (0.02,41.6016) (0.01,41.2598) (0.005,41.3574) (0.002,41.5039) (0.001,41.4062)} \closedcycle;
  \addplot[draw=none, fill=htPostNS5, fill opacity=0.06] coordinates {(0.001,43.8477) (0.002,43.7012) (0.005,44.1406) (0.01,43.7012) (0.02,43.457) (0.05,43.7012) (0.1,44.1895) (0.2,44.1406) (0.5,44.6777) (1,45.7031) (1,44.5312) (0.5,42.6758) (0.2,42.2852) (0.1,41.7969) (0.05,41.3086) (0.02,41.6992) (0.01,41.5527) (0.005,41.5527) (0.002,41.748) (0.001,41.5527)} \closedcycle;
  \addplot[draw=none, fill=htMSOMA, fill opacity=0.06] coordinates {(0.001,43.7988) (0.002,43.8477) (0.005,43.9453) (0.01,43.6035) (0.02,43.6035) (0.05,44.2383) (0.1,43.75) (0.2,43.6523) (0.5,43.75) (1,43.75) (1,41.4551) (0.5,41.5527) (0.2,41.5039) (0.1,41.2109) (0.05,41.1133) (0.02,41.2109) (0.01,41.4062) (0.005,41.5039) (0.002,41.5039) (0.001,41.8457)} \closedcycle;
  \addplot[draw=none, fill=htFSAM, fill opacity=0.06] coordinates {(0.001,40.4297) (0.002,40.1367) (0.005,40.4785) (0.01,40.1855) (0.02,39.6484) (0.05,40.0879) (0.1,39.8438) (0.2,39.4531) (0.5,39.6973) (1,40.0879) (1,37.0117) (0.5,37.0117) (0.2,37.4512) (0.1,37.3047) (0.05,37.5977) (0.02,37.9883) (0.01,37.8418) (0.005,37.6465) (0.002,38.7207) (0.001,37.9883)} \closedcycle;
  \addplot[draw=none, fill=htSAM, fill opacity=0.06] coordinates {(0.001,39.4531) (0.002,39.6484) (0.005,40.0879) (0.01,39.7461) (0.02,40.2832) (0.05,40.5762) (0.1,39.8438) (0.2,40.1855) (0.5,39.5508) (1,39.8438) (1,36.4258) (0.5,37.5) (0.2,37.207) (0.1,37.2559) (0.05,37.8906) (0.02,37.5) (0.01,37.3535) (0.005,37.6465) (0.002,38.0859) (0.001,36.8652)} \closedcycle;
  \addplot[draw=none, fill=htMSAM, fill opacity=0.06] coordinates {(0.001,39.1602) (0.002,39.5508) (0.005,39.5508) (0.01,39.4043) (0.02,39.6484) (0.05,38.9648) (0.1,39.9414) (0.2,39.2578) (0.5,39.1113) (1,38.4766) (1,35.8887) (0.5,37.0117) (0.2,36.8164) (0.1,37.9883) (0.05,37.3047) (0.02,37.6953) (0.01,37.6465) (0.005,37.8418) (0.002,38.2324) (0.001,36.4746)} \closedcycle;
  \addplot[draw=none, fill=htSAMAdam, fill opacity=0.06] coordinates {(0.001,42.4316) (0.002,42.4805) (0.005,42.4805) (0.01,42.4805) (0.02,42.334) (0.05,42.4805) (0.1,42.5781) (0.2,42.5293) (0.5,42.2852) (1,42.3828) (1,41.3574) (0.5,41.1133) (0.2,41.1621) (0.1,41.0156) (0.05,41.0156) (0.02,41.0156) (0.01,41.0156) (0.005,41.0156) (0.002,41.0156) (0.001,41.0156)} \closedcycle;
  \addplot[draw=none, fill=htFSAMAdam, fill opacity=0.06] coordinates {(0.001,42.4316) (0.002,42.4805) (0.005,42.4805) (0.01,42.4805) (0.02,42.334) (0.05,42.334) (0.1,42.4805) (0.2,42.4316) (0.5,42.3828) (1,42.3828) (1,41.4062) (0.5,40.9668) (0.2,41.0156) (0.1,41.0156) (0.05,41.0156) (0.02,41.0156) (0.01,41.0156) (0.005,41.0156) (0.002,41.0156) (0.001,41.0156)} \closedcycle;
  \addplot[color=htMuon, dashed, line width=0.7pt, forget plot] coordinates {(0.001,42.8711) (1,42.8711)};
  \addplot[color=htAdam, dash dot, line width=0.7pt, forget plot] coordinates {(0.001,41.5039) (1,41.5039)};
  \addplot[color=htSpec, solid, line width=0.8pt] coordinates {(0.001,42.5781) (0.002,42.9688) (0.005,42.7734) (0.01,42.8711) (0.02,42.7734) (0.05,42.8711) (0.1,43.5547) (0.2,42.7734) (0.5,43.6523) (1,39.6484)};
  \addplot[color=htRand, dashed, line width=0.8pt] coordinates {(0.001,42.9688) (0.002,43.2617) (0.005,43.2617) (0.01,43.1641) (0.02,42.7734) (0.05,42.8711) (0.1,42.9688) (0.2,43.1641) (0.5,43.1641) (1,42.2852)};
  \addplot[color=htMSOMA, loosely dashed, line width=0.8pt] coordinates {(0.001,42.9688) (0.002,42.8711) (0.005,42.8711) (0.01,42.9688) (0.02,42.8711) (0.05,42.9688) (0.1,42.9688) (0.2,42.1875) (0.5,42.2852) (1,42.7734)};
  \addplot[color=htFSAM, solid, line width=0.8pt] coordinates {(0.001,39.0625) (0.002,39.1602) (0.005,39.7461) (0.01,39.1602) (0.02,38.6719) (0.05,38.9648) (0.1,39.2578) (0.2,38.1836) (0.5,37.9883) (1,38.4766)};
  \addplot[color=htSAM, solid, line width=0.8pt] coordinates {(0.001,38.5742) (0.002,39.1602) (0.005,39.4531) (0.01,39.0625) (0.02,38.2812) (0.05,39.2578) (0.1,38.7695) (0.2,38.6719) (0.5,38.5742) (1,37.9883)};
  \addplot[color=htMSAM, solid, line width=0.8pt] coordinates {(0.001,38.4766) (0.002,38.8672) (0.005,38.3789) (0.01,38.1836) (0.02,38.7695) (0.05,38.3789) (0.1,39.0625) (0.2,38.4766) (0.5,37.6953) (1,37.3047)};
  \addplot[color=htSOMA, solid, line width=1.1pt] coordinates {(0.001,43.0664) (0.002,42.8711) (0.005,42.9688) (0.01,43.3594) (0.02,43.0664) (0.05,42.7734) (0.1,42.8711) (0.2,42.9688) (0.5,44.2383) (1,45.0195)};
  \addplot[color=htFPSOMA, solid, line width=1.1pt] coordinates {(0.001,43.1641) (0.002,42.5781) (0.005,42.8711) (0.01,42.8711) (0.02,43.5547) (0.05,42.4805) (0.1,42.3828) (0.2,43.2617) (0.5,44.2383) (1,43.6523)};
  \addplot[color=htPreNS5, densely dotted, line width=1.1pt] coordinates {(0.001,42.7734) (0.002,42.4805) (0.005,42.6758) (0.01,42.8711) (0.02,43.3594) (0.05,43.2617) (0.1,43.8477) (0.2,43.6523) (0.5,44.043) (1,45.3125)};
  \addplot[color=htPostNS5, dash dot, line width=1.1pt] coordinates {(0.001,43.0664) (0.002,43.2617) (0.005,42.8711) (0.01,43.3594) (0.02,42.9688) (0.05,42.3828) (0.1,43.75) (0.2,43.9453) (0.5,44.043) (1,45.2148)};
  \addplot[color=htSAMAdam, solid, line width=1.1pt] coordinates {(0.001,41.6016) (0.002,41.6016) (0.005,41.6016) (0.01,41.6016) (0.02,41.6016) (0.05,41.6992) (0.1,41.6992) (0.2,41.7969) (0.5,41.8945) (1,41.8945)};
  \addplot[color=htFSAMAdam, densely dashed, line width=1.4pt] coordinates {(0.001,41.6016) (0.002,41.6992) (0.005,41.6016) (0.01,41.6992) (0.02,41.6992) (0.05,41.6992) (0.1,41.7969) (0.2,41.6992) (0.5,41.7969) (1,41.6992)};
\end{axis}
\begin{axis}[name=ax2_r0,
    at={(0.5650\textwidth,0in)},
    anchor=south west,
    ht panel, xmode=log,
    ymin=35.7148, ymax=48.8555,
    ytick={40,45},
    yticklabels={40,45},
    title={$\alpha=2.0$}]
  \addplot[draw=none, fill=htSOMA, fill opacity=0.06] coordinates {(0.001,45.3125) (0.002,44.9707) (0.005,45.1172) (0.01,44.9707) (0.02,45.166) (0.05,45.459) (0.1,45.2637) (0.2,45.5078) (0.5,46.4844) (1,46.9727) (1,45.7031) (0.5,45.0684) (0.2,44.6777) (0.1,43.5547) (0.05,44.1406) (0.02,43.457) (0.01,43.5547) (0.005,43.1641) (0.002,43.3594) (0.001,43.6523)} \closedcycle;
  \addplot[draw=none, fill=htFPSOMA, fill opacity=0.06] coordinates {(0.001,45.752) (0.002,45.4102) (0.005,45.0195) (0.01,44.5312) (0.02,45.3613) (0.05,45.6543) (0.1,45.6055) (0.2,45.5566) (0.5,47.3145) (1,44.3359) (1,43.5059) (0.5,46.3379) (0.2,45.3125) (0.1,44.7266) (0.05,44.1406) (0.02,43.5547) (0.01,43.457) (0.005,43.75) (0.002,43.3105) (0.001,43.7988)} \closedcycle;
  \addplot[draw=none, fill=htSpec, fill opacity=0.06] coordinates {(0.001,45.7031) (0.002,45.5566) (0.005,44.9219) (0.01,45.2637) (0.02,45.2148) (0.05,45.8008) (0.1,46.2891) (0.2,46.1914) (0.5,46.9727) (1,37.4512) (1,34.1309) (0.5,45.0684) (0.2,44.3359) (0.1,44.4336) (0.05,43.9453) (0.02,43.6035) (0.01,43.8965) (0.005,43.5059) (0.002,43.5059) (0.001,43.457)} \closedcycle;
  \addplot[draw=none, fill=htRand, fill opacity=0.06] coordinates {(0.001,44.7754) (0.002,44.7266) (0.005,44.8242) (0.01,45.3613) (0.02,45.1172) (0.05,45.1172) (0.1,45.3125) (0.2,45.3613) (0.5,45.2637) (1,45.3613) (1,43.9941) (0.5,43.2617) (0.2,43.0664) (0.1,43.7012) (0.05,42.9688) (0.02,43.3105) (0.01,43.3594) (0.005,43.3105) (0.002,42.9199) (0.001,43.2129)} \closedcycle;
  \addplot[draw=none, fill=htPreNS5, fill opacity=0.06] coordinates {(0.001,45.4102) (0.002,44.9219) (0.005,45.4102) (0.01,45.1172) (0.02,45.166) (0.05,45.4102) (0.1,44.7754) (0.2,45.2148) (0.5,46.4355) (1,46.8262) (1,45.5566) (0.5,45.2637) (0.2,44.3848) (0.1,43.7012) (0.05,43.3105) (0.02,43.3594) (0.01,43.5059) (0.005,44.043) (0.002,43.457) (0.001,43.6035)} \closedcycle;
  \addplot[draw=none, fill=htPostNS5, fill opacity=0.06] coordinates {(0.001,45.2148) (0.002,45.166) (0.005,45.1172) (0.01,45.0684) (0.02,44.873) (0.05,45.0195) (0.1,45.1172) (0.2,45.166) (0.5,46.0449) (1,48.291) (1,46.2891) (0.5,45.3613) (0.2,43.7012) (0.1,43.5059) (0.05,43.6035) (0.02,44.043) (0.01,43.7012) (0.005,43.4082) (0.002,43.5059) (0.001,43.0664)} \closedcycle;
  \addplot[draw=none, fill=htMSOMA, fill opacity=0.06] coordinates {(0.001,44.9219) (0.002,45.1172) (0.005,45.5078) (0.01,45.166) (0.02,45.5078) (0.05,45.166) (0.1,45.6055) (0.2,45.3125) (0.5,45.3125) (1,44.873) (1,42.9199) (0.5,43.3594) (0.2,43.3594) (0.1,43.1152) (0.05,43.1152) (0.02,42.7734) (0.01,43.3105) (0.005,43.4082) (0.002,43.1152) (0.001,43.1152)} \closedcycle;
  \addplot[draw=none, fill=htFSAM, fill opacity=0.06] coordinates {(0.001,40.332) (0.002,41.5527) (0.005,41.6504) (0.01,40.8203) (0.02,40.6738) (0.05,41.0645) (0.1,41.0645) (0.2,41.3086) (0.5,40.8203) (1,41.2598) (1,39.9902) (0.5,39.2578) (0.2,38.5742) (0.1,37.8418) (0.05,38.8672) (0.02,38.0371) (0.01,38.7207) (0.005,39.2578) (0.002,39.4043) (0.001,38.2812)} \closedcycle;
  \addplot[draw=none, fill=htSAM, fill opacity=0.06] coordinates {(0.001,40.1367) (0.002,40.9668) (0.005,40.625) (0.01,41.3574) (0.02,41.0156) (0.05,40.5273) (0.1,41.4062) (0.2,41.7969) (0.5,41.4062) (1,40.4297) (1,39.3555) (0.5,39.2578) (0.2,39.9902) (0.1,38.7207) (0.05,38.623) (0.02,38.1836) (0.01,38.7695) (0.005,38.0371) (0.002,38.623) (0.001,39.0625)} \closedcycle;
  \addplot[draw=none, fill=htMSAM, fill opacity=0.06] coordinates {(0.001,40.2832) (0.002,40.8203) (0.005,42.1875) (0.01,41.2598) (0.02,41.6504) (0.05,41.2598) (0.1,41.0645) (0.2,40.5273) (0.5,40.7227) (1,40.4785) (1,38.4766) (0.5,38.2324) (0.2,37.8906) (0.1,38.2812) (0.05,38.3789) (0.02,38.2324) (0.01,38.4766) (0.005,38.0859) (0.002,38.4277) (0.001,37.9883)} \closedcycle;
  \addplot[draw=none, fill=htSAMAdam, fill opacity=0.06] coordinates {(0.001,44.8242) (0.002,44.8242) (0.005,44.6289) (0.01,44.6289) (0.02,44.7754) (0.05,44.7266) (0.1,44.7266) (0.2,44.5801) (0.5,44.3359) (1,44.3359) (1,43.6523) (0.5,43.8965) (0.2,43.8477) (0.1,43.5547) (0.05,43.5547) (0.02,43.75) (0.01,43.7012) (0.005,43.7012) (0.002,43.6523) (0.001,43.6523)} \closedcycle;
  \addplot[draw=none, fill=htFSAMAdam, fill opacity=0.06] coordinates {(0.001,44.8242) (0.002,44.7754) (0.005,44.4824) (0.01,44.6777) (0.02,44.7754) (0.05,44.5801) (0.1,44.7266) (0.2,44.5312) (0.5,44.5312) (1,44.5312) (1,43.9453) (0.5,43.7012) (0.2,43.5547) (0.1,43.75) (0.05,43.6035) (0.02,43.8965) (0.01,43.7012) (0.005,43.7012) (0.002,43.6523) (0.001,43.6523)} \closedcycle;
  \addplot[color=htMuon, dashed, line width=0.7pt, forget plot] coordinates {(0.001,44.043) (1,44.043)};
  \addplot[color=htAdam, dash dot, line width=0.7pt, forget plot] coordinates {(0.001,44.2383) (1,44.2383)};
  \addplot[color=htSpec, solid, line width=0.8pt] coordinates {(0.001,44.043) (0.002,44.5312) (0.005,44.043) (0.01,44.5312) (0.02,44.5312) (0.05,44.9219) (0.1,45.2148) (0.2,45.3125) (0.5,46.2891) (1,36.6211)};
  \addplot[color=htRand, dashed, line width=0.8pt] coordinates {(0.001,44.3359) (0.002,44.1406) (0.005,44.3359) (0.01,44.3359) (0.02,44.4336) (0.05,44.3359) (0.1,44.5312) (0.2,44.4336) (0.5,44.1406) (1,44.3359)};
  \addplot[color=htMSOMA, loosely dashed, line width=0.8pt] coordinates {(0.001,44.2383) (0.002,44.2383) (0.005,44.7266) (0.01,44.6289) (0.02,44.3359) (0.05,44.4336) (0.1,43.8477) (0.2,43.9453) (0.5,44.3359) (1,43.5547)};
  \addplot[color=htFSAM, solid, line width=0.8pt] coordinates {(0.001,39.5508) (0.002,40.2344) (0.005,40.0391) (0.01,39.7461) (0.02,39.6484) (0.05,39.6484) (0.1,39.2578) (0.2,39.4531) (0.5,40.1367) (1,40.2344)};
  \addplot[color=htSAM, solid, line width=0.8pt] coordinates {(0.001,39.5508) (0.002,39.6484) (0.005,39.3555) (0.01,40.0391) (0.02,39.9414) (0.05,39.7461) (0.1,40.0391) (0.2,40.8203) (0.5,40.0391) (1,40.0391)};
  \addplot[color=htMSAM, solid, line width=0.8pt] coordinates {(0.001,38.3789) (0.002,39.1602) (0.005,39.6484) (0.01,39.4531) (0.02,39.4531) (0.05,39.7461) (0.1,39.7461) (0.2,38.9648) (0.5,39.4531) (1,39.5508)};
  \addplot[color=htSOMA, solid, line width=1.1pt] coordinates {(0.001,44.4336) (0.002,44.2383) (0.005,44.6289) (0.01,44.043) (0.02,44.4336) (0.05,44.6289) (0.1,44.3359) (0.2,45.0195) (0.5,45.9961) (1,46.2891)};
  \addplot[color=htFPSOMA, solid, line width=1.1pt] coordinates {(0.001,44.7266) (0.002,44.5312) (0.005,44.3359) (0.01,44.043) (0.02,44.1406) (0.05,44.3359) (0.1,45.4102) (0.2,45.4102) (0.5,46.875) (1,44.043)};
  \addplot[color=htPreNS5, densely dotted, line width=1.1pt] coordinates {(0.001,44.6289) (0.002,44.5312) (0.005,44.4336) (0.01,43.8477) (0.02,44.4336) (0.05,45.0195) (0.1,44.2383) (0.2,44.5312) (0.5,45.5078) (1,46.0938)};
  \addplot[color=htPostNS5, dash dot, line width=1.1pt] coordinates {(0.001,44.6289) (0.002,44.7266) (0.005,44.6289) (0.01,44.2383) (0.02,44.4336) (0.05,44.4336) (0.1,44.9219) (0.2,44.4336) (0.5,45.8984) (1,47.9492)};
  \addplot[color=htSAMAdam, solid, line width=1.1pt] coordinates {(0.001,44.3359) (0.002,44.1406) (0.005,44.2383) (0.01,44.2383) (0.02,44.3359) (0.05,44.3359) (0.1,44.5312) (0.2,44.3359) (0.5,44.1406) (1,43.9453)};
  \addplot[color=htFSAMAdam, densely dashed, line width=1.4pt] coordinates {(0.001,44.1406) (0.002,44.043) (0.005,44.2383) (0.01,44.2383) (0.02,44.1406) (0.05,44.5312) (0.1,44.5312) (0.2,44.3359) (0.5,44.3359) (1,44.2383)};
\end{axis}
\begin{axis}[name=ax3_r0,
    at={(0.8065\textwidth,0in)},
    anchor=south west,
    ht panel, xmode=log,
    ymin=27.5352, ymax=48.832,
    ytick={30,35,40,45},
    yticklabels={30,35,40,45},
    title={$\alpha=3.0$}]
  \addplot[draw=none, fill=htSOMA, fill opacity=0.06] coordinates {(0.001,44.5801) (0.002,44.4336) (0.005,45.3125) (0.01,45.166) (0.02,45.3613) (0.05,45.166) (0.1,45.9961) (0.2,45.4102) (0.5,47.3145) (1,45.8008) (1,45.166) (0.5,45.3125) (0.2,44.2383) (0.1,43.5059) (0.05,43.4082) (0.02,42.9199) (0.01,43.5059) (0.005,43.5059) (0.002,43.8965) (0.001,43.9453)} \closedcycle;
  \addplot[draw=none, fill=htFPSOMA, fill opacity=0.06] coordinates {(0.001,44.2383) (0.002,44.5801) (0.005,44.873) (0.01,44.6289) (0.02,44.3848) (0.05,45.4102) (0.1,46.1914) (0.2,46.3379) (0.5,47.4121) (1,42.4316) (1,41.1621) (0.5,46.3867) (0.2,43.457) (0.1,43.5059) (0.05,43.7012) (0.02,43.5059) (0.01,43.75) (0.005,43.75) (0.002,43.8477) (0.001,43.75)} \closedcycle;
  \addplot[draw=none, fill=htSpec, fill opacity=0.06] coordinates {(0.001,44.8242) (0.002,44.1895) (0.005,44.7754) (0.01,45.166) (0.02,45.166) (0.05,45.1172) (0.1,45.6055) (0.2,46.1914) (0.5,47.9492) (1,29.1016) (1,28.9062) (0.5,46.0938) (0.2,44.6777) (0.1,44.1406) (0.05,44.1406) (0.02,43.3594) (0.01,43.7012) (0.005,43.9453) (0.002,43.5059) (0.001,44.0918)} \closedcycle;
  \addplot[draw=none, fill=htRand, fill opacity=0.06] coordinates {(0.001,44.5801) (0.002,45.0195) (0.005,44.5801) (0.01,44.7754) (0.02,44.9707) (0.05,44.7754) (0.1,44.7754) (0.2,45.1172) (0.5,45.166) (1,46.0449) (1,43.7012) (0.5,43.3105) (0.2,43.5547) (0.1,43.75) (0.05,44.0918) (0.02,43.7012) (0.01,44.0918) (0.005,43.7012) (0.002,43.457) (0.001,43.3594)} \closedcycle;
  \addplot[draw=none, fill=htPreNS5, fill opacity=0.06] coordinates {(0.001,44.4824) (0.002,44.4824) (0.005,45.0684) (0.01,44.873) (0.02,45.8008) (0.05,45.8496) (0.1,44.873) (0.2,45.5566) (0.5,47.9492) (1,47.4609) (1,46.2402) (0.5,46.0938) (0.2,44.4824) (0.1,44.0918) (0.05,44.4336) (0.02,43.7012) (0.01,43.8965) (0.005,43.8965) (0.002,44.0918) (0.001,43.7012)} \closedcycle;
  \addplot[draw=none, fill=htPostNS5, fill opacity=0.06] coordinates {(0.001,45.166) (0.002,45.2637) (0.005,44.5312) (0.01,44.8242) (0.02,45.2148) (0.05,45.0195) (0.1,44.7754) (0.2,46.0449) (0.5,47.2656) (1,47.8516) (1,46.9727) (0.5,45.1172) (0.2,44.6777) (0.1,43.6523) (0.05,44.2383) (0.02,43.8477) (0.01,44.0918) (0.005,42.9199) (0.002,44.043) (0.001,43.5547)} \closedcycle;
  \addplot[draw=none, fill=htMSOMA, fill opacity=0.06] coordinates {(0.001,44.4336) (0.002,44.7266) (0.005,44.9707) (0.01,44.6289) (0.02,44.9707) (0.05,45.0684) (0.1,45.2637) (0.2,46.875) (0.5,45.9961) (1,45.4102) (1,43.9941) (0.5,43.5059) (0.2,43.6523) (0.1,43.5059) (0.05,43.7012) (0.02,43.8965) (0.01,43.5547) (0.005,43.7012) (0.002,43.5547) (0.001,43.7012)} \closedcycle;
  \addplot[draw=none, fill=htFSAM, fill opacity=0.06] coordinates {(0.001,42.3828) (0.002,41.6992) (0.005,42.2363) (0.01,42.4805) (0.02,41.6992) (0.05,42.1875) (0.1,43.2129) (0.2,43.4082) (0.5,43.2129) (1,43.4082) (1,42.1387) (0.5,40.7227) (0.2,41.5039) (0.1,41.0645) (0.05,41.2109) (0.02,39.8926) (0.01,39.9902) (0.005,40.2344) (0.002,39.4531) (0.001,40.4785)} \closedcycle;
  \addplot[draw=none, fill=htSAM, fill opacity=0.06] coordinates {(0.001,42.4316) (0.002,41.9434) (0.005,42.6758) (0.01,42.2363) (0.02,42.627) (0.05,42.4316) (0.1,43.5547) (0.2,43.5547) (0.5,42.4316) (1,43.4082) (1,40.8691) (0.5,41.5039) (0.2,41.5527) (0.1,40.3809) (0.05,39.7949) (0.02,40.5762) (0.01,40.5273) (0.005,39.8926) (0.002,39.9414) (0.001,39.8926)} \closedcycle;
  \addplot[draw=none, fill=htMSAM, fill opacity=0.06] coordinates {(0.001,43.1641) (0.002,43.4082) (0.005,43.457) (0.01,42.7734) (0.02,43.7988) (0.05,43.3594) (0.1,43.5547) (0.2,42.8223) (0.5,43.6523) (1,43.0176) (1,40.4297) (0.5,41.2109) (0.2,41.1133) (0.1,41.1621) (0.05,41.8945) (0.02,41.6016) (0.01,41.8945) (0.005,41.4551) (0.002,41.9922) (0.001,41.0156)} \closedcycle;
  \addplot[draw=none, fill=htSAMAdam, fill opacity=0.06] coordinates {(0.001,46.2891) (0.002,46.1426) (0.005,46.0938) (0.01,46.0938) (0.02,46.0938) (0.05,45.9473) (0.1,46.1914) (0.2,46.1914) (0.5,46.3379) (1,46.4844) (1,45.6543) (0.5,45.7031) (0.2,45.1172) (0.1,45.1172) (0.05,45.2637) (0.02,45.0684) (0.01,44.9219) (0.005,45.2148) (0.002,45.2637) (0.001,45.0684)} \closedcycle;
  \addplot[draw=none, fill=htFSAMAdam, fill opacity=0.06] coordinates {(0.001,45.9473) (0.002,45.9473) (0.005,45.8984) (0.01,46.0938) (0.02,46.1426) (0.05,45.752) (0.1,46.1426) (0.2,45.9473) (0.5,46.2891) (1,46.5332) (1,45.5566) (0.5,45.4102) (0.2,45.1172) (0.1,45.3125) (0.05,45.459) (0.02,45.2148) (0.01,44.9219) (0.005,45.2637) (0.002,45.1172) (0.001,45.1172)} \closedcycle;
  \addplot[color=htMuon, dashed, line width=0.7pt, forget plot] coordinates {(0.001,44.3359) (1,44.3359)};
  \addplot[color=htAdam, dash dot, line width=0.7pt, forget plot] coordinates {(0.001,45.4102) (1,45.4102)};
  \addplot[color=htSpec, solid, line width=0.8pt] coordinates {(0.001,44.4336) (0.002,43.9453) (0.005,44.3359) (0.01,44.4336) (0.02,44.2383) (0.05,44.6289) (0.1,45.0195) (0.2,45.2148) (0.5,46.9727) (1,29.0039)};
  \addplot[color=htRand, dashed, line width=0.8pt] coordinates {(0.001,43.9453) (0.002,44.043) (0.005,44.3359) (0.01,44.1406) (0.02,44.3359) (0.05,44.3359) (0.1,44.1406) (0.2,43.8477) (0.5,44.2383) (1,44.8242)};
  \addplot[color=htMSOMA, loosely dashed, line width=0.8pt] coordinates {(0.001,44.1406) (0.002,44.043) (0.005,44.5312) (0.01,44.1406) (0.02,44.4336) (0.05,44.3359) (0.1,44.7266) (0.2,44.4336) (0.5,44.9219) (1,44.8242)};
  \addplot[color=htFSAM, solid, line width=0.8pt] coordinates {(0.001,41.5039) (0.002,40.918) (0.005,41.4062) (0.01,40.8203) (0.02,41.3086) (0.05,41.8945) (0.1,42.1875) (0.2,42.9688) (0.5,42.3828) (1,42.8711)};
  \addplot[color=htSAM, solid, line width=0.8pt] coordinates {(0.001,40.8203) (0.002,41.0156) (0.005,41.1133) (0.01,41.5039) (0.02,41.4062) (0.05,42.0898) (0.1,41.7969) (0.2,42.8711) (0.5,42.1875) (1,41.7969)};
  \addplot[color=htMSAM, solid, line width=0.8pt] coordinates {(0.001,41.6992) (0.002,42.5781) (0.005,42.0898) (0.01,42.1875) (0.02,42.1875) (0.05,42.3828) (0.1,42.0898) (0.2,42.1875) (0.5,41.7969) (1,41.8945)};
  \addplot[color=htSOMA, solid, line width=1.1pt] coordinates {(0.001,44.1406) (0.002,44.1406) (0.005,44.1406) (0.01,43.8477) (0.02,44.3359) (0.05,44.2383) (0.1,44.5312) (0.2,44.4336) (0.5,46.4844) (1,45.4102)};
  \addplot[color=htFPSOMA, solid, line width=1.1pt] coordinates {(0.001,43.9453) (0.002,44.3359) (0.005,44.5312) (0.01,44.2383) (0.02,44.1406) (0.05,44.2383) (0.1,43.9453) (0.2,44.9219) (0.5,46.6797) (1,41.6992)};
  \addplot[color=htPreNS5, densely dotted, line width=1.1pt] coordinates {(0.001,43.9453) (0.002,44.3359) (0.005,44.2383) (0.01,44.3359) (0.02,45.0195) (0.05,44.7266) (0.1,44.3359) (0.2,44.9219) (0.5,46.875) (1,46.3867)};
  \addplot[color=htPostNS5, dash dot, line width=1.1pt] coordinates {(0.001,44.1406) (0.002,44.2383) (0.005,43.75) (0.01,44.4336) (0.02,44.5312) (0.05,44.6289) (0.1,44.4336) (0.2,45.1172) (0.5,46.2891) (1,47.3633)};
  \addplot[color=htSAMAdam, solid, line width=1.1pt] coordinates {(0.001,45.6055) (0.002,45.6055) (0.005,45.6055) (0.01,45.7031) (0.02,45.6055) (0.05,45.6055) (0.1,45.5078) (0.2,45.5078) (0.5,45.9961) (1,46.0938)};
  \addplot[color=htFSAMAdam, densely dashed, line width=1.4pt] coordinates {(0.001,45.4102) (0.002,45.5078) (0.005,45.5078) (0.01,45.6055) (0.02,45.8008) (0.05,45.5078) (0.1,45.7031) (0.2,45.5078) (0.5,45.8008) (1,46.2891)};
\end{axis}
\begin{axis}[hide axis, scale only axis, at={(ax0_r0.north west)}, anchor=south west, yshift=34pt,
    width=0.9300\textwidth, height=1pt, xmin=0, xmax=1, ymin=0, ymax=1,
    legend columns=7, legend cell align=left,
    legend style={at={(0.5,0)}, anchor=south, draw=none, fill=none, font=\scriptsize, inner sep=1pt,
      /tikz/every even column/.append style={column sep=5pt}}]
  \addlegendimage{color=htSOMA, solid, line width=1.1pt}\addlegendentry{\textsc{Soma}}
  \addlegendimage{color=htFPSOMA, solid, line width=1.1pt}\addlegendentry{\textsc{FP{-}Soma}}
  \addlegendimage{color=htMSOMA, loosely dashed, line width=0.8pt}\addlegendentry{\textsc{Msoma}}
  \addlegendimage{color=htPreNS5, densely dotted, line width=1.1pt}\addlegendentry{\textsc{Opspre}}
  \addlegendimage{color=htPostNS5, dash dot, line width=1.1pt}\addlegendentry{\textsc{Opspost}}
  \addlegendimage{color=htSpec, solid, line width=0.8pt}\addlegendentry{\textsc{Ssm}}
  \addlegendimage{color=htRand, dashed, line width=0.8pt}\addlegendentry{\textsc{Rsm}}
  \addlegendimage{color=htSAM, solid, line width=0.8pt}\addlegendentry{\textsc{Sam}}
  \addlegendimage{color=htFSAM, solid, line width=0.8pt}\addlegendentry{\textsc{F{-}Sam}}
  \addlegendimage{color=htMSAM, solid, line width=0.8pt}\addlegendentry{\textsc{Msam}}
  \addlegendimage{color=htSAMAdam, solid, line width=1.1pt}\addlegendentry{\textsc{Saw}}
  \addlegendimage{color=htFSAMAdam, densely dashed, line width=1.4pt}\addlegendentry{\textsc{Fsaw}}
  \addlegendimage{color=htMuon, dashed, line width=0.7pt}\addlegendentry{\textsc{Muon}}
  \addlegendimage{color=htAdam, dash dot, line width=0.7pt}\addlegendentry{\textsc{AdamW}}
\end{axis}
\path (ax0_r0.south west) -- (ax3_r0.south east) coordinate[midway] (xlab);
\node[font=\small, anchor=north] at ([yshift=-14pt]xlab) {perturbation radius $\rho$};
\end{tikzpicture}

%% file: tikz_for_heavytailed/experiment_1/train_loss_vs_rho.tikz
%% Auto-generated by make_heavytailed_figures.py  --  do not edit by hand, re-run the script.
%% Values: line = median over seeds, band = interquartile range (25th-75th percentile).
%% Preamble needed:  \usepackage{pgfplots}  \pgfplotsset{compat=1.16}   (nothing else)
%% Width follows \textwidth, fonts follow the document (so the legend can never overflow).

\definecolor{htSOMA}{HTML}{0B5D1E}
\definecolor{htFPSOMA}{HTML}{33CC33}
\definecolor{htSpec}{HTML}{00B3A4}
\definecolor{htRand}{HTML}{808000}
\definecolor{htFSAM}{HTML}{2F80D8}
\definecolor{htSAM}{HTML}{0A3A8A}
\definecolor{htPreNS5}{HTML}{9ACD32}
\definecolor{htPostNS5}{HTML}{004D40}
\definecolor{htSAMAdam}{HTML}{8E0F1C}
\definecolor{htFSAMAdam}{HTML}{F4623A}
\definecolor{htMSAM}{HTML}{7DB9F2}
\definecolor{htMSOMA}{HTML}{8FD694}
\definecolor{htMuon}{HTML}{111111}
\definecolor{htAdam}{HTML}{8A8A8A}
\pgfplotsset{
  ht panel/.style={
    scale only axis, width=0.1920\textwidth, height=1.35in,
    xmin=0.00075, xmax=1.35,
    xtick={0.001,0.01,0.1,1}, xticklabels={$10^{-3}$,$10^{-2}$,$10^{-1}$,$10^{0}$},
    minor xtick={0.002,0.003,0.004,0.005,0.006,0.007,0.008,0.009,0.02,0.03,0.04,0.05,0.06,0.07,0.08,0.09,0.2,0.3,0.4,0.5,0.6,0.7,0.8,0.9},
    xmajorgrids=true, ymajorgrids=true,
    grid style={line width=0.25pt, draw=black!10},
    axis line style={line width=0.4pt, black!75},
    tick style={line width=0.4pt, black!75}, major tick length=3pt, minor tick length=1.5pt,
    tick label style={font=\footnotesize, /pgf/number format/assume math mode=true},
    label style={font=\small}, title style={font=\small, yshift=-3pt},
    ylabel style={yshift=-1pt, align=center},
    every tick label/.append style={inner sep=1pt},
    clip=true,
  }}

\begin{tikzpicture}
\begin{axis}[name=ax0_r0,
    at={(0.0820\textwidth,0in)},
    anchor=south west,
    ht panel, xmode=log, ymode=log,
    ymin=1.12286, ymax=28.0524,
    ytick={2,5,10,20},
    yticklabels={2,5,10,20},
    title={$\alpha=1.1$},
    ylabel={training loss}]
  \addplot[draw=none, fill=htSOMA, fill opacity=0.06] coordinates {(0.001,1.46828) (0.002,1.46874) (0.005,1.46907) (0.01,1.4678) (0.02,1.46713) (0.05,1.46436) (0.1,1.4596) (0.2,1.45537) (0.5,1.46626) (1,1.56269) (1,1.5165) (0.5,1.42018) (0.2,1.40945) (0.1,1.41553) (0.05,1.41905) (0.02,1.42129) (0.01,1.42207) (0.005,1.42252) (0.002,1.4225) (0.001,1.42271)} \closedcycle;
  \addplot[draw=none, fill=htFPSOMA, fill opacity=0.06] coordinates {(0.001,1.46918) (0.002,1.46899) (0.005,1.46859) (0.01,1.46702) (0.02,1.4668) (0.05,1.46459) (0.1,1.46167) (0.2,1.46298) (0.5,1.50946) (1,1.6124) (1,1.56499) (0.5,1.44529) (0.2,1.41201) (0.1,1.41392) (0.05,1.41804) (0.02,1.42007) (0.01,1.42194) (0.005,1.42222) (0.002,1.42257) (0.001,1.42278)} \closedcycle;
  \addplot[draw=none, fill=htSpec, fill opacity=0.06] coordinates {(0.001,1.46853) (0.002,1.46898) (0.005,1.46781) (0.01,1.46849) (0.02,1.46936) (0.05,1.47034) (0.1,1.47494) (0.2,1.49817) (0.5,1.59787) (1,1.77006) (1,1.72678) (0.5,1.53724) (0.2,1.4459) (0.1,1.42704) (0.05,1.42439) (0.02,1.42304) (0.01,1.42329) (0.005,1.4223) (0.002,1.42293) (0.001,1.42287)} \closedcycle;
  \addplot[draw=none, fill=htRand, fill opacity=0.06] coordinates {(0.001,1.46815) (0.002,1.46845) (0.005,1.46842) (0.01,1.46872) (0.02,1.46862) (0.05,1.46861) (0.1,1.46753) (0.2,1.46834) (0.5,1.46782) (1,1.47227) (1,1.42672) (0.5,1.42504) (0.2,1.42434) (0.1,1.42289) (0.05,1.42308) (0.02,1.42314) (0.01,1.4225) (0.005,1.42277) (0.002,1.42286) (0.001,1.42269)} \closedcycle;
  \addplot[draw=none, fill=htPreNS5, fill opacity=0.06] coordinates {(0.001,1.46809) (0.002,1.46883) (0.005,1.46833) (0.01,1.46888) (0.02,1.46643) (0.05,1.46657) (0.1,1.46467) (0.2,1.46684) (0.5,1.49329) (1,1.60234) (1,1.57115) (0.5,1.45042) (0.2,1.42423) (0.1,1.42045) (0.05,1.42054) (0.02,1.42283) (0.01,1.42265) (0.005,1.4227) (0.002,1.42296) (0.001,1.42284)} \closedcycle;
  \addplot[draw=none, fill=htPostNS5, fill opacity=0.06] coordinates {(0.001,1.46842) (0.002,1.46863) (0.005,1.46939) (0.01,1.46867) (0.02,1.46778) (0.05,1.46686) (0.1,1.46565) (0.2,1.46512) (0.5,1.47945) (1,1.55209) (1,1.49358) (0.5,1.42941) (0.2,1.42135) (0.1,1.42142) (0.05,1.42168) (0.02,1.42232) (0.01,1.42251) (0.005,1.42296) (0.002,1.42274) (0.001,1.42279)} \closedcycle;
  \addplot[draw=none, fill=htMSOMA, fill opacity=0.06] coordinates {(0.001,1.46803) (0.002,1.46813) (0.005,1.46826) (0.01,1.46826) (0.02,1.46776) (0.05,1.46646) (0.1,1.46527) (0.2,1.46293) (0.5,1.45467) (1,1.44286) (1,1.39897) (0.5,1.41195) (0.2,1.41809) (0.1,1.42108) (0.05,1.42206) (0.02,1.42233) (0.01,1.42278) (0.005,1.42278) (0.002,1.42298) (0.001,1.42295)} \closedcycle;
  \addplot[draw=none, fill=htFSAM, fill opacity=0.06] coordinates {(0.001,24.0857) (0.002,23.829) (0.005,19.4409) (0.01,22.6817) (0.02,20.0505) (0.05,22.3688) (0.1,22.0878) (0.2,24.7444) (0.5,21.5509) (1,23.0669) (1,7.803) (0.5,8.20108) (0.2,8.76755) (0.1,7.21221) (0.05,7.90551) (0.02,7.46938) (0.01,8.47635) (0.005,7.61492) (0.002,9.29205) (0.001,8.64854)} \closedcycle;
  \addplot[draw=none, fill=htSAM, fill opacity=0.06] coordinates {(0.001,26.3768) (0.002,21.1101) (0.005,24.2069) (0.01,22.3121) (0.02,23.9592) (0.05,21.7184) (0.1,24.5129) (0.2,23.7792) (0.5,24.1481) (1,26.2087) (1,8.03888) (0.5,7.64199) (0.2,8.23372) (0.1,7.88943) (0.05,8.75366) (0.02,9.14273) (0.01,8.34069) (0.005,9.414) (0.002,9.62899) (0.001,7.60185)} \closedcycle;
  \addplot[draw=none, fill=htMSAM, fill opacity=0.06] coordinates {(0.001,40.4131) (0.002,41.5119) (0.005,36.1859) (0.01,39.9233) (0.02,38.2125) (0.05,45.0523) (0.1,40.1066) (0.2,39.0548) (0.5,43.4648) (1,39.5678) (1,12.9368) (0.5,14.859) (0.2,12.7656) (0.1,12.2972) (0.05,13.0913) (0.02,10.5627) (0.01,12.8447) (0.005,13.0861) (0.002,14.5134) (0.001,16.9513)} \closedcycle;
  \addplot[draw=none, fill=htSAMAdam, fill opacity=0.06] coordinates {(0.001,2.72292) (0.002,2.7229) (0.005,2.72294) (0.01,2.72289) (0.02,2.72292) (0.05,2.7228) (0.1,2.72264) (0.2,2.72283) (0.5,2.72314) (1,2.72418) (1,2.61255) (0.5,2.61216) (0.2,2.61187) (0.1,2.61192) (0.05,2.61193) (0.02,2.61178) (0.01,2.61184) (0.005,2.61183) (0.002,2.61183) (0.001,2.61184)} \closedcycle;
  \addplot[draw=none, fill=htFSAMAdam, fill opacity=0.06] coordinates {(0.001,2.72292) (0.002,2.72291) (0.005,2.7229) (0.01,2.72289) (0.02,2.72283) (0.05,2.72301) (0.1,2.72276) (0.2,2.72302) (0.5,2.72254) (1,2.72349) (1,2.61206) (0.5,2.61217) (0.2,2.61201) (0.1,2.61194) (0.05,2.61199) (0.02,2.61187) (0.01,2.61188) (0.005,2.61185) (0.002,2.61183) (0.001,2.61183)} \closedcycle;
  \addplot[color=htMuon, dashed, line width=0.7pt, forget plot] coordinates {(0.001,1.45012) (1,1.45012)};
  \addplot[color=htAdam, dash dot, line width=0.7pt, forget plot] coordinates {(0.001,2.69767) (1,2.69767)};
  \addplot[color=htSpec, solid, line width=0.8pt] coordinates {(0.001,1.45023) (0.002,1.44972) (0.005,1.44983) (0.01,1.4491) (0.02,1.44908) (0.05,1.4495) (0.1,1.45588) (0.2,1.4712) (0.5,1.57238) (1,1.7475)};
  \addplot[color=htRand, dashed, line width=0.8pt] coordinates {(0.001,1.45017) (0.002,1.45032) (0.005,1.44999) (0.01,1.4499) (0.02,1.44977) (0.05,1.44989) (0.1,1.44943) (0.2,1.45179) (0.5,1.45116) (1,1.45045)};
  \addplot[color=htMSOMA, loosely dashed, line width=0.8pt] coordinates {(0.001,1.4502) (0.002,1.45016) (0.005,1.44975) (0.01,1.44982) (0.02,1.4488) (0.05,1.44814) (0.1,1.44589) (0.2,1.44286) (0.5,1.43513) (1,1.42459)};
  \addplot[color=htFSAM, solid, line width=0.8pt] coordinates {(0.001,12.6354) (0.002,11.4212) (0.005,11.849) (0.01,12.9569) (0.02,10.6817) (0.05,11.149) (0.1,12.4472) (0.2,12.9797) (0.5,11.1882) (1,10.5488)};
  \addplot[color=htSAM, solid, line width=0.8pt] coordinates {(0.001,10.2204) (0.002,12.8772) (0.005,10.811) (0.01,11.7463) (0.02,12.3425) (0.05,10.9896) (0.1,10.9332) (0.2,11.4282) (0.5,10.8719) (1,8.99626)};
  \addplot[color=htMSAM, solid, line width=0.8pt] coordinates {(0.001,20.5321) (0.002,22.1108) (0.005,21.4469) (0.01,19.624) (0.02,19.1378) (0.05,17.6653) (0.1,19.1466) (0.2,19.5373) (0.5,20.2206) (1,20.2227)};
  \addplot[color=htSOMA, solid, line width=1.1pt] coordinates {(0.001,1.45006) (0.002,1.45009) (0.005,1.44953) (0.01,1.44876) (0.02,1.44598) (0.05,1.44274) (0.1,1.43896) (0.2,1.43125) (0.5,1.4365) (1,1.53023)};
  \addplot[color=htFPSOMA, solid, line width=1.1pt] coordinates {(0.001,1.45013) (0.002,1.44956) (0.005,1.44875) (0.01,1.44798) (0.02,1.44712) (0.05,1.44378) (0.1,1.44226) (0.2,1.43942) (0.5,1.46959) (1,1.58744)};
  \addplot[color=htPreNS5, densely dotted, line width=1.1pt] coordinates {(0.001,1.44985) (0.002,1.45049) (0.005,1.4496) (0.01,1.44977) (0.02,1.44915) (0.05,1.44639) (0.1,1.44382) (0.2,1.44302) (0.5,1.46546) (1,1.58753)};
  \addplot[color=htPostNS5, dash dot, line width=1.1pt] coordinates {(0.001,1.44988) (0.002,1.45012) (0.005,1.45007) (0.01,1.44932) (0.02,1.44832) (0.05,1.44593) (0.1,1.44462) (0.2,1.44207) (0.5,1.4506) (1,1.51793)};
  \addplot[color=htSAMAdam, solid, line width=1.1pt] coordinates {(0.001,2.69767) (0.002,2.69767) (0.005,2.69769) (0.01,2.6977) (0.02,2.69771) (0.05,2.69792) (0.1,2.69774) (0.2,2.69771) (0.5,2.69747) (1,2.69785)};
  \addplot[color=htFSAMAdam, densely dashed, line width=1.4pt] coordinates {(0.001,2.69767) (0.002,2.69767) (0.005,2.69767) (0.01,2.6977) (0.02,2.69773) (0.05,2.69801) (0.1,2.69797) (0.2,2.69778) (0.5,2.69806) (1,2.6988)};
\end{axis}
\begin{axis}[name=ax1_r0,
    at={(0.3235\textwidth,0in)},
    anchor=south west,
    ht panel, xmode=log, ymode=log,
    ymin=0.00710258, ymax=2.46313,
    ytick={0.1,1},
    yticklabels={0.1,1},
    title={$\alpha=1.6$}]
  \addplot[draw=none, fill=htSOMA, fill opacity=0.06] coordinates {(0.001,0.383559) (0.002,0.383458) (0.005,0.384551) (0.01,0.38134) (0.02,0.384271) (0.05,0.385161) (0.1,0.3882) (0.2,0.400939) (0.5,0.52251) (1,0.972998) (1,0.933014) (0.5,0.504438) (0.2,0.387369) (0.1,0.372153) (0.05,0.366559) (0.02,0.365046) (0.01,0.365642) (0.005,0.365678) (0.002,0.36625) (0.001,0.366438)} \closedcycle;
  \addplot[draw=none, fill=htFPSOMA, fill opacity=0.06] coordinates {(0.001,0.384862) (0.002,0.383828) (0.005,0.383139) (0.01,0.382503) (0.02,0.382647) (0.05,0.384473) (0.1,0.401324) (0.2,0.448525) (0.5,0.702388) (1,1.36628) (1,1.33451) (0.5,0.680043) (0.2,0.431136) (0.1,0.383389) (0.05,0.369462) (0.02,0.363683) (0.01,0.366549) (0.005,0.365537) (0.002,0.366167) (0.001,0.365525)} \closedcycle;
  \addplot[draw=none, fill=htSpec, fill opacity=0.06] coordinates {(0.001,0.383884) (0.002,0.383095) (0.005,0.384138) (0.01,0.38441) (0.02,0.380305) (0.05,0.390822) (0.1,0.415539) (0.2,0.491302) (0.5,0.83139) (1,1.66589) (1,1.62887) (0.5,0.815247) (0.2,0.464587) (0.1,0.396711) (0.05,0.376699) (0.02,0.368666) (0.01,0.369264) (0.005,0.365963) (0.002,0.365096) (0.001,0.365928)} \closedcycle;
  \addplot[draw=none, fill=htRand, fill opacity=0.06] coordinates {(0.001,0.386351) (0.002,0.387793) (0.005,0.38518) (0.01,0.384158) (0.02,0.382817) (0.05,0.384265) (0.1,0.385141) (0.2,0.38378) (0.5,0.386053) (1,0.393044) (1,0.364994) (0.5,0.360911) (0.2,0.359393) (0.1,0.36633) (0.05,0.365583) (0.02,0.365739) (0.01,0.365624) (0.005,0.364917) (0.002,0.365349) (0.001,0.364964)} \closedcycle;
  \addplot[draw=none, fill=htPreNS5, fill opacity=0.06] coordinates {(0.001,0.383587) (0.002,0.383762) (0.005,0.383041) (0.01,0.383005) (0.02,0.384227) (0.05,0.383032) (0.1,0.388826) (0.2,0.401174) (0.5,0.513511) (1,0.914641) (1,0.887892) (0.5,0.484019) (0.2,0.374258) (0.1,0.363365) (0.05,0.361369) (0.02,0.365495) (0.01,0.364721) (0.005,0.365539) (0.002,0.366634) (0.001,0.367103)} \closedcycle;
  \addplot[draw=none, fill=htPostNS5, fill opacity=0.06] coordinates {(0.001,0.383796) (0.002,0.38334) (0.005,0.384431) (0.01,0.38193) (0.02,0.383264) (0.05,0.382293) (0.1,0.384226) (0.2,0.391506) (0.5,0.451741) (1,0.73024) (1,0.697307) (0.5,0.432761) (0.2,0.368802) (0.1,0.362375) (0.05,0.364927) (0.02,0.365698) (0.01,0.366562) (0.005,0.365572) (0.002,0.365925) (0.001,0.366072)} \closedcycle;
  \addplot[draw=none, fill=htMSOMA, fill opacity=0.06] coordinates {(0.001,0.385096) (0.002,0.384841) (0.005,0.385468) (0.01,0.3833) (0.02,0.383865) (0.05,0.379935) (0.1,0.381588) (0.2,0.376847) (0.5,0.367592) (1,0.352224) (1,0.328804) (0.5,0.341641) (0.2,0.353044) (0.1,0.357688) (0.05,0.362408) (0.02,0.364128) (0.01,0.364498) (0.005,0.365091) (0.002,0.36482) (0.001,0.365176)} \closedcycle;
  \addplot[draw=none, fill=htFSAM, fill opacity=0.06] coordinates {(0.001,0.0350221) (0.002,0.0196077) (0.005,0.0375331) (0.01,0.0351233) (0.02,0.0314499) (0.05,0.0337712) (0.1,0.0214709) (0.2,0.0353248) (0.5,0.0291557) (1,0.0450456) (1,0.0155482) (0.5,0.0144289) (0.2,0.00795835) (0.1,0.0029806) (0.05,0.00257337) (0.02,0.00578396) (0.01,0.00428055) (0.005,0.00402915) (0.002,0.00539989) (0.001,0.0104156)} \closedcycle;
  \addplot[draw=none, fill=htSAM, fill opacity=0.06] coordinates {(0.001,0.0294805) (0.002,0.0325225) (0.005,0.0211754) (0.01,0.0224799) (0.02,0.0304546) (0.05,0.0322795) (0.1,0.0359102) (0.2,0.0246566) (0.5,0.0490353) (1,0.0578108) (1,0.0255456) (0.5,0.0097451) (0.2,0.00668259) (0.1,0.00481037) (0.05,0.00654271) (0.02,0.00459128) (0.01,0.00688368) (0.005,0.00727114) (0.002,0.00771078) (0.001,0.00939225)} \closedcycle;
  \addplot[draw=none, fill=htMSAM, fill opacity=0.06] coordinates {(0.001,0.052435) (0.002,0.0440916) (0.005,0.0525146) (0.01,0.0585514) (0.02,0.07364) (0.05,0.0631572) (0.1,0.0455915) (0.2,0.0751596) (0.5,0.0680271) (1,0.0681914) (1,0.0100145) (0.5,0.0189675) (0.2,0.0263948) (0.1,0.023226) (0.05,0.0210737) (0.02,0.01382) (0.01,0.0164649) (0.005,0.0211347) (0.002,0.0119139) (0.001,0.0146142)} \closedcycle;
  \addplot[draw=none, fill=htSAMAdam, fill opacity=0.06] coordinates {(0.001,1.34628) (0.002,1.34645) (0.005,1.34663) (0.01,1.34657) (0.02,1.34702) (0.05,1.34917) (0.1,1.35023) (0.2,1.35471) (0.5,1.36798) (1,1.40769) (1,1.35718) (0.5,1.32278) (0.2,1.31068) (0.1,1.30633) (0.05,1.30434) (0.02,1.30391) (0.01,1.30367) (0.005,1.30356) (0.002,1.30378) (0.001,1.30372)} \closedcycle;
  \addplot[draw=none, fill=htFSAMAdam, fill opacity=0.06] coordinates {(0.001,1.34627) (0.002,1.34632) (0.005,1.3465) (0.01,1.34635) (0.02,1.34678) (0.05,1.34832) (0.1,1.34966) (0.2,1.35235) (0.5,1.36157) (1,1.38883) (1,1.34437) (0.5,1.31571) (0.2,1.3087) (0.1,1.30547) (0.05,1.30421) (0.02,1.30404) (0.01,1.30352) (0.005,1.3037) (0.002,1.30376) (0.001,1.30372)} \closedcycle;
  \addplot[color=htMuon, dashed, line width=0.7pt, forget plot] coordinates {(0.001,0.372409) (1,0.372409)};
  \addplot[color=htAdam, dash dot, line width=0.7pt, forget plot] coordinates {(0.001,1.32343) (1,1.32343)};
  \addplot[color=htSpec, solid, line width=0.8pt] coordinates {(0.001,0.372791) (0.002,0.372393) (0.005,0.372956) (0.01,0.374821) (0.02,0.375483) (0.05,0.384565) (0.1,0.412963) (0.2,0.478683) (0.5,0.824378) (1,1.65187)};
  \addplot[color=htRand, dashed, line width=0.8pt] coordinates {(0.001,0.371934) (0.002,0.371622) (0.005,0.371695) (0.01,0.37323) (0.02,0.37177) (0.05,0.37271) (0.1,0.372346) (0.2,0.370672) (0.5,0.371057) (1,0.377434)};
  \addplot[color=htMSOMA, loosely dashed, line width=0.8pt] coordinates {(0.001,0.372524) (0.002,0.371246) (0.005,0.37137) (0.01,0.370909) (0.02,0.370774) (0.05,0.369612) (0.1,0.367221) (0.2,0.359987) (0.5,0.351449) (1,0.338231)};
  \addplot[color=htFSAM, solid, line width=0.8pt] coordinates {(0.001,0.0222246) (0.002,0.0107201) (0.005,0.0222793) (0.01,0.0236972) (0.02,0.0212362) (0.05,0.0159922) (0.1,0.0105908) (0.2,0.0172155) (0.5,0.0211426) (1,0.0270847)};
  \addplot[color=htSAM, solid, line width=0.8pt] coordinates {(0.001,0.0193278) (0.002,0.016261) (0.005,0.0166917) (0.01,0.0130619) (0.02,0.0148252) (0.05,0.0248183) (0.1,0.0140635) (0.2,0.013378) (0.5,0.0238686) (1,0.0446523)};
  \addplot[color=htMSAM, solid, line width=0.8pt] coordinates {(0.001,0.0307483) (0.002,0.0257132) (0.005,0.0310482) (0.01,0.0282426) (0.02,0.0424038) (0.05,0.0390334) (0.1,0.0360014) (0.2,0.0495387) (0.5,0.0400759) (1,0.0396523)};
  \addplot[color=htSOMA, solid, line width=1.1pt] coordinates {(0.001,0.37221) (0.002,0.371116) (0.005,0.371276) (0.01,0.372815) (0.02,0.370942) (0.05,0.371425) (0.1,0.377481) (0.2,0.396694) (0.5,0.516042) (1,0.943441)};
  \addplot[color=htFPSOMA, solid, line width=1.1pt] coordinates {(0.001,0.372064) (0.002,0.372207) (0.005,0.372557) (0.01,0.371894) (0.02,0.372516) (0.05,0.378003) (0.1,0.396059) (0.2,0.439059) (0.5,0.692401) (1,1.36038)};
  \addplot[color=htPreNS5, densely dotted, line width=1.1pt] coordinates {(0.001,0.372144) (0.002,0.371078) (0.005,0.371623) (0.01,0.370648) (0.02,0.372113) (0.05,0.370898) (0.1,0.372727) (0.2,0.386237) (0.5,0.495769) (1,0.902918)};
  \addplot[color=htPostNS5, dash dot, line width=1.1pt] coordinates {(0.001,0.371478) (0.002,0.372096) (0.005,0.37159) (0.01,0.37223) (0.02,0.371389) (0.05,0.370117) (0.1,0.372439) (0.2,0.379325) (0.5,0.443325) (1,0.719756)};
  \addplot[color=htSAMAdam, solid, line width=1.1pt] coordinates {(0.001,1.32354) (0.002,1.32348) (0.005,1.32357) (0.01,1.32405) (0.02,1.32458) (0.05,1.32607) (0.1,1.3277) (0.2,1.33113) (0.5,1.34773) (1,1.37781)};
  \addplot[color=htFSAMAdam, densely dashed, line width=1.4pt] coordinates {(0.001,1.32358) (0.002,1.32362) (0.005,1.32362) (0.01,1.32404) (0.02,1.32426) (0.05,1.32571) (0.1,1.32634) (0.2,1.32912) (0.5,1.34098) (1,1.36345)};
\end{axis}
\begin{axis}[name=ax2_r0,
    at={(0.5650\textwidth,0in)},
    anchor=south west,
    ht panel, xmode=log, ymode=log,
    ymin=0.00433304, ymax=2.80192,
    ytick={0.01,0.1,1},
    yticklabels={0.01,0.1,1},
    title={$\alpha=2.0$}]
  \addplot[draw=none, fill=htSOMA, fill opacity=0.06] coordinates {(0.001,0.189728) (0.002,0.188633) (0.005,0.190145) (0.01,0.191769) (0.02,0.193051) (0.05,0.194626) (0.1,0.195597) (0.2,0.211841) (0.5,0.346128) (1,0.854178) (1,0.835127) (0.5,0.316247) (0.2,0.199605) (0.1,0.189299) (0.05,0.184007) (0.02,0.180356) (0.01,0.179257) (0.005,0.178554) (0.002,0.177495) (0.001,0.179564)} \closedcycle;
  \addplot[draw=none, fill=htFPSOMA, fill opacity=0.06] coordinates {(0.001,0.188666) (0.002,0.189111) (0.005,0.190236) (0.01,0.188948) (0.02,0.190961) (0.05,0.192552) (0.1,0.20518) (0.2,0.246987) (0.5,0.543459) (1,1.46994) (1,1.43736) (0.5,0.522866) (0.2,0.235791) (0.1,0.193798) (0.05,0.187694) (0.02,0.181359) (0.01,0.18127) (0.005,0.178818) (0.002,0.179236) (0.001,0.178446)} \closedcycle;
  \addplot[draw=none, fill=htSpec, fill opacity=0.06] coordinates {(0.001,0.189616) (0.002,0.188353) (0.005,0.190633) (0.01,0.191256) (0.02,0.192096) (0.05,0.198743) (0.1,0.218619) (0.2,0.279164) (0.5,0.671004) (1,1.82645) (1,1.79994) (0.5,0.625814) (0.2,0.267343) (0.1,0.210378) (0.05,0.193169) (0.02,0.184589) (0.01,0.181374) (0.005,0.178914) (0.002,0.179035) (0.001,0.179226)} \closedcycle;
  \addplot[draw=none, fill=htRand, fill opacity=0.06] coordinates {(0.001,0.190471) (0.002,0.188837) (0.005,0.188888) (0.01,0.1905) (0.02,0.191161) (0.05,0.190091) (0.1,0.189942) (0.2,0.188859) (0.5,0.188806) (1,0.188677) (1,0.182301) (0.5,0.182388) (0.2,0.180025) (0.1,0.179497) (0.05,0.17761) (0.02,0.177998) (0.01,0.179851) (0.005,0.178602) (0.002,0.17833) (0.001,0.179006)} \closedcycle;
  \addplot[draw=none, fill=htPreNS5, fill opacity=0.06] coordinates {(0.001,0.190528) (0.002,0.189336) (0.005,0.189512) (0.01,0.191436) (0.02,0.19314) (0.05,0.192138) (0.1,0.19123) (0.2,0.201997) (0.5,0.305145) (1,0.781555) (1,0.742775) (0.5,0.287082) (0.2,0.186769) (0.1,0.182455) (0.05,0.180764) (0.02,0.179451) (0.01,0.177752) (0.005,0.17821) (0.002,0.178338) (0.001,0.179336)} \closedcycle;
  \addplot[draw=none, fill=htPostNS5, fill opacity=0.06] coordinates {(0.001,0.189275) (0.002,0.188885) (0.005,0.190821) (0.01,0.191139) (0.02,0.190786) (0.05,0.19217) (0.1,0.18941) (0.2,0.191558) (0.5,0.243665) (1,0.497132) (1,0.478477) (0.5,0.225701) (0.2,0.186336) (0.1,0.183249) (0.05,0.180554) (0.02,0.177316) (0.01,0.178777) (0.005,0.177753) (0.002,0.179742) (0.001,0.179048)} \closedcycle;
  \addplot[draw=none, fill=htMSOMA, fill opacity=0.06] coordinates {(0.001,0.1887) (0.002,0.18951) (0.005,0.190287) (0.01,0.188864) (0.02,0.188629) (0.05,0.188262) (0.1,0.184721) (0.2,0.178162) (0.5,0.172653) (1,0.159719) (1,0.147961) (0.5,0.162076) (0.2,0.171994) (0.1,0.174857) (0.05,0.175384) (0.02,0.176809) (0.01,0.176044) (0.005,0.178464) (0.002,0.178351) (0.001,0.178449)} \closedcycle;
  \addplot[draw=none, fill=htFSAM, fill opacity=0.06] coordinates {(0.001,0.00923743) (0.002,0.00904572) (0.005,0.00898175) (0.01,0.00910411) (0.02,0.00960517) (0.05,0.00851424) (0.1,0.00832968) (0.2,0.00911359) (0.5,0.0148701) (1,0.0246466) (1,0.0146549) (0.5,0.00747115) (0.2,0.0066092) (0.1,0.00619938) (0.05,0.00576005) (0.02,0.00675139) (0.01,0.00645136) (0.005,0.00658414) (0.002,0.0065737) (0.001,0.00589257)} \closedcycle;
  \addplot[draw=none, fill=htSAM, fill opacity=0.06] coordinates {(0.001,0.00999456) (0.002,0.00896467) (0.005,0.0099606) (0.01,0.00928302) (0.02,0.00934671) (0.05,0.00783572) (0.1,0.00863336) (0.2,0.00920703) (0.5,0.0136739) (1,0.0297361) (1,0.0117949) (0.5,0.00878865) (0.2,0.00673767) (0.1,0.00620737) (0.05,0.00575615) (0.02,0.00621956) (0.01,0.00636867) (0.005,0.00660519) (0.002,0.00617404) (0.001,0.00624468)} \closedcycle;
  \addplot[draw=none, fill=htMSAM, fill opacity=0.06] coordinates {(0.001,0.0106587) (0.002,0.0125484) (0.005,0.0131364) (0.01,0.0129151) (0.02,0.0131946) (0.05,0.01445) (0.1,0.0139904) (0.2,0.0147531) (0.5,0.0144812) (1,0.0168289) (1,0.00889453) (0.5,0.00948951) (0.2,0.0094201) (0.1,0.00976456) (0.05,0.00949393) (0.02,0.00910793) (0.01,0.009187) (0.005,0.00936425) (0.002,0.00913103) (0.001,0.00801938)} \closedcycle;
  \addplot[draw=none, fill=htSAMAdam, fill opacity=0.06] coordinates {(0.001,1.01939) (0.002,1.01922) (0.005,1.01924) (0.01,1.01947) (0.02,1.02112) (0.05,1.02431) (0.1,1.02683) (0.2,1.03373) (0.5,1.06394) (1,1.144) (1,1.08731) (0.5,1.04201) (0.2,1.01538) (0.1,1.00862) (0.05,1.00489) (0.02,1.0034) (0.01,1.00269) (0.005,1.00264) (0.002,1.00258) (0.001,1.00246)} \closedcycle;
  \addplot[draw=none, fill=htFSAMAdam, fill opacity=0.06] coordinates {(0.001,1.01932) (0.002,1.01928) (0.005,1.01947) (0.01,1.01972) (0.02,1.0206) (0.05,1.02365) (0.1,1.02627) (0.2,1.03079) (0.5,1.05199) (1,1.11644) (1,1.07159) (0.5,1.03291) (0.2,1.01267) (0.1,1.00738) (0.05,1.00424) (0.02,1.00342) (0.01,1.00283) (0.005,1.00265) (0.002,1.00255) (0.001,1.00246)} \closedcycle;
  \addplot[color=htMuon, dashed, line width=0.7pt, forget plot] coordinates {(0.001,0.184003) (1,0.184003)};
  \addplot[color=htAdam, dash dot, line width=0.7pt, forget plot] coordinates {(0.001,1.01513) (1,1.01513)};
  \addplot[color=htSpec, solid, line width=0.8pt] coordinates {(0.001,0.183239) (0.002,0.183276) (0.005,0.186917) (0.01,0.186546) (0.02,0.190641) (0.05,0.19458) (0.1,0.215017) (0.2,0.273819) (0.5,0.665164) (1,1.80854)};
  \addplot[color=htRand, dashed, line width=0.8pt] coordinates {(0.001,0.183514) (0.002,0.183777) (0.005,0.184069) (0.01,0.184792) (0.02,0.185396) (0.05,0.184406) (0.1,0.184323) (0.2,0.183693) (0.5,0.184667) (1,0.185431)};
  \addplot[color=htMSOMA, loosely dashed, line width=0.8pt] coordinates {(0.001,0.183673) (0.002,0.183787) (0.005,0.183508) (0.01,0.182975) (0.02,0.182244) (0.05,0.181921) (0.1,0.178244) (0.2,0.174401) (0.5,0.165986) (1,0.153249)};
  \addplot[color=htFSAM, solid, line width=0.8pt] coordinates {(0.001,0.00878639) (0.002,0.00842056) (0.005,0.00745359) (0.01,0.00852114) (0.02,0.00859533) (0.05,0.00745336) (0.1,0.00731904) (0.2,0.00732143) (0.5,0.00921682) (1,0.0171622)};
  \addplot[color=htSAM, solid, line width=0.8pt] coordinates {(0.001,0.00834465) (0.002,0.00827862) (0.005,0.00822785) (0.01,0.00831798) (0.02,0.00806984) (0.05,0.00671307) (0.1,0.00745616) (0.2,0.00828454) (0.5,0.0112454) (1,0.0189921)};
  \addplot[color=htMSAM, solid, line width=0.8pt] coordinates {(0.001,0.0104284) (0.002,0.0108148) (0.005,0.0096936) (0.01,0.0106723) (0.02,0.0111288) (0.05,0.0122139) (0.1,0.0107952) (0.2,0.0116008) (0.5,0.0116581) (1,0.0115915)};
  \addplot[color=htSOMA, solid, line width=1.1pt] coordinates {(0.001,0.183645) (0.002,0.184148) (0.005,0.185526) (0.01,0.184181) (0.02,0.187219) (0.05,0.189555) (0.1,0.193372) (0.2,0.204207) (0.5,0.333566) (1,0.845456)};
  \addplot[color=htFPSOMA, solid, line width=1.1pt] coordinates {(0.001,0.182926) (0.002,0.183316) (0.005,0.185579) (0.01,0.186475) (0.02,0.186384) (0.05,0.191358) (0.1,0.204177) (0.2,0.237456) (0.5,0.532779) (1,1.467)};
  \addplot[color=htPreNS5, densely dotted, line width=1.1pt] coordinates {(0.001,0.184011) (0.002,0.183065) (0.005,0.183562) (0.01,0.182498) (0.02,0.182802) (0.05,0.185297) (0.1,0.187856) (0.2,0.197293) (0.5,0.295419) (1,0.75224)};
  \addplot[color=htPostNS5, dash dot, line width=1.1pt] coordinates {(0.001,0.184054) (0.002,0.183417) (0.005,0.185778) (0.01,0.183679) (0.02,0.183167) (0.05,0.184441) (0.1,0.187542) (0.2,0.188364) (0.5,0.23837) (1,0.490886)};
  \addplot[color=htSAMAdam, solid, line width=1.1pt] coordinates {(0.001,1.01517) (0.002,1.01524) (0.005,1.01542) (0.01,1.01559) (0.02,1.01625) (0.05,1.01785) (0.1,1.02142) (0.2,1.02762) (0.5,1.052) (1,1.11562)};
  \addplot[color=htFSAMAdam, densely dashed, line width=1.4pt] coordinates {(0.001,1.01516) (0.002,1.01518) (0.005,1.01544) (0.01,1.01553) (0.02,1.01613) (0.05,1.0173) (0.1,1.02065) (0.2,1.02525) (0.5,1.04335) (1,1.08986)};
\end{axis}
\begin{axis}[name=ax3_r0,
    at={(0.8065\textwidth,0in)},
    anchor=south west,
    ht panel, xmode=log, ymode=log,
    ymin=0.00728619, ymax=3.22549,
    ytick={0.1,1},
    yticklabels={0.1,1},
    title={$\alpha=3.0$}]
  \addplot[draw=none, fill=htSOMA, fill opacity=0.06] coordinates {(0.001,0.0848723) (0.002,0.0848865) (0.005,0.0846226) (0.01,0.0855025) (0.02,0.0862455) (0.05,0.0867343) (0.1,0.0896383) (0.2,0.101256) (0.5,0.217119) (1,0.829154) (1,0.781895) (0.5,0.201268) (0.2,0.0853377) (0.1,0.075064) (0.05,0.0751378) (0.02,0.0752412) (0.01,0.0749904) (0.005,0.0746779) (0.002,0.0736066) (0.001,0.0739823)} \closedcycle;
  \addplot[draw=none, fill=htFPSOMA, fill opacity=0.06] coordinates {(0.001,0.085624) (0.002,0.0852048) (0.005,0.0866036) (0.01,0.0854976) (0.02,0.085798) (0.05,0.0899423) (0.1,0.097841) (0.2,0.132406) (0.5,0.463482) (1,1.72883) (1,1.71029) (0.5,0.434605) (0.2,0.112143) (0.1,0.079949) (0.05,0.0744945) (0.02,0.0743793) (0.01,0.0739086) (0.005,0.0740912) (0.002,0.0736394) (0.001,0.0740006)} \closedcycle;
  \addplot[draw=none, fill=htSpec, fill opacity=0.06] coordinates {(0.001,0.0852222) (0.002,0.0863334) (0.005,0.0861548) (0.01,0.0847918) (0.02,0.0869833) (0.05,0.0924591) (0.1,0.107166) (0.2,0.153453) (0.5,0.629275) (1,2.14442) (1,2.11831) (0.5,0.592387) (0.2,0.134977) (0.1,0.0877383) (0.05,0.0764008) (0.02,0.0743351) (0.01,0.0749286) (0.005,0.0745185) (0.002,0.074418) (0.001,0.0737185)} \closedcycle;
  \addplot[draw=none, fill=htRand, fill opacity=0.06] coordinates {(0.001,0.0846816) (0.002,0.0849048) (0.005,0.085873) (0.01,0.0854256) (0.02,0.0856523) (0.05,0.0868977) (0.1,0.0857332) (0.2,0.0864265) (0.5,0.087494) (1,0.0886641) (1,0.0719631) (0.5,0.0734228) (0.2,0.0732339) (0.1,0.0739117) (0.05,0.0739453) (0.02,0.0739105) (0.01,0.0735082) (0.005,0.0737602) (0.002,0.0735581) (0.001,0.0738)} \closedcycle;
  \addplot[draw=none, fill=htPreNS5, fill opacity=0.06] coordinates {(0.001,0.0849611) (0.002,0.0848034) (0.005,0.0848006) (0.01,0.0857902) (0.02,0.0860209) (0.05,0.0860979) (0.1,0.0868039) (0.2,0.0918789) (0.5,0.182866) (1,0.725313) (1,0.696658) (0.5,0.172737) (0.2,0.0822535) (0.1,0.0759059) (0.05,0.0740462) (0.02,0.0747464) (0.01,0.0743614) (0.005,0.0745346) (0.002,0.0740689) (0.001,0.0734851)} \closedcycle;
  \addplot[draw=none, fill=htPostNS5, fill opacity=0.06] coordinates {(0.001,0.0856397) (0.002,0.0845396) (0.005,0.0844462) (0.01,0.0852) (0.02,0.0849843) (0.05,0.0834527) (0.1,0.0841043) (0.2,0.0847541) (0.5,0.123208) (1,0.359593) (1,0.346391) (0.5,0.10794) (0.2,0.0742924) (0.1,0.0739378) (0.05,0.0732066) (0.02,0.0728613) (0.01,0.0746943) (0.005,0.0743245) (0.002,0.073343) (0.001,0.0734717)} \closedcycle;
  \addplot[draw=none, fill=htMSOMA, fill opacity=0.06] coordinates {(0.001,0.0852836) (0.002,0.0856162) (0.005,0.0860231) (0.01,0.0848191) (0.02,0.084496) (0.05,0.0843265) (0.1,0.081906) (0.2,0.0792615) (0.5,0.0715716) (1,0.0647906) (1,0.0600804) (0.5,0.065914) (0.2,0.0690968) (0.1,0.070411) (0.05,0.0705709) (0.02,0.0734348) (0.01,0.0731025) (0.005,0.0729382) (0.002,0.0734472) (0.001,0.0728087)} \closedcycle;
  \addplot[draw=none, fill=htFSAM, fill opacity=0.06] coordinates {(0.001,0.0132737) (0.002,0.0135044) (0.005,0.0131912) (0.01,0.013751) (0.02,0.0130213) (0.05,0.012738) (0.1,0.0125989) (0.2,0.0117297) (0.5,0.0131573) (1,0.0274684) (1,0.01501) (0.5,0.0111868) (0.2,0.0104675) (0.1,0.0108774) (0.05,0.0111412) (0.02,0.0118496) (0.01,0.011976) (0.005,0.0119008) (0.002,0.0122214) (0.001,0.0119733)} \closedcycle;
  \addplot[draw=none, fill=htSAM, fill opacity=0.06] coordinates {(0.001,0.0135034) (0.002,0.0133713) (0.005,0.0131235) (0.01,0.0136345) (0.02,0.0126461) (0.05,0.0129509) (0.1,0.0124348) (0.2,0.0113871) (0.5,0.0129004) (1,0.0265692) (1,0.0157343) (0.5,0.0114792) (0.2,0.0106691) (0.1,0.0104187) (0.05,0.011425) (0.02,0.0117373) (0.01,0.0119036) (0.005,0.0119241) (0.002,0.0122126) (0.001,0.0119675)} \closedcycle;
  \addplot[draw=none, fill=htMSAM, fill opacity=0.06] coordinates {(0.001,0.0150503) (0.002,0.0149508) (0.005,0.0152371) (0.01,0.0152181) (0.02,0.0153321) (0.05,0.0152866) (0.1,0.0152813) (0.2,0.0161766) (0.5,0.0167511) (1,0.0149481) (1,0.0126894) (0.5,0.0148412) (0.2,0.0140055) (0.1,0.0141369) (0.05,0.013602) (0.02,0.013252) (0.01,0.013325) (0.005,0.0134545) (0.002,0.0135246) (0.001,0.0135188)} \closedcycle;
  \addplot[draw=none, fill=htSAMAdam, fill opacity=0.06] coordinates {(0.001,0.807891) (0.002,0.807901) (0.005,0.808271) (0.01,0.80868) (0.02,0.810547) (0.05,0.815615) (0.1,0.820058) (0.2,0.832864) (0.5,0.891474) (1,1.03389) (1,0.932163) (0.5,0.857019) (0.2,0.821075) (0.1,0.81132) (0.05,0.804623) (0.02,0.802315) (0.01,0.800932) (0.005,0.80088) (0.002,0.800545) (0.001,0.800248)} \closedcycle;
  \addplot[draw=none, fill=htFSAMAdam, fill opacity=0.06] coordinates {(0.001,0.807789) (0.002,0.807967) (0.005,0.808181) (0.01,0.808758) (0.02,0.810039) (0.05,0.815072) (0.1,0.817948) (0.2,0.82767) (0.5,0.875883) (1,0.995643) (1,0.911052) (0.5,0.8467) (0.2,0.817012) (0.1,0.808513) (0.05,0.804221) (0.02,0.801655) (0.01,0.800937) (0.005,0.80055) (0.002,0.800707) (0.001,0.800246)} \closedcycle;
  \addplot[color=htMuon, dashed, line width=0.7pt, forget plot] coordinates {(0.001,0.0779868) (1,0.0779868)};
  \addplot[color=htAdam, dash dot, line width=0.7pt, forget plot] coordinates {(0.001,0.803203) (1,0.803203)};
  \addplot[color=htSpec, solid, line width=0.8pt] coordinates {(0.001,0.0779238) (0.002,0.07891) (0.005,0.0781108) (0.01,0.0786208) (0.02,0.0794866) (0.05,0.0805033) (0.1,0.0915399) (0.2,0.144541) (0.5,0.606629) (1,2.13094)};
  \addplot[color=htRand, dashed, line width=0.8pt] coordinates {(0.001,0.078488) (0.002,0.0781353) (0.005,0.0783746) (0.01,0.0783423) (0.02,0.0781033) (0.05,0.0778813) (0.1,0.0780532) (0.2,0.078021) (0.5,0.0784176) (1,0.0803797)};
  \addplot[color=htMSOMA, loosely dashed, line width=0.8pt] coordinates {(0.001,0.0786128) (0.002,0.0784238) (0.005,0.0782561) (0.01,0.0780137) (0.02,0.0776298) (0.05,0.07645) (0.1,0.0753218) (0.2,0.0735082) (0.5,0.0678233) (1,0.0627334)};
  \addplot[color=htFSAM, solid, line width=0.8pt] coordinates {(0.001,0.0126247) (0.002,0.0126014) (0.005,0.0122851) (0.01,0.0124862) (0.02,0.0124761) (0.05,0.0119501) (0.1,0.0112689) (0.2,0.011267) (0.5,0.0121523) (1,0.0193723)};
  \addplot[color=htSAM, solid, line width=0.8pt] coordinates {(0.001,0.0125396) (0.002,0.0126887) (0.005,0.0124803) (0.01,0.0125248) (0.02,0.0122286) (0.05,0.0118176) (0.1,0.0112439) (0.2,0.0110287) (0.5,0.0120635) (1,0.0201404)};
  \addplot[color=htMSAM, solid, line width=0.8pt] coordinates {(0.001,0.014091) (0.002,0.0139541) (0.005,0.0140515) (0.01,0.0138111) (0.02,0.0139685) (0.05,0.0140621) (0.1,0.0144924) (0.2,0.0152105) (0.5,0.0152889) (1,0.0140011)};
  \addplot[color=htSOMA, solid, line width=1.1pt] coordinates {(0.001,0.0783882) (0.002,0.0787621) (0.005,0.0793132) (0.01,0.0782836) (0.02,0.0798929) (0.05,0.0819503) (0.1,0.0817467) (0.2,0.0923929) (0.5,0.208818) (1,0.81104)};
  \addplot[color=htFPSOMA, solid, line width=1.1pt] coordinates {(0.001,0.0780177) (0.002,0.0781934) (0.005,0.0786535) (0.01,0.0784604) (0.02,0.0767857) (0.05,0.0785327) (0.1,0.0849766) (0.2,0.120001) (0.5,0.453476) (1,1.72031)};
  \addplot[color=htPreNS5, densely dotted, line width=1.1pt] coordinates {(0.001,0.0784506) (0.002,0.079424) (0.005,0.0782403) (0.01,0.0778536) (0.02,0.0785056) (0.05,0.0790012) (0.1,0.0811734) (0.2,0.0888161) (0.5,0.179426) (1,0.709696)};
  \addplot[color=htPostNS5, dash dot, line width=1.1pt] coordinates {(0.001,0.0789479) (0.002,0.0786452) (0.005,0.0784461) (0.01,0.078358) (0.02,0.0778901) (0.05,0.0778262) (0.1,0.0780506) (0.2,0.0795567) (0.5,0.119628) (1,0.353556)};
  \addplot[color=htSAMAdam, solid, line width=1.1pt] coordinates {(0.001,0.803461) (0.002,0.803535) (0.005,0.803698) (0.01,0.804596) (0.02,0.805678) (0.05,0.808713) (0.1,0.814257) (0.2,0.827003) (0.5,0.875871) (1,0.979826)};
  \addplot[color=htFSAMAdam, densely dashed, line width=1.4pt] coordinates {(0.001,0.803347) (0.002,0.803337) (0.005,0.803696) (0.01,0.804247) (0.02,0.804671) (0.05,0.807359) (0.1,0.811651) (0.2,0.822643) (0.5,0.862827) (1,0.951221)};
\end{axis}
\path (ax0_r0.south west) -- (ax3_r0.south east) coordinate[midway] (xlab);
\node[font=\small, anchor=north] at ([yshift=-14pt]xlab) {perturbation radius $\rho$};
\end{tikzpicture}

%% file: atlas_opt_cpt_curves/figure_curves.tex
% AUTO-GENERATED
\begin{figure*}[!t]
  \centering
  \curveslegend
  \vspace{4pt}

  \resizebox{\textwidth}{!}{\input{\curvesroot/plots/grid_all.tex}}
  \caption{Training curves of the fifteen optimizers (mean over 3 seeds) on FineWeb-Edu NanoGPT, FineWeb-Edu Pythia and the CIFAR-10 Airbench CNN: training loss, and validation perplexity (language models) or validation accuracy (CNN), against the optimizer step (100 steps per epoch). \textcolor{red}{Red}, \textcolor{blue}{Blue} and \textcolor{ForestGreen}{Green} lines indicate \adamw-based, \sgd-based and \muon-based methods, respectively.}
  \label{fig:curves}
\end{figure*}

%% file: atlas_opt_cpt_curves/plots/grid_all.tex
% AUTO-GENERATED: columns = datasets (NanoGPT | Pythia | CIFAR-10), row 1 = train loss, row 2 = Val ppl (language models) or Val acc (CNN)
\begin{tikzpicture}
\begin{groupplot}[grid curve axis, group style={group size=3 by 2, horizontal sep=1.45cm, vertical sep=1.5cm}]
\nextgroupplot[xmin=100, xmax=1500, xtick={100,200,300,400,500,600,700,800,900,1000,1100,1200,1300,1400,1500}, ylabel={Train loss}, title={FineWeb-Edu NanoGPT}]
\addplot[cv-muon] table[col sep=comma, x=step, y=train_loss]{\curvesroot/data/nanogpt/muon.csv};
\addplot[cv-adam] table[col sep=comma, x=step, y=train_loss]{\curvesroot/data/nanogpt/adam.csv};
\addplot[cv-sgd] table[col sep=comma, x=step, y=train_loss]{\curvesroot/data/nanogpt/sgd.csv};
\addplot[cv-muon-sam] table[col sep=comma, x=step, y=train_loss]{\curvesroot/data/nanogpt/muon_sam.csv};
\addplot[cv-randsam-muon] table[col sep=comma, x=step, y=train_loss]{\curvesroot/data/nanogpt/randsam_muon.csv};
\addplot[cv-msoma] table[col sep=comma, x=step, y=train_loss]{\curvesroot/data/nanogpt/msoma.csv};
\addplot[cv-fsam] table[col sep=comma, x=step, y=train_loss]{\curvesroot/data/nanogpt/fsam.csv};
\addplot[cv-sam] table[col sep=comma, x=step, y=train_loss]{\curvesroot/data/nanogpt/sam.csv};
\addplot[cv-msam] table[col sep=comma, x=step, y=train_loss]{\curvesroot/data/nanogpt/msam.csv};
\addplot[cv-fsam-ortho-muon-stale-momentum] table[col sep=comma, x=step, y=train_loss]{\curvesroot/data/nanogpt/fsam_ortho_muon_stale_momentum.csv};
\addplot[cv-fsam-ortho-muon] table[col sep=comma, x=step, y=train_loss]{\curvesroot/data/nanogpt/fsam_ortho_muon.csv};
\addplot[cv-soma-prens5] table[col sep=comma, x=step, y=train_loss]{\curvesroot/data/nanogpt/soma_prens5.csv};
\addplot[cv-op-soma-postns5] table[col sep=comma, x=step, y=train_loss]{\curvesroot/data/nanogpt/op_soma_postns5.csv};
\addplot[cv-sam-adam] table[col sep=comma, x=step, y=train_loss]{\curvesroot/data/nanogpt/sam_adam.csv};
\addplot[cv-fsam-adam] table[col sep=comma, x=step, y=train_loss]{\curvesroot/data/nanogpt/fsam_adam.csv};
\nextgroupplot[xmin=100, xmax=1400, xtick={100,200,300,400,500,600,700,800,900,1000,1100,1200,1300,1400}, ylabel={Train loss}, title={FineWeb-Edu Pythia}]
\addplot[cv-muon] table[col sep=comma, x=step, y=train_loss]{\curvesroot/data/pythia70m/muon.csv};
\addplot[cv-adam] table[col sep=comma, x=step, y=train_loss]{\curvesroot/data/pythia70m/adam.csv};
\addplot[cv-sgd] table[col sep=comma, x=step, y=train_loss]{\curvesroot/data/pythia70m/sgd.csv};
\addplot[cv-muon-sam] table[col sep=comma, x=step, y=train_loss]{\curvesroot/data/pythia70m/muon_sam.csv};
\addplot[cv-randsam-muon] table[col sep=comma, x=step, y=train_loss]{\curvesroot/data/pythia70m/randsam_muon.csv};
\addplot[cv-msoma] table[col sep=comma, x=step, y=train_loss]{\curvesroot/data/pythia70m/msoma.csv};
\addplot[cv-fsam] table[col sep=comma, x=step, y=train_loss]{\curvesroot/data/pythia70m/fsam.csv};
\addplot[cv-sam] table[col sep=comma, x=step, y=train_loss]{\curvesroot/data/pythia70m/sam.csv};
\addplot[cv-msam] table[col sep=comma, x=step, y=train_loss]{\curvesroot/data/pythia70m/msam.csv};
\addplot[cv-fsam-ortho-muon-stale-momentum] table[col sep=comma, x=step, y=train_loss]{\curvesroot/data/pythia70m/fsam_ortho_muon_stale_momentum.csv};
\addplot[cv-fsam-ortho-muon] table[col sep=comma, x=step, y=train_loss]{\curvesroot/data/pythia70m/fsam_ortho_muon.csv};
\addplot[cv-soma-prens5] table[col sep=comma, x=step, y=train_loss]{\curvesroot/data/pythia70m/soma_prens5.csv};
\addplot[cv-op-soma-postns5] table[col sep=comma, x=step, y=train_loss]{\curvesroot/data/pythia70m/op_soma_postns5.csv};
\addplot[cv-sam-adam] table[col sep=comma, x=step, y=train_loss]{\curvesroot/data/pythia70m/sam_adam.csv};
\addplot[cv-fsam-adam] table[col sep=comma, x=step, y=train_loss]{\curvesroot/data/pythia70m/fsam_adam.csv};
\nextgroupplot[xmin=100, xmax=700, xtick={100,200,300,400,500,600,700}, ylabel={Train loss}, title={CIFAR-10 Airbench CNN}]
\addplot[cv-muon] table[col sep=comma, x=step, y=train_loss]{\curvesroot/data/cifar10/muon.csv};
\addplot[cv-adam] table[col sep=comma, x=step, y=train_loss]{\curvesroot/data/cifar10/adam.csv};
\addplot[cv-sgd] table[col sep=comma, x=step, y=train_loss]{\curvesroot/data/cifar10/sgd.csv};
\addplot[cv-muon-sam] table[col sep=comma, x=step, y=train_loss]{\curvesroot/data/cifar10/muon_sam.csv};
\addplot[cv-randsam-muon] table[col sep=comma, x=step, y=train_loss]{\curvesroot/data/cifar10/randsam_muon.csv};
\addplot[cv-msoma] table[col sep=comma, x=step, y=train_loss]{\curvesroot/data/cifar10/msoma.csv};
\addplot[cv-fsam] table[col sep=comma, x=step, y=train_loss]{\curvesroot/data/cifar10/fsam.csv};
\addplot[cv-sam] table[col sep=comma, x=step, y=train_loss]{\curvesroot/data/cifar10/sam.csv};
\addplot[cv-msam] table[col sep=comma, x=step, y=train_loss]{\curvesroot/data/cifar10/msam.csv};
\addplot[cv-fsam-ortho-muon-stale-momentum] table[col sep=comma, x=step, y=train_loss]{\curvesroot/data/cifar10/fsam_ortho_muon_stale_momentum.csv};
\addplot[cv-fsam-ortho-muon] table[col sep=comma, x=step, y=train_loss]{\curvesroot/data/cifar10/fsam_ortho_muon.csv};
\addplot[cv-soma-prens5] table[col sep=comma, x=step, y=train_loss]{\curvesroot/data/cifar10/soma_prens5.csv};
\addplot[cv-op-soma-postns5] table[col sep=comma, x=step, y=train_loss]{\curvesroot/data/cifar10/op_soma_postns5.csv};
\addplot[cv-sam-adam] table[col sep=comma, x=step, y=train_loss]{\curvesroot/data/cifar10/sam_adam.csv};
\addplot[cv-fsam-adam] table[col sep=comma, x=step, y=train_loss]{\curvesroot/data/cifar10/fsam_adam.csv};
\nextgroupplot[xmin=100, xmax=1500, xtick={100,200,300,400,500,600,700,800,900,1000,1100,1200,1300,1400,1500}, ylabel={Val ppl}, xlabel={Step}]
\addplot[cv-muon] table[col sep=comma, x=step, y=val_ppl]{\curvesroot/data/nanogpt/muon.csv};
\addplot[cv-adam] table[col sep=comma, x=step, y=val_ppl]{\curvesroot/data/nanogpt/adam.csv};
\addplot[cv-sgd] table[col sep=comma, x=step, y=val_ppl]{\curvesroot/data/nanogpt/sgd.csv};
\addplot[cv-muon-sam] table[col sep=comma, x=step, y=val_ppl]{\curvesroot/data/nanogpt/muon_sam.csv};
\addplot[cv-randsam-muon] table[col sep=comma, x=step, y=val_ppl]{\curvesroot/data/nanogpt/randsam_muon.csv};
\addplot[cv-msoma] table[col sep=comma, x=step, y=val_ppl]{\curvesroot/data/nanogpt/msoma.csv};
\addplot[cv-fsam] table[col sep=comma, x=step, y=val_ppl]{\curvesroot/data/nanogpt/fsam.csv};
\addplot[cv-sam] table[col sep=comma, x=step, y=val_ppl]{\curvesroot/data/nanogpt/sam.csv};
\addplot[cv-msam] table[col sep=comma, x=step, y=val_ppl]{\curvesroot/data/nanogpt/msam.csv};
\addplot[cv-fsam-ortho-muon-stale-momentum] table[col sep=comma, x=step, y=val_ppl]{\curvesroot/data/nanogpt/fsam_ortho_muon_stale_momentum.csv};
\addplot[cv-fsam-ortho-muon] table[col sep=comma, x=step, y=val_ppl]{\curvesroot/data/nanogpt/fsam_ortho_muon.csv};
\addplot[cv-soma-prens5] table[col sep=comma, x=step, y=val_ppl]{\curvesroot/data/nanogpt/soma_prens5.csv};
\addplot[cv-op-soma-postns5] table[col sep=comma, x=step, y=val_ppl]{\curvesroot/data/nanogpt/op_soma_postns5.csv};
\addplot[cv-sam-adam] table[col sep=comma, x=step, y=val_ppl]{\curvesroot/data/nanogpt/sam_adam.csv};
\addplot[cv-fsam-adam] table[col sep=comma, x=step, y=val_ppl]{\curvesroot/data/nanogpt/fsam_adam.csv};
\nextgroupplot[xmin=100, xmax=1400, xtick={100,200,300,400,500,600,700,800,900,1000,1100,1200,1300,1400}, ylabel={Val ppl}, xlabel={Step}]
\addplot[cv-muon] table[col sep=comma, x=step, y=val_ppl]{\curvesroot/data/pythia70m/muon.csv};
\addplot[cv-adam] table[col sep=comma, x=step, y=val_ppl]{\curvesroot/data/pythia70m/adam.csv};
\addplot[cv-sgd] table[col sep=comma, x=step, y=val_ppl]{\curvesroot/data/pythia70m/sgd.csv};
\addplot[cv-muon-sam] table[col sep=comma, x=step, y=val_ppl]{\curvesroot/data/pythia70m/muon_sam.csv};
\addplot[cv-randsam-muon] table[col sep=comma, x=step, y=val_ppl]{\curvesroot/data/pythia70m/randsam_muon.csv};
\addplot[cv-msoma] table[col sep=comma, x=step, y=val_ppl]{\curvesroot/data/pythia70m/msoma.csv};
\addplot[cv-fsam] table[col sep=comma, x=step, y=val_ppl]{\curvesroot/data/pythia70m/fsam.csv};
\addplot[cv-sam] table[col sep=comma, x=step, y=val_ppl]{\curvesroot/data/pythia70m/sam.csv};
\addplot[cv-msam] table[col sep=comma, x=step, y=val_ppl]{\curvesroot/data/pythia70m/msam.csv};
\addplot[cv-fsam-ortho-muon-stale-momentum] table[col sep=comma, x=step, y=val_ppl]{\curvesroot/data/pythia70m/fsam_ortho_muon_stale_momentum.csv};
\addplot[cv-fsam-ortho-muon] table[col sep=comma, x=step, y=val_ppl]{\curvesroot/data/pythia70m/fsam_ortho_muon.csv};
\addplot[cv-soma-prens5] table[col sep=comma, x=step, y=val_ppl]{\curvesroot/data/pythia70m/soma_prens5.csv};
\addplot[cv-op-soma-postns5] table[col sep=comma, x=step, y=val_ppl]{\curvesroot/data/pythia70m/op_soma_postns5.csv};
\addplot[cv-sam-adam] table[col sep=comma, x=step, y=val_ppl]{\curvesroot/data/pythia70m/sam_adam.csv};
\addplot[cv-fsam-adam] table[col sep=comma, x=step, y=val_ppl]{\curvesroot/data/pythia70m/fsam_adam.csv};
\nextgroupplot[xmin=100, xmax=700, xtick={100,200,300,400,500,600,700}, ylabel={Val acc (\%)}, xlabel={Step}]
\addplot[cv-muon] table[col sep=comma, x=step, y=val_acc]{\curvesroot/data/cifar10/muon.csv};
\addplot[cv-adam] table[col sep=comma, x=step, y=val_acc]{\curvesroot/data/cifar10/adam.csv};
\addplot[cv-sgd] table[col sep=comma, x=step, y=val_acc]{\curvesroot/data/cifar10/sgd.csv};
\addplot[cv-muon-sam] table[col sep=comma, x=step, y=val_acc]{\curvesroot/data/cifar10/muon_sam.csv};
\addplot[cv-randsam-muon] table[col sep=comma, x=step, y=val_acc]{\curvesroot/data/cifar10/randsam_muon.csv};
\addplot[cv-msoma] table[col sep=comma, x=step, y=val_acc]{\curvesroot/data/cifar10/msoma.csv};
\addplot[cv-fsam] table[col sep=comma, x=step, y=val_acc]{\curvesroot/data/cifar10/fsam.csv};
\addplot[cv-sam] table[col sep=comma, x=step, y=val_acc]{\curvesroot/data/cifar10/sam.csv};
\addplot[cv-msam] table[col sep=comma, x=step, y=val_acc]{\curvesroot/data/cifar10/msam.csv};
\addplot[cv-fsam-ortho-muon-stale-momentum] table[col sep=comma, x=step, y=val_acc]{\curvesroot/data/cifar10/fsam_ortho_muon_stale_momentum.csv};
\addplot[cv-fsam-ortho-muon] table[col sep=comma, x=step, y=val_acc]{\curvesroot/data/cifar10/fsam_ortho_muon.csv};
\addplot[cv-soma-prens5] table[col sep=comma, x=step, y=val_acc]{\curvesroot/data/cifar10/soma_prens5.csv};
\addplot[cv-op-soma-postns5] table[col sep=comma, x=step, y=val_acc]{\curvesroot/data/cifar10/op_soma_postns5.csv};
\addplot[cv-sam-adam] table[col sep=comma, x=step, y=val_acc]{\curvesroot/data/cifar10/sam_adam.csv};
\addplot[cv-fsam-adam] table[col sep=comma, x=step, y=val_acc]{\curvesroot/data/cifar10/fsam_adam.csv};
\end{groupplot}
\end{tikzpicture}

%% file: SOMA_exp2/figure_train_loss.tex
% % AUTO-GENERATED: \input this in the body (needs common/preamble.tex in the preamble)
% \begin{figure*}[t]
%   \centering
%   \input{\figroot/train_loss_grid.tex}

%   \vspace{2pt}
%   \ref{trainlosslegend}
%   \caption{Fine-tuning training loss (\adamw, learning rate $0.000224$) of the models pretrained with each optimizer. Rows: nanoGPT (GPT-2 small) and Pythia-70M. Columns: fine-tuning dataset.}
%   \label{fig:trainloss}
% \end{figure*}

%% file: paper_tables/table_ptq_exp.tex
\makeatletter
\@ifundefined{rowcolor}{\PackageError{table}{Row colours need the colortbl package}{Add \string\usepackage{colortbl} (after xcolor) to the preamble.}}{}
\makeatother
\definecolor{adamrow}{RGB}{255,214,214}   % AdamW family (red)
\definecolor{sgdrow}{RGB}{208,226,247}    % SGD family   (blue)
\definecolor{muonrow}{RGB}{208,236,214}   % Muon family  (green)

\begin{table*}[t]
  \centering
  \caption{INT4 post-training quantization results on held-out FineWeb-Edu perplexity (lower is better), comparing full precision (FP) and 4-bit AWQ (INT4). Results show mean $\pm$ std over 3 pretraining seeds (INT4 averaged over 3 calibration seeds); bold indicates the lowest value per column. Row colour marks the base optimizer family: \muon-based (\colorbox{muonrow}{Green}), \adamw-based (\colorbox{adamrow}{Red}), \sgd-based (\colorbox{sgdrow}{Blue}).}
  \label{tab:ptq-fineweb}
  \vspace{2pt}
  \resizebox{\ifdim\width>\textwidth\textwidth\else\width\fi}{!}{%
  \footnotesize\setlength{\tabcolsep}{5pt}
  \begin{tabular}{lcccccc}
    \toprule
    & \multicolumn{3}{c}{FineWeb-Edu NanoGPT} & \multicolumn{3}{c}{FineWeb-Edu Pythia} \\
    \cmidrule(lr){2-4}\cmidrule(lr){5-7}
    Optimizer & FP & INT4 & $\Delta\%$ & FP & INT4 & $\Delta\%$ \\
    \midrule
    \rowcolor{muonrow} \muon & 22.96{\scriptsize$\pm$0.01} & 23.87{\scriptsize$\pm$0.01} & +3.98{\scriptsize$\pm$0.06} & \textbf{33.60}{\scriptsize$\pm$0.01} & 37.28{\scriptsize$\pm$0.10} & +10.96{\scriptsize$\pm$0.27} \\
    \rowcolor{muonrow} \specsammuon & 23.09{\scriptsize$\pm$0.00} & 23.87{\scriptsize$\pm$0.03} & \textbf{+3.38}{\scriptsize$\pm$0.11} & 33.65{\scriptsize$\pm$0.02} & 37.22{\scriptsize$\pm$0.01} & +10.61{\scriptsize$\pm$0.08} \\
    \rowcolor{muonrow} \fpsoma & 23.07{\scriptsize$\pm$0.00} & 23.86{\scriptsize$\pm$0.03} & +3.42{\scriptsize$\pm$0.09} & 33.65{\scriptsize$\pm$0.02} & \textbf{37.08}{\scriptsize$\pm$0.10} & \textbf{+10.20}{\scriptsize$\pm$0.26} \\
    \rowcolor{muonrow} \soma & \textbf{22.95}{\scriptsize$\pm$0.01} & 23.86{\scriptsize$\pm$0.03} & +3.96{\scriptsize$\pm$0.12} & 33.66{\scriptsize$\pm$0.01} & 37.21{\scriptsize$\pm$0.08} & +10.55{\scriptsize$\pm$0.19} \\
    \rowcolor{muonrow} \randsammuon & 22.96{\scriptsize$\pm$0.01} & 23.87{\scriptsize$\pm$0.01} & +3.98{\scriptsize$\pm$0.02} & 33.61{\scriptsize$\pm$0.02} & 37.34{\scriptsize$\pm$0.10} & +11.13{\scriptsize$\pm$0.26} \\
    \rowcolor{muonrow} \opsomapre & 22.97{\scriptsize$\pm$0.01} & \textbf{23.85}{\scriptsize$\pm$0.03} & +3.79{\scriptsize$\pm$0.11} & 33.62{\scriptsize$\pm$0.01} & 37.21{\scriptsize$\pm$0.07} & +10.66{\scriptsize$\pm$0.23} \\
    \rowcolor{muonrow} \opsomapost & 22.96{\scriptsize$\pm$0.01} & 23.86{\scriptsize$\pm$0.04} & +3.92{\scriptsize$\pm$0.15} & \textbf{33.60}{\scriptsize$\pm$0.02} & 37.29{\scriptsize$\pm$0.02} & +10.97{\scriptsize$\pm$0.10} \\
    \rowcolor{muonrow} \msoma & \textbf{22.95}{\scriptsize$\pm$0.01} & 23.87{\scriptsize$\pm$0.02} & +4.00{\scriptsize$\pm$0.06} & 35.11{\scriptsize$\pm$0.04} & 38.70{\scriptsize$\pm$0.20} & +10.21{\scriptsize$\pm$0.56} \\
    \rowcolor{adamrow} \adamw & 23.12{\scriptsize$\pm$0.01} & 23.94{\scriptsize$\pm$0.01} & +3.55{\scriptsize$\pm$0.02} & 34.03{\scriptsize$\pm$0.02} & 37.81{\scriptsize$\pm$0.01} & +11.12{\scriptsize$\pm$0.04} \\
    \rowcolor{adamrow} \saw & 23.12{\scriptsize$\pm$0.01} & 23.94{\scriptsize$\pm$0.01} & +3.58{\scriptsize$\pm$0.06} & 33.80{\scriptsize$\pm$0.01} & 37.34{\scriptsize$\pm$0.03} & +10.46{\scriptsize$\pm$0.06} \\
    \rowcolor{adamrow} \fsaw & 23.12{\scriptsize$\pm$0.01} & 23.94{\scriptsize$\pm$0.01} & +3.56{\scriptsize$\pm$0.01} & 33.80{\scriptsize$\pm$0.01} & 37.32{\scriptsize$\pm$0.07} & +10.39{\scriptsize$\pm$0.19} \\
    \rowcolor{sgdrow} \sgd & 24.17{\scriptsize$\pm$0.01} & 25.25{\scriptsize$\pm$0.01} & +4.46{\scriptsize$\pm$0.02} & 37.77{\scriptsize$\pm$0.04} & 43.24{\scriptsize$\pm$0.08} & +14.46{\scriptsize$\pm$0.14} \\
    \rowcolor{sgdrow} \sam & 24.17{\scriptsize$\pm$0.00} & 25.27{\scriptsize$\pm$0.01} & +4.53{\scriptsize$\pm$0.05} & 37.82{\scriptsize$\pm$0.02} & 43.33{\scriptsize$\pm$0.05} & +14.56{\scriptsize$\pm$0.08} \\
    \rowcolor{sgdrow} \fsam & 24.18{\scriptsize$\pm$0.00} & 25.26{\scriptsize$\pm$0.00} & +4.48{\scriptsize$\pm$0.02} & 37.82{\scriptsize$\pm$0.02} & 43.33{\scriptsize$\pm$0.03} & +14.56{\scriptsize$\pm$0.10} \\
    \rowcolor{sgdrow} \msam & 24.13{\scriptsize$\pm$0.00} & 25.21{\scriptsize$\pm$0.01} & +4.46{\scriptsize$\pm$0.05} & 37.43{\scriptsize$\pm$0.02} & 42.85{\scriptsize$\pm$0.07} & +14.49{\scriptsize$\pm$0.20} \\
    \bottomrule
  \end{tabular}}
\end{table*}

%% file: sections/conclusion.tex
We introduced \soma, a sharpness-aware variant of \muon that perturbs weights by reusing quantities \muon already computes. We proved that sharpness-aware \muon algorithms converge under heavy-tailed noise with an error floor that grows with the radius but not with the noise. Our experiments confirm that the base optimizer largely determines robustness to heavy-tailed noise. For \muon-based methods, perturbing in the spectral geometry consistently helps, more than the choice of direction. We observe that \soma matches its two-gradient counterpart in accuracy despite using a stale gradient, is more robust to a large radius in accuracy, and that its best radius shrinks as the noise gets lighter, as our bound predicts. In language-model training, \soma maintains the quality of \muon at about a $3\%$ extra run-time cost, moderately improves robustness to quantization, and reduces forgetting only slightly. Testing \soma at a larger scale and fine-tuning with the same optimizer used in pretraining are natural next steps.

%% file: sections/variants.tex
\definecolor{htSOMA}{HTML}{0B5D1E}
\definecolor{htFPSOMA}{HTML}{33CC33}
\definecolor{htSpec}{HTML}{00B3A4}
\definecolor{htRand}{HTML}{808000}
\definecolor{htFSAM}{HTML}{2F80D8}
\definecolor{htSAM}{HTML}{0A3A8A}
\definecolor{htPreNS5}{HTML}{9ACD32}
\definecolor{htPostNS5}{HTML}{004D40}
\definecolor{htSAMAdam}{HTML}{8E0F1C}
\definecolor{htFSAMAdam}{HTML}{F4623A}
\definecolor{htMSAM}{HTML}{7DB9F2}
\definecolor{htMSOMA}{HTML}{8FD694}
\definecolor{htMuon}{HTML}{111111}
\definecolor{htAdam}{HTML}{8A8A8A}

\begin{algorithm}[H]
    \caption{\textsc{SpecSamMuon }(\specsammuon)\citep{zhong2026sharpness}}
    \label{alg:specsam}
    \begin{algorithmic}[1]
        \REQUIRE Initial weights $W_0 \in \mathbb{R}^{m \times n}$; momentum $M_0 = \mathbf{0}$; learning rate $\eta > 0$; momentum coefficient $\beta \in [0, 1)$; perturbation radius $\rho > 0$; batch size $B$; iterations $T$; $\ns$ iterations $q$.

        \FOR {$t = 1, \ldots, T$}
        \STATE Sample minibatch $\mathcal B_t\subset \mathcal S$; $|\mathcal B_t|=B$
        \STATE $\displaystyle G_{t, 0} = \dfrac{1}{B} \sum\limits_{i = 1}^{B} \nabla_W f(W_{t - 1}; \mathcal{E}_{t, i})$\hfill\COMMENT{gradient computation}
        
        \STATE \colorbox{htSpec}{$E_t = \rho \NSQ(G_{t, 0})$}
        
        \STATE $\displaystyle \overline{G}_t = \dfrac{1}{B} \sum\limits_{i = 1}^{B} \nabla_W f(W_{t - 1} + E_t; \mathcal{E}_{t, i})$\hfill\COMMENT{gradient computation}
        
        \STATE $M_t = \beta \, M_{t - 1} + \overline{G}_t$
        
        \STATE $O_t = \NSQ(M_t)$
        
        \STATE $W_t = W_{t - 1} - \eta \, O_t$
        
        \ENDFOR        
    \end{algorithmic}
\end{algorithm}

\begin{algorithm}[H]
    \caption{\sam + \muon with Frobenius perturbation \citep{zhong2026sharpness}}
    \label{alg:sam_muon_fro}
    \begin{algorithmic}[1]
        \REQUIRE Initial weights $W_0 \in \mathbb{R}^{m \times n}$; momentum $M_0 = \mathbf{0}$; learning rate $\eta > 0$; momentum coefficient $\beta \in [0, 1)$; perturbation radius $\rho > 0$; batch size $B$; iterations $T$; $\ns$ iterations $q$.

        \FOR {$t = 1, \ldots, T$}
        \STATE Sample minibatch $\mathcal B_t\subset \mathcal S$; $|\mathcal B_t|=B$
        \STATE $\displaystyle G_t = \dfrac{1}{B} \sum\limits_{i = 1}^{B} \nabla_W f(W_{t - 1}; \mathcal{E}_{t, i})$\hfill\COMMENT{gradient computation}
        
        \STATE \colorbox{htSpec!30}{$E_t = \rho \dfrac{G_t}{\normF{G_t}}$}
        
        \STATE $\displaystyle \overline{G}_t = \dfrac{1}{B} \sum\limits_{i = 1}^{B} \nabla_W f(W_{t - 1} + E_t; \mathcal{E}_{t, i})$\hfill\COMMENT{gradient computation}
        
        \STATE $M_t = \beta \, M_{t - 1} + \overline{G}_t$
        
        \STATE $O_t = \NSQ(M_t)$
        
        \STATE $W_t = W_{t - 1} - \eta \, O_t$
        
        \ENDFOR
    \end{algorithmic}
\end{algorithm}

\begin{algorithm}[H]
    \caption{Freshly-Perturbed-SOMA (\fpsoma)}
    \label{alg:fpsoma}
    \begin{algorithmic}[1]
        \REQUIRE Initialize weights $W_0 \in \mathbb{R}^{m \times n}$; \ema $V_0 = \mathbf{0}$; momentum $M_0 = \mathbf{0}$; learning rate $\eta > 0$; momentum coefficient $\beta \in [0, 1)$; \ema subtraction parameter $\gamma \in (0, 1)$; perturbation radius $\rho > 0$; batch size $B$; iterations $T$; $\ns$ iterations $q$.

        \FOR {$t = 1, \dots, T$}
        \STATE Sample minibatch $\mathcal B_t\subset \mathcal S$; $|\mathcal B_t|=B$
        \STATE $\displaystyle G_t = \dfrac{1}{B} \sum\limits_{i = 1}^{B} \nabla_W f(W_{t - 1}; \mathcal{E}_{t, i})$\hfill\COMMENT{gradient computation}
        
        \STATE \colorbox{htFPSOMA}{ $V_t = \beta \, V_{t - 1} + (1 - \beta) \, G_t$}
        
        \STATE \colorbox{htFPSOMA}{$X_{t, 0} = G_t - \gamma \, V_t$}
        
        \STATE \colorbox{htFPSOMA}{$E_t = \rho \NSQ(X_{t, 0})$}
        
        \STATE $\displaystyle \overline{G}_t = \dfrac{1}{B} \sum\limits_{i = 1}^{B} \nabla_W f(W_{t - 1} + E_t; \mathcal{E}_{t, i})$\hfill\COMMENT{gradient computation}
        
        \STATE $M_t = \beta \, M_{t - 1} + \overline{G}_t$
        
        \STATE $O_t = \NSQ(M_t)$
        
        \STATE $W_t = W_{t - 1} - \eta \, O_t$
        
        \ENDFOR
    \end{algorithmic}
\end{algorithm}

\begin{algorithm}[H]
    \caption{\soma}
    \label{alg:soma}
    \begin{algorithmic}[1]
        \REQUIRE Initialize weights $W_0 \in \mathbb{R}^{m \times n}$; momentum $M_0 = \mathbf{0}$; learning rate $\eta > 0$; momentum coefficient $\beta \in [0, 1)$; \ema subtraction parameter $\gamma \in (0, 1)$; perturbation radius $\rho > 0$; batch size $B$; iterations $T$; $\ns$ iterations $q$.
        
        \FOR {$t = 1, \dots, T$}
        \STATE Sample minibatch $\mathcal B_t\subset \mathcal S$; $|\mathcal B_t|=B$
        \IF {$t = 1$}

        \STATE {$\displaystyle G_1 = \dfrac{1}{B} \sum\limits_{i = 1}^{B} \nabla_W f(W_0; \mathcal{E}_{1, i})$}
        
        \STATE {$V_1 = (1 - \beta) \, G_1$}
        
        \STATE {$X_{1, 0} = G_1 - \gamma \, V_1$}
        
        \STATE {$E_1 = \rho \NSQ(X_{1, 0})$}
        
        \ELSE
        
        \STATE \colorbox{htSOMA!55}{$V_t = (1 - \beta) \, M_{t - 1}$}
        
        \STATE \colorbox{htSOMA!55}{$X_{t, 0} = \overline{G}_{t - 1} - \gamma \, V_t$}
        
        \STATE \colorbox{htSOMA!55}{$E_t = \rho \NSQ(X_{t, 0})$}
        
        \ENDIF
        
        \STATE $\displaystyle \overline{G}_t = \dfrac{1}{B} \sum\limits_{i = 1}^{B} \nabla_W f(W_{t - 1} + E_t; \mathcal{E}_{t, i})$ \quad \colorbox{htSOMA!40}{\COMMENT{Save it in buffer}}\hfill\COMMENT{gradient computation}
        
        \STATE $M_t = \beta \, M_{t - 1} + \overline{G}_t$ \quad \colorbox{htSOMA!40}{\COMMENT{Save it in buffer}}
        
        \STATE $O_t = \NSQ(M_t)$
        
        \STATE $W_t = W_{t - 1} - \eta \, O_t$
        
        \ENDFOR
    \end{algorithmic}
\end{algorithm}

\begin{algorithm}[H]
    \caption{\opsoma (Pre-$\NSQ$)}
    \label{alg:opsoma}
    \begin{algorithmic}[1]
        \REQUIRE Initialize weights $W_0 \in \mathbb{R}^{m \times n}$; momentum $M_0 = \mathbf{0}$; learning rate $\eta > 0$; momentum coefficient $\beta \in [0, 1)$; \ema subtraction parameter $\gamma \in (0, 1)$; perturbation radius $\rho > 0$; batch size $B$; iterations $T$; $\ns$ iterations $q$.

        \FOR {$t = 1, \dots, T$}
        \STATE Sample minibatch $\mathcal B_t\subset \mathcal S$; $|\mathcal B_t|=B$
        \IF {$t = 1$}
        
        \STATE $\displaystyle G_1 = \dfrac{1}{B} \sum\limits_{i = 1}^{B} \nabla_W f(W_0; \mathcal{E}_{1, i})$
        
        \STATE $V_1 = (1 - \beta) \, G_1$
        
        \STATE $X_{1, 0} = G_1 - \gamma \, V_1$
        
        \STATE $E_1 = \rho \NSQ(X_{1, 0})$
        
        \ELSE
        
        \STATE \colorbox{htPreNS5}{$R_t = \overline{G}_{t - 1} - \gamma \, (1 - \beta) \, M_{t - 1}$}
        
        \STATE \colorbox{htPreNS5}{$P_t = \projperp{M_{t - 1}} {R_t}$}
        
        \STATE \colorbox{htPreNS5}{$E_t = \rho \NSQ(P_t)$}
        
        \ENDIF
        
        \STATE $\displaystyle \overline{G}_t = \dfrac{1}{B} \sum\limits_{i = 1}^{B} \nabla_W f(W_{t - 1} + E_t; \mathcal{E}_{t, i})$ \quad \colorbox{htPreNS5!50}{\COMMENT{Save it in buffer}}\hfill\COMMENT{gradient computation}
        
        \STATE $M_t = \beta M_{t - 1} + \overline{G}_t$ \quad \colorbox{htPreNS5!50}{\COMMENT{Save it in buffer}}
        
        \STATE $O_t = \NSQ(M_t)$
        
        \STATE $W_t = W_{t - 1} - \eta \, O_t$
        
        \ENDFOR
    \end{algorithmic}
\end{algorithm}

\begin{algorithm}[H]
    \caption{\opsoma (Post-$\NSQ$)}
    \label{alg:opsoma-post}
    \begin{algorithmic}[1]
        \REQUIRE Initialize weights $W_0 \in \mathbb{R}^{m \times n}$; momentum $M_0 = \mathbf{0}$; learning rate $\eta > 0$; momentum coefficient $\beta \in [0, 1)$; \ema subtraction parameter $\gamma \in (0, 1)$; perturbation radius $\rho > 0$; batch size $B$; iterations $T$; $\ns$ iterations $q$.

        \FOR {$t = 1, \dots, T$}
        \STATE Sample minibatch $\mathcal B_t\subset \mathcal S$; $|\mathcal B_t|=B$
        \IF {$t = 1$}
        
        \STATE $\displaystyle G_1 = \dfrac{1}{B} \sum\limits_{i = 1}^{B} \nabla_W f(W_0; \mathcal{E}_{1, i})$
        
        \STATE $V_1 = (1 - \beta) \, G_1$
        
        \STATE $X_{1, 0 } = G_1 - \gamma \, V_1$
        
        \STATE $E_1 = \rho \NSQ(X_{1, 0})$
        
        \ELSE
        
        \STATE \colorbox{htPostNS5!50}{$R_t = \NSQ \left( \overline{G}_{t - 1} - \gamma \, (1 - \beta) \, M_{t - 1} \right)$}
        
        \STATE  \colorbox{htPostNS5!50}{$P_t = \projperp{O_{t - 1}}{R_t}$}
        
        \STATE  \colorbox{htPostNS5!50}{$E_t = \rho \dfrac{P_t}{\normF{P_t}}$}
        
        \ENDIF
        
        \STATE $\displaystyle \overline{G}_t = \dfrac{1}{B} \sum\limits_{i = 1}^{B} \nabla_W f(W_{t - 1} + E_t; \mathcal{E}_{t, i})$ \quad  \colorbox{htPostNS5!25}{\COMMENT{Save it in buffer}}\hfill\COMMENT{gradient computation}
        
        \STATE $M_t = \beta \, M_{t - 1} + \overline{G}_t$ \quad  \colorbox{htPostNS5!25}{\COMMENT{Save it in buffer}}
        
        \STATE $O_t = \NSQ(M_t)$
        
        \STATE $W_t = W_{t - 1} - \eta \, O_t$
        
        \ENDFOR
    \end{algorithmic}
\end{algorithm}

\begin{algorithm}[H]
    \caption{\msoma}
    \label{alg:msoma}
    \begin{algorithmic}[1]
        \REQUIRE Initialize weights $\widetilde{W}_0 \in \mathbb{R}^{m \times n}$; momentum $M_0 = \mathbf{0}$; learning rate $\eta > 0$; momentum coefficient $\beta \in [0, 1)$; perturbation radius $\rho > 0$; batch size $B$; iterations $T$; $\ns$ iterations $q$.
        
        \FOR {$t = 1, \dots, T$}
        
        \STATE Sample minibatch $\mathcal B_t\subset \mathcal S$; $|\mathcal B_t|=B$
        
        \STATE $\displaystyle G_t = \dfrac{1}{B} \sum\limits_{i = 1}^{B} \nabla_W f(\widetilde{W}_{t - 1}; \mathcal{E}_{t, i})$\hfill\COMMENT{gradient computation}
        
        \STATE \colorbox{htMSOMA}{$\displaystyle W_{t - 1}= \widetilde{W}_{t - 1} + \rho \dfrac{M_{t - 1}}{\normF{M_{t - 1}}}$; $\dfrac{M_{0}}{\normF{M_{0}}} \coloneq 0$} \hfill \COMMENT{removing perturbation to obtain unperturbed state}
        
        \STATE \colorbox{htMSOMA}{$M_{t} = \beta \, M_{t-1} + G_t$}
        
        \STATE$O_t = \NSQ(M_t)$
        
        \STATE $W_t = W_{t - 1} - \eta \, O_t$
        
        \STATE \colorbox{htMSOMA}{Perturbed Step: $\displaystyle \widetilde{W}_t = W_{t} - \rho \dfrac{M_{t}}{\normF{M_{t}}}$} \hfill \COMMENT{negative perturbation}
        
        \ENDFOR
        
        \STATE $\displaystyle W_T = \widetilde{W}_T + \rho \, \dfrac{M_{T}}{\normF{M_{T}}}$
    \end{algorithmic}
\end{algorithm}

%% file: sections/heavy.tex
We shall first provide some comments on the development of our proofs for demonstrating our main result which is our Theorem~\ref{thm:sam_muon_nu_p_convergence}. Our motivation for this result is by generalizing the work of \citep{kim2026convergence} especially their descent lemma and their proof of convergence of \muon under finite $\ns$ iterations. We adopt these strategies for \soma and broaden the scope and applicability by using a more general $(\nu, L)$-Smoothness as in our Assumption~\ref{assump:nu_L_smoothness}. Combined with the more realistic framework of bounded $p$-central moment as in our Assumption~\ref{assum:pth_bounded_moment}, our proof strategy, while in essence inspired by \citep[Theorem 1]{kim2026convergence} for their establishment of convergence of \muon with $\ns$, in fact goes beyond in helping us establish a more general and broader result. This is that our developed algorithm \soma while being a more efficient algorithm also converges under heavy-tailed noise. 

We lay out the exhaustive details of the proof of our Theorem~\ref{thm:sam_muon_nu_p_convergence} by first proving our version of a descent lemma under our assumed notion of smoothness.

\begin{lemma}[Descent Lemma under $(\nu, L)$-Smoothness] \label{lem:descent_holder_smooth}
Let $f : \R^{m \times n} \to \R$ be $(\nu, L)$-smooth with $\nu \in (0, 1]$ and $L > 0$. Then, for all $X, Y \in \R^{m \times n}$, we have that:
\[
f(Y) \le f(X) + \innerF{\G f(X)}{Y - X} + \dfrac{L}{\nu + 1} \normop{Y - X}^{\nu + 1}.
\]
\end{lemma}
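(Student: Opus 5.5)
The plan is to use the fundamental theorem of calculus along the segment from $X$ to $Y$ and then bound the integrand with the nuclear/operator-norm duality. Set $\Delta \coloneq Y - X$ and $\phi(s) \coloneq f(X + s\Delta)$ for $s \in [0,1]$. Since $f$ is continuously differentiable, $\phi$ is $C^1$ with $\phi'(s) = \innerF{\G f(X + s\Delta)}{\Delta}$. Hence
\[
f(Y) - f(X) - \innerF{\G f(X)}{\Delta} = \int_0^1 \innerF{\G f(X + s\Delta) - \G f(X)}{\Delta} \, ds .
\]

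Next I bound the integrand pointwise. The key ingredient is the trace-duality (Hölder) inequality for Schatten norms:
\[
\abs{\innerF{A}{B}} \le \normnuc{A} \, \normop{B}.
\]
It follows from the SVD $A = \sum_i \sigma_i u_i v_i^\top$, since $\innerF{A}{B} = \sum_i \sigma_i \, u_i^\top B v_i$ and $\abs{u_i^\top B v_i} \le \normop{B}$.

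Apply this with $A = \G f(X+s\Delta) - \G f(X)$ and $B = \Delta$. Then Assumption~\ref{assump:nu_L_smoothness} applied to the pair $(X+s\Delta, X)$ gives
\[
\normnuc{\G f(X+s\Delta) - \G f(X)} \le L \normop{s\Delta}^\nu = L s^\nu \normop{\Delta}^\nu .
\]
Consequently the integrand is at most $L s^\nu \normop{\Delta}^{\nu+1}$.

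Integrating, $\int_0^1 s^\nu \, ds = 1/(\nu+1)$, which yields exactly the claimed bound $\frac{L}{\nu+1}\normop{Y-X}^{\nu+1}$. I do not expect a real obstacle here. The only point needing care is that the smoothness assumption measures gradient differences in the nuclear norm and displacements in the operator norm, so the pairing must use exactly the nuclear–operator duality above rather than a Frobenius Cauchy–Schwarz bound. Integrability of $s \mapsto s^\nu$ on $[0,1]$ for $\nu \in (0,1]$ is immediate.
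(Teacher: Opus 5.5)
Your proposal is correct and follows the paper's proof essentially step for step: the fundamental theorem of calculus along the segment, the nuclear--operator H\"older bound on the integrand, the $(\nu, L)$-smoothness assumption giving $L\, s^\nu \normop{Y - X}^\nu$, and $\int_0^1 s^\nu \, ds = 1/(\nu+1)$. Your SVD justification of $\abs{\innerF{A}{B}} \le \normnuc{A}\normop{B}$ is a small addition that the paper simply invokes without proof.
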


\begin{proof}
Define an auxiliary scalar function $g : [0, 1] \to \R$ along the line segment from $X$ to $Y$ by $g(s) \coloneq f(X + s \, (Y - X))$. By the fundamental theorem of calculus:
\[
f(Y) - f(X) = g(1) - g(0) = \int\limits_{0}^{1} g'(s) \, ds.
\]
Applying the chain rule, we get: $g'(s) = \innerF{\G f(X + s \, (Y - X))}{Y - X}$. Substituting this back into the integral gives:
\[
f(Y) - f(X) = \int\limits_{0}^{1} \innerF{\G f(X + s \, (Y - X))}{Y - X} \, ds.
\]
Adding and subtracting $\displaystyle \innerF{\G f(X)}{Y - X} = \int\limits_{0}^{1} \innerF{\G f(X)}{Y - X} \, ds$, we obtain:
\[
f(Y) - f(X) = \innerF{\G f(X)}{Y - X} + \int\limits_{0}^{1} \innerF{\G f(X + s \, (Y - X)) - \G f(X)}{Y - X} \, ds.
\]
By Hölder's inequality: $\abs{\innerF{A}{B}} \le \normnuc{A} \normop{B}$, we bound the integrand:
\[
\innerF{\G f(X + s \, (Y - X)) - \G f(X)}{Y - X} \le \normnuc{\G f(X + s \, (Y - X)) - \G f(X)} \normop{Y - X}.
\]
Then, using Assumption~\ref{assump:nu_L_smoothness}, we have that:
\[
\normnuc{\G f(X + s \, (Y - X)) - \G f(X)} \le L \normop{s (Y - X)}^\nu = L \, s^\nu \normop{Y - X}^\nu.
\]
Combining these inequalities yields:
\begin{align*}
f(Y) - f(X) &\le \innerF{\G f(X)}{Y - X} + \int\limits_{0}^{1} L \, s^\nu \normop{Y - X}^{\nu + 1} \, ds\\
&= \innerF{\G f(X)}{Y - X} + L \normop{Y - X}^{\nu + 1} \int\limits_{0}^{1} s^\nu \, ds.
\end{align*}
Evaluating $\displaystyle\int\limits_{0}^{1} s^\nu \, ds = \left[ \dfrac{s^{\nu + 1}}{\nu + 1} \right]_{0}^{1} = \dfrac{1}{\nu + 1}$ gives the final result:
\[
f(Y) \le f(X) + \innerF{\G f(X)}{Y - X} + \dfrac{L}{\nu + 1} \normop{Y - X}^{\nu + 1}. \qedhere
\]
\end{proof}

\subsection{Proof of Theorem~\ref{thm:sam_muon_nu_p_convergence}}

\begin{proof} \label{proof:Theorem1}
Introduce the scaled \ema momentum: $N_t \coloneq (1 -\beta) \, M_t$. Then, we obtain $N_t = \beta \, N_{t - 1} + (1 - \beta) \, \overline{G}_t$. Let $P_t$ denote the polar factor of $M_t$, that is, $P_t = \ortho{M_t}$. Since the polar factor is scale-invariant, we have $P_t \coloneq \ortho{N_t} = \ortho{M_t}$. Using Assumption~\ref{assump:nu_L_smoothness} and applying the descent lemma (Lemma~\ref{lem:descent_holder_smooth}) to $W_t = W_{t - 1} - \eta \, O_t$ yields:
\begin{align*}
f(W_t) &\le f(W_{t - 1}) + \innerF{\G f(W_{t - 1})}{W_t - W_{t - 1}} + \dfrac{L}{\nu + 1} \normop{W_t - W_{t - 1}}^{\nu+1}\\
&= f(W_{t - 1}) - \eta \innerF{\G f(W_{t - 1})}{O_t} + \dfrac{L \, \eta^{\nu + 1}}{\nu + 1} \normop{O_t}^{\nu + 1}\\
&= f(W_{t - 1}) - \eta \innerF{N_t}{O_t} + \eta \innerF{N_t - \G f(W_{t - 1})}{O_t} + \dfrac{L \, \eta^{\nu + 1}}{\nu + 1} \normop{O_t}^{\nu + 1}\\
&\stackrel{(a)}{\le} f(W_{t - 1}) - \eta \innerF{N_t}{O_t} + \eta \normnuc{N_t - \G f(W_{t - 1})} \normop{O_t} + \dfrac{L \, \eta^{\nu + 1}}{\nu + 1} \normop{O_t}^{\nu + 1}\\
&= f(W_{t - 1}) - \eta \innerF{N_t}{P_t} + \eta \innerF{N_t}{P_t - O_t} + \eta \normnuc{N_t - \G f(W_{t - 1})} \normop{O_t} + \dfrac{L \, \eta^{\nu + 1}}{\nu + 1} \normop{O_t}^{\nu + 1}\\
&\stackrel{(b)}{=} f(W_{t - 1}) - \eta \normnuc{N_t} + \eta \innerF{N_t}{P_t - O_t} + \eta \normnuc{N_t - \G f(W_{t - 1})} \normop{O_t} + \dfrac{L \, \eta^{\nu + 1}}{\nu + 1} \normop{O_t}^{\nu + 1}\\
&\stackrel{(c)}{\le} f(W_{t - 1}) - \eta \normnuc{N_t} + \eta \normnuc{N_t}\normop{P_t - O_t }+ \eta \normnuc{N_t - \G f(W_{t - 1})} \normop{O_t} + \dfrac{L \, \eta^{\nu + 1}}{\nu + 1} \normop{O_t}^{\nu + 1},
\end{align*}
where steps $(a)$ and $(c)$ follow from Hölder's inequality, and equality $(b)$ follows from the identity: $\innerF{N_t}{P_t} = \normnuc{N_t}$. We now define the polar approximation error, that is, the error between the exact polar factor $P_t = \ortho{N_t} = \ortho{M_t}$ and the actual step $O_t = \ns(M_t, q)$ generated by $q$ steps of $\ns$ iterations:
\[
\varepsilon_{t,q} \coloneq \normop{O_t - P_t} \quad \text{and} \quad \varepsilon_q \coloneq \sup_t \varepsilon_{t, q}.
\]
Using $\normop{O_t} \le 1 + \varepsilon_q$ and $\normop{P_t - O_t} \le \varepsilon_q$, we have:
\[
f(W_t) \le f(W_{t - 1}) - \eta(1 - \varepsilon_q) \normnuc{N_t} + \eta(1 + \varepsilon_q) \normnuc{N_t - \G f(W_{t - 1})} + \dfrac{L \, \eta^{\nu + 1}}{\nu + 1}(1 + \varepsilon_q)^{\nu + 1}.
\]
By the reverse triangle inequality, that is, $\normnuc{N_t} \ge \normnuc{\G f(W_{t - 1})} - \normnuc{N_t - \G f(W_{t - 1})}$, we then obtain:
\begin{equation} \label{eq:2_descent_step3}
f(W_t) \le f(W_{t - 1}) - \eta (1 - \varepsilon_q) \normnuc{\G f(W_{t - 1})} + 2 \eta \normnuc{N_t - \G f(W_{t - 1})} + \dfrac{L \, \eta^{\nu + 1}}{\nu + 1}(1 + \varepsilon_q)^{\nu + 1}. 
\end{equation}
Since $E_t = \rho \, \ortho{O_t}$ and $\normop{O_t} \le 1$, we have $\normop{E_t} = \normop{\rho \, \ortho{O_t}} \le \rho$. Even with the use of $\ns$ for $\ortho{O_t}$, there will only be a constant error term of the order of $(1 + \varepsilon_q)$ which scales $\rho$. In order to avoid unnecessarily complicated expressions that result from this being factored in, we simply leave it out of our discussions for convenience, but one needs to be cognizant of this minor additional error factor. 

Next, applying $(\nu, L)$-smoothness with $\normop{E_t} \le \rho$, we get that:
\[
\normnuc{\G f(W_{t - 1} + E_t) - \G f(W_{t - 1})} \le L \normop{E_t}^\nu \le L \, \rho^\nu.
\]
By triangle inequality, we have that: $\normnuc{N_t - \G f(W_{t - 1})} \le \normnuc{N_t - \G f(W_{t - 1} + E_t)} + L \, \rho^\nu$. Substituting this into Eq.~\ref{eq:2_descent_step3}, we get that:
\[
f(W_t) \le f(W_{t - 1}) - \eta \, (1 - \varepsilon_q) \normnuc{\G f(W_{t - 1})} + 2 \eta \normnuc{N_t - \G f(W_{t - 1} + E_t)} + 2 \eta \, L \, \rho^\nu + \dfrac{L \, \eta^{\nu + 1}}{\nu + 1}(1 + \varepsilon_q)^{\nu + 1}.
\]
Rearranging the terms and taking an expectation on both sides yields:
\begin{multline} \label{eq:2_expectation_bound}
\E \left[ \normnuc{\G f(W_{t - 1})} \right] \le \dfrac{\E [f(W_{t - 1}) - f(W_t)]}{\eta \, (1 - \varepsilon_q)} + \dfrac{2}{1 - \varepsilon_q} \E \left[ \normnuc{N_t - \G f(W_{t - 1} + E_t)} \right]\\
+ \dfrac{2 L \, \rho^\nu}{1 - \varepsilon_q} + \dfrac{L \, \eta^\nu (1 + \varepsilon_q)^{\nu + 1}}{(\nu + 1)(1 - \varepsilon_q)}.
\end{multline}
In Eq.~\ref{eq:2_expectation_bound}, we shall first obtain a bound for the second term on the right hand side, and then combine it with a characterization using the gradient-gap inequality as in \citep[Lemma 7]{kim2026convergence} to guide our closing arguments which then yields the required $\epsilon$-stationarity result.

\noindent \textbf{Bound for the term $\E \left[ \normnuc{N_t - \G f(W_{t - 1} + E_t)} \right]$.} We begin by first recalling our $\ema$ momentum that we introduced at the beginning of this proof:
\[
N_t = \beta \, N_{t - 1} + (1 - \beta) \, \overline{G}_t, \quad \text{for~} t \ge 0 \text{~with~} N_0 = 0,
\]
where $\overline{G}_i = \frac{1}{B}\sum_{j = 1}^B \G_W f(W_{i - 1} + E_i)$. Unrolling the recursion yields $N_t = (1 - \beta )\sum_{i = 1}^t \beta^{t - i} \, \overline{G}_i$. So, we have that:
\begin{equation} \label{eq:2_tracking_error_expanded}
N_t - \G f(W_{t - 1} + E_t) = (1 - \beta)\sum_{i = 1}^t \beta^{t - i} \Big( \overline{G}_i - \G f(W_{t - 1} + E_t) \Big) - \beta^t \, \G f(W_{t - 1} + E_t).
\end{equation}
Adding and subtracting $\G f(W_{i-1} + E_i)$ to this expression then provides:
\[
\overline{G}_i - \G f(W_{t - 1} + E_t) = \underbrace{\Big( \overline{G}_i - \G f(W_{i - 1} + E_i) \Big)}_{V_i} + \underbrace{\Big( \G f(W_{i - 1} + E_i) - \G f(W_{t - 1} + E_t) \Big)}_{D_{i, t}},
\]
where $V_i \coloneq \overline{G}_i - \G f(W_{i - 1} + E_i)$ and $D_{i, t} \coloneq \G f(W_{i - 1} + E_i) - \G f(W_{t - 1} + E_t)$.
Substituting this decomposition into Eq.~\ref{eq:2_tracking_error_expanded} yields:
\[
N_t - \G f(W_{t - 1} + E_t) = (1 - \beta) \sum\limits_{i = 1}^t \beta^{t - i} V_i + (1 - \beta) \sum\limits_{i = 1}^t \beta^{t - i} D_{i, t} - \beta^t \G f(W_{t - 1} + E_t).
\]
Taking the nuclear norm $\normnuc{\cdot}$, and using that $\normnuc{A}\le \sqrt{r}\normF{A}$, we obtain:
\begin{multline} \label{eq:2_tracking_error_split}
\E \left[ \normnuc{N_t - \G f(W_{t - 1} + E_t)} \right] \le \underbrace{(1 - \beta) \sqrt{r} \, \E \left[ \normF{ \sum\limits_{i = 1}^t \beta^{t - i} V_i } \right]}_{\text{Term A}} + \underbrace{(1 - \beta) \sum\limits_{i = 1}^t \beta^{t - i} \E \left[ \normnuc{D_{i, t}} \right]}_{\text{Term B}} \\
+ \underbrace{\vphantom{\sum\limits_{i = 1}^t} \beta^t \E \left[ \normnuc{\G f(W_{t - 1} + E_t)} \right]}_{\text{Term C}}.
\end{multline}
We next show the bounds for each of the above identified three terms.

\noindent \textbf{Bound for the Term A.} By Jensen's inequality $\E[X] \le \left( \E[X^p] \right)^{1/p}$ for $p \in (1, 2]$, so we obtain the bound:
\begin{equation} \label{eq:TermA}
\text{Term A} \le (1 - \beta) \sqrt{r} \left( \E \left[ \normF{\sum\limits_{i = 1}^t \beta^{t - i} V_i}^p \right] \right)^{1/p}.
\end{equation}
Next, let $\gF_{i - 1} = \sigma(\xi_1, \xi_2, \dots, \xi_{i - 1})$ denote the $\sigma$-algebra generated by all random mini-batch samplings up to step $i - 1$. Since $\E [ V_i \mid \gF_{i - 1} ] = \mathbf{0}$, the sequence $\{ V_i \}_{i = 1}^t$ forms a martingale difference sequence with respect to $\{ \mathcal{F}_i \}_{i \ge 0}$. Then, applying the von Bahr--Esseen inequality with constant $2^{2 - p}$ for the martingale difference sequences in $p$-th moment ($p \in (1, 2]$), we get that:
\[
\E \left[ \normF{ \sum\limits_{i = 1}^t \beta^{t - i} V_i}^p \right] \le 2^{2 - p} \sum\limits_{i = 1}^t \beta^{p(t - i)} \E \left[ \normF{V_i}^p \right].
\]
Using Assumption~\ref{assum:pth_bounded_moment}, we then have that:
\[
\E \left[ \normF{\sum\limits_{i = 1}^t \beta^{t - i} V_i}^p \right] \le 2^{2 - p} \left( \dfrac{2^{2 - p} \sigma^p}{B^{p - 1}} \right) \sum\limits_{i = 1}^t \beta^{p (t - i)} \le \dfrac{2^{4 - 2 p} \sigma^p}{B^{p - 1}(1 - \beta^p)}.
\]
Substituting back into Eq.~\ref{eq:TermA} yields the final bound for Term A as:
\begin{equation} \label{eq:2_term_A_bound_p}
\text{Term A} \le (1 - \beta) \sqrt{r} \left( \dfrac{2^{4 - 2 p} \sigma^p}{B^{p - 1}(1 - \beta^p)} \right)^{1/p} = \dfrac{2^{(4 - 2 p)/p} \sqrt{r} \sigma}{B^{1 - 1/p}} \cdot \dfrac{1 - \beta}{(1 - \beta^p)^{1/p}}.
\end{equation}

\noindent \textbf{Bound for the Term B.} Recall from Eq.~\ref{eq:2_tracking_error_split} that:
\begin{equation} \label{eq:2_term_B_def}
\text{Term B} = (1 - \beta)\sum\limits_{i = 1}^t \beta^{t - i} \E \left[ \normnuc{D_{i, t}} \right],
\end{equation}
where $D_{i,t} \coloneq \G f(W_{i - 1} + E_i) - \G f(W_{t - 1} + E_t)$. Using $(\nu, L)$-smoothness for $f$, that is, $\normnuc{\G f(X) - \G f(Y)} \le L \normop{X - Y}^\nu$ for all $X, Y \in \R^{m \times n}$, we have that:
\[
\normnuc{D_{i, t}} = \normnuc{\G f(W_{i - 1} + E_i) - \G f(W_{t - 1} + E_t)} \le L \normop{(W_{i - 1} + E_i) - (W_{t - 1} + E_t)}^\nu.
\]
Applying $(a + b)^\nu \le a^\nu + b^\nu$ for $\nu \in (0, 1]$, we next obtain:
\begin{align*}
\normop{(W_{i - 1} + E_i) - (W_{t - 1} + E_t)}^\nu &\le \left( \normop{W_{t - 1} - W_{i - 1}} + \normop{E_t - E_i} \right)^\nu\\
&\le \normop{W_{t - 1} - W_{i - 1}}^\nu + \normop{E_t - E_i}^\nu.
\end{align*}
Using that $\normop{A} \le \normF{A}$ and that $\normF{E_k} \le \rho$, we next get that:
\begin{equation} \label{eq:2_sam_bound}
\normop{E_t - E_i}^\nu \le \left( \normop{E_t} + \normop{E_i} \right)^\nu \le \left( \normF{E_t} + \normF{E_i} \right)^\nu \le (2 \rho)^\nu = 2^\nu \rho^\nu.
\end{equation}
Since the updates are $W_k - W_{k - 1} = -\eta O_k$, this then yields:
\begin{equation} \label{eq:2_iterate_bound}
\normop{W_{t - 1} - W_{i - 1}}^\nu \le \left( \sum\limits_{k = i}^{t - 1} \normop{W_k - W_{k - 1}} \right)^\nu \le \left( \eta \sum\limits_{k = i}^{t - 1} \normop{O_k} \right)^\nu \le \eta^\nu (t - i)^\nu (1 + \varepsilon_q)^\nu.
\end{equation}
Combining Eqs.~\ref{eq:2_sam_bound} and \ref{eq:2_iterate_bound} gives us that
$\E \left[ \normnuc{D_{i, t}} \right] \le L \left( 2^\nu \rho^\nu + \eta^\nu (t - i)^\nu (1 + \varepsilon_q)^\nu \right)$. Substituting this back into Eq.~\ref{eq:2_term_B_def} yields:
\begin{align*}
\text{Term B} &\le (1 - \beta)\sum\limits_{i = 1}^t \beta^{t - i} L \left( 2^\nu \rho^\nu + \eta^\nu \, (t - i)^\nu \, (1 + \varepsilon_q)^\nu \right)\\
&= 2^\nu L \, \rho^\nu \, (1 - \beta) \sum\limits_{i = 1}^t \beta^{t - i} + \eta^\nu L \, (1 + \varepsilon_q)^\nu \, (1 - \beta) \sum\limits_{i = 1}^t \beta^{t - i} \, (t - i)^\nu.
\end{align*}
Using Jensen's inequality for the concave function $\phi(x) = x^\nu$ for $\nu \in (0, 1]$, we next have that:
\[
(1 - \beta) \sum_{i = 1}^t \beta^{t - i}(t - i)^\nu
\le \left( (1 - \beta) \sum_{k = 0}^{\infty} \beta^k k \right)^{\!\nu}
= \left( \dfrac{\beta}{1 - \beta} \right)^{\!\nu}.
\]
Finally, we obtain the bound for Term B to be the following:
\begin{equation} \label{eq:2_term_B_bound}
\text{Term B} \le 2^\nu L \, \rho^\nu + \left( \dfrac{\beta}{1 - \beta} \right)^\nu \eta^\nu L \, (1 + \varepsilon_q)^\nu.
\end{equation}

\noindent \textbf{Bound for Term C.} Recall from Eq.~\ref{eq:2_tracking_error_split} that:
\[ %\label{eq:2_term_C_def}
\text{Term C} = \beta^t \, \E \left[ \normnuc{\G f(W_{t - 1} + E_t)}\right].
\]
Applying triangle inequality, we obtain:
\[
\normnuc{\G f(W_{t - 1} + E_t)} \le \normnuc{\G f(W_0)} + \normnuc{\G f(W_{t - 1} + E_t) - \G f(W_0)}.
\]
Next, using $(\nu, L)$-smoothness for $f$, we get that:
\[
\normnuc{\G f(W_{t - 1} + E_t) - \G f(W_0)} \le L \normop{W_{t - 1} + E_t - W_0}^\nu.
\]
Applying $(a + b)^\nu \le a^\nu + b^\nu$ for $\nu \in (0, 1]$ we next have that:
\begin{equation} \label{eq:2_term_C_displacement}
\normop{(W_{t - 1} + E_t) - W_0}^\nu \le \normop{W_{t - 1} - W_0}^\nu + \normop{E_t}^\nu.
\end{equation}
We have the weight updates to be $W_k - W_{k-1} = -\eta O_k$ for steps $1$ to $t-1$, and therefore, we can write:
\begin{equation} \label{eq:w0_drift}
\normop{W_{t - 1} - W_0}^\nu \le \left( \sum_{k = 1}^{t - 1} \normop{W_k - W_{k - 1}} \right)^\nu \le \left( \eta \sum_{k = 1}^{t - 1} \normop{O_k} \right)^\nu \le \eta^\nu \, (t - 1)^\nu \, (1 + \varepsilon_q)^\nu.
\end{equation}
Combining Eqs.~\ref{eq:2_term_C_displacement} and \ref{eq:w0_drift} gives $\normnuc{\G f(W_{t - 1} + E_t)} \le \normnuc{\G f(W_0)} + L \, \rho^\nu + \eta^\nu L \, (t - 1)^\nu \, (1 + \varepsilon_q)^\nu$, and so we can finally bound Term C as:
\begin{equation} \label{eq:2_term_C_bound}
\text{Term C} \le \beta^t \normnuc{\G f(W_0)} + \beta^t L \, \rho^\nu + \beta^t \, \eta^\nu L \, (t - 1)^\nu \, (1 + \varepsilon_q)^\nu.
\end{equation}
Folding the bounds that we have obtained for Terms A, B and C in Eqs.~\ref{eq:2_term_A_bound_p}, \ref{eq:2_term_B_bound} and \ref{eq:2_term_C_bound}, respectively, into Eq.~\ref{eq:2_tracking_error_split} provides us with:
\begin{multline} \label{eq:2_tracking_error_grouped}
\E \left[ \normnuc{N_t - \G f(W_{t - 1} + E_t)} \right] \le \dfrac{2^{\frac{4 - 2p}{p}} \sqrt{r} \sigma}{B^{1 - \frac{1}{p}}} \cdot \frac{1 - \beta}{(1 - \beta^p)^{1/p}} + (2^\nu + \beta^t) \, L \, \rho^\nu + \beta^t \normnuc{\G f(W_0)} \\
+ \left[ \left( \dfrac{\beta}{1 - \beta} \right)^\nu \!\! + \beta^t \, (t - 1)^\nu \right] \eta^\nu L \, (1 + \varepsilon_q)^\nu.
\end{multline}

\noindent \textbf{Closing Arguments.} Substituting Eq.~\ref{eq:2_tracking_error_grouped} into Eq.~\ref{eq:2_expectation_bound} yields:
\begin{multline*}
\E \left[\normnuc{\G f(W_{t - 1})} \right] \le \dfrac{\E [f(W_{t - 1}) - f(W_t)]}{\eta(1 - \varepsilon_q)} + \frac{2^{(4 - p)/p} \sqrt{r} \, \sigma}{B^{1 - 1/p}(1 - \varepsilon_q)} \cdot \dfrac{1 - \beta}{(1 - \beta^p)^{1/p}} + \dfrac{2 \beta^t \normnuc{\G f(W_0)}}{1 - \varepsilon_q} \\
+ \dfrac{2 (1 + 2^\nu + \beta^t) \, L \, \rho^\nu}{1 - \varepsilon_q} + \dfrac{\eta^\nu L \, (1 + \varepsilon_q)^\nu}{1 - \varepsilon_q} \left[ 2 \left( \dfrac{\beta}{1 - \beta} \right)^\nu + 2 \beta^t \, (t - 1)^\nu + \dfrac{1 + \varepsilon_q}{\nu + 1} \right].
\end{multline*}
Taking the average over $t = 1, 2, \ldots, T$ then gives us that:
\begin{multline} \label{eq:averaged_bound}
\dfrac{1}{T}\sum_{t = 1}^T \E \left[ \normnuc{\G f(W_{t - 1})} \right] \le \dfrac{1}{\eta \, T (1 - \varepsilon_q)}\sum_{t = 1}^T \E \left[ f(W_{t - 1}) - f(W_t) \right] + 2^{(4 - p)/p} \frac{\sqrt{r} \, \sigma}{B^{1 - 1/p}(1 - \varepsilon_q)} \cdot \dfrac{1 - \beta}{(1 - \beta^p)^{1/p}} \\
+ \dfrac{2L \, \rho^\nu}{T (1 - \varepsilon_q)} \sum_{t = 1}^T \left( 1 + 2^\nu + \beta^t \right) + \dfrac{2 \normnuc{\G f(W_0)}}{T(1 - \varepsilon_q)}\sum_{t = 1}^T \beta^t \\
+ \dfrac{2 \eta^\nu \, L \, (1 + \varepsilon_q)^\nu}{1 - \varepsilon_q} \left( \left( \dfrac{\beta}{1 - \beta} \right)^\nu + \dfrac{1}{T} \sum_{t = 1}^T \beta^t \, (t-1)^\nu \right) + \dfrac{\eta^\nu L \,(1 + \varepsilon_q)^{\nu + 1}}{(\nu + 1)(1 - \varepsilon_q)}.
\end{multline}
Using $D = f(W_0) - f^*$ as the initial suboptimality  gap, we have:
\[
\dfrac{1}{\eta T (1 - \varepsilon_q)} \sum_{t = 1}^T \E \left[ f(W_{t - 1}) - f(W_t) \right] \le \dfrac{D}{\eta T (1 - \varepsilon_q)}.
\]
By the norm identity $\normnuc{\G f(W_0)} \le \sqrt{r} \, \normF{\G f(W_0)}$, and the gradient gap inequality for $L$-smooth functions bounded from below by $f^*$ (that is, $\normF{\G f(W)}^2 \le 2 L \, (f(W) - f^*)$) as in \citep[Lemma 7]{kim2026convergence}, we obtain that:
\[
\normnuc{\G f(W_0)} \le \sqrt{r} \, \normF{\G f(W_0)} \le \sqrt{r} \sqrt{2L \, \left( f(W_0) - f^* \right)} =  \sqrt{2r \, L \, D},
\]
and that:
\[
\dfrac{2 \normnuc{\G f(W_0)}}{T(1 - \varepsilon_q)} \sum_{t = 1}^T \beta^t \le \dfrac{2 \beta \sqrt{2 r \, L \, D}}{(1 - \varepsilon_q) \, T \, (1 - \beta)}.
\]
Using that:
\[
\dfrac{2 L \, \rho^\nu}{T (1 - \varepsilon_q)} \sum_{t = 1}^T \left(1 + 2^\nu + \beta^t \right) \le \dfrac{2 (1 + 2^\nu) \, L \, \rho^\nu}{1 - \varepsilon_q} + \dfrac{2 L \, \rho^\nu \, \beta}{T (1 - \varepsilon_q) \, (1 - \beta)},
\]
for $\nu \in (0, 1]$, we then obtain:
\[
\dfrac{2 \eta^\nu \, L \, (1 + \varepsilon_q)^\nu}{1 - \varepsilon_q} \left[ \left( \dfrac{\beta}{1 - \beta} \right)^\nu + \dfrac{1}{T} \sum_{t = 1}^T \beta^t \, (t - 1)^\nu \right] \le \dfrac{2 \eta^\nu \, L \, (1 + \varepsilon_q)^\nu}{1 - \varepsilon_q} \left[ \left( \dfrac{\beta}{1 - \beta}\right)^\nu + \dfrac{\beta}{1 - \beta} \right].
\]
Substituting this into Eq.~\ref{eq:averaged_bound} yields:
\begin{multline*}
\dfrac{1}{T} \sum_{t = 1}^T
\E \left[ \normnuc{\G f(W_{t - 1})} \right] \le \dfrac{D}{\eta \, T (1 - \varepsilon_q)} + 2^{(4 - p)/p} \dfrac{\sqrt{r} \, \sigma}{B^{1 - 1/p} (1 - \varepsilon_q)} \cdot \dfrac{1 - \beta}{(1  \beta^p)^{1/p}} \\
+ \dfrac{2 \eta^\nu L \, (1 + \varepsilon_q)^\nu}{1 - \varepsilon_q} \left[ \left( \dfrac{\beta}{1 - \beta} \right)^\nu + \dfrac{\beta}{1 - \beta} \right] + \dfrac{\eta^\nu L \, (1 + \varepsilon_q)^{\nu + 1}}{(\nu + 1) \, (1 - \varepsilon_q)} \\
+ \dfrac{2 \beta \sqrt{2 r \, L \, D}}{(1 - \varepsilon_q) \, T \, (1 - \beta)} + \dfrac{2 (1 + 2^\nu)\,  L \, \rho^\nu}{1 - \varepsilon_q} + \dfrac{2 L \, \rho^\nu \, \beta}{T \, (1 - \varepsilon_q) \, (1 - \beta)}.
\end{multline*}
Setting $\displaystyle\eta = \left( \dfrac{(1 - \beta) D}{T L} \right)^{1/(\nu + 1)}$, we get that:
\begin{multline*}
\dfrac{1}{T} \sum_{t = 1}^T \E \normnuc{\G f (W_{t - 1})}
\le \dfrac{1}{1 - \varepsilon_q} \left[ \left( \dfrac{L}{1 - \beta} \right)^{1/(\nu + 1)} \left( \dfrac{D}{T} \right)^{\nu/(\nu + 1)} \left( 1 + 2 (1 + \varepsilon_q)^\nu \left[ \dfrac{\beta^\nu}{(1 - \beta)^{\nu - 1}} + \beta \right] \right) \right. \\
+ \dfrac{(1 + \varepsilon_q)^{\nu + 1}}{\nu + 1} L^{1/(\nu + 1)} \left( \dfrac{D}{T} \right)^{\nu/(\nu + 1)} (1 - \beta)^{\nu/(\nu + 1)} \\
\left. + \dfrac{2^{(4 - p)/p} \sqrt{r} \, \sigma \, (1 - \beta)}{B^{1 - 1/p}(1 - \beta^p)^{1/p}} + \dfrac{2 \beta \sqrt{2 r \, L \, D}}{T (1 - \beta)} + L \, \rho^\nu \left( 2 (1 + 2^\nu) + \dfrac{2 \beta}{T (1 - \beta)} \right) \right].
\end{multline*}
Define $\chi_q \coloneq \dfrac{1}{1 - \varepsilon_q}$, $\delta \coloneq (p - 1)(\nu + 1) + p$ and also setting:
\[
\beta \coloneq 1 - \min \left\{1, \left( \left( \dfrac{L \, D^\nu}{T^\nu} \right)^{1/(\nu + 1)} \dfrac{B^{1 - 1/p}}{\sigma \sqrt{r}} \right)^{p (\nu + 1)/\delta}  \right\},
\]
we get that:
\begin{multline*}
\dfrac{1}{T} \sum_{t = 1}^T \E \left[ \normnuc{\G f(W_{t - 1})} \right] \le \chi_q \left[ \left(1 + 4 (1 + \varepsilon_q)^\nu \right) \left( \dfrac{L \, D^\nu}{T^\nu} \right)^{(p - 1)/\delta} \left( \dfrac{\sigma \sqrt{r}}{B^{1 - 1/p}} \right)^{p/\delta} \right.\\
+ \dfrac{(1 + \varepsilon_q)^{\nu + 1} L^{1/(\nu + 1)}}{\nu + 1} \left( \dfrac{D}{T} \right)^{\nu/(\nu + 1)} 
+ \dfrac{2 \beta \sqrt{2r} \, L^{(\delta - 2 p)/(2 \delta)} D^{(\delta - 2 p \, \nu)/(2 \delta)}}{T^{(2 p - 1 - \nu)/\delta}} \left( \dfrac{\sigma \sqrt{r}}{B^{1 - 1/p}} \right)^{p (\nu + 1)/\delta}\\
+ \left. 2^{(4 - p)/p} \left( \dfrac{L \, D^\nu}{T^\nu} \right)^{(p - 1)/\delta} \left( \dfrac{\sigma \sqrt{r}}{B^{1 - 1/p}} \right)^{p/\delta} \right]\\
+ \chi_q L \, \rho^\nu \left[ 2 (1 + 2^\nu) + \dfrac{2}{L^{p/\delta} D^{p \nu/\delta} T^{(2 p - 1 - \nu)/\delta}} \left( \dfrac{\sigma \sqrt{r}}{B^{1 - 1/p}} \right)^{p (\nu + 1)/\delta} \right]. 
\end{multline*}
Thus, we can find an $\epsilon$-nuclear norm stationary point of $f$ with a complexity of:
\begin{multline*}
\dfrac{1}{T} \sum_{t = 1}^T \E \left[ \normnuc{\G f(W_{t - 1})} \right] \le \chi_q \cdot \gO \left( \left( \dfrac{L \, D^\nu}{T^\nu} \right)^{(p - 1)/\delta} \left( \dfrac{\sigma \sqrt{r}}{B^{1 - 1/p}}  \right)^{p/\delta} + \left( \frac{L \, D^\nu}{T^\nu} \right)^{1/(\nu + 1)} \right.\\
+ \left. \dfrac{L^{(\delta - 2 p)/(2 \delta)}\sqrt{r} D^{(\delta - 2 p \, \nu)/ (2 \delta)}}{T^{(2p - 1 - \nu)/\delta}} \left( \dfrac{\sigma \sqrt{r}}{B^{1 - 1/p}} \right)^{p (\nu + 1)/\delta} \right)\\
+ \chi_q L \rho^\nu \cdot \gO \left( 1 + \dfrac{1}{L^{p/\delta} D^{p \nu/\delta} T^{(2 p - 1 - \nu)/\delta}} \left( \dfrac{\sigma \sqrt{r}}{B^{1 - 1/p}} \right)^{p (\nu + 1)/\delta} \right). \tag*{\qedhere}
\end{multline*}
\end{proof}

%% file: sections/Non-heavy_tailed.tex
\subsection{Proof of Corollary~\ref{coro:convg_of_bounded_variance}}

\begin{proof} \label{Proof:Proof_coro}
Introduce the scaled \ema momentum: $N_t \coloneq (1 - \beta) \, M_t$. Then, we obtain $N_t = \beta \, N_{t - 1} + (1 - \beta) \, \overline{G}_t$. Let $P_t$ denote the polar factor of $M_t$, that is, $P_t = \ortho{M_t}$. Since the polar factor is scale-invariant, we have $P_t \coloneq \ortho{N_t} = \ortho{M_t}$. Setting $p = 2$ in Assumption~\ref{assum:pth_bounded_moment}, we have for a mini-batch of size $B$ that: $\E_\xi \left[ \normF{\G f(W; \xi) - \E_\xi \left[ \G f(W; \xi) \right]}^2 \right] \le \sigma^2/B$. Using Assumption~\ref{assump:nu_L_smoothness} with $\nu = 1$ and applying the descent lemma (Lemma~\ref{lem:descent_holder_smooth}) to $W_t = W_{t - 1} - \eta \, O_t$ yields:
\begin{align*}
f(W_t) &\le f(W_{t - 1}) + \innerF{\G f(W_{t - 1})}{W_t - W_{t - 1}} + \dfrac{L}{2} \normop{W_t - W_{t - 1}}^2\\
&= f(W_{t - 1}) - \eta \innerF{\G f(W_{t - 1})}{O_t} + \dfrac{L \, \eta^2}{2} \normop{O_t}^2\\
&= f(W_{t - 1}) - \eta \innerF{N_t}{O_t} + \eta \innerF{N_t - \G f(W_{t - 1})}{O_t} + \dfrac{L \, \eta^2}{2} \normop{O_t}^2\\
&\stackrel{(a)}{\le} f(W_{t - 1}) - \eta \innerF{N_t}{O_t} + \eta \normnuc{N_t - \G f(W_{t - 1})} \normop{O_t} + \dfrac{L \, \eta^2}{2} \normop{O_t}^2\\
&= f(W_{t - 1}) - \eta \innerF{N_t}{P_t} + \eta \innerF{N_t}{P_t - O_t} + \eta \normnuc{N_t - \G f(W_{t - 1})} \normop{O_t} + \dfrac{L \, \eta^2}{2} \normop{O_t}^2\\
&\stackrel{(b)}{=} f(W_{t - 1}) - \eta \normnuc{N_t} + \eta \innerF{N_t}{P_t - O_t} + \eta \normnuc{N_t - \G f(W_{t - 1})} \normop{O_t} + \dfrac{L \, \eta^2}{2} \normop{O_t}^2\\
&\stackrel{(c)}{\le} f(W_{t - 1}) - \eta \normnuc{N_t} + \eta \normnuc{N_t} \normop{P_t - O_t} + \eta \normnuc{N_t - \G f(W_{t - 1})} \normop{O_t} + \dfrac{L \, \eta^2}{2} \normop{O_t}^2,
\end{align*}
where steps $(a)$ and $(c)$ follow from H\"older's inequality, and equality $(b)$ follows from the dual norm identity: $\innerF{N_t}{P_t} = \normnuc{N_t}$. We now define the polar approximation error, that is, the error between the exact polar factor $P_t = \ortho{N_t} = \ortho{M_t}$ and the actual step $O_t$ generated by $q$ steps of $\ns$ iterations:
\[
\varepsilon_{t, q} \coloneq \normop{O_t - P_t} \quad \text{and} \quad \varepsilon_q \coloneq \sup_t \varepsilon_{t, q}.
\]
Since $\normop{P_t} \le 1$, we have $\normop{O_t} \le 1 + \varepsilon_{t, q} \le 1 + \varepsilon_q$, and combining this with $\normop{P_t - O_t} \le \varepsilon_q$, we have that:
\[
f(W_t) \le f(W_{t - 1}) - \eta \, (1 - \varepsilon_q) \normnuc{N_t} + \eta \, (1 + \varepsilon_q) \normnuc{N_t - \G f(W_{t - 1})} + \dfrac{L \, \eta^2}{2} (1 + \varepsilon_q)^2.
\]
By the reverse triangle inequality $-\normnuc{A} \le -\normnuc{B} + \normnuc{A - B}$ applied to $\normnuc{N_t}$, that is, $\normnuc{N_t} \ge \normnuc{\G f(W_{t - 1})} - \normnuc{N_t - \G f(W_{t - 1})}$, we then obtain:
\begin{equation} \label{eq:1_descent_step3}
f(W_t) \le f(W_{t - 1}) - \eta \, (1 - \varepsilon_q) \normnuc{\G f(W_{t - 1})} + 2 \eta \normnuc{N_t - \G f(W_{t - 1})} + \dfrac{L \, \eta^2}{2} (1 + \varepsilon_q)^2.
\end{equation}
Next, we apply $(\nu, L)$-smoothness to $f$ by setting $\nu = 1$ and using that $\normop{E_t} \le \rho$. This then yields $\normnuc{\G f(W_{t - 1} + E_t) - \G f(W_{t - 1})} \le L \, \rho$, and so by the triangle inequality, we get that:
\[
\normnuc{N_t - \G f(W_{t - 1})} \le \normnuc{N_t - \G f(W_{t - 1} + E_t)} + L \, \rho.
\]
Substituting this into Eq.~\ref{eq:1_descent_step3} yields:
\[
f(W_t) \le f(W_{t - 1}) - \eta \, (1 - \varepsilon_q) \normnuc{\G f(W_{t - 1})} + 2 \eta \normnuc{N_t - \G f(W_{t - 1} + E_t)} + 2 \eta \, L \, \rho + \dfrac{L \, \eta^2}{2} (1 + \varepsilon_q)^2.
\]
Rearranging the terms and taking an expectation on both sides then gives us:
\begin{multline}
\label{eq:1_expectation_descent}
\E \left[ \normnuc{\G f(W_{t - 1})} \right] \le \dfrac{\E [f(W_{t - 1}) - f(W_t)]}{\eta \, (1 - \varepsilon_q)} + \dfrac{2}{1 - \varepsilon_q} \E \left[ \normnuc{N_t - \G f(W_{t - 1} + E_t)} \right]\\
+ \dfrac{2 L \, \rho}{1 - \varepsilon_q} + \dfrac{L \, \eta \, (1 + \varepsilon_q)^2}{2 (1 - \varepsilon_q)}.
\end{multline}
In Eq.~\ref{eq:1_expectation_descent}, we shall first obtain a bound for the second term on the right hand side, and then combine it with the gradient-gap inequality as in \citep[Lemma 7]{kim2026convergence} in our closing arguments, which then yields the required $\epsilon$-stationarity result.

\noindent \textbf{Bound for the term $\E \left[ \normnuc{N_t - \G f(W_{t - 1} + E_t)} \right]$.} We begin by first recalling our \ema momentum that we introduced at the beginning of this proof:
\[
N_t = \beta \, N_{t - 1} + (1 - \beta) \, \overline{G}_t, \quad \text{for~} t \ge 0 \text{~with~} N_0 = 0,
\]
where $\overline{G}_i = \frac{1}{B} \sum_{j = 1}^B \G f(W_{i - 1} + E_i; \xi_{i, j})$ denotes the mini-batch stochastic gradient evaluated at the perturbed iterate $W_{i - 1} + E_i$. Unrolling the recursion yields $N_t = (1 - \beta) \sum_{i = 1}^t \beta^{t - i} \, \overline{G}_i$. So, we have that:
\begin{equation}
\label{eq:1_tracking_error_expanded}
N_t - \G f(W_{t - 1} + E_t) = (1 - \beta) \sum\limits_{i = 1}^t \beta^{t - i} \Big( \overline{G}_i - \G f(W_{t - 1} + E_t) \Big) - \beta^t \, \G f(W_{t - 1} + E_t).
\end{equation}
Adding and subtracting $\G f(W_{i - 1} + E_i)$ to this expression then provides:
\[
\overline{G}_i - \G f(W_{t - 1} + E_t) = \underbrace{\Big( \overline{G}_i - \G f(W_{i - 1} + E_i) \Big)}_{V_i} + \underbrace{\Big( \G f(W_{i - 1} + E_i) - \G f(W_{t - 1} + E_t) \Big)}_{D_{i, t}},
\]
where $V_i \coloneq \overline{G}_i - \G f(W_{i - 1} + E_i)$ and $D_{i, t} \coloneq \G f(W_{i - 1} + E_i) - \G f(W_{t - 1} + E_t)$. Substituting this decomposition into Eq.~\ref{eq:1_tracking_error_expanded} yields:
\[
N_t - \G f(W_{t - 1} + E_t) = (1 - \beta) \sum\limits_{i = 1}^t \beta^{t - i} \, V_i + (1 - \beta) \sum\limits_{i = 1}^t \beta^{t - i} D_{i, t} - \beta^t \, \G f(W_{t - 1} + E_t).
\]
Taking the nuclear norm $\normnuc{\cdot}$, and using the matrix rank inequality $\normnuc{A} \le \sqrt{r} \normF{A}$, we obtain:
\begin{multline} \label{eq:1_tracking_error_split}
\E \left[ \normnuc{N_t - \G f(W_{t - 1} + E_t)} \right] \le \underbrace{(1 - \beta) \sqrt{r} \, \E \left[ \normF{ \sum\limits_{i = 1}^t \beta^{t - i} V_i } \right]}_{\text{Term A}} + \underbrace{(1 - \beta) \sum\limits_{i = 1}^t \beta^{t - i} \E \left[ \normnuc{D_{i, t}} \right]}_{\text{Term B}} \\
+ \underbrace{\vphantom{\sum\limits_{i = 1}^t} \beta^t \E \left[ \normnuc{\G f(W_{t - 1} + E_t)} \right]}_{\text{Term C}}.
\end{multline}
We next show the bounds for each of the above identified three terms.

\noindent \textbf{Bound for Term A.} By Jensen's inequality $\E[X] \le \sqrt{\E[X^2]}$, so we obtain the bound:
\begin{equation}
\label{eq:term_A_jensen}
\text{Term A} \le (1 - \beta) \sqrt{r} \sqrt{\E \left[ \normF{\sum\limits_{i = 1}^t \beta^{t - i} V_i}^2 \right]}.
\end{equation}
To evaluate the squared term $\E \left[ \normF{\sum_{i = 1}^t \beta^{t - i} V_i}^2 \right]$, let $\mathcal{F}_{i - 1} = \sigma(\xi_1, \xi_2, \dots, \xi_{i - 1})$ denote the $\sigma$-algebra generated by all random mini-batch samplings up to step $i - 1$. By Assumption~\ref{assum:pth_bounded_moment}, $\E [ V_i \mid \mathcal{F}_{i - 1} ] = \mathbf{0}$, and so the sequence $\{ V_i \}_{i = 1}^t$ forms a martingale difference sequence with respect to $\{ \mathcal{F}_i \}_{i \ge 0}$. Therefore, we have that:
\begin{align*}
\E \left[ \normF{\sum\limits_{i = 1}^t \beta^{t - i} V_i}^2 \right] &= \E \left[ \innerF{\sum\limits_{i = 1}^t \beta^{t - i} V_i}{\sum\limits_{j = 1}^t \beta^{t - j} V_j} \right]\\
&= \sum\limits_{i = 1}^t \beta^{2 (t - i)} \E \left[ \normF{V_i}^2 \right] + 2 \sum\limits_{1 \le i < j \le t} \beta^{2 t - i - j} \, \E \left[ \innerF{V_i}{V_j} \right].
\end{align*}
By the law of total expectation, for any cross-term with $i < j$, we have:
\[
\E \left[ \innerF{V_i}{V_j} \right] = \E \left[ \E \left[ \innerF{V_i}{V_j} \mid \mathcal{F}_{j - 1} \right] \right].
\]
Since $V_i$ is measurable with respect to $\mathcal{F}_{j - 1}$ for $i < j$, we next get that:
\[
\E \left[ \innerF{V_i}{V_j} \mid \mathcal{F}_{j - 1} \right] = \innerF{V_i}{\E [ V_j \mid \mathcal{F}_{j - 1} ]} = \innerF{V_i}{\mathbf{0}} = 0.
\]
Thus, $\E \left[ \innerF{V_i}{V_j} \right] = 0$ for $i \ne j$, and so we have that:
\[
\E \left[ \normF{\sum\limits_{i = 1}^t \beta^{t - i} V_i}^2 \right] = \sum\limits_{i = 1}^t \beta^{2 (t - i)} \E \left[ \normF{V_i}^2 \right].
\]
Using Assumption~\ref{assum:pth_bounded_moment} for $p = 2$ and a mini-batch of size $B$, we then have that:
\[
\E \left[ \normF{\sum\limits_{i = 1}^t \beta^{t - i} V_i}^2 \right] \le \dfrac{\sigma^2}{B} \sum\limits_{i = 1}^t \beta^{2 (t - i)} \le \dfrac{\sigma^2}{B (1 - \beta^2)}.
\]
Substituting back into Eq.~\ref{eq:term_A_jensen} yields the final bound for Term A as:
\begin{equation} \label{eq:1_term_A_bound}
\text{Term A} \le (1 - \beta) \sqrt{r} \sqrt{\dfrac{\sigma^2}{B (1 - \beta^2)}} = \dfrac{\sqrt{r} \, \sigma}{\sqrt{B}} \sqrt{\dfrac{1 - \beta}{1 + \beta}}.
\end{equation}

\noindent \textbf{Bound for Term B.} Recall from Eq.~\ref{eq:1_tracking_error_split} that:
\begin{equation}
\label{eq:1_term_B_def}
\text{Term B} = (1 - \beta) \sum\limits_{i = 1}^t \beta^{t - i} \E \left[ \normnuc{D_{i, t}} \right],
\end{equation}
where $D_{i, t} \coloneq \G f(W_{i - 1} + E_i) - \G f(W_{t - 1} + E_t)$. Using $(\nu, L)$-smoothness for $f$ with $\nu = 1$, that is, $\normnuc{\G f(X) - \G f(Y)} \le L \normop{X - Y}$ for all $X, Y \in \R^{m \times n}$, we have that:
\[
\normnuc{D_{i, t}} = \normnuc{\G f(W_{i - 1} + E_i) - \G f(W_{t - 1} + E_t)} \le L \normop{(W_{i - 1} + E_i) - (W_{t - 1} + E_t)}.
\]
Applying the triangle inequality, we next obtain:
\[
\normop{(W_{i - 1} + E_i) - (W_{t - 1} + E_t)} \le \normop{W_{t - 1} - W_{i - 1}} + \normop{E_t - E_i}.
\]
Using that $\normop{A} \le \normF{A}$ and that $\normF{E} \le \rho$, we next get:
\[
\normop{E_t - E_i} \le \normop{E_t} + \normop{E_i} \le \normF{E_t} + \normF{E_i} \le 2 \rho.
\]
Since the updates are $W_k - W_{k - 1} = -\eta \, O_k$ with $\normop{O_k} \le 1 + \varepsilon_q$, this then yields:
\[
\normop{W_{t - 1} - W_{i - 1}} \le \sum\limits_{k = i}^{t - 1} \normop{W_k - W_{k - 1}} \le \eta \sum\limits_{k = i}^{t - 1} \normop{O_k} \le \eta \, (t - i) \, (1 + \varepsilon_q).
\]
Combining the above inequalities gives us that:
\[
\E \left[ \normnuc{D_{i, t}} \right] \le L \left( \eta \, (t - i) \, (1 + \varepsilon_q) + 2 \rho \right).
\]
Substituting this back into Eq.~\ref{eq:1_term_B_def} yields:
\begin{align*}
\text{Term B} &\le (1 - \beta) \sum\limits_{i = 1}^t \beta^{t - i} \cdot L \left( \eta \, (t - i) \, (1 + \varepsilon_q) + 2 \rho \right)\\
&= 2 L \, \rho \, (1 - \beta) \sum\limits_{i = 1}^t \beta^{t - i} + \eta \, L \, (1 + \varepsilon_q) \, (1 - \beta) \sum\limits_{i = 1}^t \beta^{t - i} (t - i).
\end{align*}
Bounding the two sums above by their corresponding infinite series, we finally obtain the bound for Term B to be the following:
\begin{equation} \label{eq:1_term_B_bound}
\text{Term B} \le 2 L \, \rho + \dfrac{\beta}{1 - \beta} \, \eta \, L \, (1 + \varepsilon_q).
\end{equation}

\noindent \textbf{Bound for Term C.} Recall from Eq.~\ref{eq:1_tracking_error_split} that:
\[ % \label{eq:1_term_C_def}
\text{Term C} = \beta^t \E \left[ \normnuc{\G f(W_{t - 1} + E_t)} \right].
\]
Applying the triangle inequality, we obtain:
\[
\normnuc{\G f(W_{t - 1} + E_t)} \le \normnuc{\G f(W_0)} + \normnuc{\G f(W_{t - 1} + E_t) - \G f(W_0)}.
\]
Next, using $(\nu, L)$-smoothness for $f$ with $\nu = 1$, we get that:
\[
\normnuc{\G f(W_{t - 1} + E_t) - \G f(W_0)} \le L \normop{(W_{t - 1} + E_t) - W_0}.
\]
Applying the triangle inequality again, we have:
\[
\normop{(W_{t - 1} + E_t) - W_0} \le \normop{W_{t - 1} - W_0} + \normop{E_t}.
\]
Unrolling the weight updates $W_k - W_{k - 1} = -\eta \, O_k$ for steps $1$ to $t - 1$, we can write:
\[
\normop{W_{t - 1} - W_0} \le \sum\limits_{k = 1}^{t - 1} \normop{W_k - W_{k - 1}} \le \eta \sum\limits_{k = 1}^{t - 1} \normop{O_k} \le \eta \, (t - 1) \, (1 + \varepsilon_q).
\]
Combining the above inequalities, we get next:
\[
\normnuc{\G f(W_{t - 1} + E_t)} \le \normnuc{\G f(W_0)} + L \, \rho + \eta \, L \, (t - 1) \, (1 + \varepsilon_q).
\]
Taking an expectation then yields the bound for Term C to be:
\begin{equation} \label{eq:1_term_C_bound}
\text{Term C} \le \beta^t \normnuc{\G f(W_0)} + \beta^t L \, \rho + \beta^t \, \eta \, L \, (t - 1) (1 + \varepsilon_q).
\end{equation}
Folding the bounds that we have obtained for Terms A, B and C in Eqns.~\ref{eq:1_term_A_bound}, \ref{eq:1_term_B_bound} and \ref{eq:1_term_C_bound}, respectively, into Eq.~\ref{eq:1_tracking_error_split} provides us with:
% \begin{equation} \label{eq:1_tracking_error_bound}
% \begin{split}
% \E \left[ \normnuc{N_t - \G f(W_{t - 1} + E_t)} \right] &\le \dfrac{\sqrt{r} \, \sigma}{\sqrt{B}} \sqrt{\dfrac{1 - \beta}{1 + \beta}} + 2 L \, \rho + \dfrac{\beta}{1 - \beta} \, \eta \, L \, (1 + \varepsilon_q)\\
% &\quad + \beta^t \normnuc{\G f(W_0)} + \beta^t L \, \rho + \beta^t \, \eta \, L \, (t - 1) \, (1 + \varepsilon_q),\\
% &= \dfrac{\sqrt{r} \, \sigma}{\sqrt{B}} \sqrt{\dfrac{1 - \beta}{1 + \beta}} + L \, \rho \, (2 + \beta^t) + \beta^t \normnuc{\G f(W_0)}\\
% &\quad + \eta \, L \, (1 + \varepsilon_q) \left( \dfrac{\beta}{1 - \beta} + \beta^t (t - 1) \right).
% \end{split}
% \end{equation}
\begin{align}
\E \left[ \normnuc{N_t - \G f(W_{t - 1} + E_t)} \right]
&\le \begin{multlined}[t]
\dfrac{\sqrt{r} \, \sigma}{\sqrt{B}} \sqrt{\dfrac{1 - \beta}{1 + \beta}}
+ 2 L \, \rho + \dfrac{\beta}{1 - \beta} \, \eta \, L \, (1 + \varepsilon_q) \\
+ \beta^t \normnuc{\G f(W_0)} + \beta^t L \, \rho
+ \beta^t \, \eta \, L \, (t - 1) \, (1 + \varepsilon_q)
\end{multlined} \nonumber \\
&= \begin{multlined}[t]
\dfrac{\sqrt{r} \, \sigma}{\sqrt{B}} \sqrt{\dfrac{1 - \beta}{1 + \beta}}
+ L \, \rho \, (2 + \beta^t) + \beta^t \normnuc{\G f(W_0)} \\
+ \eta \, L \, (1 + \varepsilon_q)
  \left( \dfrac{\beta}{1 - \beta} + \beta^t (t - 1) \right).
\end{multlined} \label{eq:1_tracking_error_bound}
\end{align}

\noindent \textbf{Closing Arguments.} Substituting Eq.~\ref{eq:1_tracking_error_bound} into Eq.~\ref{eq:1_expectation_descent} yields:
% \begin{align*}
% \E \left[ \normnuc{\G f(W_{t - 1})} \right] &\le \dfrac{\E [f(W_{t - 1}) - f(W_t)]}{\eta \, (1 - \varepsilon_q)} + \dfrac{2}{1 - \varepsilon_q} \Bigg[ \dfrac{\sqrt{r} \, \sigma}{\sqrt{B}} \sqrt{\dfrac{1 - \beta}{1 + \beta}} + L \, \rho \, (2 + \beta^t) + \beta^t \normnuc{\G f(W_0)}\\
% &\quad + \eta \, L \, (1 + \varepsilon_q) \left( \dfrac{\beta}{1 - \beta} + \beta^t (t - 1) \right) \Bigg] + \dfrac{2 L \, \rho}{1 - \varepsilon_q} + \dfrac{L \, \eta \, (1 + \varepsilon_q)^2}{2 (1 - \varepsilon_q)}\\
% &= \dfrac{\E [f(W_{t - 1}) - f(W_t)]}{\eta \, (1 - \varepsilon_q)} + \dfrac{2 \sqrt{r} \, \sigma}{(1 - \varepsilon_q) \sqrt{B}} \sqrt{\dfrac{1 - \beta}{1 + \beta}} + \dfrac{2 L \, \rho \, (3 + \beta^t)}{1 - \varepsilon_q} + \dfrac{2 \beta^t \normnuc{\G f(W_0)}}{1 - \varepsilon_q}\\
% &\quad + \dfrac{2 \eta \, L \, (1 + \varepsilon_q)}{1 - \varepsilon_q} \left( \dfrac{\beta}{1 - \beta} + \beta^t (t - 1) \right) + \dfrac{L \, \eta \, (1 + \varepsilon_q)^2}{2 (1 - \varepsilon_q)}.
% \end{align*}
\begin{align*}
\E \left[ \normnuc{\G f(W_{t - 1})} \right]
&\le \begin{multlined}[t]
\dfrac{\E [f(W_{t - 1}) - f(W_t)]}{\eta \, (1 - \varepsilon_q)}
+ \dfrac{2}{1 - \varepsilon_q} \Bigg[ \dfrac{\sqrt{r} \, \sigma}{\sqrt{B}} \sqrt{\dfrac{1 - \beta}{1 + \beta}}
+ L \, \rho \, (2 + \beta^t) + \beta^t \normnuc{\G f(W_0)} \\
+ \eta \, L \, (1 + \varepsilon_q) \left( \dfrac{\beta}{1 - \beta} + \beta^t (t - 1) \right) \Bigg]
+ \dfrac{2 L \, \rho}{1 - \varepsilon_q}
+ \dfrac{L \, \eta \, (1 + \varepsilon_q)^2}{2 (1 - \varepsilon_q)}
\end{multlined} \\
&= \begin{multlined}[t]
\dfrac{\E [f(W_{t - 1}) - f(W_t)]}{\eta \, (1 - \varepsilon_q)}
+ \dfrac{2 \sqrt{r} \, \sigma}{(1 - \varepsilon_q) \sqrt{B}} \sqrt{\dfrac{1 - \beta}{1 + \beta}}
+ \dfrac{2 L \, \rho \, (3 + \beta^t)}{1 - \varepsilon_q}
+ \dfrac{2 \beta^t \normnuc{\G f(W_0)}}{1 - \varepsilon_q} \\
+ \dfrac{2 \eta \, L \, (1 + \varepsilon_q)}{1 - \varepsilon_q} \left( \dfrac{\beta}{1 - \beta} + \beta^t (t - 1) \right)
+ \dfrac{L \, \eta \, (1 + \varepsilon_q)^2}{2 (1 - \varepsilon_q)}.
\end{multlined}
\end{align*}
Taking the average over $t = 1, 2, \ldots, T$ then gives us that:
\begin{multline}
\label{eq:1_average_step_unsimplified}
\dfrac{1}{T} \sum\limits_{t = 1}^T \E \left[ \normnuc{\G f(W_{t - 1})} \right] \le \dfrac{1}{\eta \, T (1 - \varepsilon_q)} \sum\limits_{t = 1}^T \E \left[ f(W_{t - 1}) - f(W_t) \right] + \dfrac{2 \sqrt{r} \, \sigma}{(1 - \varepsilon_q) \sqrt{B}} \sqrt{\dfrac{1 - \beta}{1 + \beta}}\\
+ \dfrac{2 L \, \rho}{T (1 - \varepsilon_q)} \sum\limits_{t = 1}^T (3 + \beta^t) + \dfrac{2 \normnuc{\G f(W_0)}}{T (1 - \varepsilon_q)} \sum\limits_{t = 1}^T \beta^t\\
+ \dfrac{2 \eta \, L \, (1 + \varepsilon_q)}{1 - \varepsilon_q} \left( \dfrac{\beta}{1 - \beta} + \dfrac{1}{T} \sum\limits_{t = 1}^T \beta^t (t - 1) \right) + \dfrac{L \, \eta \, (1 + \varepsilon_q)^2}{2 (1 - \varepsilon_q)}.
\end{multline}
Using $D = f(W_0) - f^*$ as the initial suboptimality gap, we have:
\[
\dfrac{1}{\eta T (1 - \varepsilon_q)} \sum\limits_{t = 1}^T \E \left[ f(W_{t - 1}) - f(W_t) \right] \le \dfrac{D}{\eta T (1 - \varepsilon_q)}.
\]
By the norm identity $\normnuc{\G f(W_0)} \le \sqrt{r} \, \normF{\G f(W_0)}$, and the gradient gap inequality for $L$-smooth functions bounded from below by $f^*$ (that is, $\normF{\G f(W)}^2 \le 2 L \, (f(W) - f^*)$) as in \citep[Lemma 7]{kim2026convergence}, we obtain that:
\[
\normnuc{\G f(W_0)} \le \sqrt{r} \, \normF{\G f(W_0)} \le \sqrt{r} \sqrt{2 L \, \left( f(W_0) - f^* \right)} = \sqrt{2 r \, L \, D},
\]
and that:
\[
\dfrac{2 \normnuc{\G f(W_0)}}{T (1 - \varepsilon_q)} \sum\limits_{t = 1}^T \beta^t \le \dfrac{2 \beta \sqrt{2 r \, L \, D}}{(1 - \varepsilon_q) \, T \, (1 - \beta)}.
\]
Similarly, we also have that:
\[
\dfrac{2 \eta \, L \, (1 + \varepsilon_q)}{1 - \varepsilon_q} \left( \dfrac{\beta}{1 - \beta} + \dfrac{1}{T} \sum\limits_{t = 1}^T \beta^t (t - 1) \right) \le \dfrac{4 \beta L \, \eta \, (1 + \varepsilon_q)}{(1 - \varepsilon_q) \, (1 - \beta)}.
\]
Substituting these back into Eq.~\ref{eq:1_average_step_unsimplified} yields:
\begin{multline*}
\dfrac{1}{T} \sum\limits_{t = 1}^T \E \left[ \normnuc{\G f(W_{t - 1})} \right] \le \dfrac{D}{\eta T (1 - \varepsilon_q)} + \dfrac{2}{1 - \varepsilon_q} \left( \dfrac{2 \beta L \, \eta \, (1 + \varepsilon_q)}{1 - \beta} + \sqrt{\dfrac{1 - \beta}{1 + \beta}} \, \dfrac{\sqrt{r} \, \sigma}{\sqrt{B}} \right) + \dfrac{L \, \eta \, (1 + \varepsilon_q)^2}{2 (1 - \varepsilon_q)}\\
+ \dfrac{2 \beta \sqrt{2 r \, L \, D}}{(1 - \varepsilon_q) T (1 - \beta)} + \dfrac{6 L \, \rho}{1 - \varepsilon_q} + \dfrac{2 L \, \rho \, \beta}{T (1 - \varepsilon_q) \, (1 - \beta)}.
\end{multline*}
Setting $\displaystyle \eta = \sqrt{\dfrac{(1 - \beta) D}{T L}}$, we get that:
\begin{multline*}
\dfrac{1}{T} \sum\limits_{t = 1}^T \E \left[ \normnuc{\G f(W_{t - 1})} \right] \le \dfrac{1}{1 - \varepsilon_q} \left[ \sqrt{\dfrac{L \, D}{T}} \left( \dfrac{1 + 4 \beta (1 + \varepsilon_q)}{\sqrt{1 - \beta}} + \dfrac{(1 + \varepsilon_q)^2 \sqrt{1 - \beta}}{2} \right) \right.\\
\left. + \dfrac{2 \beta \sqrt{2 r \, L \, D}}{T (1 - \beta)} + \dfrac{2 \sigma \sqrt{r}}{\sqrt{B}} \sqrt{\dfrac{1 - \beta}{1 + \beta}} + L \, \rho \left( 6 + \dfrac{2 \beta}{T (1 - \beta)} \right) \right].
\end{multline*}
Defining $\chi_q \coloneq \dfrac{1}{1 - \varepsilon_q}$ and $\beta \coloneq 1 - \min \left\{1, \dfrac{\sqrt{L \, D \, B}}{\sigma \sqrt{r \, T}} \right\}$, we get that:
\begin{multline*}    
\dfrac{1}{T} \sum_{t = 1}^T \mathbb{E} \left[ \normnuc{\nabla f(W_{t - 1})} \right] 
\le \chi_q \left[ (7 + 4 \varepsilon_q) \left( \dfrac{r 
\, \sigma^2 \, L \, D}{B \, T} \right)^{1/4} \right.\\ 
+ \left.\dfrac{(1 + \varepsilon_q)^2}{2} \sqrt{\dfrac{L \, D}{T}} + \dfrac{2\sqrt{2} \sigma \, r}{\sqrt{B \, T}} 
+L \rho \left( 6 + \dfrac{2 \sigma \sqrt{r}}{\sqrt{L \, D  \, B \, T}} \right) \right].
\end{multline*}
Thus, we can find an $\epsilon$-nuclear norm stationary point of $f$ with a complexity of:
\[
\dfrac{1}{T} \sum\limits_{t = 1}^T \E \left[ \normnuc{\G f(W_{t - 1})} \right] \le \chi_q \cdot \gO \left( \left( \dfrac{r \, \sigma^2 L \, D}{B \, T} \right)^{\!1/4} + \sqrt{\dfrac{L \, D}{T}} + \dfrac{\sigma r}{\sqrt{B \, T}} \right) + \chi_q L \, \rho \cdot \gO \left( 1 + \dfrac{\sigma \sqrt{r}}{\sqrt{L \, D \, B \, T}} \right). \qedhere
\]
\end{proof}

%% file: sections/SOMA_Lemma.tex
\subsection{Proof of Lemma~\ref{lem:lag}}
\begin{proof}
For $t \ge 2$, \soma uses $E_t = \rho \, \mathcal{P}_t$ with:
\[
\mathcal{P}_t = \ortho{\left( \overline G_{t - 1} - \gamma \, (1 - \beta) \, M_{t - 1} \right)}, \quad \overline G_{t - 1 } = \G f_{\mathcal B_{t - 1}} \left( W_{t - 2} + E_{t - 1} \right),
\]
and so $\mathcal{P}_t$ is $\mathcal F_{t - 1}$-measurable and $\normop{P_t}\le1$. The gradient $\overline G_{t - 1}$ used by \soma is evaluated at $Y_t = W_{t - 2} + E_{t - 1}$ whereas the current iterate is $X_t = W_{t - 1}$. Since $W_{t - 1} = W_{t - 2} - \eta \, O_{t - 1}$, we have $Y_t - X_t = E_{t - 1} + \eta \, O_{t - 1}$. Because $\normop{E_{t - 1}} \leq \rho$ and $\normop{O_{t - 1}} \leq 1$, $\normop{Y_t - X_t} \leq \rho + \eta$. Therefore, decompose the reused gradient as:
\[
\overline G_{t - 1} = \G f(X_t) + \underbrace{ \left( \G f(Y_t) - \G f(X_t) \right)}_{b_t} + \underbrace{ \left( \overline G_{t - 1} - \G f(Y_t) \right)}_{\zeta_{t - 1}}.
\]
By $L$-smoothness of $f$, we have $\normnuc{b_t} \leq L \, \normop{Y_t - X_t} \leq L \, (\rho + \eta)$. For $t \geq 3$, $Y_t$ depends only on the past while the mini-batch used to compute $\overline G_{t - 1}$ is fresh. As a result, we have that:
\[
\E[ \zeta_{t - 1} \mid \mathcal F_{t - 2}] = 0, \quad\text{and}\quad \E[ \normF{\zeta_{t - 1}}^2 \mid \mathcal F_{t - 2}] \leq \dfrac{\sigma^2}{B}.
\]
By the Hessian Lipschitzness property, we also have that:
\[
\dfrac{\rho^2}{2} \abs{\innerproduct{\mathcal{P}_t}{\left( H(X_t) - H(Y_t) \right)[P_t]}} \leq \dfrac{\rho^2}{2} L_H \normop{X_t - Y_t} \normop{\mathcal{P}_t}^2 \leq \dfrac{\rho^2}{2} L_H \, (\rho + \eta). \qedhere
\]
\end{proof}

%% file: isotropic_heavy_tailed_appendix.tex
 All synthetic heavy-tailed experiments were run on CPU only, on a single laptop with an Intel Core i7-13620H processor (10 cores: 6 performance and 4 efficient, 16 threads) and 32 GB of RAM, running Windows 11 (64-bit). No GPU was used. Runs were parallelized across 14 worker processes, with the 8 seeds vectorized in parallel. The models are small, so a single seed run takes ~4–5 seconds (or ~30–40 seconds for the full 8-seed batch).

\subsection{Experiment 1. Synthetic experiment with teacher-student model}
\textbf{Setup of the synthetic experiments}
\label{app:synthetic-setup}
\textbf{Data and model.} Inputs are drawn from a zero-mean Gaussian whose covariance has condition number $100$, with $n_{\text{train}}=512$ training and $512$ held-out samples. The data, the teacher and the initialization are generated once and shared by all runs. In the main text (Experiment~1), the student and the teacher are two-layer ReLU networks of the same shape, and the student is trained with cross-entropy over $k=32$ classes. In Experiment~2, both are two-matrix-deep linear networks trained with squared loss (see below).

\textbf{Noise.} Each minibatch gradient is computed on $|\mathcal{B}_t|=64$ samples. For $(Z_t)_{ij}=s_{ij}u_{ij}^{-1/\alpha}$, we have $\Pr(|(Z_t)_{ij}|>z)=z^{-\alpha}$ for $z\ge1$, so $\mathbb{E}|(Z_t)_{ij}|^p<\infty$ if and only if $p<\alpha$. In particular, $\mathbb{E}\normF{\mathcal{E}_t}^2=\infty$ for $\alpha\le2$. The matrices $L$ and $R$ are fixed across iterations and have geometrically decaying singular values with condition number $\kappa_\varepsilon$: $\kappa_\varepsilon=1$ gives isotropic noise and $\kappa_\varepsilon=10$ gives anisotropic noise.

\textbf{Optimization.} Every method runs for $T=500$ iterations with a cosine learning-rate schedule decayed to $1\%$ of its initial value. \muon-based methods use momentum $\beta=0.95$, and \adamw-based methods use $(\beta_1,\beta_2)=(0.9,0.999)$ with $\epsilon=10^{-8}$ and no weight decay; SGD-based methods use no momentum, except \textsc{Msam}, whose perturbation is the momentum buffer and which uses momentum $0.9$. Learning rates were chosen per base optimizer by grid search (on $16$ tuning seeds disjoint from the evaluation seeds), over $\eta\in\{0.01,0.03,0.1,0.3,1,3,10,30\}$ for SGD, $\eta\in\{10^{-4},3\!\cdot\!10^{-4},10^{-3},3\!\cdot\!10^{-3},10^{-2},3\!\cdot\!10^{-2}\}$ for \adamw, and $\eta\in\{0.001,0.002,0.005,0.01,0.02,0.05,0.1\}$ for \muon. The selected values are $\eta=1.0$ for SGD-based methods ($0.1$ for \textsc{Msam}), $\eta=0.003$ for \adamw-based methods, and $\eta=0.02$ for \muon-based methods, and are kept fixed across all $\alpha$ and $\rho$. For the sharpness-aware methods, we sweep $\rho\in\{0.001,0.002,0.005,0.01,0.02,0.05,0.1,0.2,0.5,1\}$. All results are the median over $8$ independent noise seeds.

\input{dataforsyntheticplots/nn_aniso_val_acc}
\input{dataforsyntheticplots/nn_aniso_train_loss}

\textbf{Detailed observations from Experiment 1.} Two observations complement the main text. First, the geometry of the perturbation matters more than how its direction is obtained. In the spectral geometry, every perturbation we tested improves on \muon, including a random one, while in the Frobenius geometry neither a random perturbation (\randsammuon) nor the Euclidean methods \sam, \fsam and \msam do so (Table~\ref{tab:nonlinear_aniso_acc}). Second, stale perturbations are more robust to the choice of radius. At $\rho=1$, the two-gradient methods degrade as the tails get lighter, with \specsammuon dropping from $47.0\%$ to $29.0\%$ and \fpsoma from $46.7\%$ to $41.7\%$ at $\alpha=3$, whereas \soma, \opsomapre and \opsomapost lose at most $1.1$ points from their best. Finally, at $\alpha=1.1$ the differences among the \muon-based methods are smaller than the spread over seeds. Adding \sam or \fsam to \adamw changes its accuracy by less than one point at every $\alpha$.

\subsection{Experiment 2. Linear two-matrix network under heavy-tailed noise}
We also perform experiments in a setting where the global optimum is known. The student is a deep linear network \begin{align*}
    P(W)=W_2W_1
\end{align*}
trained to recover a rank-8 teacher matrix $P^\star$ (condition number 10) from ill-conditioned Gaussian inputs, with objective 
\begin{align*}
  F(W)=\frac{1}{2nk}\|XP(W)^\top-Y\|_F^2+\frac{\mu}{2}\sum_{l}\|W_l\|_F^2  
\end{align*}
under a spectral-norm constraint. We report the final objective gap $F(W_T)-F^\star$ which uses anisotropic noise ($\kappa_\varepsilon=10$).

\begin{figure*}[!t]
    \centering
\input{tikz_for_heavytailed/experiment_2/gap_vs_rho.tikz}
    
    \caption{Performance under \textbf{anisotropic} heavy-tailed noise (Condition no., $\kappa_\varepsilon = 10$). Optimality gap (log scale) vs $\rho$. \textcolor{red}{Red}, \textcolor{blue}{Blue} and \textcolor{ForestGreen}{Green} lines indicate \adamw-based, \sgd-based and \muon-based methods, respectively.}
    \label{fig:exp2_linear_anisotropic}
\end{figure*}
\input{dataforsyntheticplots/mse_aniso_gap}

\textbf{Results.} Since the optimum is known, Figure~\ref{fig:exp2_linear_anisotropic} measures how well each method optimizes under heavy-tailed noise rather than how it generalizes, which makes it a direct check of our convergence analysis. Two findings stand out. First, spectral perturbations reduce \muon's optimality gap most under the heaviest tails: at $\alpha=1.1$, \soma lowers it by about $36\%$, as much as the two-gradient \specsammuon and close to \fpsoma, while the \sgd-based methods fail to converge (a gap about $8$--$14\times$ that of \muon) and the \adamw-based methods remain $1.3$--$2.2\times$ above \muon at every $\alpha$. As the tails get lighter, the gain of the one-gradient methods shrinks, and at $\alpha=3$ only the two-gradient methods still improve on \muon noticeably; a fresh gradient thus helps optimization when the noise is mild. Second, the best radius of \soma decreases as the tails get lighter, from $\rho^\star=1$ at $\alpha=1.1$ to $\rho^\star=0.1$ at $\alpha=3$, and for $\alpha\ge1.6$ a radius of $\rho=1$ increases the gap of most spectral variants well above that of \muon. This matches the bound of Theorem~\ref{thm:sam_muon_nu_p_convergence}: the error floor $\gO(\chi_q L\rho^\nu)$ is negligible when the noise term dominates, but it limits accuracy once the noise is light, so a moderate radius is the safer choice. \msoma reduces \muon's gap at every $\alpha$, by $6$--$11\%$, even though it barely improved validation accuracy in Experiment~1; this supports our earlier observation that perturbing along the momentum mainly aids optimization rather than generalization. As in Experiment~1, a Frobenius random perturbation (\randsammuon) performs like \muon, and adding \sam or \fsam to \adamw changes its gap by about $1\%$ (Table~\ref{tab:mse_aniso_gap}).

\subsection{Isotropic Noise for Heavy-tailed experiments}
 In this section we repeat Experiments~1 and~2 with \textbf{isotropic} heavy-tailed noise. The optimizers, learning rates, $\rho$ grid, tail indices $\alpha\in\{1.1,1.6,2,3\}$ and the 8 seeds are unchanged. Lines show the median over seeds, and shaded bands show the interquartile range.

\begin{figure*}[!t]
    \centering
    % Top row: validation accuracy
    \input{tikz_for_heavytailed/appendex_isotropic/val_acc_vs_rho.tikz}

    \vspace{0.3cm}

    % Bottom row: training loss
    \input{tikz_for_heavytailed/appendex_isotropic/train_loss_vs_rho.tikz}

    \caption{Performance under \textbf{isotropic} heavy-tailed noise (nonlinear setting)(Condition no., $\kappa_\epsilon=1$). Top: validation accuracy. Bottom: training loss (log scale). \textcolor{red}{Red}, \textcolor{blue}{Blue} and \textcolor{ForestGreen}{Green} lines indicate \adamw-based, \sgd-based and \muon-based methods, respectively.}
    \label{fig:iso_nonlinear}
\end{figure*}
\input{dataforsyntheticplots/nn_iso_val_acc}
\input{dataforsyntheticplots/nn_iso_train_loss}

\textbf{Results for Experiment 1.} Figure~\ref{fig:iso_nonlinear} shows validation accuracy and training loss against the perturbation radius $\rho$ for nonlinear two-layer classification. Tables~\ref{tab:nn_iso_val_acc} and~\ref{tab:nn_iso_train_loss} report the best value over $\rho$ and the radius $\rho^\star$ at which it is attained. With isotropic noise, the spectral perturbations still improve on \muon at every tail index: \soma raises the best validation accuracy of \muon by $1.4$--$2.8$ points and matches or exceeds \fpsoma at every $\alpha$ while using half the gradient evaluations. The main difference from the anisotropic case concerns \adamw. Under the heaviest tails ($\alpha=1.1$), the \adamw-based methods are now the most accurate ($40.6$--$40.9\%$, versus $36.1\%$ for \muon and $38.2\%$ for \soma), whereas with anisotropic noise they were the least accurate. The robustness of \adamw to heavy-tailed noise therefore depends on the structure of the noise, while \muon and \soma change much less between the two cases. We attribute this to the preconditioning: when the noise has the same scale in every direction, \adamw normalizes each coordinate by the same noise level, so coordinates where the true gradient is strong still receive larger steps, while heavy-tailed outliers are damped. \muon, in contrast, sets all singular values to one, so directions dominated by noise receive the same step as directions carrying a strong gradient, and this equalization brings no benefit when the noise has no spectral structure to correct. Under anisotropic noise, the noise concentrates along a few directions that are mixed across coordinates, so the per-coordinate scaling of \adamw no longer matches it, while the spectral normalization of \muon does.
For $\alpha\ge1.6$, the pattern of the anisotropic case returns: the \sgd-based methods reach the lowest training loss but the lowest accuracy, and the random (\randsammuon) and momentum (\msoma) perturbations perform like \muon. Stale perturbations are again more robust to a large radius: at $\rho=1$ and $\alpha=3$, \specsammuon drops to $30.3\%$ and \fpsoma to $43.6\%$, while \soma loses less than one point from its best.

\textbf{Results for Experiment 2.} Figure~\ref{fig:iso_linear} shows the optimality gap against $\rho$ for the linear two-matrix network. Table~\ref{tab:mse_iso_gap} reports the best gap over $\rho$ and the radius at which it is attained. The optimality gap shows the same trends as with anisotropic noise. The spectral perturbations reduce \muon's gap at every $\alpha$, by about $38\%$ for \soma at $\alpha=1.1$ and by $12\%$ at $\alpha=3$, while the two-gradient \specsammuon and \fpsoma keep a larger reduction at light tails (about $25\%$ at $\alpha=3$). The \sgd-based methods again fail to converge under the heaviest tails, with a gap about $6$--$9\times$ that of \muon at $\alpha=1.1$. As in Experiment~1, \adamw benefits from isotropic noise: it has the smallest gap at $\alpha=1.1$, but it falls behind \muon for $\alpha\ge2$. The best radius of \soma decreases from $\rho^\star=1$ at $\alpha=1.1$ to $\rho^\star=0.5$ for $\alpha\ge1.6$, and at $\alpha=3$ a radius of $\rho=1$ makes its gap about $2.7\times$ larger than at $\rho^\star$, which is again consistent with the $\rho$-dependent error floor of Theorem~\ref{thm:sam_muon_nu_p_convergence}. The random and momentum perturbations and \sam or \fsam on \adamw change the gap of their base optimizer by at most about $5\%$.
\begin{figure*}[!t]
    \centering
    \input{tikz_for_heavytailed/appendex_isotropic/gap_vs_rho.tikz}
    \caption{Performance under \textbf{isotropic} heavy-tailed noise (linear setting)(Condition no., $\kappa_\epsilon=1$). Optimality gap (log scale) vs $\rho$. \textcolor{red}{Red}, \textcolor{blue}{Blue} and \textcolor{ForestGreen}{Green} lines indicate \adamw-based, \sgd-based and \muon-based methods, respectively.}
    \label{fig:iso_linear}
\end{figure*}
\input{dataforsyntheticplots/mse_iso_gap}

\subsection{Ablation: Frobenius versus Spectral Geometry}
\label{app:ablation-geometry}
\textbf{Motivation and setup.} \muon is steepest descent under the spectral norm, so a perturbation inside a spectral-norm ball matches its geometry \citep{zhong2026sharpness}, whereas \sam and its variants perturb inside a Frobenius ball. This ablation isolates the effect of this choice. We consider four ways of obtaining the perturbation direction: a random direction, the fresh gradient (\sam-\muon), the friendly residual (\soma), and the stale gradient (\Stwosam-\muon). \Stwosam-\muon adapts S$^2$-\sam \citep{ji2024single} to \muon: it perturbs along the gradient of the previous iteration $\overline G_{t-1}$, so it needs one gradient evaluation per step and corresponds to \soma with $\gamma=0$. For each direction, we keep the direction fixed and change only the ball's geometry, and we compare the two versions for the nonlinear (Table~\ref{tab:nonlinear_aniso_acc}) and linear (Table~\ref{tab:linear_aniso_gap}) problems with anisotropic noise.
\begin{table*}[!t]
    \centering
    \caption{Nonlinear anisotropic setting: best validation accuracy (\%, $\uparrow$) with a Frobenius or a spectral perturbation, for each tail index $\alpha$. The better geometry for each method and $\alpha$ is in bold; ties are not marked.}
    \label{tab:nonlinear_aniso_acc}
    \begin{tabular}{@{}llcccc@{}}
        \toprule
        & & \multicolumn{4}{c}{Tail index $\alpha$} \\ \cmidrule(l){3-6}
        Direction & Geometry & $1.1$ & $1.6$ & $2.0$ & $3.0$ \\
        \midrule
        \multirow{2}{*}{Random} & Frobenius & 38.3 & 43.4 & 44.5 & 44.7 \\
                                & Spectral  & \textbf{38.5} & \textbf{44.7} & \textbf{46.5} & \textbf{47.2} \\
        \midrule
        \multirow{2}{*}{\sam-\muon} & Frobenius & 38.3 & 43.2 & 45.0 & 45.6 \\
                                    & Spectral  & \textbf{38.6} & \textbf{43.7} & \textbf{46.3} & \textbf{47.0} \\
        \midrule
        \multirow{2}{*}{\soma} & Frobenius & 38.2 & 43.3 & 45.0 & 45.6 \\
                               & Spectral  & \textbf{39.1} & \textbf{45.0} & \textbf{46.3} & \textbf{46.5} \\
        \midrule
        \multirow{2}{*}{\Stwosam-\muon} & Frobenius & 38.2 & 43.3 & 45.0 & 44.9 \\
                                    & Spectral  & \textbf{38.9} & \textbf{44.4} & \textbf{46.8} & \textbf{45.7} \\
        \bottomrule
    \end{tabular}
\end{table*}

\begin{table*}[!t]
    \centering
    \caption{Linear anisotropic setting: best optimality gap ($\downarrow$) with a Frobenius or a spectral perturbation, for each tail index $\alpha$. The better geometry for each method and $\alpha$ is in bold; ties are not marked.}
    \label{tab:linear_aniso_gap}
    \begin{tabular}{@{}llcccc@{}}
        \toprule
        & & \multicolumn{4}{c}{Tail index $\alpha$} \\ \cmidrule(l){3-6}
        Direction & Geometry & $1.1$ & $1.6$ & $2.0$ & $3.0$ \\
        \midrule
        \multirow{2}{*}{Random} & Frobenius & 0.410 & 0.0612 & 0.0312 & 0.0163 \\
                                & Spectral  & \textbf{0.347} & \textbf{0.0576} & \textbf{0.0301} & 0.0163 \\
        \midrule
        \multirow{2}{*}{\sam-\muon} & Frobenius & 0.404 & 0.0581 & 0.0294 & 0.0149 \\
                                    & Spectral  & \textbf{0.265} & \textbf{0.0473} & \textbf{0.0253} & \textbf{0.0141} \\
        \midrule
        \multirow{2}{*}{\soma} & Frobenius & 0.399 & 0.0592 & 0.0303 & 0.0163 \\
                               & Spectral  & \textbf{0.263} & \textbf{0.0551} & \textbf{0.0299} & 0.0163 \\
        \midrule
        \multirow{2}{*}{\Stwosam-\muon} & Frobenius & 0.407 & 0.0601 & 0.0312 & 0.0165 \\
                                    & Spectral  & \textbf{0.285} & \textbf{0.0588} & \textbf{0.0309} & 0.0165 \\
        \bottomrule
    \end{tabular}
\end{table*}

\textbf{Results.} The spectral geometry is never worse than the Frobenius one: it gives higher accuracy in all 16 (direction, $\alpha$) pairs and an equal or lower optimality gap in all of them. Its effect, however, depends on the tail index in opposite ways for the two metrics. For the optimality gap, the benefit is largest under the heaviest tails: at $\alpha=1.1$, the spectral geometry lowers the gap of \sam-\muon and \soma by about a third, and at $\alpha=3$ the two geometries are nearly identical. For validation accuracy, the benefit grows as the tails get lighter, from at most $0.9$ points at $\alpha=1.1$ to $0.8$--$2.4$ points at $\alpha=3$. Thus, the spectral geometry mainly aids optimization when the noise is heavy and mainly aids generalization when it is light. The direction of the perturbation also interacts with the geometry. For the optimality gap, an informed direction profits far more from the spectral ball than a random one (a reduction of about $34\%$ for \sam-\muon and \soma versus $15\%$ for the random direction at $\alpha=1.1$). For accuracy, even a random direction yields a substantial gain (up to $2.4$ points), indicating that much of the accuracy improvement of the spectral variants comes from the geometry itself.

\subsection{Ablation: Fresh versus Stale Perturbations}
\label{app:ablation-fresh-stale}
\textbf{Motivation and setup.} A fresh perturbation direction costs an extra gradient evaluation at every step, whereas a stale direction reuses quantities from the previous step at no extra cost but may be outdated. This ablation measures what staleness costs. We compare three pairs of methods that differ only in whether the direction is fresh or stale: \sam-\muon and \Stwosam-\muon in the Frobenius ball, and \specsammuon and \Stwosam-\muon, as well as \fpsoma and \soma in the spectral ball. We evaluate them on the nonlinear (Table~\ref{tab:nonlinear_aniso_acc_fresh_stale}) and linear (Table~\ref{tab:linear_aniso_gap_fresh_stale}) problems with anisotropic noise.

\begin{table*}[!t]
    \centering
    \caption{Nonlinear anisotropic setting: best validation accuracy (\%, $\uparrow$) with fresh or stale perturbation direction, for each tail index $\alpha$. The better direction for each geometry and $\alpha$ is in bold; ties are not marked.}
    \label{tab:nonlinear_aniso_acc_fresh_stale}
    \begin{tabular}{@{}lllcccc@{}}
        \toprule
        & & & \multicolumn{4}{c}{Tail index $\alpha$} \\ \cmidrule(l){4-7}
        Geometry & Direction & Method & $1.1$ & $1.6$ & $2.0$ & $3.0$ \\
        \midrule
        \multirow{2}{*}{Frobenius} & Fresh & \sam-\muon & \textbf{0.3828}          & 0.4316          & 0.4502          & \textbf{0.4561}     \\
         & Stale & \Stwosam-\muon&
                 0.3818          & \textbf{0.4326}          & 0.4502          & 0.4492     \\
        \midrule
        \multirow{2}{*}{Spectral}  & Fresh & \specsammuon & 0.3857          & 0.4365          & 0.4629          & \textbf{0.4697}  \\
        & Stale & \Stwosam-\muon& \textbf{0.3887} & \textbf{0.4443} & \textbf{0.4678} & 0.4570        \\
        \midrule
                \multirow{2}{*}{Spectral}  & Fresh & \fpsoma & \textbf{0.3984}			 & 0.4424& \textbf{0.4688}& \textbf{0.4668} \\
        & Stale & \soma & 0.3906 & 	\textbf{0.4502}& 0.4629 & 0.4648   \\
        \bottomrule
    \end{tabular}
\end{table*}

\begin{table*}[!t]
    \centering
    \caption{Linear anisotropic setting: best optimality gap ($\downarrow$) with a fresh or a stale perturbation direction, for each tail index $\alpha$. The better direction for each geometry and $\alpha$ is in bold; ties are not marked.}
    \label{tab:linear_aniso_gap_fresh_stale}
    \begin{tabular}{@{}lllcccc@{}}
        \toprule
        & & & \multicolumn{4}{c}{Tail index $\alpha$} \\ \cmidrule(l){4-7}
        Geometry & Direction & Method & $1.1$ & $1.6$ & $2.0$ & $3.0$ \\
        \midrule
        \multirow{2}{*}{Frobenius} & Fresh & \sam-\muon & \textbf{0.4037}          & \textbf{0.0581}          & \textbf{0.0294}          & \textbf{0.0149} \\
                                   & Stale & \Stwosam-\muon&  0.4068          & 0.0601          & 0.0312          & 0.0165       \\
        \midrule
        \multirow{2}{*}{Spectral}  & Fresh & \specsammuon & \textbf{0.2648} & \textbf{0.0473} & \textbf{0.0253} & \textbf{0.0141}\\
                                   & Stale & \Stwosam-\muon&0.2846          & 0.0588          & 0.0309          & 0.0165        \\
        \midrule
          \multirow{2}{*}{Spectral}  & Fresh & \fpsoma & \textbf{0.2522}	 & \textbf{0.0457}& \textbf{0.0247}& \textbf{0.0143} \\
        & Stale & \soma & 0.2631 & 0.0551& 0.0299 & 0.0163     \\
        \bottomrule
    \end{tabular}
\end{table*}

\textbf{Results.} Staleness costs little in accuracy but more in optimization. In validation accuracy, every fresh--stale pair stays within $1.3$ points, and the stale direction is often the more accurate one; in particular, \soma is within $0.8$ points of \fpsoma at every $\alpha$ and ahead of it at $\alpha=1.6$. In the optimality gap, the fresh direction is better in every case, and the difference is larger in the spectral ball (a stale gap $4$--$24\%$ higher) than in the Frobenius ball ($1$--$11\%$), consistent with Experiment~2, where the two-gradient methods kept an advantage in optimization. The friendly residual reduces this cost under the heaviest tails: at $\alpha=1.1$, the gap of \soma is only $4\%$ above that of \fpsoma, compared with $7\%$ for \Stwosam-\muon relative to \specsammuon, and it matches the gap of the fresh \specsammuon ($0.263$ versus $0.265$). For lighter tails, the two spectral pairs lose about the same, so the advantage of the friendly residual is specific to the heavy-tailed regime that our analysis targets.

%% file: dataforsyntheticplots/nn_aniso_val_acc.tex
% ---- nn_aniso_val_acc
\providecolor{muonrow}{RGB}{208,236,214}\providecolor{adamrow}{RGB}{255,214,214}\providecolor{sgdrow}{RGB}{208,226,247}

\begin{table*}[!t]
\centering
\caption{Validation accuracy (\%): best value over the perturbation radius $\rho$ and the radius $\rho^\star$ where it is attained (median of 8 seeds; Frobenius-ball methods pool the global and per-layer radius modes). Row colour marks the base optimizer family: \muon-based (\colorbox{muonrow}{Green}), \adamw-based (\colorbox{adamrow}{Red}), \sgd-based (\colorbox{sgdrow}{Blue}); baselines have no $\rho^\star$. Best value per $\alpha$ in bold.}
\label{tab:nn_aniso_val_acc}
\vspace{2pt}
\resizebox{\ifdim\width>\textwidth\textwidth\else\width\fi}{!}{%
\footnotesize\setlength{\tabcolsep}{5pt}
\begin{tabular}{lcccccccc}
\toprule
Algorithm & \multicolumn{2}{c}{$\alpha=1.1$} & \multicolumn{2}{c}{$\alpha=1.6$} & \multicolumn{2}{c}{$\alpha=2.0$} & \multicolumn{2}{c}{$\alpha=3.0$} \\
& best acc. & $\rho^\star$ & best acc. & $\rho^\star$ & best acc. & $\rho^\star$ & best acc. & $\rho^\star$ \\
\midrule
\rowcolor{muonrow} \muon & 38.1 & -- & 42.9 & -- & 44 & -- & 44.3 & -- \\
\rowcolor{muonrow} \specsammuon & 38.6 & 0.5 & 43.7 & 0.5 & 46.3 & 0.5 & 47 & 0.5 \\
\rowcolor{muonrow} \fpsoma & \textbf{39.8} & 1 & 44.2 & 0.5 & 46.9 & 0.5 & 46.7 & 0.5 \\
\rowcolor{muonrow} \soma & 39.1 & 1 & 45 & 1 & 46.3 & 1 & 46.5 & 0.5 \\
\rowcolor{muonrow} \randsammuon & 38.3 & 0.01 & 43.3 & 0.002 & 44.5 & 0.1 & 44.8 & 1 \\
\rowcolor{muonrow} \opsomapre & 38.8 & 1 & \textbf{45.3} & 1 & 46.1 & 1 & 46.9 & 0.5 \\
\rowcolor{muonrow} \opsomapost & 38.3 & 1 & 45.2 & 1 & \textbf{47.9} & 1 & \textbf{47.4} & 1 \\
\rowcolor{muonrow} \textsc{Msoma} & 38.3 & 0.001 & 43 & 0.001 & 44.7 & 0.005 & 44.9 & 0.5 \\
\rowcolor{adamrow} \adamw & 28.6 & -- & 41.5 & -- & 44.2 & -- & 45.4 & -- \\
\rowcolor{adamrow} \samadamw & 28.9 & 1 & 41.9 & 0.5 & 44.5 & 0.1 & 46.1 & 1 \\
\rowcolor{adamrow} \fsamadamw & 28.8 & 0.05 & 41.8 & 0.1 & 44.5 & 0.05 & 46.3 & 1 \\
\rowcolor{sgdrow} \sgd & 35.6 & -- & 39.7 & -- & 39.2 & -- & 41.7 & -- \\
\rowcolor{sgdrow} \sam & 36.8 & 0.01 & 39.5 & 0.005 & 40.8 & 0.2 & 42.9 & 0.2 \\
\rowcolor{sgdrow} \fsam & 36.4 & 1 & 39.7 & 0.005 & 40.2 & 0.002 & 43 & 0.2 \\
\rowcolor{sgdrow} \textsc{Msam} & 36.3 & 0.005 & 39.1 & 0.1 & 39.7 & 0.05 & 42.6 & 0.002 \\
\bottomrule
\end{tabular}}
\end{table*}

%% file: dataforsyntheticplots/nn_aniso_train_loss.tex
% ---- nn_aniso_train_loss
\providecolor{muonrow}{RGB}{208,236,214}\providecolor{adamrow}{RGB}{255,214,214}\providecolor{sgdrow}{RGB}{208,226,247}
\begin{table*}[!t]\centering\footnotesize\setlength{\tabcolsep}{5pt}
\caption{Training loss: best value over the perturbation radius $\rho$ and the radius $\rho^\star$ where it is attained (median of 8 seeds; Frobenius-ball methods pool the global and per-layer radius modes). Row colour marks the base optimizer family: \muon-based (\colorbox{muonrow}{Green}), \adamw-based (\colorbox{adamrow}{Red}), \sgd-based (\colorbox{sgdrow}{Blue}); baselines have no $\rho^\star$. Best value per $\alpha$ in bold.}\label{tab:nn_aniso_train_loss}
\vspace{2pt}
\resizebox{\ifdim\width>\textwidth\textwidth\else\width\fi}{!}{%
\begin{tabular}{lcccccccc}\toprule
Algorithm & \multicolumn{2}{c}{$\alpha=1.1$} & \multicolumn{2}{c}{$\alpha=1.6$} & \multicolumn{2}{c}{$\alpha=2.0$} & \multicolumn{2}{c}{$\alpha=3.0$} \\
& best loss & $\rho^\star$ & best loss & $\rho^\star$ & best loss & $\rho^\star$ & best loss & $\rho^\star$ \\\midrule
\rowcolor{muonrow} \muon & 1.45 & -- & 0.372 & -- & 0.184 & -- & 0.078 & -- \\
\rowcolor{muonrow} \specsammuon & 1.45 & 0.02 & 0.372 & 0.002 & 0.183 & 0.001 & 0.0779 & 0.001 \\
\rowcolor{muonrow} \fpsoma & 1.44 & 0.2 & 0.372 & 0.01 & 0.183 & 0.001 & 0.0768 & 0.02 \\
\rowcolor{muonrow} \soma & 1.43 & 0.2 & 0.371 & 0.02 & 0.184 & 0.001 & 0.0783 & 0.01 \\
\rowcolor{muonrow} \randsammuon & 1.45 & 0.1 & 0.371 & 0.2 & 0.184 & 0.001 & 0.0779 & 0.05 \\
\rowcolor{muonrow} \opsomapre & 1.44 & 0.2 & 0.371 & 0.01 & 0.182 & 0.01 & 0.0779 & 0.01 \\
\rowcolor{muonrow} \opsomapost & 1.44 & 0.2 & 0.37 & 0.05 & 0.183 & 0.02 & 0.0778 & 0.05 \\
\rowcolor{muonrow} \textsc{Msoma} & \textbf{1.42} & 1 & 0.338 & 1 & 0.153 & 1 & 0.0627 & 1 \\
\rowcolor{adamrow} \adamw & 2.7 & -- & 1.32 & -- & 1.02 & -- & 0.803 & -- \\
\rowcolor{adamrow} \samadamw & 2.7 & 0.5 & 1.32 & 0.002 & 1.02 & 0.001 & 0.803 & 0.001 \\
\rowcolor{adamrow} \fsamadamw & 2.7 & 0.001 & 1.32 & 0.001 & 1.02 & 0.001 & 0.803 & 0.002 \\
\rowcolor{sgdrow} \sgd & 15 & -- & 0.0278 & -- & 0.00892 & -- & 0.0126 & -- \\
\rowcolor{sgdrow} \sam & 9 & 1 & 0.0131 & 0.01 & \textbf{0.00671} & 0.05 & \textbf{0.011} & 0.2 \\
\rowcolor{sgdrow} \fsam & 10.5 & 1 & \textbf{0.0106} & 0.1 & 0.00732 & 0.1 & 0.0113 & 0.2 \\
\rowcolor{sgdrow} \textsc{Msam} & 17.7 & 0.05 & 0.0257 & 0.002 & 0.00969 & 0.005 & 0.0138 & 0.01 \\
\bottomrule\end{tabular}}\end{table*}

%% file: tikz_for_heavytailed/experiment_2/gap_vs_rho.tikz
%% Auto-generated by make_heavytailed_figures.py  --  do not edit by hand, re-run the script.
%% Values: line = median over seeds, band = interquartile range (25th-75th percentile).
%% Preamble needed:  \usepackage{pgfplots}  \pgfplotsset{compat=1.16}   (nothing else)
%% Width follows \textwidth, fonts follow the document (so the legend can never overflow).

\definecolor{htSOMA}{HTML}{0B5D1E}
\definecolor{htFPSOMA}{HTML}{33CC33}
\definecolor{htSpec}{HTML}{00B3A4}
\definecolor{htRand}{HTML}{808000}
\definecolor{htFSAM}{HTML}{2F80D8}
\definecolor{htSAM}{HTML}{0A3A8A}
\definecolor{htPreNS5}{HTML}{9ACD32}
\definecolor{htPostNS5}{HTML}{004D40}
\definecolor{htSAMAdam}{HTML}{8E0F1C}
\definecolor{htFSAMAdam}{HTML}{F4623A}
\definecolor{htMSAM}{HTML}{7DB9F2}
\definecolor{htMSOMA}{HTML}{8FD694}
\definecolor{htMuon}{HTML}{111111}
\definecolor{htAdam}{HTML}{8A8A8A}
\pgfplotsset{
  ht panel/.style={
    scale only axis, width=0.1920\textwidth, height=1.35in,
    xmin=0.00075, xmax=1.35,
    xtick={0.001,0.01,0.1,1}, xticklabels={$10^{-3}$,$10^{-2}$,$10^{-1}$,$10^{0}$},
    minor xtick={0.002,0.003,0.004,0.005,0.006,0.007,0.008,0.009,0.02,0.03,0.04,0.05,0.06,0.07,0.08,0.09,0.2,0.3,0.4,0.5,0.6,0.7,0.8,0.9},
    xmajorgrids=true, ymajorgrids=true,
    grid style={line width=0.25pt, draw=black!10},
    axis line style={line width=0.4pt, black!75},
    tick style={line width=0.4pt, black!75}, major tick length=3pt, minor tick length=1.5pt,
    tick label style={font=\footnotesize, /pgf/number format/assume math mode=true},
    label style={font=\small}, title style={font=\small, yshift=-3pt},
    ylabel style={yshift=-1pt, align=center},
    every tick label/.append style={inner sep=1pt},
    clip=true,
  }}

\begin{tikzpicture}
\begin{axis}[name=ax0_r0,
    at={(0.0820\textwidth,0in)},
    anchor=south west,
    ht panel, xmode=log, ymode=log,
    ymin=0.193208, ymax=7.68265,
    ytick={0.3,1,3},
    yticklabels={0.3,1,3},
    title={$\alpha=1.1$},
    ylabel={optimality gap}]
  \addplot[draw=none, fill=htSOMA, fill opacity=0.06] coordinates {(0.001,0.438293) (0.002,0.438098) (0.005,0.437515) (0.01,0.436538) (0.02,0.434571) (0.05,0.428568) (0.1,0.418271) (0.2,0.396909) (0.5,0.331601) (1,0.268585) (1,0.250426) (0.5,0.301303) (0.2,0.362362) (0.1,0.383433) (0.05,0.393632) (0.02,0.399584) (0.01,0.401536) (0.005,0.402505) (0.002,0.403084) (0.001,0.403277)} \closedcycle;
  \addplot[draw=none, fill=htFPSOMA, fill opacity=0.06] coordinates {(0.001,0.438266) (0.002,0.438046) (0.005,0.437383) (0.01,0.436276) (0.02,0.434049) (0.05,0.427284) (0.1,0.415777) (0.2,0.392263) (0.5,0.324106) (1,0.25659) (1,0.232993) (0.5,0.293968) (0.2,0.359439) (0.1,0.381907) (0.05,0.392853) (0.02,0.399269) (0.01,0.401377) (0.005,0.402425) (0.002,0.403052) (0.001,0.403261)} \closedcycle;
  \addplot[draw=none, fill=htSpec, fill opacity=0.06] coordinates {(0.001,0.438185) (0.002,0.437884) (0.005,0.436983) (0.01,0.435488) (0.02,0.432518) (0.05,0.423774) (0.1,0.409724) (0.2,0.383446) (0.5,0.318893) (1,0.267594) (1,0.254018) (0.5,0.295465) (0.2,0.352344) (0.1,0.376539) (0.05,0.389691) (0.02,0.397874) (0.01,0.400657) (0.005,0.402059) (0.002,0.402904) (0.001,0.403187)} \closedcycle;
  \addplot[draw=none, fill=htRand, fill opacity=0.06] coordinates {(0.001,0.438487) (0.002,0.438487) (0.005,0.438488) (0.01,0.43849) (0.02,0.438492) (0.05,0.43849) (0.1,0.43846) (0.2,0.43834) (0.5,0.437437) (1,0.434155) (1,0.397934) (0.5,0.402122) (0.2,0.403283) (0.1,0.403435) (0.05,0.403467) (0.02,0.403471) (0.01,0.403471) (0.005,0.40347) (0.002,0.40347) (0.001,0.403469)} \closedcycle;
  \addplot[draw=none, fill=htPreNS5, fill opacity=0.06] coordinates {(0.001,0.438308) (0.002,0.43813) (0.005,0.437593) (0.01,0.436692) (0.02,0.434872) (0.05,0.429269) (0.1,0.41953) (0.2,0.399014) (0.5,0.335181) (1,0.277367) (1,0.258115) (0.5,0.305183) (0.2,0.364619) (0.1,0.384815) (0.05,0.394394) (0.02,0.399907) (0.01,0.4017) (0.005,0.402588) (0.002,0.403117) (0.001,0.403293)} \closedcycle;
  \addplot[draw=none, fill=htPostNS5, fill opacity=0.06] coordinates {(0.001,0.438357) (0.002,0.438227) (0.005,0.437837) (0.01,0.437183) (0.02,0.435865) (0.05,0.431831) (0.1,0.424858) (0.2,0.410125) (0.5,0.362199) (1,0.291476) (1,0.268937) (0.5,0.328795) (0.2,0.375462) (0.1,0.389985) (0.05,0.396878) (0.02,0.400872) (0.01,0.402177) (0.005,0.402825) (0.002,0.403212) (0.001,0.403341)} \closedcycle;
  \addplot[draw=none, fill=htMSOMA, fill opacity=0.06] coordinates {(0.001,0.438466) (0.002,0.438446) (0.005,0.438384) (0.01,0.438281) (0.02,0.438076) (0.05,0.437455) (0.1,0.436409) (0.2,0.434278) (0.5,0.427648) (1,0.415364) (1,0.376949) (0.5,0.390392) (0.2,0.398419) (0.1,0.400978) (0.05,0.402233) (0.02,0.402977) (0.01,0.403223) (0.005,0.403346) (0.002,0.40342) (0.001,0.403445)} \closedcycle;
  \addplot[draw=none, fill=htFSAM, fill opacity=0.06] coordinates {(0.001,5.68622) (0.002,5.68271) (0.005,5.67011) (0.01,5.66233) (0.02,5.6776) (0.05,5.64978) (0.1,5.60655) (0.2,5.60462) (0.5,5.37457) (1,5.03233) (1,2.55024) (0.5,2.81222) (0.2,2.95539) (0.1,3.00862) (0.05,3.01514) (0.02,3.04667) (0.01,3.05119) (0.005,3.06524) (0.002,3.06149) (0.001,3.05087)} \closedcycle;
  \addplot[draw=none, fill=htSAM, fill opacity=0.06] coordinates {(0.001,5.70179) (0.002,5.68871) (0.005,5.687) (0.01,5.69583) (0.02,5.69504) (0.05,5.64647) (0.1,5.61516) (0.2,5.53401) (0.5,5.32029) (1,5.01077) (1,2.53352) (0.5,2.71105) (0.2,2.91861) (0.1,3.02021) (0.05,3.02345) (0.02,3.06829) (0.01,3.03619) (0.005,3.05243) (0.002,3.08012) (0.001,3.06301)} \closedcycle;
  \addplot[draw=none, fill=htMSAM, fill opacity=0.06] coordinates {(0.001,7.31749) (0.002,7.31733) (0.005,7.31685) (0.01,7.31603) (0.02,7.3143) (0.05,7.3087) (0.1,7.30054) (0.2,7.28733) (0.5,7.20634) (1,7.10147) (1,3.85505) (0.5,3.87345) (0.2,3.85175) (0.1,3.84332) (0.05,3.84201) (0.02,3.84345) (0.01,3.84438) (0.005,3.84494) (0.002,3.8453) (0.001,3.84543)} \closedcycle;
  \addplot[draw=none, fill=htSAMAdam, fill opacity=0.06] coordinates {(0.001,0.567313) (0.002,0.567309) (0.005,0.567298) (0.01,0.567278) (0.02,0.567239) (0.05,0.567096) (0.1,0.566897) (0.2,0.566509) (0.5,0.56517) (1,0.562381) (1,0.538654) (0.5,0.542239) (0.2,0.544102) (0.1,0.544562) (0.05,0.544817) (0.02,0.544994) (0.01,0.545044) (0.005,0.545069) (0.002,0.545084) (0.001,0.545082)} \closedcycle;
  \addplot[draw=none, fill=htFSAMAdam, fill opacity=0.06] coordinates {(0.001,0.567316) (0.002,0.567316) (0.005,0.567313) (0.01,0.567309) (0.02,0.567301) (0.05,0.567251) (0.1,0.567204) (0.2,0.56711) (0.5,0.56658) (1,0.56491) (1,0.540884) (0.5,0.543466) (0.2,0.544622) (0.1,0.544827) (0.05,0.544951) (0.02,0.545048) (0.01,0.545071) (0.005,0.545082) (0.002,0.545089) (0.001,0.545084)} \closedcycle;
  \addplot[color=htMuon, dashed, line width=0.7pt, forget plot] coordinates {(0.001,0.412668) (1,0.412668)};
  \addplot[color=htAdam, dash dot, line width=0.7pt, forget plot] coordinates {(0.001,0.559269) (1,0.559269)};
  \addplot[color=htSpec, solid, line width=0.8pt] coordinates {(0.001,0.412411) (0.002,0.412154) (0.005,0.411387) (0.01,0.410114) (0.02,0.407591) (0.05,0.400198) (0.1,0.388404) (0.2,0.366513) (0.5,0.312439) (1,0.264796)};
  \addplot[color=htRand, dashed, line width=0.8pt] coordinates {(0.001,0.412668) (0.002,0.412668) (0.005,0.412669) (0.01,0.41267) (0.02,0.412671) (0.05,0.412669) (0.1,0.412646) (0.2,0.412535) (0.5,0.411666) (1,0.408486)};
  \addplot[color=htMSOMA, loosely dashed, line width=0.8pt] coordinates {(0.001,0.412643) (0.002,0.412619) (0.005,0.412546) (0.01,0.412424) (0.02,0.412179) (0.05,0.411442) (0.1,0.410203) (0.2,0.407692) (0.5,0.399976) (1,0.389062)};
  \addplot[color=htFSAM, solid, line width=0.8pt] coordinates {(0.001,4.70385) (0.002,4.72241) (0.005,4.72489) (0.01,4.68901) (0.02,4.72294) (0.05,4.61136) (0.1,4.56783) (0.2,4.49255) (0.5,4.04435) (1,3.48186)};
  \addplot[color=htSAM, solid, line width=0.8pt] coordinates {(0.001,4.66774) (0.002,4.70216) (0.005,4.68863) (0.01,4.72635) (0.02,4.68514) (0.05,4.64489) (0.1,4.5808) (0.2,4.42095) (0.5,4.11464) (1,3.51858)};
  \addplot[color=htMSAM, solid, line width=0.8pt] coordinates {(0.001,5.88509) (0.002,5.88482) (0.005,5.88404) (0.01,5.88277) (0.02,5.88038) (0.05,5.87401) (0.1,5.86473) (0.2,5.84692) (0.5,5.78806) (1,5.68878)};
  \addplot[color=htSOMA, solid, line width=1.1pt] coordinates {(0.001,0.412482) (0.002,0.412297) (0.005,0.411739) (0.01,0.410808) (0.02,0.408938) (0.05,0.403275) (0.1,0.393699) (0.2,0.374247) (0.5,0.317064) (1,0.26315)};
  \addplot[color=htFPSOMA, solid, line width=1.1pt] coordinates {(0.001,0.412469) (0.002,0.412271) (0.005,0.411675) (0.01,0.410679) (0.02,0.408681) (0.05,0.402627) (0.1,0.39239) (0.2,0.371641) (0.5,0.312522) (1,0.252222)};
  \addplot[color=htPreNS5, densely dotted, line width=1.1pt] coordinates {(0.001,0.412502) (0.002,0.412337) (0.005,0.411839) (0.01,0.411006) (0.02,0.409326) (0.05,0.404182) (0.1,0.395331) (0.2,0.37694) (0.5,0.321347) (1,0.274249)};
  \addplot[color=htPostNS5, dash dot, line width=1.1pt] coordinates {(0.001,0.412545) (0.002,0.412423) (0.005,0.412056) (0.01,0.411442) (0.02,0.410206) (0.05,0.406443) (0.1,0.400003) (0.2,0.386622) (0.5,0.344282) (1,0.287357)};
  \addplot[color=htSAMAdam, solid, line width=1.1pt] coordinates {(0.001,0.559265) (0.002,0.559261) (0.005,0.559242) (0.01,0.559221) (0.02,0.55918) (0.05,0.559036) (0.1,0.558794) (0.2,0.558405) (0.5,0.556988) (1,0.554234)};
  \addplot[color=htFSAMAdam, densely dashed, line width=1.4pt] coordinates {(0.001,0.559267) (0.002,0.559265) (0.005,0.559252) (0.01,0.559242) (0.02,0.559222) (0.05,0.559139) (0.1,0.558999) (0.2,0.558808) (0.5,0.55795) (1,0.556035)};
\end{axis}
\begin{axis}[name=ax1_r0,
    at={(0.3235\textwidth,0in)},
    anchor=south west,
    ht panel, xmode=log, ymode=log,
    ymin=0.0404089, ymax=0.156741,
    ytick={0.05,0.07,0.1},
    yticklabels={0.05,0.07,0.1},
    title={$\alpha=1.6$}]
  \addplot[draw=none, fill=htSOMA, fill opacity=0.06] coordinates {(0.001,0.0655314) (0.002,0.0655073) (0.005,0.0654348) (0.01,0.0653134) (0.02,0.0650689) (0.05,0.0643215) (0.1,0.0630285) (0.2,0.0603512) (0.5,0.0571274) (1,0.126741) (1,0.121686) (0.5,0.0533606) (0.2,0.0554496) (0.1,0.0582192) (0.05,0.0595738) (0.02,0.0602979) (0.01,0.0604968) (0.005,0.0605947) (0.002,0.060653) (0.001,0.0606723)} \closedcycle;
  \addplot[draw=none, fill=htFPSOMA, fill opacity=0.06] coordinates {(0.001,0.0655089) (0.002,0.0654624) (0.005,0.0653229) (0.01,0.0650912) (0.02,0.0646304) (0.05,0.0632748) (0.1,0.0611119) (0.2,0.0571233) (0.5,0.049213) (1,0.0714992) (1,0.0696917) (0.5,0.0443417) (0.2,0.0516728) (0.1,0.0560507) (0.05,0.0584262) (0.02,0.0597956) (0.01,0.0602428) (0.005,0.0604671) (0.002,0.0606018) (0.001,0.0606467)} \closedcycle;
  \addplot[draw=none, fill=htSpec, fill opacity=0.06] coordinates {(0.001,0.0655102) (0.002,0.0654649) (0.005,0.0653299) (0.01,0.0651067) (0.02,0.0646675) (0.05,0.0634017) (0.1,0.0614427) (0.2,0.0579896) (0.5,0.0509273) (1,0.0607376) (1,0.0568571) (0.5,0.0458623) (0.2,0.0526163) (0.1,0.0564432) (0.05,0.0584989) (0.02,0.0597895) (0.01,0.0602364) (0.005,0.060463) (0.002,0.0605999) (0.001,0.0606457)} \closedcycle;
  \addplot[draw=none, fill=htRand, fill opacity=0.06] coordinates {(0.001,0.0655552) (0.002,0.065555) (0.005,0.0655542) (0.01,0.0655529) (0.02,0.0655501) (0.05,0.0655404) (0.1,0.0655198) (0.2,0.0654624) (0.5,0.065171) (1,0.0643397) (1,0.0590875) (0.5,0.0603481) (0.2,0.0606984) (0.1,0.0607046) (0.05,0.0607002) (0.02,0.0606955) (0.01,0.0606937) (0.005,0.0606927) (0.002,0.060692) (0.001,0.0606918)} \closedcycle;
  \addplot[draw=none, fill=htPreNS5, fill opacity=0.06] coordinates {(0.001,0.065532) (0.002,0.0655084) (0.005,0.0654376) (0.01,0.0653189) (0.02,0.0650791) (0.05,0.0643406) (0.1,0.0630452) (0.2,0.060313) (0.5,0.0566755) (1,0.125846) (1,0.120951) (0.5,0.0530943) (0.2,0.0556903) (0.1,0.0584206) (0.05,0.0596714) (0.02,0.060287) (0.01,0.0604901) (0.005,0.0605911) (0.002,0.0606515) (0.001,0.0606716)} \closedcycle;
  \addplot[draw=none, fill=htPostNS5, fill opacity=0.06] coordinates {(0.001,0.0655324) (0.002,0.0655093) (0.005,0.06544) (0.01,0.0653242) (0.02,0.0650918) (0.05,0.0643871) (0.1,0.0631825) (0.2,0.0607091) (0.5,0.0556272) (1,0.0838864) (1,0.0825128) (0.5,0.0513634) (0.2,0.0560077) (0.1,0.0585059) (0.05,0.0596997) (0.02,0.060308) (0.01,0.0605004) (0.005,0.0605961) (0.002,0.0606535) (0.001,0.0606725)} \closedcycle;
  \addplot[draw=none, fill=htMSOMA, fill opacity=0.06] coordinates {(0.001,0.06555) (0.002,0.0655446) (0.005,0.0655282) (0.01,0.065501) (0.02,0.0654468) (0.05,0.0652861) (0.1,0.0650239) (0.2,0.0645177) (0.5,0.0629539) (1,0.0602218) (1,0.0556025) (0.5,0.0578263) (0.2,0.059475) (0.1,0.060117) (0.05,0.0604091) (0.02,0.0605762) (0.01,0.0606335) (0.005,0.0606625) (0.002,0.0606799) (0.001,0.0606858)} \closedcycle;
  \addplot[draw=none, fill=htFSAM, fill opacity=0.06] coordinates {(0.001,0.157931) (0.002,0.164155) (0.005,0.155482) (0.01,0.15379) (0.02,0.164997) (0.05,0.157115) (0.1,0.161076) (0.2,0.146294) (0.5,0.129902) (1,0.121888) (1,0.105024) (0.5,0.112115) (0.2,0.114085) (0.1,0.118134) (0.05,0.11638) (0.02,0.118459) (0.01,0.116911) (0.005,0.116253) (0.002,0.121132) (0.001,0.111563)} \closedcycle;
  \addplot[draw=none, fill=htSAM, fill opacity=0.06] coordinates {(0.001,0.157355) (0.002,0.165778) (0.005,0.15428) (0.01,0.165425) (0.02,0.158508) (0.05,0.157491) (0.1,0.148214) (0.2,0.141075) (0.5,0.13105) (1,0.127364) (1,0.105315) (0.5,0.104903) (0.2,0.116356) (0.1,0.114043) (0.05,0.112216) (0.02,0.11109) (0.01,0.116273) (0.005,0.113907) (0.002,0.122641) (0.001,0.116842)} \closedcycle;
  \addplot[draw=none, fill=htMSAM, fill opacity=0.06] coordinates {(0.001,0.181001) (0.002,0.180959) (0.005,0.180837) (0.01,0.180646) (0.02,0.180319) (0.05,0.17955) (0.1,0.178587) (0.2,0.177471) (0.5,0.179052) (1,0.195897) (1,0.106084) (0.5,0.108881) (0.2,0.110731) (0.1,0.112648) (0.05,0.113947) (0.02,0.114934) (0.01,0.115285) (0.005,0.115459) (0.002,0.115562) (0.001,0.115597)} \closedcycle;
  \addplot[draw=none, fill=htSAMAdam, fill opacity=0.06] coordinates {(0.001,0.113363) (0.002,0.113362) (0.005,0.113378) (0.01,0.113381) (0.02,0.113385) (0.05,0.113329) (0.1,0.113298) (0.2,0.113207) (0.5,0.113041) (1,0.112719) (1,0.105302) (0.5,0.105852) (0.2,0.10619) (0.1,0.106316) (0.05,0.10635) (0.02,0.106388) (0.01,0.106401) (0.005,0.106412) (0.002,0.106418) (0.001,0.10642)} \closedcycle;
  \addplot[draw=none, fill=htFSAMAdam, fill opacity=0.06] coordinates {(0.001,0.113363) (0.002,0.113363) (0.005,0.11338) (0.01,0.113387) (0.02,0.113393) (0.05,0.113348) (0.1,0.113335) (0.2,0.113275) (0.5,0.11316) (1,0.112801) (1,0.10565) (0.5,0.106023) (0.2,0.10629) (0.1,0.10637) (0.05,0.106379) (0.02,0.1064) (0.01,0.106407) (0.005,0.106415) (0.002,0.106419) (0.001,0.10642)} \closedcycle;
  \addplot[color=htMuon, dashed, line width=0.7pt, forget plot] coordinates {(0.001,0.0620412) (1,0.0620412)};
  \addplot[color=htAdam, dash dot, line width=0.7pt, forget plot] coordinates {(0.001,0.108994) (1,0.108994)};
  \addplot[color=htSpec, solid, line width=0.8pt] coordinates {(0.001,0.0619921) (0.002,0.0619432) (0.005,0.0617969) (0.01,0.0615549) (0.02,0.0610778) (0.05,0.0596975) (0.1,0.05759) (0.2,0.0540694) (0.5,0.0473417) (1,0.0585832)};
  \addplot[color=htRand, dashed, line width=0.8pt] coordinates {(0.001,0.062041) (0.002,0.0620409) (0.005,0.0620404) (0.01,0.0620396) (0.02,0.0620378) (0.05,0.0620306) (0.1,0.0620129) (0.2,0.0619559) (0.5,0.0616688) (1,0.0608928)};
  \addplot[color=htMSOMA, loosely dashed, line width=0.8pt] coordinates {(0.001,0.062035) (0.002,0.0620288) (0.005,0.0620104) (0.01,0.0619797) (0.02,0.0619188) (0.05,0.061739) (0.1,0.0614484) (0.2,0.0609517) (0.5,0.059633) (1,0.057632)};
  \addplot[color=htFSAM, solid, line width=0.8pt] coordinates {(0.001,0.130403) (0.002,0.13225) (0.005,0.129136) (0.01,0.128631) (0.02,0.128719) (0.05,0.126726) (0.1,0.135552) (0.2,0.120645) (0.5,0.119727) (1,0.11255)};
  \addplot[color=htSAM, solid, line width=0.8pt] coordinates {(0.001,0.125991) (0.002,0.131279) (0.005,0.130114) (0.01,0.128092) (0.02,0.130108) (0.05,0.130776) (0.1,0.129594) (0.2,0.126568) (0.5,0.112233) (1,0.110512)};
  \addplot[color=htMSAM, solid, line width=0.8pt] coordinates {(0.001,0.138513) (0.002,0.138481) (0.005,0.138384) (0.01,0.138228) (0.02,0.138021) (0.05,0.137553) (0.1,0.137051) (0.2,0.136159) (0.5,0.134366) (1,0.136546)};
  \addplot[color=htSOMA, solid, line width=1.1pt] coordinates {(0.001,0.0620163) (0.002,0.0619914) (0.005,0.0619165) (0.01,0.0617911) (0.02,0.0615381) (0.05,0.0608405) (0.1,0.0597296) (0.2,0.0575499) (0.5,0.0551189) (1,0.125747)};
  \addplot[color=htFPSOMA, solid, line width=1.1pt] coordinates {(0.001,0.0619927) (0.002,0.0619443) (0.005,0.061799) (0.01,0.0615572) (0.02,0.0610744) (0.05,0.0596635) (0.1,0.057466) (0.2,0.0534715) (0.5,0.0457266) (1,0.0705073)};
  \addplot[color=htPreNS5, densely dotted, line width=1.1pt] coordinates {(0.001,0.0620177) (0.002,0.0619941) (0.005,0.0619233) (0.01,0.0618044) (0.02,0.0615644) (0.05,0.06083) (0.1,0.0596918) (0.2,0.0575453) (0.5,0.0548495) (1,0.123146)};
  \addplot[color=htPostNS5, dash dot, line width=1.1pt] coordinates {(0.001,0.0620182) (0.002,0.0619953) (0.005,0.0619264) (0.01,0.0618115) (0.02,0.0615813) (0.05,0.0609082) (0.1,0.0598943) (0.2,0.0579518) (0.5,0.0537373) (1,0.0830557)};
  \addplot[color=htSAMAdam, solid, line width=1.1pt] coordinates {(0.001,0.108993) (0.002,0.108992) (0.005,0.108987) (0.01,0.10898) (0.02,0.108961) (0.05,0.108911) (0.1,0.108839) (0.2,0.108675) (0.5,0.108177) (1,0.107431)};
  \addplot[color=htFSAMAdam, densely dashed, line width=1.4pt] coordinates {(0.001,0.108993) (0.002,0.108993) (0.005,0.108991) (0.01,0.108986) (0.02,0.108975) (0.05,0.108945) (0.1,0.108904) (0.2,0.108799) (0.5,0.108436) (1,0.10779)};
\end{axis}
\begin{axis}[name=ax2_r0,
    at={(0.5650\textwidth,0in)},
    anchor=south west,
    ht panel, xmode=log, ymode=log,
    ymin=0.0210022, ymax=0.156658,
    ytick={0.03,0.05,0.07,0.1},
    yticklabels={0.03,0.05,0.07,0.1},
    title={$\alpha=2.0$}]
  \addplot[draw=none, fill=htSOMA, fill opacity=0.06] coordinates {(0.001,0.0319976) (0.002,0.0319898) (0.005,0.0319657) (0.01,0.0319244) (0.02,0.0318373) (0.05,0.0315477) (0.1,0.031016) (0.2,0.0301944) (0.5,0.0341405) (1,0.133765) (1,0.131498) (0.5,0.0321564) (0.2,0.0285277) (0.1,0.0290925) (0.05,0.0294608) (0.02,0.0296788) (0.01,0.029749) (0.005,0.0297837) (0.002,0.0298043) (0.001,0.0298111)} \closedcycle;
  \addplot[draw=none, fill=htFPSOMA, fill opacity=0.06] coordinates {(0.001,0.0319812) (0.002,0.0319569) (0.005,0.0318841) (0.01,0.0317629) (0.02,0.0315184) (0.05,0.0307372) (0.1,0.0295262) (0.2,0.0276667) (0.5,0.0253291) (1,0.0584622) (1,0.0575517) (0.5,0.0235482) (0.2,0.0258567) (0.1,0.027661) (0.05,0.0287076) (0.02,0.0293661) (0.01,0.0295904) (0.005,0.0297037) (0.002,0.0297722) (0.001,0.0297951)} \closedcycle;
  \addplot[draw=none, fill=htSpec, fill opacity=0.06] coordinates {(0.001,0.0319836) (0.002,0.0319617) (0.005,0.0318964) (0.01,0.0317881) (0.02,0.0315507) (0.05,0.0308451) (0.1,0.0298922) (0.2,0.0284386) (0.5,0.0258372) (1,0.0426151) (1,0.041306) (0.5,0.0240706) (0.2,0.0265706) (0.1,0.0280668) (0.05,0.028913) (0.02,0.0294451) (0.01,0.0296294) (0.005,0.0297231) (0.002,0.0297799) (0.001,0.0297989)} \closedcycle;
  \addplot[draw=none, fill=htRand, fill opacity=0.06] coordinates {(0.001,0.0320055) (0.002,0.0320055) (0.005,0.0320055) (0.01,0.0320055) (0.02,0.0320054) (0.05,0.0320041) (0.1,0.0319987) (0.2,0.0319764) (0.5,0.0318264) (1,0.0313053) (1,0.0294847) (0.5,0.0297895) (0.2,0.0298308) (0.1,0.0298286) (0.05,0.0298243) (0.02,0.0298207) (0.01,0.0298194) (0.005,0.0298187) (0.002,0.0298183) (0.001,0.0298181)} \closedcycle;
  \addplot[draw=none, fill=htPreNS5, fill opacity=0.06] coordinates {(0.001,0.0319955) (0.002,0.0319855) (0.005,0.0319552) (0.01,0.0319039) (0.02,0.031798) (0.05,0.0314524) (0.1,0.0308248) (0.2,0.0298186) (0.5,0.0328944) (1,0.129551) (1,0.127076) (0.5,0.0313687) (0.2,0.0282463) (0.1,0.0289835) (0.05,0.0293876) (0.02,0.0296452) (0.01,0.0297314) (0.005,0.0297746) (0.002,0.0298006) (0.001,0.0298093)} \closedcycle;
  \addplot[draw=none, fill=htPostNS5, fill opacity=0.06] coordinates {(0.001,0.0319955) (0.002,0.0319856) (0.005,0.0319558) (0.01,0.0319056) (0.02,0.0318043) (0.05,0.031487) (0.1,0.0309147) (0.2,0.0298561) (0.5,0.0290374) (1,0.0637106) (1,0.062201) (0.5,0.0276485) (0.2,0.0281412) (0.1,0.0289104) (0.05,0.029347) (0.02,0.0296271) (0.01,0.029722) (0.005,0.0297699) (0.002,0.0297987) (0.001,0.0298083)} \closedcycle;
  \addplot[draw=none, fill=htMSOMA, fill opacity=0.06] coordinates {(0.001,0.0320012) (0.002,0.031997) (0.005,0.0319844) (0.01,0.0319637) (0.02,0.0319229) (0.05,0.0318057) (0.1,0.0316039) (0.2,0.0311951) (0.5,0.0303152) (1,0.0291249) (1,0.0268556) (0.5,0.0280614) (0.2,0.0289745) (0.1,0.0293616) (0.05,0.0295806) (0.02,0.0297206) (0.01,0.0297688) (0.005,0.0297933) (0.002,0.0298081) (0.001,0.029813)} \closedcycle;
  \addplot[draw=none, fill=htFSAM, fill opacity=0.06] coordinates {(0.001,0.0394334) (0.002,0.0388338) (0.005,0.0383997) (0.01,0.0368325) (0.02,0.0399364) (0.05,0.0384262) (0.1,0.0384229) (0.2,0.041837) (0.5,0.0491187) (1,0.0507441) (1,0.0394616) (0.5,0.0338075) (0.2,0.0325241) (0.1,0.0327793) (0.05,0.0329189) (0.02,0.0330132) (0.01,0.0330471) (0.005,0.0330646) (0.002,0.0330753) (0.001,0.0330789)} \closedcycle;
  \addplot[draw=none, fill=htSAM, fill opacity=0.06] coordinates {(0.001,0.0383832) (0.002,0.0389245) (0.005,0.0394276) (0.01,0.0368374) (0.02,0.0384652) (0.05,0.039094) (0.1,0.0381796) (0.2,0.0432265) (0.5,0.0500523) (1,0.0493714) (1,0.040028) (0.5,0.033598) (0.2,0.032635) (0.1,0.0328341) (0.05,0.0329456) (0.02,0.0330237) (0.01,0.0327697) (0.005,0.0330672) (0.002,0.0330763) (0.001,0.0330794)} \closedcycle;
  \addplot[draw=none, fill=htMSAM, fill opacity=0.06] coordinates {(0.001,0.0393418) (0.002,0.0393378) (0.005,0.0393259) (0.01,0.0393104) (0.02,0.0392803) (0.05,0.0391963) (0.1,0.0390712) (0.2,0.0388473) (0.5,0.0382909) (1,0.0379368) (1,0.0332796) (0.5,0.0340223) (0.2,0.0346694) (0.1,0.0350093) (0.05,0.0352211) (0.02,0.0353652) (0.01,0.0354165) (0.005,0.0354428) (0.002,0.0354588) (0.001,0.0354641)} \closedcycle;
  \addplot[draw=none, fill=htSAMAdam, fill opacity=0.06] coordinates {(0.001,0.0600573) (0.002,0.0600561) (0.005,0.0600527) (0.01,0.0600444) (0.02,0.0600311) (0.05,0.0599909) (0.1,0.0599375) (0.2,0.0598487) (0.5,0.059488) (1,0.0589472) (1,0.0571797) (0.5,0.0575214) (0.2,0.0579205) (0.1,0.0580355) (0.05,0.058108) (0.02,0.0581533) (0.01,0.0581796) (0.005,0.0581872) (0.002,0.0581941) (0.001,0.0581957)} \closedcycle;
  \addplot[draw=none, fill=htFSAMAdam, fill opacity=0.06] coordinates {(0.001,0.0600577) (0.002,0.0600569) (0.005,0.0600547) (0.01,0.0600485) (0.02,0.0600393) (0.05,0.0600111) (0.1,0.0599766) (0.2,0.0599222) (0.5,0.0596421) (1,0.0591058) (1,0.0573108) (0.5,0.0576635) (0.2,0.0579961) (0.1,0.0580766) (0.05,0.0581294) (0.02,0.0581621) (0.01,0.0581845) (0.005,0.0581894) (0.002,0.058195) (0.001,0.0581961)} \closedcycle;
  \addplot[color=htMuon, dashed, line width=0.7pt, forget plot] coordinates {(0.001,0.0316082) (1,0.0316082)};
  \addplot[color=htAdam, dash dot, line width=0.7pt, forget plot] coordinates {(0.001,0.0592229) (1,0.0592229)};
  \addplot[color=htSpec, solid, line width=0.8pt] coordinates {(0.001,0.0315874) (0.002,0.0315665) (0.005,0.0315044) (0.01,0.0314018) (0.02,0.0312004) (0.05,0.0306257) (0.1,0.0296808) (0.2,0.0280181) (0.5,0.025317) (1,0.0419274)};
  \addplot[color=htRand, dashed, line width=0.8pt] coordinates {(0.001,0.0316083) (0.002,0.0316084) (0.005,0.0316087) (0.01,0.0316091) (0.02,0.03161) (0.05,0.031612) (0.1,0.0316136) (0.2,0.0316101) (0.5,0.0315317) (1,0.0310874)};
  \addplot[color=htMSOMA, loosely dashed, line width=0.8pt] coordinates {(0.001,0.0316035) (0.002,0.0315988) (0.005,0.0315848) (0.01,0.0315618) (0.02,0.0315168) (0.05,0.0313893) (0.1,0.0311926) (0.2,0.0308285) (0.5,0.0297245) (1,0.02829)};
  \addplot[color=htFSAM, solid, line width=0.8pt] coordinates {(0.001,0.0364283) (0.002,0.0364226) (0.005,0.0364053) (0.01,0.0356306) (0.02,0.0363134) (0.05,0.0360976) (0.1,0.0358751) (0.2,0.0351978) (0.5,0.0430958) (1,0.049394)};
  \addplot[color=htSAM, solid, line width=0.8pt] coordinates {(0.001,0.0364285) (0.002,0.0364229) (0.005,0.0364061) (0.01,0.035307) (0.02,0.0363174) (0.05,0.0361096) (0.1,0.0358894) (0.2,0.0352696) (0.5,0.0436312) (1,0.0471295)};
  \addplot[color=htMSAM, solid, line width=0.8pt] coordinates {(0.001,0.0376341) (0.002,0.0376283) (0.005,0.0376104) (0.01,0.0375795) (0.02,0.0375208) (0.05,0.037361) (0.1,0.0371451) (0.2,0.0367859) (0.5,0.03596) (1,0.03517)};
  \addplot[color=htSOMA, solid, line width=1.1pt] coordinates {(0.001,0.0316016) (0.002,0.0315949) (0.005,0.0315747) (0.01,0.0315405) (0.02,0.0314698) (0.05,0.0312411) (0.1,0.0308235) (0.2,0.0298611) (0.5,0.0335785) (1,0.132991)};
  \addplot[color=htFPSOMA, solid, line width=1.1pt] coordinates {(0.001,0.0315845) (0.002,0.0315607) (0.005,0.0314895) (0.01,0.0313711) (0.02,0.0311354) (0.05,0.0304433) (0.1,0.0293314) (0.2,0.0272774) (0.5,0.0247397) (1,0.0577858)};
  \addplot[color=htPreNS5, densely dotted, line width=1.1pt] coordinates {(0.001,0.0316) (0.002,0.0315917) (0.005,0.0315665) (0.01,0.0315239) (0.02,0.0314356) (0.05,0.031156) (0.1,0.0306603) (0.2,0.0295814) (0.5,0.0325435) (1,0.12808)};
  \addplot[color=htPostNS5, dash dot, line width=1.1pt] coordinates {(0.001,0.0315992) (0.002,0.0315901) (0.005,0.0315628) (0.01,0.0315171) (0.02,0.0314244) (0.05,0.031141) (0.1,0.0306577) (0.2,0.0297506) (0.5,0.0284488) (1,0.0625837)};
  \addplot[color=htSAMAdam, solid, line width=1.1pt] coordinates {(0.001,0.0592216) (0.002,0.0592203) (0.005,0.0592092) (0.01,0.0592009) (0.02,0.0591857) (0.05,0.0591427) (0.1,0.0590833) (0.2,0.0589658) (0.5,0.0586085) (1,0.0582035)};
  \addplot[color=htFSAMAdam, densely dashed, line width=1.4pt] coordinates {(0.001,0.0592221) (0.002,0.0592212) (0.005,0.0592113) (0.01,0.0592052) (0.02,0.0591943) (0.05,0.0591637) (0.1,0.0591235) (0.2,0.0590397) (0.5,0.0587489) (1,0.0583644)};
\end{axis}
\begin{axis}[name=ax3_r0,
    at={(0.8065\textwidth,0in)},
    anchor=south west,
    ht panel, xmode=log, ymode=log,
    ymin=0.0104357, ymax=0.329748,
    ytick={0.03,0.1},
    yticklabels={0.03,0.1},
    title={$\alpha=3.0$}]
  \addplot[draw=none, fill=htSOMA, fill opacity=0.06] coordinates {(0.001,0.0173729) (0.002,0.0173707) (0.005,0.0173639) (0.01,0.0173522) (0.02,0.0173267) (0.05,0.0172464) (0.1,0.0171449) (0.2,0.0173396) (0.5,0.0266044) (1,0.134597) (1,0.129911) (0.5,0.0261324) (0.2,0.015658) (0.1,0.0157622) (0.05,0.0158975) (0.02,0.0159704) (0.01,0.0159966) (0.005,0.0160103) (0.002,0.0160185) (0.001,0.0160213)} \closedcycle;
  \addplot[draw=none, fill=htFPSOMA, fill opacity=0.06] coordinates {(0.001,0.0173604) (0.002,0.0173459) (0.005,0.0173024) (0.01,0.0172306) (0.02,0.0170902) (0.05,0.0166921) (0.1,0.0160833) (0.2,0.0151786) (0.5,0.0152278) (1,0.0529946) (1,0.049704) (0.5,0.0140547) (0.2,0.0139057) (0.1,0.0147791) (0.05,0.015362) (0.02,0.0157492) (0.01,0.0158846) (0.005,0.0159538) (0.002,0.0159958) (0.001,0.0160099)} \closedcycle;
  \addplot[draw=none, fill=htSpec, fill opacity=0.06] coordinates {(0.001,0.0173616) (0.002,0.0173483) (0.005,0.0173084) (0.01,0.017243) (0.02,0.0171159) (0.05,0.0167621) (0.1,0.016255) (0.2,0.0155792) (0.5,0.0149671) (1,0.263752) (1,0.249489) (0.5,0.0139171) (0.2,0.0142269) (0.1,0.0149477) (0.05,0.0154228) (0.02,0.0157705) (0.01,0.0158947) (0.005,0.0159587) (0.002,0.0159977) (0.001,0.0160109)} \closedcycle;
  \addplot[draw=none, fill=htRand, fill opacity=0.06] coordinates {(0.001,0.0173751) (0.002,0.0173751) (0.005,0.0173751) (0.01,0.0173752) (0.02,0.0173753) (0.05,0.017375) (0.1,0.0173731) (0.2,0.0173647) (0.5,0.0173075) (1,0.0170774) (1,0.0156932) (0.5,0.0159095) (0.2,0.0159918) (0.1,0.0160093) (0.05,0.0160173) (0.02,0.0160216) (0.01,0.0160229) (0.005,0.0160235) (0.002,0.0160238) (0.001,0.0160239)} \closedcycle;
  \addplot[draw=none, fill=htPreNS5, fill opacity=0.06] coordinates {(0.001,0.0173708) (0.002,0.0173666) (0.005,0.0173538) (0.01,0.0173324) (0.02,0.0172886) (0.05,0.0171486) (0.1,0.0169295) (0.2,0.0168512) (0.5,0.0262096) (1,0.128712) (1,0.125494) (0.5,0.0251905) (0.2,0.0154168) (0.1,0.0156012) (0.05,0.0158094) (0.02,0.0159384) (0.01,0.0159807) (0.005,0.0160023) (0.002,0.0160153) (0.001,0.0160197)} \closedcycle;
  \addplot[draw=none, fill=htPostNS5, fill opacity=0.06] coordinates {(0.001,0.0173702) (0.002,0.0173654) (0.005,0.0173511) (0.01,0.0173277) (0.02,0.0172834) (0.05,0.0171645) (0.1,0.0169526) (0.2,0.0166384) (0.5,0.0179629) (1,0.0531757) (1,0.0519312) (0.5,0.016416) (0.2,0.0151962) (0.1,0.0155627) (0.05,0.015795) (0.02,0.0159303) (0.01,0.0159765) (0.005,0.0160001) (0.002,0.0160144) (0.001,0.0160192)} \closedcycle;
  \addplot[draw=none, fill=htMSOMA, fill opacity=0.06] coordinates {(0.001,0.0173725) (0.002,0.01737) (0.005,0.0173624) (0.01,0.0173499) (0.02,0.0173253) (0.05,0.0172556) (0.1,0.017152) (0.2,0.0169562) (0.5,0.0163832) (1,0.0155334) (1,0.0139806) (0.5,0.0149865) (0.2,0.0155393) (0.1,0.0157522) (0.05,0.015886) (0.02,0.0159699) (0.01,0.0159973) (0.005,0.0160107) (0.002,0.0160187) (0.001,0.0160214)} \closedcycle;
  \addplot[draw=none, fill=htFSAM, fill opacity=0.06] coordinates {(0.001,0.0162796) (0.002,0.0162771) (0.005,0.0162698) (0.01,0.0162576) (0.02,0.0162334) (0.05,0.0161615) (0.1,0.016029) (0.2,0.0157476) (0.5,0.0151231) (1,0.0141754) (1,0.0129954) (0.5,0.0141491) (0.2,0.0148304) (0.1,0.0150822) (0.05,0.0152157) (0.02,0.0152982) (0.01,0.0153261) (0.005,0.0153401) (0.002,0.0153486) (0.001,0.0153514)} \closedcycle;
  \addplot[draw=none, fill=htSAM, fill opacity=0.06] coordinates {(0.001,0.0162799) (0.002,0.0162778) (0.005,0.0162713) (0.01,0.0162607) (0.02,0.0162394) (0.05,0.0161767) (0.1,0.0160636) (0.2,0.0158267) (0.5,0.0152965) (1,0.0144311) (1,0.0133477) (0.5,0.0143446) (0.2,0.0149091) (0.1,0.0151216) (0.05,0.0152353) (0.02,0.015306) (0.01,0.01533) (0.005,0.0153421) (0.002,0.0153494) (0.001,0.0153518)} \closedcycle;
  \addplot[draw=none, fill=htMSAM, fill opacity=0.06] coordinates {(0.001,0.0172536) (0.002,0.0172528) (0.005,0.0172506) (0.01,0.0172469) (0.02,0.0172395) (0.05,0.0172184) (0.1,0.0171845) (0.2,0.0171163) (0.5,0.0168042) (1,0.0162397) (1,0.0149225) (0.5,0.0154294) (0.2,0.0157262) (0.1,0.0158284) (0.05,0.0158821) (0.02,0.0159157) (0.01,0.0159272) (0.005,0.015933) (0.002,0.0159365) (0.001,0.0159376)} \closedcycle;
  \addplot[draw=none, fill=htSAMAdam, fill opacity=0.06] coordinates {(0.001,0.0373617) (0.002,0.0373607) (0.005,0.0373582) (0.01,0.0373546) (0.02,0.0373513) (0.05,0.0373156) (0.1,0.0372598) (0.2,0.0371733) (0.5,0.0369717) (1,0.0368929) (1,0.0357076) (0.5,0.035515) (0.2,0.0356412) (0.1,0.0357413) (0.05,0.0357949) (0.02,0.0358142) (0.01,0.0358157) (0.005,0.0358204) (0.002,0.0358214) (0.001,0.0358223)} \closedcycle;
  \addplot[draw=none, fill=htFSAMAdam, fill opacity=0.06] coordinates {(0.001,0.0373619) (0.002,0.0373611) (0.005,0.0373592) (0.01,0.0373579) (0.02,0.0373554) (0.05,0.0373254) (0.1,0.0372783) (0.2,0.0372059) (0.5,0.0370101) (1,0.0368915) (1,0.0356428) (0.5,0.0355381) (0.2,0.0356674) (0.1,0.0357575) (0.05,0.0358037) (0.02,0.035816) (0.01,0.0358176) (0.005,0.0358209) (0.002,0.0358217) (0.001,0.0358225)} \closedcycle;
  \addplot[color=htMuon, dashed, line width=0.7pt, forget plot] coordinates {(0.001,0.0165429) (1,0.0165429)};
  \addplot[color=htAdam, dash dot, line width=0.7pt, forget plot] coordinates {(0.001,0.0364153) (1,0.0364153)};
  \addplot[color=htSpec, solid, line width=0.8pt] coordinates {(0.001,0.0165308) (0.002,0.0165188) (0.005,0.0164828) (0.01,0.0164229) (0.02,0.0163034) (0.05,0.0159568) (0.1,0.0154729) (0.2,0.0146907) (0.5,0.0140846) (1,0.256185)};
  \addplot[color=htRand, dashed, line width=0.8pt] coordinates {(0.001,0.0165429) (0.002,0.0165429) (0.005,0.0165431) (0.01,0.0165432) (0.02,0.0165436) (0.05,0.0165442) (0.1,0.016544) (0.2,0.0165391) (0.5,0.0164859) (1,0.016232)};
  \addplot[color=htMSOMA, loosely dashed, line width=0.8pt] coordinates {(0.001,0.0165401) (0.002,0.0165373) (0.005,0.016529) (0.01,0.0165151) (0.02,0.0164875) (0.05,0.016406) (0.1,0.0162735) (0.2,0.0160243) (0.5,0.015414) (1,0.0146505)};
  \addplot[color=htFSAM, solid, line width=0.8pt] coordinates {(0.001,0.0157295) (0.002,0.0157267) (0.005,0.0157183) (0.01,0.0157043) (0.02,0.0156763) (0.05,0.015593) (0.1,0.0154558) (0.2,0.0151867) (0.5,0.014593) (1,0.0134323)};
  \addplot[color=htSAM, solid, line width=0.8pt] coordinates {(0.001,0.0157299) (0.002,0.0157275) (0.005,0.0157202) (0.01,0.0157082) (0.02,0.0156842) (0.05,0.0156126) (0.1,0.0154952) (0.2,0.0152654) (0.5,0.0147438) (1,0.013748)};
  \addplot[color=htMSAM, solid, line width=0.8pt] coordinates {(0.001,0.0166745) (0.002,0.0166733) (0.005,0.0166697) (0.01,0.0166637) (0.02,0.0166517) (0.05,0.0166158) (0.1,0.0165561) (0.2,0.0164345) (0.5,0.0160488) (1,0.0153953)};
  \addplot[color=htSOMA, solid, line width=1.1pt] coordinates {(0.001,0.0165413) (0.002,0.0165397) (0.005,0.0165345) (0.01,0.0165249) (0.02,0.0165027) (0.05,0.0164115) (0.1,0.0162952) (0.2,0.0163726) (0.5,0.0262902) (1,0.132169)};
  \addplot[color=htFPSOMA, solid, line width=1.1pt] coordinates {(0.001,0.016529) (0.002,0.0165152) (0.005,0.0164736) (0.01,0.0164046) (0.02,0.0162678) (0.05,0.015875) (0.1,0.015274) (0.2,0.0142759) (0.5,0.0144695) (1,0.0514629)};
  \addplot[color=htPreNS5, densely dotted, line width=1.1pt] coordinates {(0.001,0.0165391) (0.002,0.0165354) (0.005,0.0165237) (0.01,0.0165034) (0.02,0.0164599) (0.05,0.0163142) (0.1,0.0160937) (0.2,0.0159092) (0.5,0.025793) (1,0.12765)};
  \addplot[color=htPostNS5, dash dot, line width=1.1pt] coordinates {(0.001,0.0165381) (0.002,0.0165332) (0.005,0.0165187) (0.01,0.0164941) (0.02,0.0164448) (0.05,0.0162896) (0.1,0.016028) (0.2,0.0156536) (0.5,0.0171922) (1,0.0524615)};
  \addplot[color=htSAMAdam, solid, line width=1.1pt] coordinates {(0.001,0.0364144) (0.002,0.0364136) (0.005,0.0364112) (0.01,0.0364077) (0.02,0.0364073) (0.05,0.0363768) (0.1,0.0363342) (0.2,0.0362638) (0.5,0.0361067) (1,0.0362158)};
  \addplot[color=htFSAMAdam, densely dashed, line width=1.4pt] coordinates {(0.001,0.0364146) (0.002,0.036414) (0.005,0.0364121) (0.01,0.0364087) (0.02,0.0364109) (0.05,0.0363854) (0.1,0.0363502) (0.2,0.0362897) (0.5,0.0361308) (1,0.0361286)};
\end{axis}
\begin{axis}[hide axis, scale only axis, at={(ax0_r0.north west)}, anchor=south west, yshift=34pt,
    width=0.9300\textwidth, height=1pt, xmin=0, xmax=1, ymin=0, ymax=1,
    legend columns=7, legend cell align=left,
    legend style={at={(0.5,0)}, anchor=south, draw=none, fill=none, font=\scriptsize, inner sep=1pt,
      /tikz/every even column/.append style={column sep=5pt}}]
  \addlegendimage{color=htSOMA, solid, line width=1.1pt}\addlegendentry{\textsc{Soma}}
  \addlegendimage{color=htFPSOMA, solid, line width=1.1pt}\addlegendentry{\textsc{FP{-}Soma}}
  \addlegendimage{color=htMSOMA, loosely dashed, line width=0.8pt}\addlegendentry{\textsc{Msoma}}
  \addlegendimage{color=htPreNS5, densely dotted, line width=1.1pt}\addlegendentry{\textsc{Opspre}}
  \addlegendimage{color=htPostNS5, dash dot, line width=1.1pt}\addlegendentry{\textsc{Opspost}}
  \addlegendimage{color=htSpec, solid, line width=0.8pt}\addlegendentry{\textsc{Ssm}}
  \addlegendimage{color=htRand, dashed, line width=0.8pt}\addlegendentry{\textsc{Rsm}}
  \addlegendimage{color=htSAM, solid, line width=0.8pt}\addlegendentry{\textsc{Sam}}
  \addlegendimage{color=htFSAM, solid, line width=0.8pt}\addlegendentry{\textsc{F{-}Sam}}
  \addlegendimage{color=htMSAM, solid, line width=0.8pt}\addlegendentry{\textsc{Msam}}
  \addlegendimage{color=htSAMAdam, solid, line width=1.1pt}\addlegendentry{\textsc{Saw}}
  \addlegendimage{color=htFSAMAdam, densely dashed, line width=1.4pt}\addlegendentry{\textsc{Fsaw}}
  \addlegendimage{color=htMuon, dashed, line width=0.7pt}\addlegendentry{\textsc{Muon}}
  \addlegendimage{color=htAdam, dash dot, line width=0.7pt}\addlegendentry{\textsc{AdamW}}
\end{axis}
\path (ax0_r0.south west) -- (ax3_r0.south east) coordinate[midway] (xlab);
\node[font=\small, anchor=north] at ([yshift=-14pt]xlab) {perturbation radius $\rho$};
\end{tikzpicture}

%% file: dataforsyntheticplots/mse_aniso_gap.tex
% ---- mse_aniso_gap
\providecolor{muonrow}{RGB}{208,236,214}\providecolor{adamrow}{RGB}{255,214,214}\providecolor{sgdrow}{RGB}{208,226,247}
\begin{table*}[!t]\centering\footnotesize\setlength{\tabcolsep}{5pt}
\caption{Optimality gap: best value over the perturbation radius $\rho$ and the radius $\rho^\star$ where it is attained (median of 8 seeds; Frobenius-ball methods pool the global and per-layer radius modes). Row colour marks the base optimizer family: \muon-based (\colorbox{muonrow}{Green}), \adamw-based (\colorbox{adamrow}{Red}), \sgd-based (\colorbox{sgdrow}{Blue}); baselines have no $\rho^\star$. Best value per $\alpha$ in bold.}\label{tab:mse_aniso_gap}
\vspace{2pt}
\resizebox{\ifdim\width>\textwidth\textwidth\else\width\fi}{!}{%
\begin{tabular}{lcccccccc}\toprule
Algorithm & \multicolumn{2}{c}{$\alpha=1.1$} & \multicolumn{2}{c}{$\alpha=1.6$} & \multicolumn{2}{c}{$\alpha=2.0$} & \multicolumn{2}{c}{$\alpha=3.0$} \\
& gap & $\rho^\star$ & gap & $\rho^\star$ & gap & $\rho^\star$ & gap & $\rho^\star$ \\\midrule
\rowcolor{muonrow} \muon & 0.413 & -- & 0.062 & -- & 0.0316 & -- & 0.0165 & -- \\
\rowcolor{muonrow} \specsammuon & 0.265 & 1 & 0.0473 & 0.5 & 0.0253 & 0.5 & 0.0141 & 0.5 \\
\rowcolor{muonrow} \fpsoma & \textbf{0.252} & 1 & \textbf{0.0457} & 0.5 & \textbf{0.0247} & 0.5 & 0.0143 & 0.2 \\
\rowcolor{muonrow} \soma & 0.263 & 1 & 0.0551 & 0.5 & 0.0299 & 0.2 & 0.0163 & 0.1 \\
\rowcolor{muonrow} \randsammuon & 0.408 & 1 & 0.0609 & 1 & 0.0311 & 1 & 0.0162 & 1 \\
\rowcolor{muonrow} \opsomapre & 0.274 & 1 & 0.0548 & 0.5 & 0.0296 & 0.2 & 0.0159 & 0.2 \\
\rowcolor{muonrow} \opsomapost & 0.287 & 1 & 0.0537 & 0.5 & 0.0284 & 0.5 & 0.0157 & 0.2 \\
\rowcolor{muonrow} \textsc{Msoma} & 0.389 & 1 & 0.0576 & 1 & 0.0283 & 1 & 0.0147 & 1 \\
\rowcolor{adamrow} \adamw & 0.559 & -- & 0.109 & -- & 0.0592 & -- & 0.0364 & -- \\
\rowcolor{adamrow} \samadamw & 0.554 & 1 & 0.107 & 1 & 0.0582 & 1 & 0.0361 & 0.5 \\
\rowcolor{adamrow} \fsamadamw & 0.556 & 1 & 0.108 & 1 & 0.0584 & 1 & 0.0361 & 1 \\
\rowcolor{sgdrow} \sgd & 4.64 & -- & 0.136 & -- & 0.0364 & -- & 0.0157 & -- \\
\rowcolor{sgdrow} \sam & 3.52 & 1 & 0.111 & 1 & 0.0353 & 0.2 & 0.0137 & 1 \\
\rowcolor{sgdrow} \fsam & 3.48 & 1 & 0.113 & 1 & 0.0352 & 0.2 & \textbf{0.0134} & 1 \\
\rowcolor{sgdrow} \textsc{Msam} & 5.69 & 1 & 0.134 & 0.5 & 0.0352 & 1 & 0.0154 & 1 \\
\bottomrule\end{tabular}}\end{table*}

%% file: tikz_for_heavytailed/appendex_isotropic/val_acc_vs_rho.tikz
%% Auto-generated by make_heavytailed_figures.py  --  do not edit by hand, re-run the script.
%% Values: line = median over seeds, band = interquartile range (25th-75th percentile).
%% Preamble needed:  \usepackage{pgfplots}  \pgfplotsset{compat=1.16}   (nothing else)
%% Width follows \textwidth, fonts follow the document (so the legend can never overflow).

\definecolor{htSOMA}{HTML}{0B5D1E}
\definecolor{htFPSOMA}{HTML}{33CC33}
\definecolor{htSpec}{HTML}{00B3A4}
\definecolor{htRand}{HTML}{808000}
\definecolor{htFSAM}{HTML}{2F80D8}
\definecolor{htSAM}{HTML}{0A3A8A}
\definecolor{htPreNS5}{HTML}{9ACD32}
\definecolor{htPostNS5}{HTML}{004D40}
\definecolor{htSAMAdam}{HTML}{8E0F1C}
\definecolor{htFSAMAdam}{HTML}{F4623A}
\definecolor{htMSAM}{HTML}{7DB9F2}
\definecolor{htMSOMA}{HTML}{8FD694}
\definecolor{htMuon}{HTML}{111111}
\definecolor{htAdam}{HTML}{8A8A8A}
\pgfplotsset{
  ht panel/.style={
    scale only axis, width=0.1920\textwidth, height=1.35in,
    xmin=0.00075, xmax=1.35,
    xtick={0.001,0.01,0.1,1}, xticklabels={$10^{-3}$,$10^{-2}$,$10^{-1}$,$10^{0}$},
    minor xtick={0.002,0.003,0.004,0.005,0.006,0.007,0.008,0.009,0.02,0.03,0.04,0.05,0.06,0.07,0.08,0.09,0.2,0.3,0.4,0.5,0.6,0.7,0.8,0.9},
    xmajorgrids=true, ymajorgrids=true,
    grid style={line width=0.25pt, draw=black!10},
    axis line style={line width=0.4pt, black!75},
    tick style={line width=0.4pt, black!75}, major tick length=3pt, minor tick length=1.5pt,
    tick label style={font=\footnotesize, /pgf/number format/assume math mode=true},
    label style={font=\small}, title style={font=\small, yshift=-3pt},
    ylabel style={yshift=-1pt, align=center},
    every tick label/.append style={inner sep=1pt},
    clip=true,
  }}

\begin{tikzpicture}
\begin{axis}[name=ax0_r0,
    at={(0.0820\textwidth,0in)},
    anchor=south west,
    ht panel, xmode=log,
    ymin=33.957, ymax=41.4336,
    ytick={34,36,38,40},
    yticklabels={34,36,38,40},
    title={$\alpha=1.1$},
    ylabel={validation accuracy}]
  \addplot[draw=none, fill=htSOMA, fill opacity=0.06] coordinates {(0.001,36.8164) (0.002,36.6699) (0.005,36.7188) (0.01,36.7676) (0.02,36.6211) (0.05,37.207) (0.1,37.207) (0.2,37.793) (0.5,38.2324) (1,39.1113) (1,37.3047) (0.5,36.7676) (0.2,35.4004) (0.1,35.6445) (0.05,34.9121) (0.02,34.7168) (0.01,34.7168) (0.005,34.7168) (0.002,34.5703) (0.001,34.5215)} \closedcycle;
  \addplot[draw=none, fill=htFPSOMA, fill opacity=0.06] coordinates {(0.001,36.6699) (0.002,37.0117) (0.005,36.8164) (0.01,36.8164) (0.02,36.9629) (0.05,37.207) (0.1,37.1582) (0.2,37.2559) (0.5,37.4023) (1,38.2324) (1,37.5) (0.5,36.5234) (0.2,35.5957) (0.1,35.1562) (0.05,35.0098) (0.02,34.668) (0.01,34.8633) (0.005,34.7656) (0.002,34.8145) (0.001,34.6191)} \closedcycle;
  \addplot[draw=none, fill=htSpec, fill opacity=0.06] coordinates {(0.001,36.8164) (0.002,36.9141) (0.005,36.6211) (0.01,36.9629) (0.02,36.8652) (0.05,37.0117) (0.1,37.2559) (0.2,37.4512) (0.5,37.207) (1,37.3047) (1,35.9375) (0.5,36.3281) (0.2,35.1562) (0.1,35.3516) (0.05,34.7656) (0.02,34.9121) (0.01,34.8145) (0.005,34.7656) (0.002,34.668) (0.001,34.6191)} \closedcycle;
  \addplot[draw=none, fill=htRand, fill opacity=0.06] coordinates {(0.001,36.7676) (0.002,36.8164) (0.005,36.7676) (0.01,36.8164) (0.02,36.9629) (0.05,36.9629) (0.1,36.9141) (0.2,36.7676) (0.5,36.6211) (1,36.4258) (1,34.668) (0.5,34.7168) (0.2,34.7168) (0.1,34.5215) (0.05,34.5703) (0.02,34.5215) (0.01,34.5215) (0.005,34.668) (0.002,34.668) (0.001,34.5215)} \closedcycle;
  \addplot[draw=none, fill=htPreNS5, fill opacity=0.06] coordinates {(0.001,36.8164) (0.002,36.6699) (0.005,36.6699) (0.01,36.9629) (0.02,36.7676) (0.05,36.9141) (0.1,36.9141) (0.2,37.3535) (0.5,37.5977) (1,39.1113) (1,36.7188) (0.5,35.791) (0.2,35.2539) (0.1,34.7656) (0.05,34.7656) (0.02,34.8145) (0.01,34.668) (0.005,34.668) (0.002,34.668) (0.001,34.668)} \closedcycle;
  \addplot[draw=none, fill=htPostNS5, fill opacity=0.06] coordinates {(0.001,36.8164) (0.002,36.7676) (0.005,36.6699) (0.01,36.7188) (0.02,36.6699) (0.05,37.1582) (0.1,36.9629) (0.2,37.3535) (0.5,37.8418) (1,38.5742) (1,37.0117) (0.5,35.9863) (0.2,35.0098) (0.1,35.1074) (0.05,34.9121) (0.02,34.9609) (0.01,34.7168) (0.005,34.7168) (0.002,34.668) (0.001,34.8145)} \closedcycle;
  \addplot[draw=none, fill=htMSOMA, fill opacity=0.06] coordinates {(0.001,36.7676) (0.002,36.7676) (0.005,36.7188) (0.01,36.7188) (0.02,36.7676) (0.05,36.7676) (0.1,36.7676) (0.2,36.9141) (0.5,36.8652) (1,36.9629) (1,35.2539) (0.5,34.8145) (0.2,34.7656) (0.1,34.8633) (0.05,34.7168) (0.02,34.5215) (0.01,34.668) (0.005,34.5215) (0.002,34.9609) (0.001,34.8145)} \closedcycle;
  \addplot[draw=none, fill=htFSAM, fill opacity=0.06] coordinates {(0.001,35.5957) (0.002,36.7676) (0.005,36.377) (0.01,36.8652) (0.02,36.5234) (0.05,36.2793) (0.1,36.1816) (0.2,36.5234) (0.5,37.4512) (1,35.9863) (1,33.9844) (0.5,34.3262) (0.2,33.7891) (0.1,34.3262) (0.05,33.5449) (0.02,32.9102) (0.01,33.6914) (0.005,33.4961) (0.002,33.3008) (0.001,34.375)} \closedcycle;
  \addplot[draw=none, fill=htSAM, fill opacity=0.06] coordinates {(0.001,36.1328) (0.002,36.5723) (0.005,36.2305) (0.01,35.6445) (0.02,36.2793) (0.05,37.207) (0.1,35.5957) (0.2,35.7422) (0.5,36.8164) (1,36.5234) (1,33.8867) (0.5,33.7891) (0.2,32.7148) (0.1,33.5938) (0.05,34.1309) (0.02,33.8867) (0.01,33.9355) (0.005,34.668) (0.002,34.9609) (0.001,34.7168)} \closedcycle;
  \addplot[draw=none, fill=htMSAM, fill opacity=0.06] coordinates {(0.001,36.6699) (0.002,37.207) (0.005,36.7676) (0.01,37.1094) (0.02,37.0605) (0.05,36.8652) (0.1,36.7676) (0.2,36.5723) (0.5,36.084) (1,37.1582) (1,33.7891) (0.5,33.6914) (0.2,33.6426) (0.1,34.1797) (0.05,33.7402) (0.02,34.2773) (0.01,33.7891) (0.005,33.3496) (0.002,35.1562) (0.001,33.252)} \closedcycle;
  \addplot[draw=none, fill=htSAMAdam, fill opacity=0.06] coordinates {(0.001,41.1133) (0.002,41.1133) (0.005,41.1133) (0.01,41.1133) (0.02,41.1133) (0.05,41.1133) (0.1,41.0645) (0.2,41.2109) (0.5,41.2598) (1,41.1133) (1,40.1855) (0.5,40.332) (0.2,40.4297) (0.1,40.332) (0.05,40.332) (0.02,40.2832) (0.01,40.2832) (0.005,40.2832) (0.002,40.2832) (0.001,40.1367)} \closedcycle;
  \addplot[draw=none, fill=htFSAMAdam, fill opacity=0.06] coordinates {(0.001,41.2598) (0.002,41.2598) (0.005,41.1133) (0.01,41.1133) (0.02,41.1133) (0.05,41.1133) (0.1,41.1133) (0.2,41.2109) (0.5,41.2109) (1,41.0156) (1,40.0391) (0.5,40.1855) (0.2,40.2832) (0.1,40.332) (0.05,40.332) (0.02,40.4785) (0.01,40.2832) (0.005,40.1367) (0.002,40.2832) (0.001,40.2832)} \closedcycle;
  \addplot[color=htMuon, dashed, line width=0.7pt, forget plot] coordinates {(0.001,36.1328) (1,36.1328)};
  \addplot[color=htAdam, dash dot, line width=0.7pt, forget plot] coordinates {(0.001,40.625) (1,40.625)};
  \addplot[color=htSpec, solid, line width=0.8pt] coordinates {(0.001,36.1328) (0.002,36.1328) (0.005,36.1328) (0.01,36.0352) (0.02,36.0352) (0.05,36.2305) (0.1,36.0352) (0.2,36.6211) (0.5,36.5234) (1,36.4258)};
  \addplot[color=htRand, dashed, line width=0.8pt] coordinates {(0.001,35.9375) (0.002,35.9375) (0.005,36.0352) (0.01,36.0352) (0.02,36.1328) (0.05,36.3281) (0.1,36.0352) (0.2,36.0352) (0.5,36.1328) (1,36.0352)};
  \addplot[color=htMSOMA, loosely dashed, line width=0.8pt] coordinates {(0.001,36.0352) (0.002,36.0352) (0.005,35.9375) (0.01,36.0352) (0.02,36.0352) (0.05,36.0352) (0.1,36.2305) (0.2,36.2305) (0.5,36.1328) (1,36.0352)};
  \addplot[color=htFSAM, solid, line width=0.8pt] coordinates {(0.001,35.1562) (0.002,35.1562) (0.005,35.1562) (0.01,35.3516) (0.02,34.7656) (0.05,35.3516) (0.1,34.8633) (0.2,35.3516) (0.5,35.8398) (1,35.2539)};
  \addplot[color=htSAM, solid, line width=0.8pt] coordinates {(0.001,35.1562) (0.002,35.1562) (0.005,35.5469) (0.01,35.1562) (0.02,34.4727) (0.05,35.0586) (0.1,34.4727) (0.2,34.4727) (0.5,35.6445) (1,35.6445)};
  \addplot[color=htMSAM, solid, line width=0.8pt] coordinates {(0.001,34.5703) (0.002,36.1328) (0.005,34.4727) (0.01,34.7656) (0.02,35.0586) (0.05,34.7656) (0.1,35.8398) (0.2,36.1328) (0.5,34.4727) (1,35.3516)};
  \addplot[color=htSOMA, solid, line width=1.1pt] coordinates {(0.001,36.0352) (0.002,36.2305) (0.005,36.2305) (0.01,36.4258) (0.02,36.0352) (0.05,36.1328) (0.1,36.0352) (0.2,36.2305) (0.5,37.4023) (1,38.1836)};
  \addplot[color=htFPSOMA, solid, line width=1.1pt] coordinates {(0.001,36.2305) (0.002,36.3281) (0.005,36.3281) (0.01,36.3281) (0.02,36.7188) (0.05,36.1328) (0.1,36.0352) (0.2,36.5234) (0.5,37.1094) (1,37.793)};
  \addplot[color=htPreNS5, densely dotted, line width=1.1pt] coordinates {(0.001,36.0352) (0.002,36.2305) (0.005,36.3281) (0.01,36.3281) (0.02,36.1328) (0.05,36.1328) (0.1,36.1328) (0.2,35.9375) (0.5,37.1094) (1,38.5742)};
  \addplot[color=htPostNS5, dash dot, line width=1.1pt] coordinates {(0.001,36.1328) (0.002,36.2305) (0.005,36.1328) (0.01,36.2305) (0.02,36.1328) (0.05,36.1328) (0.1,35.9375) (0.2,35.9375) (0.5,36.5234) (1,37.8906)};
  \addplot[color=htSAMAdam, solid, line width=1.1pt] coordinates {(0.001,40.5273) (0.002,40.625) (0.005,40.625) (0.01,40.7227) (0.02,40.8203) (0.05,40.8203) (0.1,40.7227) (0.2,40.7227) (0.5,40.8203) (1,40.7227)};
  \addplot[color=htFSAMAdam, densely dashed, line width=1.4pt] coordinates {(0.001,40.5273) (0.002,40.625) (0.005,40.625) (0.01,40.7227) (0.02,40.8203) (0.05,40.8203) (0.1,40.8203) (0.2,40.918) (0.5,40.8203) (1,40.625)};
\end{axis}
\begin{axis}[name=ax1_r0,
    at={(0.3235\textwidth,0in)},
    anchor=south west,
    ht panel, xmode=log,
    ymin=37.0938, ymax=45.8164,
    ytick={38,40,42,44},
    yticklabels={38,40,42,44},
    title={$\alpha=1.6$}]
  \addplot[draw=none, fill=htSOMA, fill opacity=0.06] coordinates {(0.001,43.5547) (0.002,43.8965) (0.005,44.3848) (0.01,43.7988) (0.02,44.2383) (0.05,44.1406) (0.1,43.9941) (0.2,45.3613) (0.5,45.3125) (1,45.3125) (1,44.5312) (0.5,43.3594) (0.2,42.6758) (0.1,42.5293) (0.05,42.2852) (0.02,42.3828) (0.01,42.3828) (0.005,42.1387) (0.002,42.6758) (0.001,42.2852)} \closedcycle;
  \addplot[draw=none, fill=htFPSOMA, fill opacity=0.06] coordinates {(0.001,43.8965) (0.002,43.9941) (0.005,44.1406) (0.01,44.1895) (0.02,43.6035) (0.05,44.5312) (0.1,43.9941) (0.2,44.1895) (0.5,45.6543) (1,45.4102) (1,44.043) (0.5,43.5059) (0.2,42.334) (0.1,42.5781) (0.05,42.5293) (0.02,42.334) (0.01,42.7734) (0.005,42.334) (0.002,41.9434) (0.001,42.3828)} \closedcycle;
  \addplot[draw=none, fill=htSpec, fill opacity=0.06] coordinates {(0.001,44.043) (0.002,44.043) (0.005,43.9941) (0.01,43.9941) (0.02,43.9453) (0.05,44.7754) (0.1,44.1895) (0.2,44.5801) (0.5,44.6777) (1,43.2617) (1,42.041) (0.5,43.2129) (0.2,43.3594) (0.1,42.9199) (0.05,42.3828) (0.02,42.7734) (0.01,42.4805) (0.005,42.2852) (0.002,42.1387) (0.001,42.2363)} \closedcycle;
  \addplot[draw=none, fill=htRand, fill opacity=0.06] coordinates {(0.001,43.8477) (0.002,43.9941) (0.005,44.1895) (0.01,44.1895) (0.02,44.3359) (0.05,43.9941) (0.1,44.1406) (0.2,43.9453) (0.5,43.4082) (1,43.6523) (1,42.9199) (0.5,42.1875) (0.2,42.6758) (0.1,42.1875) (0.05,42.1875) (0.02,42.3828) (0.01,42.2852) (0.005,42.3828) (0.002,42.1387) (0.001,42.1875)} \closedcycle;
  \addplot[draw=none, fill=htPreNS5, fill opacity=0.06] coordinates {(0.001,44.1895) (0.002,44.5801) (0.005,43.6035) (0.01,43.6523) (0.02,43.7988) (0.05,43.9941) (0.1,43.9453) (0.2,44.1406) (0.5,44.3848) (1,45.3613) (1,43.8965) (0.5,42.9199) (0.2,43.1641) (0.1,42.4805) (0.05,42.627) (0.02,42.2852) (0.01,42.0898) (0.005,42.4316) (0.002,42.334) (0.001,41.9922)} \closedcycle;
  \addplot[draw=none, fill=htPostNS5, fill opacity=0.06] coordinates {(0.001,44.0918) (0.002,43.7988) (0.005,43.7012) (0.01,43.9941) (0.02,43.6523) (0.05,43.6035) (0.1,43.8965) (0.2,44.3359) (0.5,44.7266) (1,45.9961) (1,44.3848) (0.5,43.1641) (0.2,42.7734) (0.1,42.5781) (0.05,42.334) (0.02,42.5781) (0.01,42.1875) (0.005,42.1387) (0.002,41.7969) (0.001,41.7969)} \closedcycle;
  \addplot[draw=none, fill=htMSOMA, fill opacity=0.06] coordinates {(0.001,44.043) (0.002,43.8477) (0.005,44.043) (0.01,44.1406) (0.02,43.8965) (0.05,43.7988) (0.1,43.6523) (0.2,43.4082) (0.5,43.9453) (1,43.3105) (1,41.6992) (0.5,42.3828) (0.2,42.1875) (0.1,42.3828) (0.05,42.1387) (0.02,41.7969) (0.01,41.9434) (0.005,41.9922) (0.002,42.1875) (0.001,41.9922)} \closedcycle;
  \addplot[draw=none, fill=htFSAM, fill opacity=0.06] coordinates {(0.001,40.7715) (0.002,39.3555) (0.005,40.332) (0.01,40.8691) (0.02,39.8438) (0.05,39.502) (0.1,39.9902) (0.2,40.2344) (0.5,40.918) (1,39.5508) (1,37.5) (0.5,38.6719) (0.2,38.4277) (0.1,37.1094) (0.05,37.793) (0.02,37.8906) (0.01,37.2559) (0.005,38.6719) (0.002,38.0371) (0.001,38.0371)} \closedcycle;
  \addplot[draw=none, fill=htSAM, fill opacity=0.06] coordinates {(0.001,40.4297) (0.002,39.8438) (0.005,39.4531) (0.01,39.7461) (0.02,39.7461) (0.05,39.7949) (0.1,40.332) (0.2,40.2832) (0.5,40.3809) (1,40.2832) (1,36.7188) (0.5,37.2559) (0.2,38.7695) (0.1,38.1836) (0.05,37.3047) (0.02,37.9883) (0.01,37.0605) (0.005,37.1094) (0.002,37.6465) (0.001,38.0371)} \closedcycle;
  \addplot[draw=none, fill=htMSAM, fill opacity=0.06] coordinates {(0.001,39.0137) (0.002,40.1855) (0.005,40.0879) (0.01,40.0879) (0.02,41.5527) (0.05,39.2578) (0.1,41.6016) (0.2,39.8438) (0.5,39.6973) (1,40.0391) (1,37.3047) (0.5,37.7441) (0.2,36.377) (0.1,37.207) (0.05,37.5488) (0.02,37.2559) (0.01,36.8164) (0.005,37.4512) (0.002,36.084) (0.001,36.9141)} \closedcycle;
  \addplot[draw=none, fill=htSAMAdam, fill opacity=0.06] coordinates {(0.001,43.6035) (0.002,43.6035) (0.005,43.6035) (0.01,43.5547) (0.02,43.6035) (0.05,43.6035) (0.1,43.5547) (0.2,43.9453) (0.5,43.75) (1,43.7988) (1,42.7246) (0.5,42.3828) (0.2,42.1875) (0.1,42.1875) (0.05,42.334) (0.02,42.1875) (0.01,42.1875) (0.005,42.1875) (0.002,42.334) (0.001,42.334)} \closedcycle;
  \addplot[draw=none, fill=htFSAMAdam, fill opacity=0.06] coordinates {(0.001,43.6035) (0.002,43.6035) (0.005,43.5547) (0.01,43.4082) (0.02,43.3594) (0.05,43.5547) (0.1,43.6035) (0.2,43.9453) (0.5,43.7988) (1,43.6035) (1,42.5781) (0.5,42.334) (0.2,42.334) (0.1,42.3828) (0.05,42.3828) (0.02,42.334) (0.01,42.1875) (0.005,42.1875) (0.002,42.1875) (0.001,42.334)} \closedcycle;
  \addplot[color=htMuon, dashed, line width=0.7pt, forget plot] coordinates {(0.001,43.2617) (1,43.2617)};
  \addplot[color=htAdam, dash dot, line width=0.7pt, forget plot] coordinates {(0.001,42.9688) (1,42.9688)};
  \addplot[color=htSpec, solid, line width=0.8pt] coordinates {(0.001,42.9688) (0.002,43.0664) (0.005,43.0664) (0.01,43.0664) (0.02,43.3594) (0.05,43.5547) (0.1,43.75) (0.2,43.5547) (0.5,44.2383) (1,42.5781)};
  \addplot[color=htRand, dashed, line width=0.8pt] coordinates {(0.001,42.5781) (0.002,42.7734) (0.005,42.5781) (0.01,42.9688) (0.02,42.8711) (0.05,42.8711) (0.1,43.5547) (0.2,43.1641) (0.5,43.1641) (1,43.2617)};
  \addplot[color=htMSOMA, loosely dashed, line width=0.8pt] coordinates {(0.001,43.3594) (0.002,43.457) (0.005,42.8711) (0.01,42.8711) (0.02,42.7734) (0.05,42.8711) (0.1,43.457) (0.2,42.8711) (0.5,42.5781) (1,42.7734)};
  \addplot[color=htFSAM, solid, line width=0.8pt] coordinates {(0.001,39.3555) (0.002,38.6719) (0.005,39.5508) (0.01,37.8906) (0.02,39.4531) (0.05,38.6719) (0.1,38.0859) (0.2,39.1602) (0.5,40.0391) (1,38.4766)};
  \addplot[color=htSAM, solid, line width=0.8pt] coordinates {(0.001,39.2578) (0.002,38.9648) (0.005,37.6953) (0.01,38.7695) (0.02,38.4766) (0.05,38.8672) (0.1,39.5508) (0.2,39.3555) (0.5,39.2578) (1,38.4766)};
  \addplot[color=htMSAM, solid, line width=0.8pt] coordinates {(0.001,38.2812) (0.002,38.3789) (0.005,39.1602) (0.01,37.9883) (0.02,38.9648) (0.05,38.4766) (0.1,39.2578) (0.2,39.1602) (0.5,39.1602) (1,38.1836)};
  \addplot[color=htSOMA, solid, line width=1.1pt] coordinates {(0.001,42.7734) (0.002,43.3594) (0.005,42.9688) (0.01,43.1641) (0.02,43.1641) (0.05,43.2617) (0.1,43.1641) (0.2,43.6523) (0.5,44.4336) (1,44.8242)};
  \addplot[color=htFPSOMA, solid, line width=1.1pt] coordinates {(0.001,43.1641) (0.002,42.8711) (0.005,43.1641) (0.01,43.457) (0.02,42.9688) (0.05,43.3594) (0.1,43.1641) (0.2,43.0664) (0.5,44.2383) (1,44.7266)};
  \addplot[color=htPreNS5, densely dotted, line width=1.1pt] coordinates {(0.001,42.7734) (0.002,43.3594) (0.005,43.5547) (0.01,43.0664) (0.02,43.3594) (0.05,43.5547) (0.1,42.6758) (0.2,43.5547) (0.5,43.5547) (1,44.2383)};
  \addplot[color=htPostNS5, dash dot, line width=1.1pt] coordinates {(0.001,42.7734) (0.002,42.6758) (0.005,43.0664) (0.01,42.7734) (0.02,43.2617) (0.05,43.2617) (0.1,42.9688) (0.2,43.0664) (0.5,43.8477) (1,45.2148)};
  \addplot[color=htSAMAdam, solid, line width=1.1pt] coordinates {(0.001,43.0664) (0.002,42.9688) (0.005,43.0664) (0.01,43.1641) (0.02,43.0664) (0.05,43.1641) (0.1,43.3594) (0.2,43.457) (0.5,43.3594) (1,43.2617)};
  \addplot[color=htFSAMAdam, densely dashed, line width=1.4pt] coordinates {(0.001,42.9688) (0.002,42.9688) (0.005,42.9688) (0.01,43.1641) (0.02,43.1641) (0.05,43.3594) (0.1,43.0664) (0.2,42.9688) (0.5,43.1641) (1,43.0664)};
\end{axis}
\begin{axis}[name=ax2_r0,
    at={(0.5650\textwidth,0in)},
    anchor=south west,
    ht panel, xmode=log,
    ymin=37.2852, ymax=47.4805,
    ytick={38,40,42,44,46},
    yticklabels={38,40,42,44,46},
    title={$\alpha=2.0$}]
  \addplot[draw=none, fill=htSOMA, fill opacity=0.06] coordinates {(0.001,45.3125) (0.002,45.2148) (0.005,45.0195) (0.01,45.8496) (0.02,45.8984) (0.05,45.752) (0.1,45.8496) (0.2,46.1426) (0.5,47.7051) (1,46.3379) (1,45.4102) (0.5,45.752) (0.2,45.1172) (0.1,43.8477) (0.05,43.7988) (0.02,44.6777) (0.01,44.2383) (0.005,44.2383) (0.002,44.0918) (0.001,44.3359)} \closedcycle;
  \addplot[draw=none, fill=htFPSOMA, fill opacity=0.06] coordinates {(0.001,45.5078) (0.002,45.3613) (0.005,45.166) (0.01,46.0449) (0.02,45.3613) (0.05,45.7031) (0.1,45.3125) (0.2,45.8008) (0.5,47.2656) (1,43.7012) (1,43.1641) (0.5,45.7031) (0.2,43.6523) (0.1,44.2383) (0.05,44.0918) (0.02,44.5312) (0.01,44.3359) (0.005,44.5312) (0.002,44.0918) (0.001,44.043)} \closedcycle;
  \addplot[draw=none, fill=htSpec, fill opacity=0.06] coordinates {(0.001,45.3613) (0.002,44.9219) (0.005,46.1426) (0.01,45.5078) (0.02,45.5566) (0.05,45.752) (0.1,45.5078) (0.2,45.3125) (0.5,46.2402) (1,39.8926) (1,38.3789) (0.5,44.9707) (0.2,44.2383) (0.1,44.043) (0.05,43.75) (0.02,43.7988) (0.01,44.4824) (0.005,44.2871) (0.002,44.2871) (0.001,44.5801)} \closedcycle;
  \addplot[draw=none, fill=htRand, fill opacity=0.06] coordinates {(0.001,45.6055) (0.002,45.5566) (0.005,45.3613) (0.01,45.1172) (0.02,45.1172) (0.05,45.8008) (0.1,45.8984) (0.2,45.8984) (0.5,45.9473) (1,45.5078) (1,44.1895) (0.5,43.9941) (0.2,44.2871) (0.1,44.3359) (0.05,44.0918) (0.02,44.2871) (0.01,44.3359) (0.005,44.5312) (0.002,44.2383) (0.001,44.4336)} \closedcycle;
  \addplot[draw=none, fill=htPreNS5, fill opacity=0.06] coordinates {(0.001,45.1172) (0.002,45.3125) (0.005,45.166) (0.01,45.5078) (0.02,45.5566) (0.05,45.1172) (0.1,45.6055) (0.2,45.5566) (0.5,47.168) (1,46.2402) (1,45.0684) (0.5,44.6777) (0.2,43.7012) (0.1,44.0918) (0.05,43.9941) (0.02,44.3848) (0.01,44.3359) (0.005,43.75) (0.002,44.1895) (0.001,44.2871)} \closedcycle;
  \addplot[draw=none, fill=htPostNS5, fill opacity=0.06] coordinates {(0.001,45.3125) (0.002,45.3125) (0.005,45.8008) (0.01,45.7031) (0.02,45.3125) (0.05,45.752) (0.1,45.7031) (0.2,46.0449) (0.5,46.2891) (1,48.4375) (1,45.7031) (0.5,45.0684) (0.2,43.7988) (0.1,44.043) (0.05,43.7012) (0.02,44.3848) (0.01,44.1406) (0.005,44.1406) (0.002,44.4824) (0.001,44.4336)} \closedcycle;
  \addplot[draw=none, fill=htMSOMA, fill opacity=0.06] coordinates {(0.001,45.3125) (0.002,45.7031) (0.005,45.5566) (0.01,45.3125) (0.02,45.5566) (0.05,45.752) (0.1,45.6055) (0.2,45.7031) (0.5,45.3613) (1,44.9219) (1,43.0176) (0.5,43.1152) (0.2,44.1895) (0.1,44.1895) (0.05,44.5801) (0.02,44.5312) (0.01,44.1406) (0.005,43.8477) (0.002,44.4824) (0.001,44.1406)} \closedcycle;
  \addplot[draw=none, fill=htFSAM, fill opacity=0.06] coordinates {(0.001,39.5508) (0.002,39.8926) (0.005,40.918) (0.01,40.0391) (0.02,40.0879) (0.05,40.7227) (0.1,40.0391) (0.2,41.0156) (0.5,40.4785) (1,40.5762) (1,38.623) (0.5,37.9883) (0.2,37.9883) (0.1,38.2812) (0.05,38.0371) (0.02,37.793) (0.01,38.0371) (0.005,38.6719) (0.002,38.8184) (0.001,38.4766)} \closedcycle;
  \addplot[draw=none, fill=htSAM, fill opacity=0.06] coordinates {(0.001,39.6973) (0.002,40.6738) (0.005,40.0879) (0.01,40.4785) (0.02,40.4785) (0.05,40.5273) (0.1,40.332) (0.2,40.1367) (0.5,41.0156) (1,39.8926) (1,37.5977) (0.5,39.209) (0.2,37.5977) (0.1,38.8184) (0.05,38.2812) (0.02,38.0859) (0.01,37.1094) (0.005,38.7695) (0.002,39.0137) (0.001,38.2324)} \closedcycle;
  \addplot[draw=none, fill=htMSAM, fill opacity=0.06] coordinates {(0.001,40.2832) (0.002,39.9414) (0.005,40.0879) (0.01,39.3555) (0.02,39.2578) (0.05,39.3555) (0.1,40.2832) (0.2,40.4785) (0.5,39.5508) (1,40.2344) (1,37.3047) (0.5,37.6953) (0.2,37.793) (0.1,37.8418) (0.05,36.9629) (0.02,37.6465) (0.01,37.1582) (0.005,38.0371) (0.002,37.0605) (0.001,37.6465)} \closedcycle;
  \addplot[draw=none, fill=htSAMAdam, fill opacity=0.06] coordinates {(0.001,45.3125) (0.002,45.1172) (0.005,45.0195) (0.01,45.1172) (0.02,45.1172) (0.05,44.9707) (0.1,44.8242) (0.2,45.1172) (0.5,44.9707) (1,44.7266) (1,43.5059) (0.5,43.8965) (0.2,43.9453) (0.1,43.5547) (0.05,43.6523) (0.02,43.5547) (0.01,43.5547) (0.005,43.8965) (0.002,43.75) (0.001,43.75)} \closedcycle;
  \addplot[draw=none, fill=htFSAMAdam, fill opacity=0.06] coordinates {(0.001,45.3125) (0.002,45.166) (0.005,45.166) (0.01,44.9707) (0.02,45.3125) (0.05,44.9219) (0.1,44.9707) (0.2,44.7266) (0.5,44.9219) (1,44.9219) (1,44.1406) (0.5,43.8965) (0.2,43.5547) (0.1,43.6523) (0.05,43.5059) (0.02,43.6523) (0.01,43.75) (0.005,43.75) (0.002,43.7012) (0.001,43.75)} \closedcycle;
  \addplot[color=htMuon, dashed, line width=0.7pt, forget plot] coordinates {(0.001,45.2148) (1,45.2148)};
  \addplot[color=htAdam, dash dot, line width=0.7pt, forget plot] coordinates {(0.001,44.1406) (1,44.1406)};
  \addplot[color=htSpec, solid, line width=0.8pt] coordinates {(0.001,45.1172) (0.002,44.6289) (0.005,45.5078) (0.01,45.0195) (0.02,45.3125) (0.05,45.2148) (0.1,44.5312) (0.2,44.8242) (0.5,45.2148) (1,39.3555)};
  \addplot[color=htRand, dashed, line width=0.8pt] coordinates {(0.001,45.1172) (0.002,45.1172) (0.005,44.9219) (0.01,44.8242) (0.02,44.5312) (0.05,45.1172) (0.1,44.7266) (0.2,45.2148) (0.5,45.0195) (1,44.9219)};
  \addplot[color=htMSOMA, loosely dashed, line width=0.8pt] coordinates {(0.001,45.0195) (0.002,45.3125) (0.005,45.0195) (0.01,44.7266) (0.02,44.9219) (0.05,45.4102) (0.1,45.1172) (0.2,45.1172) (0.5,44.6289) (1,44.043)};
  \addplot[color=htFSAM, solid, line width=0.8pt] coordinates {(0.001,39.0625) (0.002,39.4531) (0.005,39.2578) (0.01,39.0625) (0.02,39.0625) (0.05,39.3555) (0.1,39.0625) (0.2,38.5742) (0.5,39.5508) (1,39.4531)};
  \addplot[color=htSAM, solid, line width=0.8pt] coordinates {(0.001,39.2578) (0.002,39.6484) (0.005,39.5508) (0.01,38.6719) (0.02,39.0625) (0.05,39.8438) (0.1,39.5508) (0.2,39.0625) (0.5,39.6484) (1,39.1602)};
  \addplot[color=htMSAM, solid, line width=0.8pt] coordinates {(0.001,39.3555) (0.002,38.8672) (0.005,38.7695) (0.01,37.9883) (0.02,38.3789) (0.05,38.7695) (0.1,39.6484) (0.2,38.6719) (0.5,38.2812) (1,38.6719)};
  \addplot[color=htSOMA, solid, line width=1.1pt] coordinates {(0.001,45.0195) (0.002,44.7266) (0.005,44.7266) (0.01,45.1172) (0.02,45.4102) (0.05,45.1172) (0.1,44.5312) (0.2,45.4102) (0.5,46.582) (1,46.0938)};
  \addplot[color=htFPSOMA, solid, line width=1.1pt] coordinates {(0.001,44.7266) (0.002,44.7266) (0.005,44.9219) (0.01,45.1172) (0.02,45.0195) (0.05,45.1172) (0.1,44.8242) (0.2,44.8242) (0.5,46.582) (1,43.2617)};
  \addplot[color=htPreNS5, densely dotted, line width=1.1pt] coordinates {(0.001,44.7266) (0.002,45.1172) (0.005,44.7266) (0.01,45.0195) (0.02,45.0195) (0.05,44.5312) (0.1,44.5312) (0.2,44.8242) (0.5,45.7031) (1,45.7031)};
  \addplot[color=htPostNS5, dash dot, line width=1.1pt] coordinates {(0.001,45.1172) (0.002,44.9219) (0.005,45.0195) (0.01,44.8242) (0.02,44.9219) (0.05,44.9219) (0.1,44.4336) (0.2,44.9219) (0.5,45.8008) (1,46.7773)};
  \addplot[color=htSAMAdam, solid, line width=1.1pt] coordinates {(0.001,44.3359) (0.002,44.5312) (0.005,44.3359) (0.01,44.5312) (0.02,44.4336) (0.05,44.1406) (0.1,44.2383) (0.2,44.3359) (0.5,44.3359) (1,43.9453)};
  \addplot[color=htFSAMAdam, densely dashed, line width=1.4pt] coordinates {(0.001,44.1406) (0.002,44.4336) (0.005,44.3359) (0.01,44.3359) (0.02,44.2383) (0.05,44.6289) (0.1,44.3359) (0.2,44.1406) (0.5,44.4336) (1,44.6289)};
\end{axis}
\begin{axis}[name=ax3_r0,
    at={(0.8065\textwidth,0in)},
    anchor=south west,
    ht panel, xmode=log,
    ymin=28.8281, ymax=49.7852,
    ytick={30,35,40,45},
    yticklabels={30,35,40,45},
    title={$\alpha=3.0$}]
  \addplot[draw=none, fill=htSOMA, fill opacity=0.06] coordinates {(0.001,45.0195) (0.002,44.873) (0.005,45.752) (0.01,45.8008) (0.02,45.6543) (0.05,45.2637) (0.1,44.6777) (0.2,45.5566) (0.5,47.0703) (1,46.3379) (1,45.5078) (0.5,45.9961) (0.2,43.9453) (0.1,43.8965) (0.05,43.3594) (0.02,43.6035) (0.01,43.9941) (0.005,43.6035) (0.002,43.1641) (0.001,43.2617)} \closedcycle;
  \addplot[draw=none, fill=htFPSOMA, fill opacity=0.06] coordinates {(0.001,45.4102) (0.002,45.8984) (0.005,45.8496) (0.01,45.2637) (0.02,45.9473) (0.05,44.7754) (0.1,45.166) (0.2,45.1172) (0.5,46.4844) (1,44.6289) (1,42.9199) (0.5,45.8008) (0.2,44.5801) (0.1,44.1895) (0.05,43.75) (0.02,42.7246) (0.01,43.6523) (0.005,43.4082) (0.002,43.4082) (0.001,42.9688)} \closedcycle;
  \addplot[draw=none, fill=htSpec, fill opacity=0.06] coordinates {(0.001,45.459) (0.002,44.9707) (0.005,45.752) (0.01,45.3613) (0.02,45.5078) (0.05,44.9219) (0.1,46.0938) (0.2,45.459) (0.5,46.9238) (1,31.2012) (1,29.7852) (0.5,45.7031) (0.2,44.9707) (0.1,44.2383) (0.05,43.8965) (0.02,43.5059) (0.01,43.4082) (0.005,43.1641) (0.002,43.457) (0.001,42.9688)} \closedcycle;
  \addplot[draw=none, fill=htRand, fill opacity=0.06] coordinates {(0.001,45.3613) (0.002,45.2637) (0.005,45.0195) (0.01,45.0684) (0.02,45.9473) (0.05,45.3613) (0.1,45.166) (0.2,45.9961) (0.5,45.5566) (1,44.873) (1,43.5547) (0.5,43.1641) (0.2,42.9199) (0.1,43.1152) (0.05,43.1641) (0.02,42.9199) (0.01,42.9688) (0.005,43.1641) (0.002,43.0664) (0.001,43.0664)} \closedcycle;
  \addplot[draw=none, fill=htPreNS5, fill opacity=0.06] coordinates {(0.001,44.9219) (0.002,45.166) (0.005,45.3613) (0.01,45.4102) (0.02,45.6543) (0.05,44.6289) (0.1,45.6055) (0.2,45.8984) (0.5,47.4609) (1,47.2656) (1,45.752) (0.5,46.2891) (0.2,44.1406) (0.1,43.8965) (0.05,43.2129) (0.02,43.9453) (0.01,43.2129) (0.005,43.2617) (0.002,43.3105) (0.001,42.9688)} \closedcycle;
  \addplot[draw=none, fill=htPostNS5, fill opacity=0.06] coordinates {(0.001,45.1172) (0.002,45.3613) (0.005,45.0195) (0.01,45.5078) (0.02,45.8008) (0.05,44.873) (0.1,45.2637) (0.2,45.5078) (0.5,47.0703) (1,49.0234) (1,46.9238) (0.5,45.0195) (0.2,43.5059) (0.1,44.043) (0.05,42.9199) (0.02,43.6523) (0.01,43.3594) (0.005,43.1641) (0.002,43.3594) (0.001,43.0176)} \closedcycle;
  \addplot[draw=none, fill=htMSOMA, fill opacity=0.06] coordinates {(0.001,45.3125) (0.002,45.2637) (0.005,45.1172) (0.01,45.3613) (0.02,45.166) (0.05,45.8008) (0.1,45.5078) (0.2,44.5312) (0.5,45.2148) (1,44.9219) (1,43.2129) (0.5,42.8711) (0.2,42.7246) (0.1,42.7246) (0.05,42.9688) (0.02,43.1641) (0.01,43.3105) (0.005,43.3105) (0.002,43.2129) (0.001,43.0664)} \closedcycle;
  \addplot[draw=none, fill=htFSAM, fill opacity=0.06] coordinates {(0.001,43.3105) (0.002,43.0664) (0.005,43.1641) (0.01,43.1641) (0.02,43.1641) (0.05,42.8711) (0.1,42.7734) (0.2,42.627) (0.5,42.9199) (1,44.043) (1,41.9922) (0.5,40.625) (0.2,40.1367) (0.1,39.5508) (0.05,40.0879) (0.02,40.6738) (0.01,40.625) (0.005,40.1855) (0.002,40.4297) (0.001,39.4531)} \closedcycle;
  \addplot[draw=none, fill=htSAM, fill opacity=0.06] coordinates {(0.001,43.2617) (0.002,42.5781) (0.005,43.1641) (0.01,42.9199) (0.02,42.8223) (0.05,43.0664) (0.1,43.6035) (0.2,42.8223) (0.5,43.3594) (1,43.6035) (1,41.2109) (0.5,40.7715) (0.2,40.1367) (0.1,39.4531) (0.05,40.7715) (0.02,40.332) (0.01,40.7227) (0.005,39.3066) (0.002,40.1367) (0.001,39.9902)} \closedcycle;
  \addplot[draw=none, fill=htMSAM, fill opacity=0.06] coordinates {(0.001,42.4316) (0.002,43.1641) (0.005,43.3105) (0.01,43.1641) (0.02,42.627) (0.05,43.4082) (0.1,42.5781) (0.2,43.5059) (0.5,43.1152) (1,43.4082) (1,39.2578) (0.5,40.5762) (0.2,40.5762) (0.1,40.4297) (0.05,41.1621) (0.02,40.332) (0.01,41.1133) (0.005,40.4297) (0.002,40.3809) (0.001,40.9668)} \closedcycle;
  \addplot[draw=none, fill=htSAMAdam, fill opacity=0.06] coordinates {(0.001,46.3379) (0.002,46.3379) (0.005,46.3379) (0.01,46.4844) (0.02,46.4844) (0.05,46.4844) (0.1,46.0938) (0.2,46.1426) (0.5,45.7031) (1,45.8984) (1,45.3125) (0.5,44.9219) (0.2,45.2148) (0.1,45.3125) (0.05,45.4102) (0.02,45.459) (0.01,45.5078) (0.005,45.459) (0.002,45.2637) (0.001,45.4102)} \closedcycle;
  \addplot[draw=none, fill=htFSAMAdam, fill opacity=0.06] coordinates {(0.001,46.2891) (0.002,46.2891) (0.005,46.3379) (0.01,46.4844) (0.02,46.5332) (0.05,46.5332) (0.1,46.2891) (0.2,46.1426) (0.5,45.8984) (1,46.3379) (1,45.5078) (0.5,44.9219) (0.2,45.0684) (0.1,45.3125) (0.05,45.2148) (0.02,45.5078) (0.01,45.5078) (0.005,45.3125) (0.002,45.6055) (0.001,45.459)} \closedcycle;
  \addplot[color=htMuon, dashed, line width=0.7pt, forget plot] coordinates {(0.001,43.75) (1,43.75)};
  \addplot[color=htAdam, dash dot, line width=0.7pt, forget plot] coordinates {(0.001,45.8008) (1,45.8008)};
  \addplot[color=htSpec, solid, line width=0.8pt] coordinates {(0.001,43.9453) (0.002,44.2383) (0.005,43.6523) (0.01,44.2383) (0.02,44.043) (0.05,44.2383) (0.1,45.2148) (0.2,45.2148) (0.5,46.1914) (1,30.2734)};
  \addplot[color=htRand, dashed, line width=0.8pt] coordinates {(0.001,44.1406) (0.002,44.043) (0.005,43.9453) (0.01,43.8477) (0.02,43.9453) (0.05,44.6289) (0.1,44.2383) (0.2,44.8242) (0.5,44.2383) (1,44.1406)};
  \addplot[color=htMSOMA, loosely dashed, line width=0.8pt] coordinates {(0.001,43.5547) (0.002,43.9453) (0.005,44.2383) (0.01,44.043) (0.02,43.9453) (0.05,44.3359) (0.1,43.5547) (0.2,43.75) (0.5,43.5547) (1,44.043)};
  \addplot[color=htFSAM, solid, line width=0.8pt] coordinates {(0.001,41.6992) (0.002,41.1133) (0.005,41.5039) (0.01,41.4062) (0.02,41.9922) (0.05,41.1133) (0.1,41.9922) (0.2,41.7969) (0.5,41.6016) (1,42.6758)};
  \addplot[color=htSAM, solid, line width=0.8pt] coordinates {(0.001,41.6992) (0.002,41.5039) (0.005,41.6992) (0.01,42.0898) (0.02,41.0156) (0.05,41.6992) (0.1,41.1133) (0.2,41.5039) (0.5,42.0898) (1,41.5039)};
  \addplot[color=htMSAM, solid, line width=0.8pt] coordinates {(0.001,41.6016) (0.002,41.7969) (0.005,41.6016) (0.01,41.8945) (0.02,41.3086) (0.05,42.1875) (0.1,41.6016) (0.2,41.3086) (0.5,41.4062) (1,41.4062)};
  \addplot[color=htSOMA, solid, line width=1.1pt] coordinates {(0.001,44.2383) (0.002,44.2383) (0.005,45.3125) (0.01,44.5312) (0.02,44.8242) (0.05,44.4336) (0.1,44.1406) (0.2,44.8242) (0.5,46.582) (1,45.8984)};
  \addplot[color=htFPSOMA, solid, line width=1.1pt] coordinates {(0.001,44.3359) (0.002,43.75) (0.005,43.9453) (0.01,44.6289) (0.02,44.1406) (0.05,43.9453) (0.1,44.8242) (0.2,44.9219) (0.5,46.0938) (1,43.5547)};
  \addplot[color=htPreNS5, densely dotted, line width=1.1pt] coordinates {(0.001,43.9453) (0.002,43.9453) (0.005,44.5312) (0.01,44.4336) (0.02,44.5312) (0.05,43.8477) (0.1,44.4336) (0.2,45.0195) (0.5,46.7773) (1,46.4844)};
  \addplot[color=htPostNS5, dash dot, line width=1.1pt] coordinates {(0.001,44.1406) (0.002,44.2383) (0.005,43.75) (0.01,43.75) (0.02,44.3359) (0.05,43.75) (0.1,44.4336) (0.2,44.6289) (0.5,46.2891) (1,48.3398)};
  \addplot[color=htSAMAdam, solid, line width=1.1pt] coordinates {(0.001,45.8008) (0.002,45.8008) (0.005,45.8008) (0.01,45.9961) (0.02,45.8984) (0.05,45.9961) (0.1,45.7031) (0.2,45.6055) (0.5,45.3125) (1,45.5078)};
  \addplot[color=htFSAMAdam, densely dashed, line width=1.4pt] coordinates {(0.001,45.7031) (0.002,45.8008) (0.005,45.8008) (0.01,45.8984) (0.02,45.9961) (0.05,45.8984) (0.1,45.8008) (0.2,45.8008) (0.5,45.2148) (1,46.1914)};
\end{axis}
\begin{axis}[hide axis, scale only axis, at={(ax0_r0.north west)}, anchor=south west, yshift=34pt,
    width=0.9300\textwidth, height=1pt, xmin=0, xmax=1, ymin=0, ymax=1,
    legend columns=7, legend cell align=left,
    legend style={at={(0.5,0)}, anchor=south, draw=none, fill=none, font=\scriptsize, inner sep=1pt,
      /tikz/every even column/.append style={column sep=5pt}}]
  \addlegendimage{color=htSOMA, solid, line width=1.1pt}\addlegendentry{\textsc{Soma}}
  \addlegendimage{color=htFPSOMA, solid, line width=1.1pt}\addlegendentry{\textsc{FP{-}Soma}}
  \addlegendimage{color=htMSOMA, loosely dashed, line width=0.8pt}\addlegendentry{\textsc{Msoma}}
  \addlegendimage{color=htPreNS5, densely dotted, line width=1.1pt}\addlegendentry{\textsc{Opspre}}
  \addlegendimage{color=htPostNS5, dash dot, line width=1.1pt}\addlegendentry{\textsc{Opspost}}
  \addlegendimage{color=htSpec, solid, line width=0.8pt}\addlegendentry{\textsc{Ssm}}
  \addlegendimage{color=htRand, dashed, line width=0.8pt}\addlegendentry{\textsc{Rsm}}
  \addlegendimage{color=htSAM, solid, line width=0.8pt}\addlegendentry{\textsc{Sam}}
  \addlegendimage{color=htFSAM, solid, line width=0.8pt}\addlegendentry{\textsc{F{-}Sam}}
  \addlegendimage{color=htMSAM, solid, line width=0.8pt}\addlegendentry{\textsc{Msam}}
  \addlegendimage{color=htSAMAdam, solid, line width=1.1pt}\addlegendentry{\textsc{Saw}}
  \addlegendimage{color=htFSAMAdam, densely dashed, line width=1.4pt}\addlegendentry{\textsc{Fsaw}}
  \addlegendimage{color=htMuon, dashed, line width=0.7pt}\addlegendentry{\textsc{Muon}}
  \addlegendimage{color=htAdam, dash dot, line width=0.7pt}\addlegendentry{\textsc{AdamW}}
\end{axis}
\path (ax0_r0.south west) -- (ax3_r0.south east) coordinate[midway] (xlab);
\node[font=\small, anchor=north] at ([yshift=-14pt]xlab) {perturbation radius $\rho$};
\end{tikzpicture}

%% file: tikz_for_heavytailed/appendex_isotropic/train_loss_vs_rho.tikz
%% Auto-generated by make_heavytailed_figures.py  --  do not edit by hand, re-run the script.
%% Values: line = median over seeds, band = interquartile range (25th-75th percentile).
%% Preamble needed:  \usepackage{pgfplots}  \pgfplotsset{compat=1.16}   (nothing else)
%% Width follows \textwidth, fonts follow the document (so the legend can never overflow).

\definecolor{htSOMA}{HTML}{0B5D1E}
\definecolor{htFPSOMA}{HTML}{33CC33}
\definecolor{htSpec}{HTML}{00B3A4}
\definecolor{htRand}{HTML}{808000}
\definecolor{htFSAM}{HTML}{2F80D8}
\definecolor{htSAM}{HTML}{0A3A8A}
\definecolor{htPreNS5}{HTML}{9ACD32}
\definecolor{htPostNS5}{HTML}{004D40}
\definecolor{htSAMAdam}{HTML}{8E0F1C}
\definecolor{htFSAMAdam}{HTML}{F4623A}
\definecolor{htMSAM}{HTML}{7DB9F2}
\definecolor{htMSOMA}{HTML}{8FD694}
\definecolor{htMuon}{HTML}{111111}
\definecolor{htAdam}{HTML}{8A8A8A}
\pgfplotsset{
  ht panel/.style={
    scale only axis, width=0.1920\textwidth, height=1.35in,
    xmin=0.00075, xmax=1.35,
    xtick={0.001,0.01,0.1,1}, xticklabels={$10^{-3}$,$10^{-2}$,$10^{-1}$,$10^{0}$},
    minor xtick={0.002,0.003,0.004,0.005,0.006,0.007,0.008,0.009,0.02,0.03,0.04,0.05,0.06,0.07,0.08,0.09,0.2,0.3,0.4,0.5,0.6,0.7,0.8,0.9},
    xmajorgrids=true, ymajorgrids=true,
    grid style={line width=0.25pt, draw=black!10},
    axis line style={line width=0.4pt, black!75},
    tick style={line width=0.4pt, black!75}, major tick length=3pt, minor tick length=1.5pt,
    tick label style={font=\footnotesize, /pgf/number format/assume math mode=true},
    label style={font=\small}, title style={font=\small, yshift=-3pt},
    ylabel style={yshift=-1pt, align=center},
    every tick label/.append style={inner sep=1pt},
    clip=true,
  }}

\begin{tikzpicture}
\begin{axis}[name=ax0_r0,
    at={(0.0820\textwidth,0in)},
    anchor=south west,
    ht panel, xmode=log, ymode=log,
    ymin=1.23222, ymax=28.5046,
    ytick={2,5,10,20},
    yticklabels={2,5,10,20},
    title={$\alpha=1.1$},
    ylabel={training loss}]
  \addplot[draw=none, fill=htSOMA, fill opacity=0.06] coordinates {(0.001,1.66818) (0.002,1.66817) (0.005,1.66768) (0.01,1.66775) (0.02,1.66562) (0.05,1.6621) (0.1,1.65663) (0.2,1.64353) (0.5,1.62342) (1,1.66006) (1,1.56744) (0.5,1.52154) (0.2,1.53304) (0.1,1.54038) (0.05,1.54709) (0.02,1.55014) (0.01,1.54933) (0.005,1.54995) (0.002,1.55094) (0.001,1.55099)} \closedcycle;
  \addplot[draw=none, fill=htFPSOMA, fill opacity=0.06] coordinates {(0.001,1.66804) (0.002,1.66761) (0.005,1.66697) (0.01,1.66638) (0.02,1.66421) (0.05,1.66148) (0.1,1.65238) (0.2,1.6439) (0.5,1.64326) (1,1.68983) (1,1.6219) (0.5,1.54786) (0.2,1.53427) (0.1,1.53939) (0.05,1.54668) (0.02,1.54996) (0.01,1.5506) (0.005,1.55015) (0.002,1.55086) (0.001,1.55087)} \closedcycle;
  \addplot[draw=none, fill=htSpec, fill opacity=0.06] coordinates {(0.001,1.668) (0.002,1.66794) (0.005,1.66791) (0.01,1.66685) (0.02,1.66643) (0.05,1.66575) (0.1,1.66189) (0.2,1.66643) (0.5,1.7137) (1,1.82355) (1,1.76501) (0.5,1.6241) (0.2,1.56024) (0.1,1.55078) (0.05,1.55178) (0.02,1.55176) (0.01,1.55029) (0.005,1.5505) (0.002,1.55112) (0.001,1.5507)} \closedcycle;
  \addplot[draw=none, fill=htRand, fill opacity=0.06] coordinates {(0.001,1.66827) (0.002,1.66806) (0.005,1.66833) (0.01,1.6685) (0.02,1.66858) (0.05,1.66897) (0.1,1.66892) (0.2,1.66966) (0.5,1.67308) (1,1.67235) (1,1.55405) (0.5,1.5518) (0.2,1.55104) (0.1,1.55102) (0.05,1.55105) (0.02,1.55096) (0.01,1.55106) (0.005,1.55118) (0.002,1.55087) (0.001,1.55082)} \closedcycle;
  \addplot[draw=none, fill=htPreNS5, fill opacity=0.06] coordinates {(0.001,1.66815) (0.002,1.66833) (0.005,1.66792) (0.01,1.66845) (0.02,1.66735) (0.05,1.66541) (0.1,1.66192) (0.2,1.65519) (0.5,1.64923) (1,1.71209) (1,1.63449) (0.5,1.55524) (0.2,1.54648) (0.1,1.54572) (0.05,1.55008) (0.02,1.55091) (0.01,1.54992) (0.005,1.55046) (0.002,1.55081) (0.001,1.5508)} \closedcycle;
  \addplot[draw=none, fill=htPostNS5, fill opacity=0.06] coordinates {(0.001,1.66825) (0.002,1.66785) (0.005,1.66801) (0.01,1.66788) (0.02,1.66781) (0.05,1.6659) (0.1,1.66342) (0.2,1.6579) (0.5,1.64956) (1,1.66894) (1,1.5756) (0.5,1.54564) (0.2,1.54537) (0.1,1.54753) (0.05,1.55106) (0.02,1.55123) (0.01,1.54964) (0.005,1.55099) (0.002,1.5511) (0.001,1.55088)} \closedcycle;
  \addplot[draw=none, fill=htMSOMA, fill opacity=0.06] coordinates {(0.001,1.66786) (0.002,1.66781) (0.005,1.66783) (0.01,1.66781) (0.02,1.66749) (0.05,1.66664) (0.1,1.66447) (0.2,1.66224) (0.5,1.65298) (1,1.63958) (1,1.52687) (0.5,1.53998) (0.2,1.54646) (0.1,1.5491) (0.05,1.54989) (0.02,1.55077) (0.01,1.55098) (0.005,1.55097) (0.002,1.55097) (0.001,1.55094)} \closedcycle;
  \addplot[draw=none, fill=htFSAM, fill opacity=0.06] coordinates {(0.001,18.6295) (0.002,19.7605) (0.005,22.2438) (0.01,19.6952) (0.02,19.8652) (0.05,17.7892) (0.1,19.7622) (0.2,19.6929) (0.5,19.0011) (1,17.8287) (1,9.14765) (0.5,10.087) (0.2,8.92425) (0.1,11.6739) (0.05,9.67376) (0.02,8.82944) (0.01,10.329) (0.005,11.1732) (0.002,9.51259) (0.001,11.3155)} \closedcycle;
  \addplot[draw=none, fill=htSAM, fill opacity=0.06] coordinates {(0.001,20.8041) (0.002,19.3456) (0.005,20.0027) (0.01,21.3638) (0.02,19.0858) (0.05,21.1338) (0.1,18.6146) (0.2,18.6097) (0.5,20.0753) (1,18.1416) (1,10.1676) (0.5,8.69928) (0.2,8.91266) (0.1,9.44054) (0.05,9.77379) (0.02,9.60741) (0.01,9.10521) (0.005,10.1895) (0.002,9.64792) (0.001,10.4062)} \closedcycle;
  \addplot[draw=none, fill=htMSAM, fill opacity=0.06] coordinates {(0.001,27.0242) (0.002,26.9842) (0.005,24.1743) (0.01,25.4059) (0.02,23.3482) (0.05,22.356) (0.1,24.4412) (0.2,23.5266) (0.5,29.2687) (1,29.8619) (1,13.1643) (0.5,12.4357) (0.2,13.9345) (0.1,11.5845) (0.05,14.4757) (0.02,15.0476) (0.01,16.6656) (0.005,12.895) (0.002,12.2924) (0.001,12.8541)} \closedcycle;
  \addplot[draw=none, fill=htSAMAdam, fill opacity=0.06] coordinates {(0.001,1.72734) (0.002,1.72732) (0.005,1.72737) (0.01,1.72729) (0.02,1.72732) (0.05,1.72804) (0.1,1.72848) (0.2,1.72917) (0.5,1.73166) (1,1.73711) (1,1.71383) (0.5,1.70591) (0.2,1.70283) (0.1,1.70283) (0.05,1.70259) (0.02,1.70258) (0.01,1.70252) (0.005,1.70243) (0.002,1.70239) (0.001,1.70235)} \closedcycle;
  \addplot[draw=none, fill=htFSAMAdam, fill opacity=0.06] coordinates {(0.001,1.72736) (0.002,1.72735) (0.005,1.72727) (0.01,1.72715) (0.02,1.72733) (0.05,1.72776) (0.1,1.72783) (0.2,1.72779) (0.5,1.72865) (1,1.73122) (1,1.70883) (0.5,1.7043) (0.2,1.70274) (0.1,1.70231) (0.05,1.70255) (0.02,1.70254) (0.01,1.70239) (0.005,1.70246) (0.002,1.70234) (0.001,1.70239)} \closedcycle;
  \addplot[color=htMuon, dashed, line width=0.7pt, forget plot] coordinates {(0.001,1.58002) (1,1.58002)};
  \addplot[color=htAdam, dash dot, line width=0.7pt, forget plot] coordinates {(0.001,1.71452) (1,1.71452)};
  \addplot[color=htSpec, solid, line width=0.8pt] coordinates {(0.001,1.58043) (0.002,1.58081) (0.005,1.58057) (0.01,1.58062) (0.02,1.58042) (0.05,1.58053) (0.1,1.58279) (0.2,1.59675) (0.5,1.66849) (1,1.80761)};
  \addplot[color=htRand, dashed, line width=0.8pt] coordinates {(0.001,1.57997) (0.002,1.58) (0.005,1.57972) (0.01,1.57985) (0.02,1.58006) (0.05,1.58008) (0.1,1.58054) (0.2,1.58) (0.5,1.58259) (1,1.58415)};
  \addplot[color=htMSOMA, loosely dashed, line width=0.8pt] coordinates {(0.001,1.58017) (0.002,1.58022) (0.005,1.57986) (0.01,1.57983) (0.02,1.57959) (0.05,1.57929) (0.1,1.57816) (0.2,1.57563) (0.5,1.56823) (1,1.55784)};
  \addplot[color=htFSAM, solid, line width=0.8pt] coordinates {(0.001,14.1744) (0.002,13.8223) (0.005,15.0271) (0.01,15.6181) (0.02,13.7824) (0.05,12.0159) (0.1,15.1682) (0.2,14.3503) (0.5,13.1865) (1,11.6938)};
  \addplot[color=htSAM, solid, line width=0.8pt] coordinates {(0.001,14.0911) (0.002,14.5826) (0.005,14.8795) (0.01,13.7945) (0.02,13.8611) (0.05,14.1047) (0.1,12.6986) (0.2,10.6911) (0.5,16.3482) (1,13.9477)};
  \addplot[color=htMSAM, solid, line width=0.8pt] coordinates {(0.001,21.0787) (0.002,20.1524) (0.005,17.9694) (0.01,20.8752) (0.02,19.8544) (0.05,19.9145) (0.1,20.0056) (0.2,20.8349) (0.5,18.4165) (1,22.5737)};
  \addplot[color=htSOMA, solid, line width=1.1pt] coordinates {(0.001,1.57992) (0.002,1.5801) (0.005,1.57961) (0.01,1.57967) (0.02,1.5783) (0.05,1.57571) (0.1,1.57208) (0.2,1.56128) (0.5,1.55596) (1,1.61984)};
  \addplot[color=htFPSOMA, solid, line width=1.1pt] coordinates {(0.001,1.58008) (0.002,1.57995) (0.005,1.57977) (0.01,1.57951) (0.02,1.57792) (0.05,1.57508) (0.1,1.56966) (0.2,1.56667) (0.5,1.57814) (1,1.65519)};
  \addplot[color=htPreNS5, densely dotted, line width=1.1pt] coordinates {(0.001,1.57986) (0.002,1.58039) (0.005,1.58027) (0.01,1.58022) (0.02,1.57994) (0.05,1.57925) (0.1,1.57935) (0.2,1.57838) (0.5,1.59129) (1,1.67732)};
  \addplot[color=htPostNS5, dash dot, line width=1.1pt] coordinates {(0.001,1.58006) (0.002,1.58043) (0.005,1.57979) (0.01,1.57997) (0.02,1.57951) (0.05,1.57868) (0.1,1.57882) (0.2,1.57629) (0.5,1.5733) (1,1.61488)};
  \addplot[color=htSAMAdam, solid, line width=1.1pt] coordinates {(0.001,1.71452) (0.002,1.71447) (0.005,1.71462) (0.01,1.7146) (0.02,1.71462) (0.05,1.71512) (0.1,1.71502) (0.2,1.71591) (0.5,1.71806) (1,1.72449)};
  \addplot[color=htFSAMAdam, densely dashed, line width=1.4pt] coordinates {(0.001,1.71452) (0.002,1.71452) (0.005,1.71457) (0.01,1.71451) (0.02,1.71469) (0.05,1.7149) (0.1,1.71469) (0.2,1.71461) (0.5,1.71587) (1,1.7213)};
\end{axis}
\begin{axis}[name=ax1_r0,
    at={(0.3235\textwidth,0in)},
    anchor=south west,
    ht panel, xmode=log, ymode=log,
    ymin=0.00534433, ymax=2.29316,
    ytick={0.01,0.1,1},
    yticklabels={0.01,0.1,1},
    title={$\alpha=1.6$}]
  \addplot[draw=none, fill=htSOMA, fill opacity=0.06] coordinates {(0.001,0.512234) (0.002,0.511795) (0.005,0.511574) (0.01,0.509849) (0.02,0.511285) (0.05,0.512011) (0.1,0.515998) (0.2,0.527) (0.5,0.612851) (1,1.00664) (1,0.981674) (0.5,0.592958) (0.2,0.488911) (0.1,0.478149) (0.05,0.472605) (0.02,0.471384) (0.01,0.469801) (0.005,0.469048) (0.002,0.468039) (0.001,0.469769)} \closedcycle;
  \addplot[draw=none, fill=htFPSOMA, fill opacity=0.06] coordinates {(0.001,0.511949) (0.002,0.512167) (0.005,0.510992) (0.01,0.512429) (0.02,0.515436) (0.05,0.520949) (0.1,0.528706) (0.2,0.565327) (0.5,0.758697) (1,1.28362) (1,1.2485) (0.5,0.714529) (0.2,0.519033) (0.1,0.485128) (0.05,0.47531) (0.02,0.473927) (0.01,0.471199) (0.005,0.469299) (0.002,0.467807) (0.001,0.467899)} \closedcycle;
  \addplot[draw=none, fill=htSpec, fill opacity=0.06] coordinates {(0.001,0.511118) (0.002,0.51181) (0.005,0.509758) (0.01,0.511754) (0.02,0.517673) (0.05,0.526682) (0.1,0.544432) (0.2,0.59832) (0.5,0.858644) (1,1.52607) (1,1.49378) (0.5,0.815108) (0.2,0.553024) (0.1,0.499311) (0.05,0.481805) (0.02,0.475303) (0.01,0.472918) (0.005,0.469514) (0.002,0.468179) (0.001,0.468578)} \closedcycle;
  \addplot[draw=none, fill=htRand, fill opacity=0.06] coordinates {(0.001,0.512552) (0.002,0.512313) (0.005,0.511536) (0.01,0.510953) (0.02,0.511299) (0.05,0.511201) (0.1,0.510268) (0.2,0.511046) (0.5,0.515679) (1,0.517885) (1,0.471899) (0.5,0.468381) (0.2,0.470007) (0.1,0.467009) (0.05,0.467328) (0.02,0.468893) (0.01,0.467533) (0.005,0.467246) (0.002,0.468208) (0.001,0.467956)} \closedcycle;
  \addplot[draw=none, fill=htPreNS5, fill opacity=0.06] coordinates {(0.001,0.51015) (0.002,0.510238) (0.005,0.509242) (0.01,0.509826) (0.02,0.511817) (0.05,0.513847) (0.1,0.518367) (0.2,0.526828) (0.5,0.608252) (1,0.956816) (1,0.908361) (0.5,0.578975) (0.2,0.485052) (0.1,0.475662) (0.05,0.469666) (0.02,0.46934) (0.01,0.470327) (0.005,0.470105) (0.002,0.470326) (0.001,0.467327)} \closedcycle;
  \addplot[draw=none, fill=htPostNS5, fill opacity=0.06] coordinates {(0.001,0.510985) (0.002,0.511412) (0.005,0.511378) (0.01,0.510957) (0.02,0.513471) (0.05,0.512824) (0.1,0.513845) (0.2,0.516098) (0.5,0.570504) (1,0.788822) (1,0.761659) (0.5,0.532074) (0.2,0.475867) (0.1,0.472353) (0.05,0.469714) (0.02,0.470101) (0.01,0.469365) (0.005,0.469254) (0.002,0.468116) (0.001,0.469717)} \closedcycle;
  \addplot[draw=none, fill=htMSOMA, fill opacity=0.06] coordinates {(0.001,0.513017) (0.002,0.511516) (0.005,0.512208) (0.01,0.511692) (0.02,0.50979) (0.05,0.509553) (0.1,0.510213) (0.2,0.50565) (0.5,0.494713) (1,0.479753) (1,0.425035) (0.5,0.440426) (0.2,0.458018) (0.1,0.463904) (0.05,0.466246) (0.02,0.466476) (0.01,0.468747) (0.005,0.468636) (0.002,0.468245) (0.001,0.468329)} \closedcycle;
  \addplot[draw=none, fill=htFSAM, fill opacity=0.06] coordinates {(0.001,0.0263637) (0.002,0.0141983) (0.005,0.0332845) (0.01,0.0348631) (0.02,0.0241282) (0.05,0.0273678) (0.1,0.033975) (0.2,0.0327093) (0.5,0.0210822) (1,0.0356428) (1,0.018296) (0.5,0.00915571) (0.2,0.00525296) (0.1,0.00707429) (0.05,0.00475598) (0.02,0.00243049) (0.01,0.00309176) (0.005,0.00802115) (0.002,0.00247) (0.001,0.00323065)} \closedcycle;
  \addplot[draw=none, fill=htSAM, fill opacity=0.06] coordinates {(0.001,0.0234151) (0.002,0.0275627) (0.005,0.0125338) (0.01,0.0276115) (0.02,0.0205578) (0.05,0.0194221) (0.1,0.035325) (0.2,0.0204864) (0.5,0.0303859) (1,0.0627348) (1,0.0170781) (0.5,0.0167536) (0.2,0.00244269) (0.1,0.0127609) (0.05,0.00336118) (0.02,0.00369947) (0.01,0.00706287) (0.005,0.00469177) (0.002,0.00737315) (0.001,0.00641784)} \closedcycle;
  \addplot[draw=none, fill=htMSAM, fill opacity=0.06] coordinates {(0.001,0.0205769) (0.002,0.0239433) (0.005,0.0260176) (0.01,0.0315598) (0.02,0.025933) (0.05,0.0161784) (0.1,0.042706) (0.2,0.0418296) (0.5,0.0240601) (1,0.0306721) (1,0.0150456) (0.5,0.00799723) (0.2,0.00708152) (0.1,0.0102923) (0.05,0.00776148) (0.02,0.00461664) (0.01,0.00875817) (0.005,0.0159392) (0.002,0.00345436) (0.001,0.012252)} \closedcycle;
  \addplot[draw=none, fill=htSAMAdam, fill opacity=0.06] coordinates {(0.001,1.18151) (0.002,1.18146) (0.005,1.18136) (0.01,1.18154) (0.02,1.18174) (0.05,1.18231) (0.1,1.18426) (0.2,1.18698) (0.5,1.19747) (1,1.24102) (1,1.20608) (0.5,1.18209) (0.2,1.16777) (0.1,1.16367) (0.05,1.16285) (0.02,1.16097) (0.01,1.16083) (0.005,1.16066) (0.002,1.16063) (0.001,1.16062)} \closedcycle;
  \addplot[draw=none, fill=htFSAMAdam, fill opacity=0.06] coordinates {(0.001,1.1815) (0.002,1.18145) (0.005,1.18126) (0.01,1.18133) (0.02,1.18167) (0.05,1.18265) (0.1,1.1837) (0.2,1.18507) (0.5,1.19179) (1,1.21961) (1,1.19539) (0.5,1.17647) (0.2,1.1653) (0.1,1.16291) (0.05,1.1619) (0.02,1.16094) (0.01,1.16061) (0.005,1.16064) (0.002,1.16065) (0.001,1.16061)} \closedcycle;
  \addplot[color=htMuon, dashed, line width=0.7pt, forget plot] coordinates {(0.001,0.481691) (1,0.481691)};
  \addplot[color=htAdam, dash dot, line width=0.7pt, forget plot] coordinates {(0.001,1.17217) (1,1.17217)};
  \addplot[color=htSpec, solid, line width=0.8pt] coordinates {(0.001,0.482089) (0.002,0.483082) (0.005,0.48282) (0.01,0.485479) (0.02,0.487399) (0.05,0.493697) (0.1,0.519246) (0.2,0.570916) (0.5,0.852351) (1,1.5179)};
  \addplot[color=htRand, dashed, line width=0.8pt] coordinates {(0.001,0.482008) (0.002,0.482596) (0.005,0.48203) (0.01,0.482423) (0.02,0.482236) (0.05,0.481534) (0.1,0.483139) (0.2,0.485664) (0.5,0.483182) (1,0.485931)};
  \addplot[color=htMSOMA, loosely dashed, line width=0.8pt] coordinates {(0.001,0.481739) (0.002,0.482099) (0.005,0.481132) (0.01,0.479195) (0.02,0.480507) (0.05,0.478013) (0.1,0.477486) (0.2,0.472945) (0.5,0.461217) (1,0.445841)};
  \addplot[color=htFSAM, solid, line width=0.8pt] coordinates {(0.001,0.00839691) (0.002,0.00863778) (0.005,0.017654) (0.01,0.0144423) (0.02,0.0146255) (0.05,0.0135943) (0.1,0.0258487) (0.2,0.0114496) (0.5,0.0140091) (1,0.0234814)};
  \addplot[color=htSAM, solid, line width=0.8pt] coordinates {(0.001,0.00962116) (0.002,0.0129291) (0.005,0.00927378) (0.01,0.015674) (0.02,0.00850781) (0.05,0.0102833) (0.1,0.0226077) (0.2,0.00807392) (0.5,0.0214171) (1,0.045277)};
  \addplot[color=htMSAM, solid, line width=0.8pt] coordinates {(0.001,0.0150743) (0.002,0.0179781) (0.005,0.021057) (0.01,0.0166914) (0.02,0.0139644) (0.05,0.00956259) (0.1,0.024143) (0.2,0.0155195) (0.5,0.0144715) (1,0.0246303)};
  \addplot[color=htSOMA, solid, line width=1.1pt] coordinates {(0.001,0.482837) (0.002,0.482758) (0.005,0.484486) (0.01,0.484442) (0.02,0.483508) (0.05,0.485534) (0.1,0.489961) (0.2,0.501103) (0.5,0.603501) (1,0.998586)};
  \addplot[color=htFPSOMA, solid, line width=1.1pt] coordinates {(0.001,0.48142) (0.002,0.482763) (0.005,0.481554) (0.01,0.482555) (0.02,0.484123) (0.05,0.486966) (0.1,0.503143) (0.2,0.538441) (0.5,0.754552) (1,1.26895)};
  \addplot[color=htPreNS5, densely dotted, line width=1.1pt] coordinates {(0.001,0.481813) (0.002,0.482198) (0.005,0.48369) (0.01,0.482414) (0.02,0.481282) (0.05,0.482544) (0.1,0.4856) (0.2,0.495966) (0.5,0.581704) (1,0.937146)};
  \addplot[color=htPostNS5, dash dot, line width=1.1pt] coordinates {(0.001,0.481851) (0.002,0.481431) (0.005,0.483171) (0.01,0.482997) (0.02,0.482466) (0.05,0.482173) (0.1,0.483297) (0.2,0.491277) (0.5,0.542725) (1,0.777903)};
  \addplot[color=htSAMAdam, solid, line width=1.1pt] coordinates {(0.001,1.17242) (0.002,1.1724) (0.005,1.17283) (0.01,1.17311) (0.02,1.17363) (0.05,1.17408) (0.1,1.17532) (0.2,1.17857) (0.5,1.19498) (1,1.21886)};
  \addplot[color=htFSAMAdam, densely dashed, line width=1.4pt] coordinates {(0.001,1.17252) (0.002,1.17244) (0.005,1.1728) (0.01,1.17303) (0.02,1.17343) (0.05,1.1732) (0.1,1.17419) (0.2,1.176) (0.5,1.18799) (1,1.20524)};
\end{axis}
\begin{axis}[name=ax2_r0,
    at={(0.5650\textwidth,0in)},
    anchor=south west,
    ht panel, xmode=log, ymode=log,
    ymin=0.00535546, ymax=2.55782,
    ytick={0.01,0.1,1},
    yticklabels={0.01,0.1,1},
    title={$\alpha=2.0$}]
  \addplot[draw=none, fill=htSOMA, fill opacity=0.06] coordinates {(0.001,0.240982) (0.002,0.241685) (0.005,0.240036) (0.01,0.240623) (0.02,0.240598) (0.05,0.241758) (0.1,0.244212) (0.2,0.257314) (0.5,0.404796) (1,0.865576) (1,0.839769) (0.5,0.372981) (0.2,0.244155) (0.1,0.234731) (0.05,0.234023) (0.02,0.230613) (0.01,0.232689) (0.005,0.23185) (0.002,0.231839) (0.001,0.232752)} \closedcycle;
  \addplot[draw=none, fill=htFPSOMA, fill opacity=0.06] coordinates {(0.001,0.240254) (0.002,0.241281) (0.005,0.24118) (0.01,0.240802) (0.02,0.242692) (0.05,0.245454) (0.1,0.257788) (0.2,0.297206) (0.5,0.564862) (1,1.34459) (1,1.31182) (0.5,0.532019) (0.2,0.284809) (0.1,0.244456) (0.05,0.234904) (0.02,0.232516) (0.01,0.233494) (0.005,0.234078) (0.002,0.232568) (0.001,0.232138)} \closedcycle;
  \addplot[draw=none, fill=htSpec, fill opacity=0.06] coordinates {(0.001,0.239925) (0.002,0.241024) (0.005,0.242813) (0.01,0.243166) (0.02,0.246377) (0.05,0.252227) (0.1,0.266438) (0.2,0.325237) (0.5,0.674679) (1,1.69276) (1,1.66343) (0.5,0.642444) (0.2,0.313367) (0.1,0.256968) (0.05,0.242437) (0.02,0.23698) (0.01,0.234943) (0.005,0.233167) (0.002,0.233821) (0.001,0.231723)} \closedcycle;
  \addplot[draw=none, fill=htRand, fill opacity=0.06] coordinates {(0.001,0.239358) (0.002,0.240341) (0.005,0.240602) (0.01,0.24226) (0.02,0.240807) (0.05,0.240292) (0.1,0.240016) (0.2,0.240733) (0.5,0.236646) (1,0.241262) (1,0.229341) (0.5,0.228243) (0.2,0.232093) (0.1,0.233822) (0.05,0.233092) (0.02,0.232909) (0.01,0.23176) (0.005,0.231618) (0.002,0.232156) (0.001,0.231441)} \closedcycle;
  \addplot[draw=none, fill=htPreNS5, fill opacity=0.06] coordinates {(0.001,0.241987) (0.002,0.240633) (0.005,0.24134) (0.01,0.239886) (0.02,0.24019) (0.05,0.239511) (0.1,0.239728) (0.2,0.249986) (0.5,0.364008) (1,0.801174) (1,0.775165) (0.5,0.342123) (0.2,0.236916) (0.1,0.230009) (0.05,0.231064) (0.02,0.231097) (0.01,0.231047) (0.005,0.231395) (0.002,0.233188) (0.001,0.232478)} \closedcycle;
  \addplot[draw=none, fill=htPostNS5, fill opacity=0.06] coordinates {(0.001,0.241213) (0.002,0.240474) (0.005,0.241348) (0.01,0.239362) (0.02,0.238323) (0.05,0.24054) (0.1,0.238174) (0.2,0.239627) (0.5,0.291709) (1,0.544607) (1,0.522677) (0.5,0.279103) (0.2,0.230894) (0.1,0.229385) (0.05,0.229558) (0.02,0.231892) (0.01,0.232161) (0.005,0.233645) (0.002,0.231526) (0.001,0.233266)} \closedcycle;
  \addplot[draw=none, fill=htMSOMA, fill opacity=0.06] coordinates {(0.001,0.240157) (0.002,0.240098) (0.005,0.240575) (0.01,0.241368) (0.02,0.239786) (0.05,0.239135) (0.1,0.236415) (0.2,0.231187) (0.5,0.222306) (1,0.208712) (1,0.195318) (0.5,0.210387) (0.2,0.223259) (0.1,0.226652) (0.05,0.230017) (0.02,0.230706) (0.01,0.231831) (0.005,0.232398) (0.002,0.232696) (0.001,0.231782)} \closedcycle;
  \addplot[draw=none, fill=htFSAM, fill opacity=0.06] coordinates {(0.001,0.0106229) (0.002,0.0145217) (0.005,0.0112089) (0.01,0.0111706) (0.02,0.0115103) (0.05,0.00957768) (0.1,0.0124218) (0.2,0.0107345) (0.5,0.0118917) (1,0.0251472) (1,0.0166237) (0.5,0.0084587) (0.2,0.00688647) (0.1,0.00657964) (0.05,0.00716162) (0.02,0.00731434) (0.01,0.00719583) (0.005,0.00755245) (0.002,0.00737819) (0.001,0.00779629)} \closedcycle;
  \addplot[draw=none, fill=htSAM, fill opacity=0.06] coordinates {(0.001,0.0156956) (0.002,0.0116813) (0.005,0.00981808) (0.01,0.0098356) (0.02,0.0113461) (0.05,0.0124503) (0.1,0.0107509) (0.2,0.0112835) (0.5,0.0127391) (1,0.0284082) (1,0.0165801) (0.5,0.00818313) (0.2,0.00769195) (0.1,0.0066486) (0.05,0.00694561) (0.02,0.00674378) (0.01,0.00702374) (0.005,0.00693186) (0.002,0.0076194) (0.001,0.00844026)} \closedcycle;
  \addplot[draw=none, fill=htMSAM, fill opacity=0.06] coordinates {(0.001,0.0179216) (0.002,0.0166366) (0.005,0.0159823) (0.01,0.016229) (0.02,0.0155448) (0.05,0.016745) (0.1,0.0171068) (0.2,0.0193446) (0.5,0.0230317) (1,0.0190052) (1,0.00976886) (0.5,0.0123684) (0.2,0.0118591) (0.1,0.0113649) (0.05,0.0104705) (0.02,0.0100073) (0.01,0.0113629) (0.005,0.0110267) (0.002,0.0113668) (0.001,0.0110447)} \closedcycle;
  \addplot[draw=none, fill=htSAMAdam, fill opacity=0.06] coordinates {(0.001,0.983136) (0.002,0.982971) (0.005,0.983375) (0.01,0.983743) (0.02,0.985077) (0.05,0.986271) (0.1,0.990966) (0.2,0.99801) (0.5,1.02639) (1,1.10847) (1,1.04724) (0.5,1.00738) (0.2,0.983215) (0.1,0.976672) (0.05,0.97265) (0.02,0.970867) (0.01,0.970682) (0.005,0.970452) (0.002,0.970184) (0.001,0.970172)} \closedcycle;
  \addplot[draw=none, fill=htFSAMAdam, fill opacity=0.06] coordinates {(0.001,0.983138) (0.002,0.983016) (0.005,0.983383) (0.01,0.983711) (0.02,0.984599) (0.05,0.985746) (0.1,0.989931) (0.2,0.994296) (0.5,1.01229) (1,1.08071) (1,1.02934) (0.5,0.999225) (0.2,0.980061) (0.1,0.974964) (0.05,0.972048) (0.02,0.97053) (0.01,0.970355) (0.005,0.970341) (0.002,0.970158) (0.001,0.970268)} \closedcycle;
  \addplot[color=htMuon, dashed, line width=0.7pt, forget plot] coordinates {(0.001,0.232812) (1,0.232812)};
  \addplot[color=htAdam, dash dot, line width=0.7pt, forget plot] coordinates {(0.001,0.973999) (1,0.973999)};
  \addplot[color=htSpec, solid, line width=0.8pt] coordinates {(0.001,0.23581) (0.002,0.237146) (0.005,0.236113) (0.01,0.236546) (0.02,0.239111) (0.05,0.247341) (0.1,0.260404) (0.2,0.316284) (0.5,0.651849) (1,1.68199)};
  \addplot[color=htRand, dashed, line width=0.8pt] coordinates {(0.001,0.232906) (0.002,0.233384) (0.005,0.233441) (0.01,0.233473) (0.02,0.237323) (0.05,0.235819) (0.1,0.235835) (0.2,0.233273) (0.5,0.232788) (1,0.235966)};
  \addplot[color=htMSOMA, loosely dashed, line width=0.8pt] coordinates {(0.001,0.233262) (0.002,0.23333) (0.005,0.234245) (0.01,0.235339) (0.02,0.235052) (0.05,0.232761) (0.1,0.23101) (0.2,0.224714) (0.5,0.217627) (1,0.202537)};
  \addplot[color=htFSAM, solid, line width=0.8pt] coordinates {(0.001,0.00985623) (0.002,0.00951544) (0.005,0.00898232) (0.01,0.00903769) (0.02,0.00885697) (0.05,0.00814414) (0.1,0.00867081) (0.2,0.00897108) (0.5,0.0105665) (1,0.0183757)};
  \addplot[color=htSAM, solid, line width=0.8pt] coordinates {(0.001,0.00989313) (0.002,0.00860804) (0.005,0.00878095) (0.01,0.00872513) (0.02,0.00916434) (0.05,0.00892066) (0.1,0.00873828) (0.2,0.0087599) (0.5,0.0101553) (1,0.0232806)};
  \addplot[color=htMSAM, solid, line width=0.8pt] coordinates {(0.001,0.0125688) (0.002,0.012276) (0.005,0.0114632) (0.01,0.0124032) (0.02,0.0113721) (0.05,0.0118776) (0.1,0.0129194) (0.2,0.0124876) (0.5,0.0133165) (1,0.0127657)};
  \addplot[color=htSOMA, solid, line width=1.1pt] coordinates {(0.001,0.233554) (0.002,0.236337) (0.005,0.235587) (0.01,0.235202) (0.02,0.235157) (0.05,0.235246) (0.1,0.238281) (0.2,0.249801) (0.5,0.381174) (1,0.857423)};
  \addplot[color=htFPSOMA, solid, line width=1.1pt] coordinates {(0.001,0.236478) (0.002,0.237064) (0.005,0.236383) (0.01,0.235425) (0.02,0.233627) (0.05,0.239077) (0.1,0.250814) (0.2,0.287162) (0.5,0.539283) (1,1.32877)};
  \addplot[color=htPreNS5, densely dotted, line width=1.1pt] coordinates {(0.001,0.237153) (0.002,0.234926) (0.005,0.233482) (0.01,0.235439) (0.02,0.234022) (0.05,0.233375) (0.1,0.234588) (0.2,0.241574) (0.5,0.347795) (1,0.782736)};
  \addplot[color=htPostNS5, dash dot, line width=1.1pt] coordinates {(0.001,0.237098) (0.002,0.23514) (0.005,0.235894) (0.01,0.235236) (0.02,0.23395) (0.05,0.232743) (0.1,0.232574) (0.2,0.234413) (0.5,0.279997) (1,0.529857)};
  \addplot[color=htSAMAdam, solid, line width=1.1pt] coordinates {(0.001,0.974103) (0.002,0.974041) (0.005,0.974049) (0.01,0.974524) (0.02,0.975014) (0.05,0.978443) (0.1,0.982213) (0.2,0.991637) (0.5,1.01524) (1,1.08262)};
  \addplot[color=htFSAMAdam, densely dashed, line width=1.4pt] coordinates {(0.001,0.974141) (0.002,0.97392) (0.005,0.974255) (0.01,0.974312) (0.02,0.974452) (0.05,0.978005) (0.1,0.981219) (0.2,0.987561) (0.5,1.00623) (1,1.05644)};
\end{axis}
\begin{axis}[name=ax3_r0,
    at={(0.8065\textwidth,0in)},
    anchor=south west,
    ht panel, xmode=log, ymode=log,
    ymin=0.00720041, ymax=2.98533,
    ytick={0.1,1},
    yticklabels={0.1,1},
    title={$\alpha=3.0$}]
  \addplot[draw=none, fill=htSOMA, fill opacity=0.06] coordinates {(0.001,0.11576) (0.002,0.115964) (0.005,0.114897) (0.01,0.11483) (0.02,0.115287) (0.05,0.116325) (0.1,0.120344) (0.2,0.131294) (0.5,0.264473) (1,0.832321) (1,0.795293) (0.5,0.241249) (0.2,0.124358) (0.1,0.11304) (0.05,0.113555) (0.02,0.112481) (0.01,0.111408) (0.005,0.110989) (0.002,0.112358) (0.001,0.112071)} \closedcycle;
  \addplot[draw=none, fill=htFPSOMA, fill opacity=0.06] coordinates {(0.001,0.115964) (0.002,0.116099) (0.005,0.114942) (0.01,0.115945) (0.02,0.114192) (0.05,0.115257) (0.1,0.124934) (0.2,0.163236) (0.5,0.484413) (1,1.56936) (1,1.5358) (0.5,0.457059) (0.2,0.154987) (0.1,0.121938) (0.05,0.112728) (0.02,0.110663) (0.01,0.112251) (0.005,0.111579) (0.002,0.112103) (0.001,0.111592)} \closedcycle;
  \addplot[draw=none, fill=htSpec, fill opacity=0.06] coordinates {(0.001,0.115713) (0.002,0.117185) (0.005,0.114968) (0.01,0.115864) (0.02,0.115274) (0.05,0.118512) (0.1,0.13471) (0.2,0.186057) (0.5,0.614181) (1,2.00447) (1,1.97242) (0.5,0.591153) (0.2,0.181054) (0.1,0.12743) (0.05,0.116417) (0.02,0.112501) (0.01,0.111606) (0.005,0.112973) (0.002,0.113047) (0.001,0.111426)} \closedcycle;
  \addplot[draw=none, fill=htRand, fill opacity=0.06] coordinates {(0.001,0.115689) (0.002,0.116266) (0.005,0.115982) (0.01,0.115109) (0.02,0.11514) (0.05,0.114867) (0.1,0.11533) (0.2,0.115015) (0.5,0.115971) (1,0.115263) (1,0.111457) (0.5,0.109259) (0.2,0.111431) (0.1,0.112012) (0.05,0.112258) (0.02,0.111734) (0.01,0.111845) (0.005,0.111783) (0.002,0.112199) (0.001,0.11156)} \closedcycle;
  \addplot[draw=none, fill=htPreNS5, fill opacity=0.06] coordinates {(0.001,0.115476) (0.002,0.116252) (0.005,0.116077) (0.01,0.116731) (0.02,0.115304) (0.05,0.11605) (0.1,0.117772) (0.2,0.127194) (0.5,0.223465) (1,0.733485) (1,0.707796) (0.5,0.209618) (0.2,0.119744) (0.1,0.112652) (0.05,0.112666) (0.02,0.111479) (0.01,0.112056) (0.005,0.11105) (0.002,0.111013) (0.001,0.112107)} \closedcycle;
  \addplot[draw=none, fill=htPostNS5, fill opacity=0.06] coordinates {(0.001,0.115357) (0.002,0.115908) (0.005,0.115609) (0.01,0.115628) (0.02,0.115893) (0.05,0.115473) (0.1,0.113231) (0.2,0.114603) (0.5,0.153322) (1,0.383569) (1,0.362572) (0.5,0.148504) (0.2,0.109982) (0.1,0.111239) (0.05,0.111958) (0.02,0.110927) (0.01,0.111655) (0.005,0.111727) (0.002,0.112128) (0.001,0.111938)} \closedcycle;
  \addplot[draw=none, fill=htMSOMA, fill opacity=0.06] coordinates {(0.001,0.115871) (0.002,0.115968) (0.005,0.115032) (0.01,0.114665) (0.02,0.114812) (0.05,0.113732) (0.1,0.111521) (0.2,0.110886) (0.5,0.102305) (1,0.0957148) (1,0.0877093) (0.5,0.0961781) (0.2,0.102927) (0.1,0.106706) (0.05,0.109374) (0.02,0.111635) (0.01,0.112008) (0.005,0.111979) (0.002,0.111742) (0.001,0.112106)} \closedcycle;
  \addplot[draw=none, fill=htFSAM, fill opacity=0.06] coordinates {(0.001,0.012947) (0.002,0.0136105) (0.005,0.0133592) (0.01,0.0129816) (0.02,0.0133913) (0.05,0.0125967) (0.1,0.012153) (0.2,0.0116656) (0.5,0.0124708) (1,0.0263863) (1,0.0157437) (0.5,0.0109722) (0.2,0.0103984) (0.1,0.0106819) (0.05,0.0109704) (0.02,0.0115431) (0.01,0.0118) (0.005,0.0114781) (0.002,0.011818) (0.001,0.0118551)} \closedcycle;
  \addplot[draw=none, fill=htSAM, fill opacity=0.06] coordinates {(0.001,0.0130581) (0.002,0.013459) (0.005,0.013522) (0.01,0.0131678) (0.02,0.0132579) (0.05,0.0123365) (0.1,0.0124056) (0.2,0.0116556) (0.5,0.0126639) (1,0.0265026) (1,0.0159869) (0.5,0.0113223) (0.2,0.0101814) (0.1,0.0105085) (0.05,0.0107584) (0.02,0.0116112) (0.01,0.0115784) (0.005,0.0118046) (0.002,0.0115501) (0.001,0.0120539)} \closedcycle;
  \addplot[draw=none, fill=htMSAM, fill opacity=0.06] coordinates {(0.001,0.0152354) (0.002,0.015357) (0.005,0.0154536) (0.01,0.0154125) (0.02,0.0156442) (0.05,0.0159442) (0.1,0.0162053) (0.2,0.0163122) (0.5,0.0158633) (1,0.0150123) (1,0.0130322) (0.5,0.0148062) (0.2,0.0143985) (0.1,0.0142335) (0.05,0.0138549) (0.02,0.0137968) (0.01,0.0137619) (0.005,0.0137297) (0.002,0.0131623) (0.001,0.0133426)} \closedcycle;
  \addplot[draw=none, fill=htSAMAdam, fill opacity=0.06] coordinates {(0.001,0.82543) (0.002,0.825529) (0.005,0.825555) (0.01,0.825997) (0.02,0.827739) (0.05,0.831149) (0.1,0.836848) (0.2,0.850192) (0.5,0.905923) (1,1.04709) (1,0.954533) (0.5,0.869618) (0.2,0.832675) (0.1,0.820906) (0.05,0.815987) (0.02,0.813316) (0.01,0.812503) (0.005,0.81192) (0.002,0.811745) (0.001,0.811936)} \closedcycle;
  \addplot[draw=none, fill=htFSAMAdam, fill opacity=0.06] coordinates {(0.001,0.825427) (0.002,0.825497) (0.005,0.825896) (0.01,0.825762) (0.02,0.826845) (0.05,0.829246) (0.1,0.833911) (0.2,0.843517) (0.5,0.888771) (1,1.00926) (1,0.926699) (0.5,0.859573) (0.2,0.829287) (0.1,0.820187) (0.05,0.814522) (0.02,0.812371) (0.01,0.812262) (0.005,0.811756) (0.002,0.81191) (0.001,0.811934)} \closedcycle;
  \addplot[color=htMuon, dashed, line width=0.7pt, forget plot] coordinates {(0.001,0.113683) (1,0.113683)};
  \addplot[color=htAdam, dash dot, line width=0.7pt, forget plot] coordinates {(0.001,0.816596) (1,0.816596)};
  \addplot[color=htSpec, solid, line width=0.8pt] coordinates {(0.001,0.113277) (0.002,0.114479) (0.005,0.114132) (0.01,0.113266) (0.02,0.113552) (0.05,0.117825) (0.1,0.131845) (0.2,0.183994) (0.5,0.602469) (1,1.98023)};
  \addplot[color=htRand, dashed, line width=0.8pt] coordinates {(0.001,0.113957) (0.002,0.114197) (0.005,0.114426) (0.01,0.113897) (0.02,0.112963) (0.05,0.113839) (0.1,0.113871) (0.2,0.112547) (0.5,0.112754) (1,0.11348)};
  \addplot[color=htMSOMA, loosely dashed, line width=0.8pt] coordinates {(0.001,0.11325) (0.002,0.113072) (0.005,0.113025) (0.01,0.113885) (0.02,0.112935) (0.05,0.111117) (0.1,0.109481) (0.2,0.107093) (0.5,0.0987706) (1,0.0911111)};
  \addplot[color=htFSAM, solid, line width=0.8pt] coordinates {(0.001,0.0126703) (0.002,0.0126962) (0.005,0.0129123) (0.01,0.0123726) (0.02,0.0124584) (0.05,0.0116018) (0.1,0.0114163) (0.2,0.011071) (0.5,0.0116485) (1,0.0196711)};
  \addplot[color=htSAM, solid, line width=0.8pt] coordinates {(0.001,0.0127684) (0.002,0.0128756) (0.005,0.0128707) (0.01,0.01264) (0.02,0.0127974) (0.05,0.0116868) (0.1,0.0115524) (0.2,0.0108551) (0.5,0.011745) (1,0.0216226)};
  \addplot[color=htMSAM, solid, line width=0.8pt] coordinates {(0.001,0.0142892) (0.002,0.0144769) (0.005,0.0143404) (0.01,0.0145558) (0.02,0.0147665) (0.05,0.015096) (0.1,0.0149902) (0.2,0.0150914) (0.5,0.0155484) (1,0.0138095)};
  \addplot[color=htSOMA, solid, line width=1.1pt] coordinates {(0.001,0.114129) (0.002,0.113183) (0.005,0.113525) (0.01,0.112784) (0.02,0.113737) (0.05,0.114181) (0.1,0.116819) (0.2,0.127586) (0.5,0.248203) (1,0.807998)};
  \addplot[color=htFPSOMA, solid, line width=1.1pt] coordinates {(0.001,0.113121) (0.002,0.11388) (0.005,0.113994) (0.01,0.113609) (0.02,0.111567) (0.05,0.113957) (0.1,0.122826) (0.2,0.15926) (0.5,0.468781) (1,1.54739)};
  \addplot[color=htPreNS5, densely dotted, line width=1.1pt] coordinates {(0.001,0.11408) (0.002,0.113076) (0.005,0.112953) (0.01,0.113927) (0.02,0.113496) (0.05,0.113533) (0.1,0.113857) (0.2,0.122684) (0.5,0.217382) (1,0.725691)};
  \addplot[color=htPostNS5, dash dot, line width=1.1pt] coordinates {(0.001,0.113959) (0.002,0.113952) (0.005,0.112727) (0.01,0.113467) (0.02,0.112778) (0.05,0.114092) (0.1,0.112068) (0.2,0.112208) (0.5,0.150253) (1,0.372596)};
  \addplot[color=htSAMAdam, solid, line width=1.1pt] coordinates {(0.001,0.816751) (0.002,0.817329) (0.005,0.81738) (0.01,0.818654) (0.02,0.819837) (0.05,0.823475) (0.1,0.82886) (0.2,0.841604) (0.5,0.892444) (1,1.00086)};
  \addplot[color=htFSAMAdam, densely dashed, line width=1.4pt] coordinates {(0.001,0.816806) (0.002,0.817223) (0.005,0.816876) (0.01,0.818269) (0.02,0.819037) (0.05,0.822069) (0.1,0.825042) (0.2,0.835828) (0.5,0.877623) (1,0.967798)};
\end{axis}
\path (ax0_r0.south west) -- (ax3_r0.south east) coordinate[midway] (xlab);
\node[font=\small, anchor=north] at ([yshift=-14pt]xlab) {perturbation radius $\rho$};
\end{tikzpicture}

%% file: dataforsyntheticplots/nn_iso_val_acc.tex
% ---- nn_iso_val_acc
\providecolor{muonrow}{RGB}{208,236,214}\providecolor{adamrow}{RGB}{255,214,214}\providecolor{sgdrow}{RGB}{208,226,247}
\begin{table*}[!t]\centering\footnotesize\setlength{\tabcolsep}{5pt}
\caption{Validation accuracy (\%): best value over the perturbation radius $\rho$ and the radius $\rho^\star$ where it is attained (median of 8 seeds; Frobenius-ball methods pool the global and per-layer radius modes). Row colour marks the base optimizer family: \muon-based (\colorbox{muonrow}{Green}), \adamw-based (\colorbox{adamrow}{Red}), \sgd-based (\colorbox{sgdrow}{Blue}); baselines have no $\rho^\star$. Best value per $\alpha$ in bold.}\label{tab:nn_iso_val_acc}
\vspace{2pt}
\resizebox{\ifdim\width>\textwidth\textwidth\else\width\fi}{!}{%
\begin{tabular}{lcccccccc}\toprule
Algorithm & \multicolumn{2}{c}{$\alpha=1.1$} & \multicolumn{2}{c}{$\alpha=1.6$} & \multicolumn{2}{c}{$\alpha=2.0$} & \multicolumn{2}{c}{$\alpha=3.0$} \\
& best acc. & $\rho^\star$ & best acc. & $\rho^\star$ & best acc. & $\rho^\star$ & best acc. & $\rho^\star$ \\\midrule
\rowcolor{muonrow} \muon & 36.1 & -- & 43.3 & -- & 45.2 & -- & 43.8 & -- \\
\rowcolor{muonrow} \specsammuon & 36.6 & 0.2 & 44.2 & 0.5 & 45.5 & 0.005 & 46.2 & 0.5 \\
\rowcolor{muonrow} \fpsoma & 37.8 & 1 & 44.7 & 1 & 46.6 & 0.5 & 46.1 & 0.5 \\
\rowcolor{muonrow} \soma & 38.2 & 1 & 44.8 & 1 & 46.6 & 0.5 & 46.6 & 0.5 \\
\rowcolor{muonrow} \randsammuon & 36.3 & 0.05 & 43.6 & 0.1 & 45.2 & 0.2 & 44.8 & 0.2 \\
\rowcolor{muonrow} \opsomapre & 38.6 & 1 & 44.2 & 1 & 45.7 & 0.5 & 46.8 & 0.5 \\
\rowcolor{muonrow} \opsomapost & 37.9 & 1 & \textbf{45.2} & 1 & \textbf{46.8} & 1 & \textbf{48.3} & 1 \\
\rowcolor{muonrow} \textsc{Msoma} & 36.2 & 0.1 & 43.5 & 0.002 & 45.4 & 0.05 & 44.3 & 0.05 \\
\rowcolor{adamrow} \adamw & 40.6 & -- & 43 & -- & 44.1 & -- & 45.8 & -- \\
\rowcolor{adamrow} \samadamw & 40.8 & 0.02 & 43.5 & 0.2 & 44.5 & 0.002 & 46 & 0.01 \\
\rowcolor{adamrow} \fsamadamw & \textbf{40.9} & 0.2 & 43.4 & 0.05 & 44.6 & 0.05 & 46.2 & 1 \\
\rowcolor{sgdrow} \sgd & 34.2 & -- & 38.7 & -- & 39.3 & -- & 41.2 & -- \\
\rowcolor{sgdrow} \sam & 35.6 & 0.5 & 39.6 & 0.1 & 39.8 & 0.05 & 42.1 & 0.01 \\
\rowcolor{sgdrow} \fsam & 35.8 & 0.5 & 40 & 0.5 & 39.6 & 0.5 & 42.7 & 1 \\
\rowcolor{sgdrow} \textsc{Msam} & 36.1 & 0.002 & 39.3 & 0.1 & 39.6 & 0.1 & 42.2 & 0.05 \\
\bottomrule\end{tabular}}\end{table*}

%% file: dataforsyntheticplots/nn_iso_train_loss.tex
% ---- nn_iso_train_loss
\providecolor{muonrow}{RGB}{208,236,214}\providecolor{adamrow}{RGB}{255,214,214}\providecolor{sgdrow}{RGB}{208,226,247}
\begin{table*}[!t]\centering\footnotesize\setlength{\tabcolsep}{5pt}
\caption{Training loss: best value over the perturbation radius $\rho$ and the radius $\rho^\star$ where it is attained (median of 8 seeds; Frobenius-ball methods pool the global and per-layer radius modes). Row colour marks the base optimizer family: \muon-based (\colorbox{muonrow}{Green}), \adamw-based (\colorbox{adamrow}{Red}), \sgd-based (\colorbox{sgdrow}{Blue}); baselines have no $\rho^\star$. Best value per $\alpha$ in bold.}\label{tab:nn_iso_train_loss}
\vspace{2pt}
\resizebox{\ifdim\width>\textwidth\textwidth\else\width\fi}{!}{%
\begin{tabular}{lcccccccc}\toprule
Algorithm & \multicolumn{2}{c}{$\alpha=1.1$} & \multicolumn{2}{c}{$\alpha=1.6$} & \multicolumn{2}{c}{$\alpha=2.0$} & \multicolumn{2}{c}{$\alpha=3.0$} \\
& best loss & $\rho^\star$ & best loss & $\rho^\star$ & best loss & $\rho^\star$ & best loss & $\rho^\star$ \\\midrule
\rowcolor{muonrow} \muon & 1.58 & -- & 0.482 & -- & 0.233 & -- & 0.114 & -- \\
\rowcolor{muonrow} \specsammuon & 1.58 & 0.02 & 0.482 & 0.001 & 0.236 & 0.001 & 0.113 & 0.01 \\
\rowcolor{muonrow} \fpsoma & 1.57 & 0.2 & 0.481 & 0.001 & 0.234 & 0.02 & 0.112 & 0.02 \\
\rowcolor{muonrow} \soma & \textbf{1.56} & 0.5 & 0.483 & 0.002 & 0.234 & 0.001 & 0.113 & 0.01 \\
\rowcolor{muonrow} \randsammuon & 1.58 & 0.005 & 0.482 & 0.05 & 0.233 & 0.5 & 0.113 & 0.2 \\
\rowcolor{muonrow} \opsomapre & 1.58 & 0.2 & 0.481 & 0.02 & 0.233 & 0.05 & 0.113 & 0.005 \\
\rowcolor{muonrow} \opsomapost & 1.57 & 0.5 & 0.481 & 0.002 & 0.233 & 0.1 & 0.112 & 0.1 \\
\rowcolor{muonrow} \textsc{Msoma} & \textbf{1.56} & 1 & 0.446 & 1 & 0.203 & 1 & 0.0911 & 1 \\
\rowcolor{adamrow} \adamw & 1.71 & -- & 1.17 & -- & 0.974 & -- & 0.817 & -- \\
\rowcolor{adamrow} \samadamw & 1.71 & 0.002 & 1.17 & 0.002 & 0.974 & 0.002 & 0.817 & 0.001 \\
\rowcolor{adamrow} \fsamadamw & 1.71 & 0.01 & 1.17 & 0.002 & 0.974 & 0.002 & 0.817 & 0.001 \\
\rowcolor{sgdrow} \sgd & 14.9 & -- & 0.0111 & -- & 0.00943 & -- & 0.0125 & -- \\
\rowcolor{sgdrow} \sam & 10.7 & 0.2 & \textbf{0.00807} & 0.2 & 0.00861 & 0.002 & \textbf{0.0109} & 0.2 \\
\rowcolor{sgdrow} \fsam & 11.7 & 1 & 0.0084 & 0.001 & \textbf{0.00814} & 0.05 & 0.0111 & 0.2 \\
\rowcolor{sgdrow} \textsc{Msam} & 18 & 0.005 & 0.00956 & 0.05 & 0.0114 & 0.02 & 0.0138 & 1 \\
\bottomrule\end{tabular}}\end{table*}

%% file: tikz_for_heavytailed/appendex_isotropic/gap_vs_rho.tikz
%% Auto-generated by make_heavytailed_figures.py  --  do not edit by hand, re-run the script.
%% Values: line = median over seeds, band = interquartile range (25th-75th percentile).
%% Preamble needed:  \usepackage{pgfplots}  \pgfplotsset{compat=1.16}   (nothing else)
%% Width follows \textwidth, fonts follow the document (so the legend can never overflow).

\definecolor{htSOMA}{HTML}{0B5D1E}
\definecolor{htFPSOMA}{HTML}{33CC33}
\definecolor{htSpec}{HTML}{00B3A4}
\definecolor{htRand}{HTML}{808000}
\definecolor{htFSAM}{HTML}{2F80D8}
\definecolor{htSAM}{HTML}{0A3A8A}
\definecolor{htPreNS5}{HTML}{9ACD32}
\definecolor{htPostNS5}{HTML}{004D40}
\definecolor{htSAMAdam}{HTML}{8E0F1C}
\definecolor{htFSAMAdam}{HTML}{F4623A}
\definecolor{htMSAM}{HTML}{7DB9F2}
\definecolor{htMSOMA}{HTML}{8FD694}
\definecolor{htMuon}{HTML}{111111}
\definecolor{htAdam}{HTML}{8A8A8A}
\pgfplotsset{
  ht panel/.style={
    scale only axis, width=0.1920\textwidth, height=1.35in,
    xmin=0.00075, xmax=1.35,
    xtick={0.001,0.01,0.1,1}, xticklabels={$10^{-3}$,$10^{-2}$,$10^{-1}$,$10^{0}$},
    minor xtick={0.002,0.003,0.004,0.005,0.006,0.007,0.008,0.009,0.02,0.03,0.04,0.05,0.06,0.07,0.08,0.09,0.2,0.3,0.4,0.5,0.6,0.7,0.8,0.9},
    xmajorgrids=true, ymajorgrids=true,
    grid style={line width=0.25pt, draw=black!10},
    axis line style={line width=0.4pt, black!75},
    tick style={line width=0.4pt, black!75}, major tick length=3pt, minor tick length=1.5pt,
    tick label style={font=\footnotesize, /pgf/number format/assume math mode=true},
    label style={font=\small}, title style={font=\small, yshift=-3pt},
    ylabel style={yshift=-1pt, align=center},
    every tick label/.append style={inner sep=1pt},
    clip=true,
  }}

\begin{tikzpicture}
\begin{axis}[name=ax0_r0,
    at={(0.0820\textwidth,0in)},
    anchor=south west,
    ht panel, xmode=log, ymode=log,
    ymin=0.290631, ymax=12.0123,
    ytick={1,3},
    yticklabels={1,3},
    title={$\alpha=1.1$},
    ylabel={optimality gap}]
  \addplot[draw=none, fill=htSOMA, fill opacity=0.06] coordinates {(0.001,1.04605) (0.002,1.0458) (0.005,1.04507) (0.01,1.04382) (0.02,1.0413) (0.05,1.0334) (0.1,1.01919) (0.2,0.987107) (0.5,0.8684) (1,0.662158) (1,0.594834) (0.5,0.787907) (0.2,0.901887) (0.1,0.931671) (0.05,0.944607) (0.02,0.951695) (0.01,0.953942) (0.005,0.955043) (0.002,0.955697) (0.001,0.955914)} \closedcycle;
  \addplot[draw=none, fill=htFPSOMA, fill opacity=0.06] coordinates {(0.001,1.04605) (0.002,1.04581) (0.005,1.04508) (0.01,1.04386) (0.02,1.04137) (0.05,1.03359) (0.1,1.01959) (0.2,0.98812) (0.5,0.874859) (1,0.684356) (1,0.612159) (0.5,0.791946) (0.2,0.901961) (0.1,0.931661) (0.05,0.944605) (0.02,0.951696) (0.01,0.953943) (0.005,0.955044) (0.002,0.955697) (0.001,0.955914)} \closedcycle;
  \addplot[draw=none, fill=htSpec, fill opacity=0.06] coordinates {(0.001,1.04591) (0.002,1.04552) (0.005,1.04436) (0.01,1.04243) (0.02,1.03856) (0.05,1.02688) (0.1,1.00721) (0.2,0.967302) (0.5,0.84805) (1,0.684648) (1,0.612853) (0.5,0.764371) (0.2,0.881357) (0.1,0.919579) (0.05,0.938106) (0.02,0.948986) (0.01,0.952569) (0.005,0.954352) (0.002,0.955419) (0.001,0.955775)} \closedcycle;
  \addplot[draw=none, fill=htRand, fill opacity=0.06] coordinates {(0.001,1.04629) (0.002,1.04629) (0.005,1.04629) (0.01,1.0463) (0.02,1.0463) (0.05,1.04629) (0.1,1.04624) (0.2,1.04601) (0.5,1.04432) (1,1.03818) (1,0.944106) (0.5,0.953069) (0.2,0.955628) (0.1,0.956) (0.05,0.956095) (0.02,0.956123) (0.01,0.956128) (0.005,0.956129) (0.002,0.956129) (0.001,0.95613)} \closedcycle;
  \addplot[draw=none, fill=htPreNS5, fill opacity=0.06] coordinates {(0.001,1.04605) (0.002,1.0458) (0.005,1.04506) (0.01,1.04382) (0.02,1.04128) (0.05,1.0333) (0.1,1.01887) (0.2,0.986073) (0.5,0.864904) (1,0.661654) (1,0.59595) (0.5,0.785459) (0.2,0.901148) (0.1,0.931487) (0.05,0.944573) (0.02,0.951698) (0.01,0.953946) (0.005,0.955046) (0.002,0.955698) (0.001,0.955914)} \closedcycle;
  \addplot[draw=none, fill=htPostNS5, fill opacity=0.06] coordinates {(0.001,1.04612) (0.002,1.04595) (0.005,1.04543) (0.01,1.04455) (0.02,1.04278) (0.05,1.03725) (0.1,1.02739) (0.2,1.00532) (0.5,0.922554) (1,0.754098) (1,0.676644) (0.5,0.837893) (0.2,0.918032) (0.1,0.938832) (0.05,0.947938) (0.02,0.952965) (0.01,0.954566) (0.005,0.955353) (0.002,0.95582) (0.001,0.955975)} \closedcycle;
  \addplot[draw=none, fill=htMSOMA, fill opacity=0.06] coordinates {(0.001,1.04628) (0.002,1.04627) (0.005,1.04623) (0.01,1.04617) (0.02,1.04604) (0.05,1.04565) (0.1,1.04497) (0.2,1.04349) (0.5,1.03826) (1,1.02727) (1,0.931454) (0.5,0.945639) (0.2,0.952449) (0.1,0.954379) (0.05,0.955277) (0.02,0.955794) (0.01,0.955963) (0.005,0.956046) (0.002,0.956096) (0.001,0.956113)} \closedcycle;
  \addplot[draw=none, fill=htFSAM, fill opacity=0.06] coordinates {(0.001,7.72839) (0.002,7.73304) (0.005,7.72269) (0.01,7.71787) (0.02,7.71799) (0.05,7.68931) (0.1,7.64608) (0.2,7.57823) (0.5,7.39683) (1,7.00727) (1,6.18385) (0.5,6.5945) (0.2,6.86924) (0.1,6.95579) (0.05,7.02078) (0.02,7.06002) (0.01,7.07445) (0.005,7.07925) (0.002,7.08372) (0.001,7.07909)} \closedcycle;
  \addplot[draw=none, fill=htSAM, fill opacity=0.06] coordinates {(0.001,7.72525) (0.002,7.72719) (0.005,7.72713) (0.01,7.71861) (0.02,7.70748) (0.05,7.68112) (0.1,7.63448) (0.2,7.54318) (0.5,7.36683) (1,6.96487) (1,6.26026) (0.5,6.64385) (0.2,6.87188) (0.1,6.95276) (0.05,7.02226) (0.02,7.06066) (0.01,7.07538) (0.005,7.07764) (0.002,7.08053) (0.001,7.08398)} \closedcycle;
  \addplot[draw=none, fill=htMSAM, fill opacity=0.06] coordinates {(0.001,9.28061) (0.002,9.2801) (0.005,9.27858) (0.01,9.27605) (0.02,9.27101) (0.05,9.25606) (0.1,9.24503) (0.2,9.24483) (0.5,9.24463) (1,9.23439) (1,7.22652) (0.5,7.38792) (0.2,7.44218) (0.1,7.45706) (0.05,7.46486) (0.02,7.46968) (0.01,7.47131) (0.005,7.47213) (0.002,7.47262) (0.001,7.47278)} \closedcycle;
  \addplot[draw=none, fill=htSAMAdam, fill opacity=0.06] coordinates {(0.001,0.388016) (0.002,0.388015) (0.005,0.388014) (0.01,0.388039) (0.02,0.388038) (0.05,0.388048) (0.1,0.388034) (0.2,0.388005) (0.5,0.387681) (1,0.386276) (1,0.373256) (0.5,0.375025) (0.2,0.375741) (0.1,0.375867) (0.05,0.375915) (0.02,0.375959) (0.01,0.375972) (0.005,0.375975) (0.002,0.375976) (0.001,0.375976)} \closedcycle;
  \addplot[draw=none, fill=htFSAMAdam, fill opacity=0.06] coordinates {(0.001,0.388017) (0.002,0.388016) (0.005,0.388015) (0.01,0.388041) (0.02,0.388043) (0.05,0.388061) (0.1,0.388076) (0.2,0.388066) (0.5,0.387876) (1,0.386817) (1,0.373742) (0.5,0.375214) (0.2,0.375792) (0.1,0.375896) (0.05,0.375929) (0.02,0.375964) (0.01,0.375974) (0.005,0.375976) (0.002,0.375976) (0.001,0.375976)} \closedcycle;
  \addplot[color=htMuon, dashed, line width=0.7pt, forget plot] coordinates {(0.001,1.01458) (1,1.01458)};
  \addplot[color=htAdam, dash dot, line width=0.7pt, forget plot] coordinates {(0.001,0.382231) (1,0.382231)};
  \addplot[color=htSpec, solid, line width=0.8pt] coordinates {(0.001,1.01418) (0.002,1.01379) (0.005,1.0126) (0.01,1.01062) (0.02,1.00665) (0.05,0.994708) (0.1,0.974637) (0.2,0.934023) (0.5,0.81307) (1,0.647714)};
  \addplot[color=htRand, dashed, line width=0.8pt] coordinates {(0.001,1.01458) (0.002,1.01458) (0.005,1.01458) (0.01,1.01458) (0.02,1.01458) (0.05,1.01456) (0.1,1.01449) (0.2,1.01421) (0.5,1.01215) (1,1.00479)};
  \addplot[color=htMSOMA, loosely dashed, line width=0.8pt] coordinates {(0.001,1.01456) (0.002,1.01454) (0.005,1.01449) (0.01,1.0144) (0.02,1.01421) (0.05,1.01365) (0.1,1.0127) (0.2,1.01069) (0.5,1.00395) (1,0.990634)};
  \addplot[color=htFSAM, solid, line width=0.8pt] coordinates {(0.001,7.43463) (0.002,7.43293) (0.005,7.42955) (0.01,7.42453) (0.02,7.42004) (0.05,7.41108) (0.1,7.38769) (0.2,7.33148) (0.5,7.07731) (1,6.58245)};
  \addplot[color=htSAM, solid, line width=0.8pt] coordinates {(0.001,7.43746) (0.002,7.43038) (0.005,7.43106) (0.01,7.4244) (0.02,7.42112) (0.05,7.40154) (0.1,7.36643) (0.2,7.2944) (0.5,7.05007) (1,6.54172)};
  \addplot[color=htMSAM, solid, line width=0.8pt] coordinates {(0.001,9.178) (0.002,9.17803) (0.005,9.17814) (0.01,9.17833) (0.02,9.1787) (0.05,9.17984) (0.1,9.16724) (0.2,9.13578) (0.5,9.0566) (1,8.97623)};
  \addplot[color=htSOMA, solid, line width=1.1pt] coordinates {(0.001,1.01432) (0.002,1.01407) (0.005,1.0133) (0.01,1.012) (0.02,1.00938) (0.05,1.00119) (0.1,0.986531) (0.2,0.953695) (0.5,0.83294) (1,0.629683)};
  \addplot[color=htFPSOMA, solid, line width=1.1pt] coordinates {(0.001,1.01434) (0.002,1.01409) (0.005,1.01336) (0.01,1.01214) (0.02,1.00964) (0.05,1.00184) (0.1,0.987807) (0.2,0.956286) (0.5,0.842819) (1,0.653618)};
  \addplot[color=htPreNS5, densely dotted, line width=1.1pt] coordinates {(0.001,1.01432) (0.002,1.01407) (0.005,1.0133) (0.01,1.01201) (0.02,1.00939) (0.05,1.00116) (0.1,0.986346) (0.2,0.952917) (0.5,0.829548) (1,0.628405)};
  \addplot[color=htPostNS5, dash dot, line width=1.1pt] coordinates {(0.001,1.0144) (0.002,1.01422) (0.005,1.01368) (0.01,1.01278) (0.02,1.01096) (0.05,1.00527) (0.1,0.995137) (0.2,0.972494) (0.5,0.887781) (1,0.716575)};
  \addplot[color=htSAMAdam, solid, line width=1.1pt] coordinates {(0.001,0.382231) (0.002,0.38223) (0.005,0.382231) (0.01,0.382221) (0.02,0.382225) (0.05,0.382188) (0.1,0.382196) (0.2,0.382142) (0.5,0.381709) (1,0.380398)};
  \addplot[color=htFSAMAdam, densely dashed, line width=1.4pt] coordinates {(0.001,0.38223) (0.002,0.382229) (0.005,0.38223) (0.01,0.382218) (0.02,0.382219) (0.05,0.382176) (0.1,0.382171) (0.2,0.382094) (0.5,0.381617) (1,0.380304)};
\end{axis}
\begin{axis}[name=ax1_r0,
    at={(0.3235\textwidth,0in)},
    anchor=south west,
    ht panel, xmode=log, ymode=log,
    ymin=0.11685, ymax=1.23769,
    ytick={0.2,0.5,1},
    yticklabels={0.2,0.5,1},
    title={$\alpha=1.6$}]
  \addplot[draw=none, fill=htSOMA, fill opacity=0.06] coordinates {(0.001,0.234101) (0.002,0.234063) (0.005,0.233943) (0.01,0.233732) (0.02,0.23327) (0.05,0.231566) (0.1,0.227725) (0.2,0.216876) (0.5,0.177753) (1,0.184118) (1,0.177653) (0.5,0.170592) (0.2,0.212251) (0.1,0.223036) (0.05,0.226846) (0.02,0.228498) (0.01,0.228938) (0.005,0.229137) (0.002,0.22925) (0.001,0.229286)} \closedcycle;
  \addplot[draw=none, fill=htFPSOMA, fill opacity=0.06] coordinates {(0.001,0.234089) (0.002,0.234038) (0.005,0.233881) (0.01,0.23361) (0.02,0.233027) (0.05,0.230974) (0.1,0.226615) (0.2,0.21514) (0.5,0.174678) (1,0.148874) (1,0.142889) (0.5,0.169049) (0.2,0.21024) (0.1,0.221759) (0.05,0.226144) (0.02,0.228206) (0.01,0.22879) (0.005,0.229063) (0.002,0.22922) (0.001,0.229271)} \closedcycle;
  \addplot[draw=none, fill=htSpec, fill opacity=0.06] coordinates {(0.001,0.234068) (0.002,0.233996) (0.005,0.233777) (0.01,0.233402) (0.02,0.23262) (0.05,0.230023) (0.1,0.224956) (0.2,0.212906) (0.5,0.175049) (1,0.143775) (1,0.135555) (0.5,0.168631) (0.2,0.208184) (0.1,0.220194) (0.05,0.225245) (0.02,0.227823) (0.01,0.228595) (0.005,0.228964) (0.002,0.22918) (0.001,0.229251)} \closedcycle;
  \addplot[draw=none, fill=htRand, fill opacity=0.06] coordinates {(0.001,0.234139) (0.002,0.234139) (0.005,0.234138) (0.01,0.234135) (0.02,0.23413) (0.05,0.234106) (0.1,0.234039) (0.2,0.23381) (0.5,0.232375) (1,0.227306) (1,0.221029) (0.5,0.227175) (0.2,0.228973) (0.1,0.229241) (0.05,0.229305) (0.02,0.229321) (0.01,0.229322) (0.005,0.229322) (0.002,0.229322) (0.001,0.229322)} \closedcycle;
  \addplot[draw=none, fill=htPreNS5, fill opacity=0.06] coordinates {(0.001,0.234105) (0.002,0.23407) (0.005,0.233962) (0.01,0.233771) (0.02,0.233348) (0.05,0.231762) (0.1,0.228113) (0.2,0.21761) (0.5,0.178372) (1,0.183321) (1,0.174213) (0.5,0.172027) (0.2,0.213174) (0.1,0.223554) (0.05,0.227107) (0.02,0.228602) (0.01,0.22899) (0.005,0.229163) (0.002,0.22926) (0.001,0.229291)} \closedcycle;
  \addplot[draw=none, fill=htPostNS5, fill opacity=0.06] coordinates {(0.001,0.234109) (0.002,0.234078) (0.005,0.233982) (0.01,0.233814) (0.02,0.233449) (0.05,0.232126) (0.1,0.22919) (0.2,0.220899) (0.5,0.186052) (1,0.163446) (1,0.15459) (0.5,0.180358) (0.2,0.216368) (0.1,0.224579) (0.05,0.22743) (0.02,0.228683) (0.01,0.229022) (0.005,0.229177) (0.002,0.229265) (0.001,0.229294)} \closedcycle;
  \addplot[draw=none, fill=htMSOMA, fill opacity=0.06] coordinates {(0.001,0.234141) (0.002,0.234143) (0.005,0.234147) (0.01,0.234153) (0.02,0.234166) (0.05,0.234193) (0.1,0.234208) (0.2,0.234133) (0.5,0.233052) (1,0.229808) (1,0.224152) (0.5,0.228213) (0.2,0.229277) (0.1,0.229345) (0.05,0.229345) (0.02,0.229334) (0.01,0.229328) (0.005,0.229325) (0.002,0.229323) (0.001,0.229323)} \closedcycle;
  \addplot[draw=none, fill=htFSAM, fill opacity=0.06] coordinates {(0.001,0.91038) (0.002,0.912436) (0.005,0.910472) (0.01,0.913643) (0.02,0.911778) (0.05,0.912001) (0.1,0.898406) (0.2,0.883704) (0.5,0.839569) (1,0.784064) (1,0.712044) (0.5,0.778604) (0.2,0.82358) (0.1,0.835754) (0.05,0.852676) (0.02,0.841318) (0.01,0.857753) (0.005,0.855353) (0.002,0.859019) (0.001,0.855437)} \closedcycle;
  \addplot[draw=none, fill=htSAM, fill opacity=0.06] coordinates {(0.001,0.910213) (0.002,0.911874) (0.005,0.912) (0.01,0.914011) (0.02,0.909594) (0.05,0.906095) (0.1,0.896137) (0.2,0.881207) (0.5,0.865786) (1,0.787978) (1,0.707005) (0.5,0.774408) (0.2,0.820954) (0.1,0.846767) (0.05,0.846505) (0.02,0.854285) (0.01,0.857323) (0.005,0.851145) (0.002,0.859179) (0.001,0.852678)} \closedcycle;
  \addplot[draw=none, fill=htMSAM, fill opacity=0.06] coordinates {(0.001,1.02559) (0.002,1.02567) (0.005,1.02593) (0.01,1.02636) (0.02,1.0272) (0.05,1.02955) (0.1,1.03305) (0.2,1.03894) (0.5,1.04914) (1,1.04975) (1,1.00861) (0.5,1.00402) (0.2,0.991148) (0.1,0.983559) (0.05,0.978798) (0.02,0.975608) (0.01,0.974483) (0.005,0.97391) (0.002,0.973562) (0.001,0.973446)} \closedcycle;
  \addplot[draw=none, fill=htSAMAdam, fill opacity=0.06] coordinates {(0.001,0.191847) (0.002,0.191845) (0.005,0.191841) (0.01,0.191834) (0.02,0.19182) (0.05,0.191765) (0.1,0.191658) (0.2,0.19142) (0.5,0.190428) (1,0.187486) (1,0.183259) (0.5,0.186767) (0.2,0.188208) (0.1,0.188544) (0.05,0.188656) (0.02,0.188718) (0.01,0.18875) (0.005,0.188758) (0.002,0.188767) (0.001,0.188769)} \closedcycle;
  \addplot[draw=none, fill=htFSAMAdam, fill opacity=0.06] coordinates {(0.001,0.191848) (0.002,0.191848) (0.005,0.191849) (0.01,0.191847) (0.02,0.191844) (0.05,0.191826) (0.1,0.191779) (0.2,0.191657) (0.5,0.190992) (1,0.188773) (1,0.184728) (0.5,0.187595) (0.2,0.188563) (0.1,0.188725) (0.05,0.188747) (0.02,0.188754) (0.01,0.188768) (0.005,0.188768) (0.002,0.18877) (0.001,0.188771)} \closedcycle;
  \addplot[color=htMuon, dashed, line width=0.7pt, forget plot] coordinates {(0.001,0.231438) (1,0.231438)};
  \addplot[color=htAdam, dash dot, line width=0.7pt, forget plot] coordinates {(0.001,0.190444) (1,0.190444)};
  \addplot[color=htSpec, solid, line width=0.8pt] coordinates {(0.001,0.231363) (0.002,0.231288) (0.005,0.23106) (0.01,0.230672) (0.02,0.229863) (0.05,0.227192) (0.1,0.222033) (0.2,0.209872) (0.5,0.17333) (1,0.14064)};
  \addplot[color=htRand, dashed, line width=0.8pt] coordinates {(0.001,0.231438) (0.002,0.231437) (0.005,0.231437) (0.01,0.231436) (0.02,0.231434) (0.05,0.231417) (0.1,0.231355) (0.2,0.231107) (0.5,0.229401) (1,0.224032)};
  \addplot[color=htMSOMA, loosely dashed, line width=0.8pt] coordinates {(0.001,0.231439) (0.002,0.23144) (0.005,0.231443) (0.01,0.231447) (0.02,0.231455) (0.05,0.231474) (0.1,0.231492) (0.2,0.231458) (0.5,0.230643) (1,0.226817)};
  \addplot[color=htFSAM, solid, line width=0.8pt] coordinates {(0.001,0.869861) (0.002,0.881149) (0.005,0.871199) (0.01,0.874054) (0.02,0.876519) (0.05,0.866013) (0.1,0.862371) (0.2,0.844312) (0.5,0.807743) (1,0.752656)};
  \addplot[color=htSAM, solid, line width=0.8pt] coordinates {(0.001,0.87619) (0.002,0.880571) (0.005,0.878768) (0.01,0.877434) (0.02,0.875209) (0.05,0.870121) (0.1,0.865189) (0.2,0.846338) (0.5,0.795754) (1,0.735458)};
  \addplot[color=htMSAM, solid, line width=0.8pt] coordinates {(0.001,1.00454) (0.002,1.00464) (0.005,1.00493) (0.01,1.00541) (0.02,1.00633) (0.05,1.00891) (0.1,1.01267) (0.2,1.01864) (0.5,1.02833) (1,1.02811)};
  \addplot[color=htSOMA, solid, line width=1.1pt] coordinates {(0.001,0.231395) (0.002,0.231352) (0.005,0.231218) (0.01,0.230985) (0.02,0.230478) (0.05,0.228644) (0.1,0.224617) (0.2,0.213635) (0.5,0.175774) (1,0.179914)};
  \addplot[color=htFPSOMA, solid, line width=1.1pt] coordinates {(0.001,0.231385) (0.002,0.231332) (0.005,0.231169) (0.01,0.230887) (0.02,0.230286) (0.05,0.228182) (0.1,0.223763) (0.2,0.212264) (0.5,0.172883) (1,0.146914)};
  \addplot[color=htPreNS5, densely dotted, line width=1.1pt] coordinates {(0.001,0.2314) (0.002,0.231362) (0.005,0.231243) (0.01,0.231036) (0.02,0.23058) (0.05,0.228897) (0.1,0.225105) (0.2,0.214475) (0.5,0.176539) (1,0.181215)};
  \addplot[color=htPostNS5, dash dot, line width=1.1pt] coordinates {(0.001,0.231403) (0.002,0.231368) (0.005,0.231261) (0.01,0.231075) (0.02,0.230674) (0.05,0.229246) (0.1,0.226162) (0.2,0.217689) (0.5,0.18473) (1,0.161745)};
  \addplot[color=htSAMAdam, solid, line width=1.1pt] coordinates {(0.001,0.190442) (0.002,0.190441) (0.005,0.190435) (0.01,0.190426) (0.02,0.190398) (0.05,0.19033) (0.1,0.190186) (0.2,0.189914) (0.5,0.188685) (1,0.185788)};
  \addplot[color=htFSAMAdam, densely dashed, line width=1.4pt] coordinates {(0.001,0.190444) (0.002,0.190444) (0.005,0.190444) (0.01,0.190443) (0.02,0.190433) (0.05,0.190417) (0.1,0.190358) (0.2,0.190255) (0.5,0.189502) (1,0.186987)};
\end{axis}
\begin{axis}[name=ax2_r0,
    at={(0.5650\textwidth,0in)},
    anchor=south west,
    ht panel, xmode=log, ymode=log,
    ymin=0.064529, ymax=0.265234,
    ytick={0.1,0.2},
    yticklabels={0.1,0.2},
    title={$\alpha=2.0$}]
  \addplot[draw=none, fill=htSOMA, fill opacity=0.06] coordinates {(0.001,0.114942) (0.002,0.114926) (0.005,0.114873) (0.01,0.114776) (0.02,0.114554) (0.05,0.113658) (0.1,0.111545) (0.2,0.105557) (0.5,0.0885348) (1,0.148435) (1,0.147319) (0.5,0.0848849) (0.2,0.101506) (0.1,0.107267) (0.05,0.109366) (0.02,0.110262) (0.01,0.110493) (0.005,0.110596) (0.002,0.110653) (0.001,0.110671)} \closedcycle;
  \addplot[draw=none, fill=htFPSOMA, fill opacity=0.06] coordinates {(0.001,0.114936) (0.002,0.114912) (0.005,0.114838) (0.01,0.114708) (0.02,0.114417) (0.05,0.113322) (0.1,0.110917) (0.2,0.104491) (0.5,0.0843786) (1,0.0877489) (1,0.085089) (0.5,0.081373) (0.2,0.100658) (0.1,0.106793) (0.05,0.109113) (0.02,0.110157) (0.01,0.11044) (0.005,0.110569) (0.002,0.110642) (0.001,0.110666)} \closedcycle;
  \addplot[draw=none, fill=htSpec, fill opacity=0.06] coordinates {(0.001,0.114927) (0.002,0.114895) (0.005,0.114795) (0.01,0.114623) (0.02,0.114253) (0.05,0.113005) (0.1,0.110432) (0.2,0.103907) (0.5,0.0846026) (1,0.0751496) (1,0.0726269) (0.5,0.0820262) (0.2,0.100216) (0.1,0.106348) (0.05,0.108839) (0.02,0.110038) (0.01,0.110379) (0.005,0.110538) (0.002,0.11063) (0.001,0.110659)} \closedcycle;
  \addplot[draw=none, fill=htRand, fill opacity=0.06] coordinates {(0.001,0.114959) (0.002,0.114959) (0.005,0.114959) (0.01,0.114959) (0.02,0.114957) (0.05,0.114947) (0.1,0.114912) (0.2,0.114771) (0.5,0.114013) (1,0.111442) (1,0.106682) (0.5,0.109241) (0.2,0.110458) (0.1,0.110635) (0.05,0.110678) (0.02,0.110688) (0.01,0.110689) (0.005,0.110689) (0.002,0.110689) (0.001,0.110689)} \closedcycle;
  \addplot[draw=none, fill=htPreNS5, fill opacity=0.06] coordinates {(0.001,0.114944) (0.002,0.11493) (0.005,0.114883) (0.01,0.114797) (0.02,0.114595) (0.05,0.113759) (0.1,0.111668) (0.2,0.105744) (0.5,0.0878658) (1,0.14694) (1,0.142756) (0.5,0.0843102) (0.2,0.10167) (0.1,0.10736) (0.05,0.109417) (0.02,0.110283) (0.01,0.110504) (0.005,0.110601) (0.002,0.110655) (0.001,0.110672)} \closedcycle;
  \addplot[draw=none, fill=htPostNS5, fill opacity=0.06] coordinates {(0.001,0.114947) (0.002,0.114936) (0.005,0.114899) (0.01,0.114832) (0.02,0.114681) (0.05,0.114078) (0.1,0.11259) (0.2,0.108212) (0.5,0.0915471) (1,0.097031) (1,0.0933439) (0.5,0.0881065) (0.2,0.103931) (0.1,0.108179) (0.05,0.109692) (0.02,0.110356) (0.01,0.110534) (0.005,0.110614) (0.002,0.11066) (0.001,0.110675)} \closedcycle;
  \addplot[draw=none, fill=htMSOMA, fill opacity=0.06] coordinates {(0.001,0.114961) (0.002,0.114962) (0.005,0.114967) (0.01,0.114975) (0.02,0.11499) (0.05,0.115032) (0.1,0.11508) (0.2,0.115077) (0.5,0.11436) (1,0.112398) (1,0.107694) (0.5,0.109409) (0.2,0.110481) (0.1,0.110631) (0.05,0.110671) (0.02,0.110685) (0.01,0.110687) (0.005,0.110688) (0.002,0.110689) (0.001,0.110689)} \closedcycle;
  \addplot[draw=none, fill=htFSAM, fill opacity=0.06] coordinates {(0.001,0.219815) (0.002,0.219844) (0.005,0.219926) (0.01,0.220046) (0.02,0.220205) (0.05,0.219225) (0.1,0.219098) (0.2,0.215465) (0.5,0.2646) (1,0.282371) (1,0.216664) (0.5,0.18425) (0.2,0.18773) (0.1,0.188402) (0.05,0.190939) (0.02,0.192518) (0.01,0.193114) (0.005,0.193444) (0.002,0.193654) (0.001,0.193727)} \closedcycle;
  \addplot[draw=none, fill=htSAM, fill opacity=0.06] coordinates {(0.001,0.219817) (0.002,0.219848) (0.005,0.219936) (0.01,0.220066) (0.02,0.220233) (0.05,0.219377) (0.1,0.21947) (0.2,0.215793) (0.5,0.287397) (1,0.284567) (1,0.210829) (0.5,0.184904) (0.2,0.188018) (0.1,0.188683) (0.05,0.191091) (0.02,0.192583) (0.01,0.193148) (0.005,0.193461) (0.002,0.193662) (0.001,0.19373)} \closedcycle;
  \addplot[draw=none, fill=htMSAM, fill opacity=0.06] coordinates {(0.001,0.237783) (0.002,0.237813) (0.005,0.2379) (0.01,0.238041) (0.02,0.23831) (0.05,0.23902) (0.1,0.239965) (0.2,0.241264) (0.5,0.242327) (1,0.239687) (1,0.192496) (0.5,0.197616) (0.2,0.198482) (0.1,0.198324) (0.05,0.198171) (0.02,0.198054) (0.01,0.198011) (0.005,0.197988) (0.002,0.197974) (0.001,0.19797)} \closedcycle;
  \addplot[draw=none, fill=htSAMAdam, fill opacity=0.06] coordinates {(0.001,0.141324) (0.002,0.14132) (0.005,0.141316) (0.01,0.14131) (0.02,0.141297) (0.05,0.141238) (0.1,0.14114) (0.2,0.1409) (0.5,0.139847) (1,0.137277) (1,0.133985) (0.5,0.137129) (0.2,0.138491) (0.1,0.138825) (0.05,0.138959) (0.02,0.139041) (0.01,0.139063) (0.005,0.139078) (0.002,0.139083) (0.001,0.139085)} \closedcycle;
  \addplot[draw=none, fill=htFSAMAdam, fill opacity=0.06] coordinates {(0.001,0.141325) (0.002,0.141324) (0.005,0.141325) (0.01,0.141328) (0.02,0.141344) (0.05,0.141333) (0.1,0.141324) (0.2,0.141261) (0.5,0.140688) (1,0.13876) (1,0.135621) (0.5,0.138127) (0.2,0.138902) (0.1,0.139035) (0.05,0.139067) (0.02,0.139078) (0.01,0.139085) (0.005,0.139089) (0.002,0.139087) (0.001,0.139087)} \closedcycle;
  \addplot[color=htMuon, dashed, line width=0.7pt, forget plot] coordinates {(0.001,0.114465) (1,0.114465)};
  \addplot[color=htAdam, dash dot, line width=0.7pt, forget plot] coordinates {(0.001,0.13972) (1,0.13972)};
  \addplot[color=htSpec, solid, line width=0.8pt] coordinates {(0.001,0.114435) (0.002,0.114405) (0.005,0.114311) (0.01,0.114148) (0.02,0.113798) (0.05,0.11252) (0.1,0.10981) (0.2,0.103038) (0.5,0.0827894) (1,0.0732812)};
  \addplot[color=htRand, dashed, line width=0.8pt] coordinates {(0.001,0.114465) (0.002,0.114465) (0.005,0.114466) (0.01,0.114466) (0.02,0.114465) (0.05,0.114457) (0.1,0.114422) (0.2,0.11426) (0.5,0.113043) (1,0.108217)};
  \addplot[color=htMSOMA, loosely dashed, line width=0.8pt] coordinates {(0.001,0.114465) (0.002,0.114464) (0.005,0.114463) (0.01,0.114461) (0.02,0.114457) (0.05,0.114441) (0.1,0.114402) (0.2,0.114272) (0.5,0.113419) (1,0.110104)};
  \addplot[color=htFSAM, solid, line width=0.8pt] coordinates {(0.001,0.209433) (0.002,0.209417) (0.005,0.209364) (0.01,0.209262) (0.02,0.20902) (0.05,0.208206) (0.1,0.207087) (0.2,0.203312) (0.5,0.217194) (1,0.233556)};
  \addplot[color=htSAM, solid, line width=0.8pt] coordinates {(0.001,0.209435) (0.002,0.209421) (0.005,0.209373) (0.01,0.20928) (0.02,0.209056) (0.05,0.208298) (0.1,0.207267) (0.2,0.204354) (0.5,0.232364) (1,0.232512)};
  \addplot[color=htMSAM, solid, line width=0.8pt] coordinates {(0.001,0.223167) (0.002,0.223181) (0.005,0.22322) (0.01,0.223285) (0.02,0.223409) (0.05,0.223751) (0.1,0.224226) (0.2,0.224881) (0.5,0.225152) (1,0.22145)};
  \addplot[color=htSOMA, solid, line width=1.1pt] coordinates {(0.001,0.114449) (0.002,0.114433) (0.005,0.114382) (0.01,0.11429) (0.02,0.114078) (0.05,0.113219) (0.1,0.111048) (0.2,0.104681) (0.5,0.0859645) (1,0.147939)};
  \addplot[color=htFPSOMA, solid, line width=1.1pt] coordinates {(0.001,0.114441) (0.002,0.114417) (0.005,0.114342) (0.01,0.114211) (0.02,0.11392) (0.05,0.112844) (0.1,0.110365) (0.2,0.103583) (0.5,0.0822998) (1,0.0868701)};
  \addplot[color=htPreNS5, densely dotted, line width=1.1pt] coordinates {(0.001,0.114449) (0.002,0.114433) (0.005,0.114383) (0.01,0.114292) (0.02,0.114081) (0.05,0.113235) (0.1,0.111131) (0.2,0.10474) (0.5,0.0854558) (1,0.145889)};
  \addplot[color=htPostNS5, dash dot, line width=1.1pt] coordinates {(0.001,0.114452) (0.002,0.114439) (0.005,0.114397) (0.01,0.114323) (0.02,0.114158) (0.05,0.113524) (0.1,0.112023) (0.2,0.107481) (0.5,0.0896489) (1,0.0937082)};
  \addplot[color=htSAMAdam, solid, line width=1.1pt] coordinates {(0.001,0.139718) (0.002,0.139716) (0.005,0.139711) (0.01,0.139696) (0.02,0.139678) (0.05,0.139608) (0.1,0.139497) (0.2,0.139173) (0.5,0.138021) (1,0.135392)};
  \addplot[color=htFSAMAdam, densely dashed, line width=1.4pt] coordinates {(0.001,0.13972) (0.002,0.13972) (0.005,0.139721) (0.01,0.139717) (0.02,0.139718) (0.05,0.13971) (0.1,0.139693) (0.2,0.139571) (0.5,0.138826) (1,0.136963)};
\end{axis}
\begin{axis}[name=ax3_r0,
    at={(0.8065\textwidth,0in)},
    anchor=south west,
    ht panel, xmode=log, ymode=log,
    ymin=0.0353948, ymax=0.217327,
    ytick={0.05,0.1},
    yticklabels={0.05,0.1},
    title={$\alpha=3.0$}]
  \addplot[draw=none, fill=htSOMA, fill opacity=0.06] coordinates {(0.001,0.0557473) (0.002,0.0557413) (0.005,0.0557221) (0.01,0.0556855) (0.02,0.0555957) (0.05,0.0551987) (0.1,0.0541636) (0.2,0.0511955) (0.5,0.0491005) (1,0.133054) (1,0.131262) (0.5,0.0472011) (0.2,0.0502525) (0.1,0.0531687) (0.05,0.054127) (0.02,0.054503) (0.01,0.0545893) (0.005,0.0546248) (0.002,0.0546436) (0.001,0.0546495)} \closedcycle;
  \addplot[draw=none, fill=htFPSOMA, fill opacity=0.06] coordinates {(0.001,0.0557434) (0.002,0.0557335) (0.005,0.0557025) (0.01,0.0556467) (0.02,0.0555189) (0.05,0.0550083) (0.1,0.0537988) (0.2,0.050489) (0.5,0.0419513) (1,0.0598078) (1,0.0576184) (0.5,0.0408173) (0.2,0.0496531) (0.1,0.0528655) (0.05,0.0539686) (0.02,0.0544364) (0.01,0.0545555) (0.005,0.0546078) (0.002,0.0546368) (0.001,0.0546461)} \closedcycle;
  \addplot[draw=none, fill=htSpec, fill opacity=0.06] coordinates {(0.001,0.0557408) (0.002,0.0557284) (0.005,0.05569) (0.01,0.0556219) (0.02,0.0554688) (0.05,0.0548934) (0.1,0.053619) (0.2,0.0504673) (0.5,0.0418568) (1,0.197482) (1,0.140417) (0.5,0.0407487) (0.2,0.0496751) (0.1,0.0527415) (0.05,0.0538779) (0.02,0.054395) (0.01,0.0545341) (0.005,0.0545969) (0.002,0.0546324) (0.001,0.0546439)} \closedcycle;
  \addplot[draw=none, fill=htRand, fill opacity=0.06] coordinates {(0.001,0.055753) (0.002,0.055753) (0.005,0.055753) (0.01,0.0557529) (0.02,0.0557522) (0.05,0.0557471) (0.1,0.055728) (0.2,0.0556509) (0.5,0.0551244) (1,0.053356) (1,0.0515469) (0.5,0.0538428) (0.2,0.0545332) (0.1,0.0546298) (0.05,0.0546514) (0.02,0.0546554) (0.01,0.0546555) (0.005,0.0546554) (0.002,0.0546553) (0.001,0.0546552)} \closedcycle;
  \addplot[draw=none, fill=htPreNS5, fill opacity=0.06] coordinates {(0.001,0.0557475) (0.002,0.0557418) (0.005,0.0557233) (0.01,0.055688) (0.02,0.0556004) (0.05,0.0552075) (0.1,0.0541684) (0.2,0.0511087) (0.5,0.048623) (1,0.13074) (1,0.12729) (0.5,0.0468455) (0.2,0.0500909) (0.1,0.0531292) (0.05,0.0541162) (0.02,0.0544989) (0.01,0.0545872) (0.005,0.0546237) (0.002,0.0546432) (0.001,0.0546493)} \closedcycle;
  \addplot[draw=none, fill=htPostNS5, fill opacity=0.06] coordinates {(0.001,0.0557479) (0.002,0.0557426) (0.005,0.0557259) (0.01,0.0556957) (0.02,0.0556258) (0.05,0.0553405) (0.1,0.0546281) (0.2,0.0524789) (0.5,0.0455118) (1,0.0653721) (1,0.0646476) (0.5,0.0440795) (0.2,0.051474) (0.1,0.0535719) (0.05,0.0542493) (0.02,0.0545267) (0.01,0.0545967) (0.005,0.0546274) (0.002,0.0546444) (0.001,0.0546498)} \closedcycle;
  \addplot[draw=none, fill=htMSOMA, fill opacity=0.06] coordinates {(0.001,0.055753) (0.002,0.055753) (0.005,0.0557527) (0.01,0.0557521) (0.02,0.05575) (0.05,0.0557384) (0.1,0.0557061) (0.2,0.0555938) (0.5,0.0549828) (1,0.0532991) (1,0.0514681) (0.5,0.0536746) (0.2,0.054499) (0.1,0.0545971) (0.05,0.0546307) (0.02,0.0546465) (0.01,0.054651) (0.005,0.0546531) (0.002,0.0546544) (0.001,0.0546548)} \closedcycle;
  \addplot[draw=none, fill=htFSAM, fill opacity=0.06] coordinates {(0.001,0.0574557) (0.002,0.0574534) (0.005,0.0574465) (0.01,0.057435) (0.02,0.0574115) (0.05,0.0573382) (0.1,0.0572063) (0.2,0.0569079) (0.5,0.0554657) (1,0.0532108) (1,0.0509108) (0.5,0.0539805) (0.2,0.0552556) (0.1,0.0556787) (0.05,0.0558638) (0.02,0.0559673) (0.01,0.0560002) (0.005,0.0560164) (0.002,0.056026) (0.001,0.0560291)} \closedcycle;
  \addplot[draw=none, fill=htSAM, fill opacity=0.06] coordinates {(0.001,0.0574564) (0.002,0.0574549) (0.005,0.0574503) (0.01,0.0574425) (0.02,0.0574266) (0.05,0.0573759) (0.1,0.0572819) (0.2,0.0570597) (0.5,0.0559092) (1,0.0538813) (1,0.0518706) (0.5,0.0543071) (0.2,0.0554506) (0.1,0.055776) (0.05,0.0559122) (0.02,0.0559867) (0.01,0.0560099) (0.005,0.0560212) (0.002,0.0560279) (0.001,0.0560301)} \closedcycle;
  \addplot[draw=none, fill=htMSAM, fill opacity=0.06] coordinates {(0.001,0.0594697) (0.002,0.0594688) (0.005,0.0594659) (0.01,0.0594611) (0.02,0.0594511) (0.05,0.0594181) (0.1,0.059352) (0.2,0.0591803) (0.5,0.0583777) (1,0.0563878) (1,0.054444) (0.5,0.0568048) (0.2,0.0577613) (0.1,0.0579781) (0.05,0.0580544) (0.02,0.0580895) (0.01,0.0580994) (0.005,0.058104) (0.002,0.0581067) (0.001,0.0581075)} \closedcycle;
  \addplot[draw=none, fill=htSAMAdam, fill opacity=0.06] coordinates {(0.001,0.099562) (0.002,0.0995617) (0.005,0.0995582) (0.01,0.0995567) (0.02,0.0995404) (0.05,0.0995112) (0.1,0.0994694) (0.2,0.0993136) (0.5,0.0984351) (1,0.0965126) (1,0.0949808) (0.5,0.0967659) (0.2,0.0975843) (0.1,0.0977215) (0.05,0.0978067) (0.02,0.0978462) (0.01,0.0978618) (0.005,0.097866) (0.002,0.0978687) (0.001,0.0978669)} \closedcycle;
  \addplot[draw=none, fill=htFSAMAdam, fill opacity=0.06] coordinates {(0.001,0.0995643) (0.002,0.0995653) (0.005,0.0995699) (0.01,0.0995798) (0.02,0.0995818) (0.05,0.0996136) (0.1,0.0996767) (0.2,0.0997143) (0.5,0.0995262) (1,0.098184) (1,0.0964797) (0.5,0.0978359) (0.2,0.0979963) (0.1,0.097948) (0.05,0.0979233) (0.02,0.0978958) (0.01,0.0978842) (0.005,0.0978797) (0.002,0.0978728) (0.001,0.0978691)} \closedcycle;
  \addplot[color=htMuon, dashed, line width=0.7pt, forget plot] coordinates {(0.001,0.0551559) (1,0.0551559)};
  \addplot[color=htAdam, dash dot, line width=0.7pt, forget plot] coordinates {(0.001,0.0986366) (1,0.0986366)};
  \addplot[color=htSpec, solid, line width=0.8pt] coordinates {(0.001,0.0551445) (0.002,0.0551329) (0.005,0.0550969) (0.01,0.055033) (0.02,0.0548903) (0.05,0.0543519) (0.1,0.0531406) (0.2,0.0500373) (0.5,0.0411984) (1,0.186712)};
  \addplot[color=htRand, dashed, line width=0.8pt] coordinates {(0.001,0.0551559) (0.002,0.0551559) (0.005,0.0551559) (0.01,0.0551557) (0.02,0.0551549) (0.05,0.0551485) (0.1,0.055125) (0.2,0.0550307) (0.5,0.0543675) (1,0.0527377)};
  \addplot[color=htMSOMA, loosely dashed, line width=0.8pt] coordinates {(0.001,0.0551546) (0.002,0.0551533) (0.005,0.0551494) (0.01,0.0551429) (0.02,0.0551295) (0.05,0.0550871) (0.1,0.0550084) (0.2,0.0548385) (0.5,0.0542919) (1,0.0525688)};
  \addplot[color=htFSAM, solid, line width=0.8pt] coordinates {(0.001,0.0568636) (0.002,0.056861) (0.005,0.0568532) (0.01,0.05684) (0.02,0.056813) (0.05,0.0567285) (0.1,0.0565752) (0.2,0.056225) (0.5,0.0548461) (1,0.0516882)};
  \addplot[color=htSAM, solid, line width=0.8pt] coordinates {(0.001,0.0568645) (0.002,0.0568627) (0.005,0.0568573) (0.01,0.0568482) (0.02,0.0568295) (0.05,0.0567697) (0.1,0.0566577) (0.2,0.0563905) (0.5,0.0552957) (1,0.0526004)};
  \addplot[color=htMSAM, solid, line width=0.8pt] coordinates {(0.001,0.0591052) (0.002,0.0591042) (0.005,0.0591012) (0.01,0.059096) (0.02,0.0590851) (0.05,0.0590485) (0.1,0.0589745) (0.2,0.0587791) (0.5,0.0576729) (1,0.0552488)};
  \addplot[color=htSOMA, solid, line width=1.1pt] coordinates {(0.001,0.0551501) (0.002,0.0551442) (0.005,0.055125) (0.01,0.0550886) (0.02,0.0549993) (0.05,0.0546051) (0.1,0.053581) (0.2,0.050752) (0.5,0.0486467) (1,0.131516)};
  \addplot[color=htFPSOMA, solid, line width=1.1pt] coordinates {(0.001,0.0551457) (0.002,0.0551354) (0.005,0.0551032) (0.01,0.0550452) (0.02,0.0549135) (0.05,0.0544003) (0.1,0.0531996) (0.2,0.0500998) (0.5,0.0412527) (1,0.0591116)};
  \addplot[color=htPreNS5, densely dotted, line width=1.1pt] coordinates {(0.001,0.0551491) (0.002,0.0551422) (0.005,0.05512) (0.01,0.0550786) (0.02,0.0549797) (0.05,0.0545568) (0.1,0.05348) (0.2,0.0504831) (0.5,0.0479229) (1,0.128912)};
  \addplot[color=htPostNS5, dash dot, line width=1.1pt] coordinates {(0.001,0.0551497) (0.002,0.0551433) (0.005,0.0551236) (0.01,0.0550882) (0.02,0.0550083) (0.05,0.0546972) (0.1,0.0539557) (0.2,0.0518186) (0.5,0.0450044) (1,0.0648291)};
  \addplot[color=htSAMAdam, solid, line width=1.1pt] coordinates {(0.001,0.0986362) (0.002,0.0986358) (0.005,0.0986323) (0.01,0.0986283) (0.02,0.0986206) (0.05,0.0985853) (0.1,0.0985239) (0.2,0.0983621) (0.5,0.0975731) (1,0.0958095)};
  \addplot[color=htFSAMAdam, densely dashed, line width=1.4pt] coordinates {(0.001,0.0986385) (0.002,0.0986402) (0.005,0.0986433) (0.01,0.0986502) (0.02,0.0986597) (0.05,0.0986912) (0.1,0.0987483) (0.2,0.0987883) (0.5,0.0985304) (1,0.0973049)};
\end{axis}
\begin{axis}[hide axis, scale only axis, at={(ax0_r0.north west)}, anchor=south west, yshift=34pt,
    width=0.9300\textwidth, height=1pt, xmin=0, xmax=1, ymin=0, ymax=1,
    legend columns=7, legend cell align=left,
    legend style={at={(0.5,0)}, anchor=south, draw=none, fill=none, font=\scriptsize, inner sep=1pt,
      /tikz/every even column/.append style={column sep=5pt}}]
  \addlegendimage{color=htSOMA, solid, line width=1.1pt}\addlegendentry{\textsc{Soma}}
  \addlegendimage{color=htFPSOMA, solid, line width=1.1pt}\addlegendentry{\textsc{FP{-}Soma}}
  \addlegendimage{color=htMSOMA, loosely dashed, line width=0.8pt}\addlegendentry{\textsc{Msoma}}
  \addlegendimage{color=htPreNS5, densely dotted, line width=1.1pt}\addlegendentry{\textsc{Opspre}}
  \addlegendimage{color=htPostNS5, dash dot, line width=1.1pt}\addlegendentry{\textsc{Opspost}}
  \addlegendimage{color=htSpec, solid, line width=0.8pt}\addlegendentry{\textsc{Ssm}}
  \addlegendimage{color=htRand, dashed, line width=0.8pt}\addlegendentry{\textsc{Rsm}}
  \addlegendimage{color=htSAM, solid, line width=0.8pt}\addlegendentry{\textsc{Sam}}
  \addlegendimage{color=htFSAM, solid, line width=0.8pt}\addlegendentry{\textsc{F{-}Sam}}
  \addlegendimage{color=htMSAM, solid, line width=0.8pt}\addlegendentry{\textsc{Msam}}
  \addlegendimage{color=htSAMAdam, solid, line width=1.1pt}\addlegendentry{\textsc{Saw}}
  \addlegendimage{color=htFSAMAdam, densely dashed, line width=1.4pt}\addlegendentry{\textsc{Fsaw}}
  \addlegendimage{color=htMuon, dashed, line width=0.7pt}\addlegendentry{\textsc{Muon}}
  \addlegendimage{color=htAdam, dash dot, line width=0.7pt}\addlegendentry{\textsc{AdamW}}
\end{axis}
\path (ax0_r0.south west) -- (ax3_r0.south east) coordinate[midway] (xlab);
\node[font=\small, anchor=north] at ([yshift=-14pt]xlab) {perturbation radius $\rho$};
\end{tikzpicture}

%% file: dataforsyntheticplots/mse_iso_gap.tex
% ---- mse_iso_gap
\providecolor{muonrow}{RGB}{208,236,214}\providecolor{adamrow}{RGB}{255,214,214}\providecolor{sgdrow}{RGB}{208,226,247}
\begin{table*}[!t]\centering\footnotesize\setlength{\tabcolsep}{5pt}
\caption{Optimality gap: best value over the perturbation radius $\rho$ and the radius $\rho^\star$ where it is attained (median of 8 seeds; Frobenius-ball methods pool the global and per-layer radius modes). Row colour marks the base optimizer family: \muon-based (\colorbox{muonrow}{Green}), \adamw-based (\colorbox{adamrow}{Red}), \sgd-based (\colorbox{sgdrow}{Blue}); baselines have no $\rho^\star$. Best value per $\alpha$ in bold.}\label{tab:mse_iso_gap}
\vspace{2pt}
\resizebox{\ifdim\width>\textwidth\textwidth\else\width\fi}{!}{%
\begin{tabular}{lcccccccc}\toprule
Algorithm & \multicolumn{2}{c}{$\alpha=1.1$} & \multicolumn{2}{c}{$\alpha=1.6$} & \multicolumn{2}{c}{$\alpha=2.0$} & \multicolumn{2}{c}{$\alpha=3.0$} \\
& gap & $\rho^\star$ & gap & $\rho^\star$ & gap & $\rho^\star$ & gap & $\rho^\star$ \\\midrule
\rowcolor{muonrow} \muon & 1.01 & -- & 0.231 & -- & 0.114 & -- & 0.0552 & -- \\
\rowcolor{muonrow} \specsammuon & 0.648 & 1 & \textbf{0.141} & 1 & \textbf{0.0733} & 1 & \textbf{0.0412} & 0.5 \\
\rowcolor{muonrow} \fpsoma & 0.654 & 1 & 0.147 & 1 & 0.0823 & 0.5 & 0.0413 & 0.5 \\
\rowcolor{muonrow} \soma & 0.63 & 1 & 0.176 & 0.5 & 0.086 & 0.5 & 0.0486 & 0.5 \\
\rowcolor{muonrow} \randsammuon & 1 & 1 & 0.224 & 1 & 0.108 & 1 & 0.0527 & 1 \\
\rowcolor{muonrow} \opsomapre & 0.628 & 1 & 0.177 & 0.5 & 0.0855 & 0.5 & 0.0479 & 0.5 \\
\rowcolor{muonrow} \opsomapost & 0.717 & 1 & 0.162 & 1 & 0.0896 & 0.5 & 0.045 & 0.5 \\
\rowcolor{muonrow} \msoma & 0.991 & 1 & 0.227 & 1 & 0.11 & 1 & 0.0526 & 1 \\
\rowcolor{adamrow} \adamw & 0.382 & -- & 0.19 & -- & 0.14 & -- & 0.0986 & -- \\
\rowcolor{adamrow} \samadamw & \textbf{0.38} & 1 & 0.186 & 1 & 0.135 & 1 & 0.0958 & 1 \\
\rowcolor{adamrow} \fsamadamw & \textbf{0.38} & 1 & 0.187 & 1 & 0.137 & 1 & 0.0973 & 1 \\
\rowcolor{sgdrow} \sgd & 7.43 & -- & 0.866 & -- & 0.209 & -- & 0.0569 & -- \\
\rowcolor{sgdrow} \sam & 6.54 & 1 & 0.735 & 1 & 0.204 & 0.2 & 0.0526 & 1 \\
\rowcolor{sgdrow} \fsam & 6.58 & 1 & 0.753 & 1 & 0.203 & 0.2 & 0.0517 & 1 \\
\rowcolor{sgdrow} \textsc{Msam} & 8.98 & 1 & 1 & 0.001 & 0.221 & 1 & 0.0552 & 1 \\
\bottomrule\end{tabular}}\end{table*}

%% file: sections/llmexperiment.tex
For language model experiments, we used a single server with $8$ NVIDIA H200 GPUs ($141$\,GB memory each), $2$\,TB of host memory and Linux, using PyTorch~$2.13$ (CUDA~$13.0$) with Hugging Face Transformers~$4.57$ and Python~$3.13$. Each training run used one GPU, and several runs (with different optimizers or seeds) were executed in parallel on different GPUs on the shared server.

\input{paper_tables/table_pretraining}
Table~\ref{tab:pretraining} gives the final training loss and validation perplexity for pre-training on nanoGPT and Pythia-70M and the final validation accuracy on CIFAR-10, which Section~\ref{sec:exp-llm-pretraining} discusses.

\subsection{Catastrophic forgetting.}\label{sec:catastrophic-forgetting}
\textbf{Setup.} When a pretrained model is fine-tuned on new data, it usually becomes worse on the data it was originally trained on. We measure how strongly this happens for models pretrained with different optimizers. For each pretrained model, we record its loss on held-out FineWeb-Edu text, the data it was pretrained on, before ($L_{\mathrm{before}}$) and after ($L_{\mathrm{after}}$) fine-tuning on a new dataset; the larger the increase, the more the model has forgotten. We use four new datasets: Python code (codeparrot, \citealp{tunstall2022natural}), mathematics question answering (stackmathqa, \citealp{zhang2024stackmathqa}), music text (musicpile, \citealp{yuan2024chatmusician}) and instruction-following chat (tulu3, \citealp{lambert2024tulu}).

\textbf{Implementation.} All runs start from the same model, pretrained with \adamw, which we then fine-tune with each optimizer under the same schedule and number of steps. Differences in forgetting therefore come only from the fine-tuning optimizer. Since training longer on the new data also increases forgetting, we compare the optimizers at the same fine-tuning loss on each dataset. All results are averaged over three seeds.

\input{paper_figures_v2/figure_forgetting}
\textbf{Results.} Figures~\ref{fig:forgetting-nanogpt} and~\ref{fig:forgetting-pythia70m} show, for each model, the increase in FineWeb-Edu loss after fine-tuning on each dataset, together with its average. Forgetting depends strongly on the fine-tuning dataset: for almost every optimizer, it is largest after fine-tuning on code (codeparrot) or mathematics (stackmathqa) and smallest after fine-tuning on music (musicpile). It also depends strongly on the fine-tuning optimizer. The \sgd-based runs forget the least, but they start from a clearly higher loss ($^\dagger$), so they are not comparable. Among the comparable runs, fine-tuning with the \muon-based methods causes much more forgetting than fine-tuning with \adamw, about $1.6\times$ as much on nanoGPT and $2.1\times$ on Pythia-70M. Part of this gap is likely due to optimizer mismatch: the base model was pretrained with \adamw, so fine-tuning it with \adamw keeps the optimizer unchanged, whereas fine-tuning it with \muon switches the optimizer between pretraining and fine-tuning, which is known to erode acquired knowledge \citep{qu2026can}. Within the \muon family, an informed perturbation reduces forgetting on nanoGPT: compared with \muon, \fpsoma lowers the average increase by $16\%$, \specsammuon by $11\%$, \opsomapre by $8\%$ and \soma by $4\%$, whereas the random perturbation increases it slightly. Almost all of this reduction comes from code, where the increase drops from $0.37$ for \muon to $0.24$ for \specsammuon and \fpsoma. On Pythia-70M, all \muon-based methods forget about equally (within $1\%$ of \muon).

\input{paper_tables/table_forgetting_nanogpt}
\input{paper_tables/table_forgetting_pythia}

Tables~\ref{tab:forgetting-nanogpt} and~\ref{tab:forgetting-pythia70m} report $L_{\mathrm{before}}$, $L_{\mathrm{after}}$ on each dataset, their average and the smoothed relative increase $\Delta\%_{\mathrm{adj}}$ (lower means less forgetting). They confirm the same picture: among the comparable runs, fine-tuning with \adamw ends with the lowest average $L_{\mathrm{after}}$ ($3.304$ on nanoGPT and $3.811$ on Pythia-70M), while the \muon-based methods end higher ($3.37$--$3.42$ and $4.10$--$4.11$). Within the \muon family on nanoGPT, the relative increase falls from $4.20\%$ for \muon to $3.50\%$ for \fpsoma. Overall, sharpness-aware perturbations reduce the forgetting caused by \muon fine-tuning only moderately, and mainly with a fresh gradient, so they do not close the gap to \adamw, which matches the optimizer used in pretraining.

\input{paper_tables/table_runtime_pretraining}
\input{paper_tables/table_runtime_finetuning}

\paragraph{Runtime.}
Tables~\ref{tab:runtime-pretraining} and~\ref{tab:runtime-finetuning} report the wall-clock time of each optimizer in pretraining (seconds per epoch of 100 steps) and fine-tuning (seconds per run of 64 steps). We compare on nanoGPT, the largest model, where the gradient computation dominates the cost; for Pythia-70M and the CIFAR-10 CNN, the timings are affected by other jobs sharing the GPUs and by data loading, and do not reflect the number of gradient evaluations. The cost is driven by the number of forward--backward passes per step. The methods that need a second pass are the most expensive: on nanoGPT, \sam and \fsam take about $1.7\times$ the time of \sgd per epoch, and \specsammuon and \fpsoma take $2.4$--$2.9\times$ the time of \muon ($642$--$773$\,s against $265$\,s); in fine-tuning, they take $2.3$--$2.5\times$ as long. \soma needs only one pass per step and adds just $3\%$ to the time of \muon per epoch ($272$\,s), and \opsomapre, \opsomapost, \msoma and \randsammuon are similarly close to \muon.

\subsection{Hyperparameter Details}
\label{app:hyperparameters}
\paragraph{Hyperparameter search.}
For each optimizer, we select the learning rate $\eta$ and, for the sharpness-aware methods, the perturbation radius $\rho$ by grid search. Each candidate is trained on a single seed (seed~$0$) with a short proxy run, and we select the hyperparameter with the best final validation loss (validation accuracy for the CNN) at the end of the run. The grids used in each experiment are given below.

\paragraph{Grid for pretraining and PTQ.}

The initial learning-rate grids on the language models were:
\begin{itemize}
  \item[] For \muon-based methods: $\{6\times10^{-4},2\times10^{-3},6\times10^{-3},2\times10^{-2},6\times10^{-2}\}$,
  \item[] For \adamw-based methods: $\{2,6\}\times10^{-5},\ 2\times10^{-4},\ 6\times10^{-4},\ 2\times10^{-3}$,
  \item[] For \sgd-based methods: $\{10^{-3},3\times10^{-3},10^{-2},3\times10^{-2},10^{-1},3\times10^{-1}\}$.
\end{itemize}
  On CNN,  they were:
  \begin{itemize}
  \item[] For \muon-based methods: $\{3\times10^{-3},10^{-2},3\times10^{-2},10^{-1}\}$,
  \item[] For \adamw-based methods: $\{3\times10^{-4},10^{-3},3\times10^{-3},10^{-2}\}$,
  \item[] For \sgd-based methods: $\{10^{-2},3\times10^{-2},10^{-1},3\times10^{-1}\}$.
\end{itemize}
If the best value lay at the edge of a grid, the grid was extended by a factor of $\sqrt{10}$ in that direction, at most twice. The perturbation radius was searched on $\{3\times10^{-3},10^{-2},3\times10^{-2},10^{-1},3\times10^{-1}\}$ with the same extension rule; the sharpness-aware variants inherit the learning rate, momentum, weight decay and auxiliary learning rate of their base optimizer. 

For \adamw{} and \muon{} on the language models, we ran a second, finer search that varied one setting at a time around the values above: the learning rate ($\{\tfrac14,\tfrac12,\tfrac{1}{\sqrt2},1,\sqrt2,2,4\}\times$ the current value), the weight decay ($0$, $0.01$, $0.1$, $0.3$) and, for \muon{} only, the momentum ($0.9$, $0.95$, $0.98$) and the auxiliary \adamw{} learning rate ($0.01\times$ to $1\times$ the Muon learning rate). We then combined the best value of each setting and checked nearby learning rates ($\times\tfrac{1}{\sqrt2}$ and $\times\sqrt2$) and, for \muon{}, nearby auxiliary rates ($\times\tfrac12$ and $\times2$) around the combination.

\paragraph{Grid for fine-tuning.}
The learning rate of each optimizer was calibrated on a grid of $19$ log-spaced values from $10^{-8}$ to $10$ (two per decade), on the \adamw-pretrained model with seed~$42$ and the codeparrot and tulu3 datasets, and the value with the lowest mean held-out fine-tuning loss was chosen as the centre of the grid. Fine-tuning then used $11$ log-spaced learning rates, from the centre divided by $100$ to the centre multiplied by $10$, for every dataset, pretrained model, and seed. Among the runs that reach the common target fine-tuning loss, the run with the lowest FineWeb-Edu loss is reported.

\textbf{Selected Hyperparameters for Pre-training.} 
Table~\ref{tab:hyperparams} lists the final selected learning rate (lr), weight decay (wd) and perturbation radius $\rho$ of every run for the pre-training experiments. The \sgd{} family uses Nesterov momentum $0.9$ on all three datasets. The \muon family uses momentum $0.9$ on nanoGPT and $0.95$ on Pythia-70M and the CNN, five Newton--Schulz iterations, and an auxiliary AdamW for the embeddings, the output head and all one-dimensional parameters ($\beta=(0.9,0.95)$, $\epsilon=10^{-10}$, weight decay $0.1$ on the language models and $5\times10^{-4}$ on the CNN, learning rate $4.8\times10^{-4}$ on nanoGPT, $6\times10^{-6}$ on Pythia-70M and $1.2\times10^{-3}$ on the CNN). \adamw uses $\beta=(0.9,0.95)$ and $\epsilon=10^{-8}$, and the \fsam-type methods use $\lambda=0.9$ and $\sigma=1$. Learning rates and $\rho$ were selected on a single seed with short proxy runs, not on the three evaluation seeds.

\begin{table}[t]
  \centering
  \caption{Selected learning rate, weight decay and perturbation radius $\rho$ of the continued-pretraining experiments.}
  \label{tab:hyperparams}
  \footnotesize\setlength{\tabcolsep}{4pt}
  \renewcommand{\arraystretch}{1.15}
  \resizebox{\columnwidth}{!}{%
  \begin{tabular}{|l|ccc|ccc|ccc|}
    \hline
    & \multicolumn{3}{c|}{nanoGPT} & \multicolumn{3}{c|}{Pythia-70M} & \multicolumn{3}{c|}{CIFAR-10 CNN} \\
    \cline{2-10}
    Optimizer & lr & wd & $\rho$ & lr & wd & $\rho$ & lr & wd & $\rho$ \\
    \hline
    \muon & $4.24\!\times\!10^{-3}$ & $0.01$ & -- & $8.48\!\times\!10^{-4}$ & $0.3$ & -- & $0.03$ & $0$ & -- \\
    \specsammuon & $4.24\!\times\!10^{-3}$ & $0.01$ & $0.01$ & $8.48\!\times\!10^{-4}$ & $0.3$ & $3\!\times\!10^{-4}$ & $0.03$ & $0$ & $0.01$ \\
    \fpsoma & $4.24\!\times\!10^{-3}$ & $0.01$ & $0.01$ & $8.48\!\times\!10^{-4}$ & $0.3$ & $3\!\times\!10^{-4}$ & $0.03$ & $0$ & $0.01$ \\
    \soma & $4.24\!\times\!10^{-3}$ & $0.01$ & $3\!\times\!10^{-4}$ & $8.48\!\times\!10^{-4}$ & $0.3$ & $3\!\times\!10^{-4}$ & $0.03$ & $0$ & $3\!\times\!10^{-3}$ \\
    \randsammuon & $4.24\!\times\!10^{-3}$ & $0.01$ & $0.01$ & $8.48\!\times\!10^{-4}$ & $0.3$ & $9.5\!\times\!10^{-4}$ & $0.03$ & $0$ & $0.1$ \\
    \opsomapre & $4.24\!\times\!10^{-3}$ & $0.01$ & $3\!\times\!10^{-3}$ & $8.48\!\times\!10^{-4}$ & $0.3$ & $3\!\times\!10^{-4}$ & $0.03$ & $0$ & $3\!\times\!10^{-3}$ \\
    \opsomapost & $4.24\!\times\!10^{-3}$ & $0.01$ & $3\!\times\!10^{-3}$ & $8.48\!\times\!10^{-4}$ & $0.3$ & $3\!\times\!10^{-4}$ & $0.03$ & $0$ & $3\!\times\!10^{-3}$ \\
    \msoma & $4.24\!\times\!10^{-3}$ & $0.01$ & $0.01$ & $8.48\!\times\!10^{-4}$ & $0.3$ & $0.01$ & $0.03$ & $0$ & $0.01$ \\
    \hline
    \adamw & $4.24\!\times\!10^{-4}$ & $0$ & -- & $6\!\times\!10^{-5}$\textsuperscript{a} & $0.1$ & -- & $0.0316$ & $5\!\times\!10^{-4}$ & -- \\
    \saw & $4.24\!\times\!10^{-4}$ & $0$ & $3\!\times\!10^{-3}$ & $8.48\!\times\!10^{-5}$ & $0.1$ & $3\!\times\!10^{-4}$ & $0.0316$ & $5\!\times\!10^{-4}$ & $3\!\times\!10^{-3}$ \\
    \fsaw & $4.24\!\times\!10^{-4}$ & $0$ & $9.5\!\times\!10^{-4}$ & $8.48\!\times\!10^{-5}$ & $0.1$ & $3\!\times\!10^{-4}$ & $0.0316$ & $5\!\times\!10^{-4}$ & $9.5\!\times\!10^{-4}$ \\
    \hline
    \sgd & $0.3$ & $0$ & -- & $3\!\times\!10^{-3}$ & $0$ & -- & $0.3$ & $5\!\times\!10^{-4}$ & -- \\
    \sam & $0.3$ & $0$ & $3\!\times\!10^{-3}$ & $3\!\times\!10^{-3}$ & $0$ & $3\!\times\!10^{-4}$ & $0.3$ & $5\!\times\!10^{-4}$ & $3\!\times\!10^{-3}$ \\
    \fsam & $0.3$ & $0$ & $9.5\!\times\!10^{-4}$ & $3\!\times\!10^{-3}$ & $0$ & $3\!\times\!10^{-4}$ & $0.3$ & $5\!\times\!10^{-4}$ & $9.5\!\times\!10^{-4}$ \\
    \msam & $0.3$ & $0$ & $3\!\times\!10^{-3}$ & $3\!\times\!10^{-3}$ & $0$ & $3\!\times\!10^{-4}$ & $0.3$ & $5\!\times\!10^{-4}$ & $3\!\times\!10^{-3}$ \\
    \hline
  \end{tabular}}
\end{table}

\textbf{Selected Hyperparameters for Fine-Tuning.} Every fine-tuning run takes $64$ optimizer steps with a batch of $64$ sequences of $512$ tokens (two micro-batches of $32$), a linear warm-up over the first $10\%$ of the steps followed by a cosine decay to zero, and gradient-norm clipping at $1.0$. Table~\ref{tab:ft-hyperparams} lists the learning rate at the centre of the search grid and the perturbation radius $\rho$. The \sgd{} family uses Nesterov momentum $0.9$ and weight decay $0$. The \muon family uses momentum $0.95$, weight decay $0$, five Newton--Schulz iterations, and an auxiliary AdamW (learning rate equal to the Muon learning rate divided by $25$, weight decay $0.1$) for the embeddings, the output head and the one-dimensional parameters. \adamw uses weight decay $0.1$. The \fsam-type methods use $\lambda=0.9$ and $\sigma=1$.

\begin{table}[t]
  \centering
  \caption{Selected learning rate and perturbation radius $\rho$ for fine-tuning.}
  \label{tab:ft-hyperparams}
  \footnotesize\setlength{\tabcolsep}{5pt}
  \renewcommand{\arraystretch}{1.15}
  \begin{tabular}{|l|cc|cc|}
    \hline
    & \multicolumn{2}{c|}{nanoGPT} & \multicolumn{2}{c|}{Pythia-70M} \\
    \cline{2-5}
    Optimizer & lr & $\rho$ & lr & $\rho$ \\
    \hline
    \muon & $10^{-2}$ & -- & $10^{-3}$ & -- \\
    \specsammuon & $10^{-2}$ & $10^{-2}$ & $10^{-3}$ & $3\!\times\!10^{-4}$ \\
    \fpsoma & $10^{-2}$ & $10^{-2}$ & $10^{-3}$ & $3\!\times\!10^{-4}$ \\
    \soma & $10^{-2}$ & $3\!\times\!10^{-4}$ & $10^{-3}$ & $3\!\times\!10^{-4}$ \\
    \randsammuon & $10^{-2}$ & $10^{-2}$ & $10^{-3}$ & $9.5\!\times\!10^{-4}$ \\
    \opsomapre & $10^{-2}$ & $3\!\times\!10^{-3}$ & $10^{-3}$ & $3\!\times\!10^{-4}$ \\
    \opsomapost & $10^{-2}$ & $3\!\times\!10^{-3}$ & $10^{-3}$ & $3\!\times\!10^{-4}$ \\
    \hline
    \adamw & $10^{-3}$ & -- & $3.16\!\times\!10^{-5}$ & -- \\
    \hline
    \sgd & $0.316$ & -- & $0.0316$ & -- \\
    \sam & $0.316$ & $3\!\times\!10^{-3}$ & $0.0316$ & $3\!\times\!10^{-4}$ \\
    \fsam & $0.316$ & $9.5\!\times\!10^{-4}$ & $0.0316$ & $3\!\times\!10^{-4}$ \\
    \hline
  \end{tabular}
\end{table}

%% file: paper_tables/table_pretraining.tex
\makeatletter
\@ifundefined{rowcolor}{\PackageError{table}{Row colours need the colortbl package}{Add \string\usepackage{colortbl} (after xcolor) to the preamble.}}{}
\makeatother
\definecolor{adamrow}{RGB}{255,214,214}   % AdamW family (red)
\definecolor{sgdrow}{RGB}{208,226,247}    % SGD family   (blue)
\definecolor{muonrow}{RGB}{208,236,214}   % Muon family  (green)

\begin{table*}[!t]
  \centering
  \caption{Continued-pretraining results of the fifteen optimizers after the last epoch (FineWeb-Edu NanoGPT 15, FineWeb-Edu Pythia 14,
  CIFAR-10 Airbench CNN 7 epochs). Lower is better for train loss
  and validation perplexity (Val ppl), higher for validation accuracy (Val acc); the best validation value of each dataset is in bold.
  Mean $\pm$ std over 3 seeds. Row colour marks the base optimizer family: \muon-based (\colorbox{muonrow}{Green}), \adamw-based (\colorbox{adamrow}{Red}), \sgd-based (\colorbox{sgdrow}{Blue}).}
  \label{tab:pretraining}
  \vspace{2pt}
  \resizebox{\ifdim\width>\textwidth\textwidth\else\width\fi}{!}{%
  \footnotesize\setlength{\tabcolsep}{5pt}
  \begin{tabular}{lcccccc}
    \toprule
    & \multicolumn{2}{c}{FineWeb-Edu NanoGPT} & \multicolumn{2}{c}{FineWeb-Edu Pythia} & \multicolumn{2}{c}{CIFAR-10 Airbench CNN} \\
    \cmidrule(lr){2-3}\cmidrule(lr){4-5}\cmidrule(lr){6-7}
    Optimizer & Train loss & Val ppl & Train loss & Val ppl & Train loss & Val acc (\%) \\
    \midrule
    \rowcolor{muonrow} \muon & 3.303{\scriptsize$\pm$0.004} & 24.41{\scriptsize$\pm$0.02} & 3.558{\scriptsize$\pm$0.003} & \textbf{35.72}{\scriptsize$\pm$0.03} & 0.266{\scriptsize$\pm$0.002} & 87.27{\scriptsize$\pm$0.08} \\
    \rowcolor{muonrow} \specsammuon & 3.305{\scriptsize$\pm$0.004} & 24.55{\scriptsize$\pm$0.03} & 3.559{\scriptsize$\pm$0.003} & 35.78{\scriptsize$\pm$0.03} & 0.271{\scriptsize$\pm$0.003} & 87.63{\scriptsize$\pm$0.34} \\
    \rowcolor{muonrow} \fpsoma & 3.305{\scriptsize$\pm$0.004} & 24.54{\scriptsize$\pm$0.03} & 3.560{\scriptsize$\pm$0.003} & 35.79{\scriptsize$\pm$0.03} & 0.270{\scriptsize$\pm$0.001} & 87.37{\scriptsize$\pm$0.44} \\
    \rowcolor{muonrow} \soma & 3.303{\scriptsize$\pm$0.004} & 24.40{\scriptsize$\pm$0.01} & 3.562{\scriptsize$\pm$0.003} & 35.79{\scriptsize$\pm$0.03} & 0.269{\scriptsize$\pm$0.002} & 87.51{\scriptsize$\pm$0.27} \\
    \rowcolor{muonrow} \randsammuon & 3.303{\scriptsize$\pm$0.004} & 24.41{\scriptsize$\pm$0.02} & 3.558{\scriptsize$\pm$0.003} & 35.74{\scriptsize$\pm$0.03} & 0.266{\scriptsize$\pm$0.004} & 87.22{\scriptsize$\pm$0.07} \\
    \rowcolor{muonrow} \opsomapre & 3.305{\scriptsize$\pm$0.004} & 24.42{\scriptsize$\pm$0.01} & 3.560{\scriptsize$\pm$0.003} & 35.75{\scriptsize$\pm$0.03} & 0.266{\scriptsize$\pm$0.003} & 87.14{\scriptsize$\pm$0.13} \\
    \rowcolor{muonrow} \opsomapost & 3.304{\scriptsize$\pm$0.004} & 24.41{\scriptsize$\pm$0.01} & 3.559{\scriptsize$\pm$0.003} & \textbf{35.72}{\scriptsize$\pm$0.03} & 0.266{\scriptsize$\pm$0.002} & 87.14{\scriptsize$\pm$0.05} \\
    \rowcolor{muonrow} \msoma & 3.303{\scriptsize$\pm$0.004} & \textbf{24.39}{\scriptsize$\pm$0.02} & 3.743{\scriptsize$\pm$0.003} & 37.41{\scriptsize$\pm$0.05} & 0.265{\scriptsize$\pm$0.002} & 87.39{\scriptsize$\pm$0.37} \\
    \rowcolor{adamrow} \adamw & 3.311{\scriptsize$\pm$0.004} & 24.60{\scriptsize$\pm$0.02} & 3.574{\scriptsize$\pm$0.003} & 36.20{\scriptsize$\pm$0.02} & 0.316{\scriptsize$\pm$0.002} & 87.87{\scriptsize$\pm$0.53} \\
    \rowcolor{adamrow} \saw & 3.310{\scriptsize$\pm$0.004} & 24.59{\scriptsize$\pm$0.03} & 3.570{\scriptsize$\pm$0.003} & 35.96{\scriptsize$\pm$0.05} & 0.317{\scriptsize$\pm$0.008} & \textbf{87.91}{\scriptsize$\pm$0.50} \\
    \rowcolor{adamrow} \fsaw & 3.310{\scriptsize$\pm$0.004} & 24.60{\scriptsize$\pm$0.03} & 3.570{\scriptsize$\pm$0.003} & 35.95{\scriptsize$\pm$0.04} & 0.318{\scriptsize$\pm$0.004} & 87.70{\scriptsize$\pm$0.35} \\
    \rowcolor{sgdrow} \sgd & 3.361{\scriptsize$\pm$0.004} & 25.75{\scriptsize$\pm$0.02} & 3.664{\scriptsize$\pm$0.002} & 40.03{\scriptsize$\pm$0.01} & 0.461{\scriptsize$\pm$0.003} & 84.66{\scriptsize$\pm$0.40} \\
    \rowcolor{sgdrow} \sam & 3.361{\scriptsize$\pm$0.004} & 25.75{\scriptsize$\pm$0.01} & 3.666{\scriptsize$\pm$0.002} & 40.09{\scriptsize$\pm$0.01} & 0.462{\scriptsize$\pm$0.002} & 84.72{\scriptsize$\pm$0.41} \\
    \rowcolor{sgdrow} \fsam & 3.361{\scriptsize$\pm$0.004} & 25.76{\scriptsize$\pm$0.01} & 3.666{\scriptsize$\pm$0.002} & 40.09{\scriptsize$\pm$0.01} & 0.459{\scriptsize$\pm$0.002} & 84.73{\scriptsize$\pm$0.29} \\
    \rowcolor{sgdrow} \msam & 3.359{\scriptsize$\pm$0.004} & 25.71{\scriptsize$\pm$0.01} & 3.657{\scriptsize$\pm$0.002} & 39.73{\scriptsize$\pm$0.01} & 0.467{\scriptsize$\pm$0.002} & 84.03{\scriptsize$\pm$0.58} \\
    \bottomrule
  \end{tabular}}
\end{table*}

%% file: paper_figures_v2/figure_forgetting.tex
% AUTO-GENERATED: one full-width figure per model (needs common/preamble.tex in the preamble)
\begin{figure*}[!t]
  \centering
  \resizebox{\ifdim\width>\textwidth\textwidth\else\width\fi}{!}{\input{\curveroot/forgetting_nanogpt.tex}}
  \caption{Catastrophic forgetting of nanoGPT (GPT-2 small) models pretrained with each optimizer, after fine-tuning on four datasets. Bars: rise $\Delta$PT of the held-out FineWeb-Edu loss at a common fine-tuning loss (lower is better); diamond: average over datasets; error bars: standard deviation over 3 pretraining seeds. $^\dagger$ starting model clearly weaker than AdamW's (not comparable).}
  \label{fig:forgetting-nanogpt}
\end{figure*}

\begin{figure*}[!t]
  \centering
  \resizebox{\ifdim\width>\textwidth\textwidth\else\width\fi}{!}{\input{\curveroot/forgetting_pythia70m.tex}}
  \caption{Catastrophic forgetting of Pythia-70M models pretrained with each optimizer, after fine-tuning on four datasets. Bars: rise $\Delta$PT of the held-out FineWeb-Edu loss at a common fine-tuning loss (lower is better); diamond: average over datasets; error bars: standard deviation over 3 pretraining seeds. $^\dagger$ starting model clearly weaker than AdamW's (not comparable).}
  \label{fig:forgetting-pythia70m}
\end{figure*}

%% file: paper_figures_v2/forgetting_nanogpt.tex
\providecolor{muonrow}{RGB}{208,236,214}\providecolor{adamrow}{RGB}{255,214,214}\providecolor{sgdrow}{RGB}{208,226,247}
\providecommand{\fgM}[1]{\colorbox{muonrow}{\strut #1}}
\providecommand{\fgA}[1]{\colorbox{adamrow}{\strut #1}}
\providecommand{\fgS}[1]{\colorbox{sgdrow}{\strut #1}}
\definecolor{fg-codeparrot}{HTML}{0072B2}
\definecolor{fg-stackmathqa}{HTML}{E69F00}
\definecolor{fg-musicpile}{HTML}{009E73}
\definecolor{fg-tulu3}{HTML}{CC79A7}
\pgfplotstableread[col sep=comma]{\curveroot/data/forgetting_nanogpt.csv}\forgetTabNanoGPT
% bar width computed from the axis width (0.9\textwidth): 4 bars fill 92% of a category, so there is no extra space between the groups
\pgfmathsetlengthmacro{\fgbw}{0.65*0.9*\textwidth/11/4}
\pgfmathsetlengthmacro{\fgsA}{-1.5*\fgbw}\pgfmathsetlengthmacro{\fgsB}{-0.5*\fgbw}
\pgfmathsetlengthmacro{\fgsC}{0.5*\fgbw}\pgfmathsetlengthmacro{\fgsD}{1.5*\fgbw}
\begin{tikzpicture}
\begin{axis}[
  ybar, width=0.9\textwidth, height=0.30\textwidth, bar width=\fgbw, ymin=0, xmin=-0.5, xmax=10.5, enlarge x limits=false,
  ymajorgrids, grid style={gray!25}, xtick=data, xticklabels={\fgM{\muon},\fgM{\specsammuon},\fgM{\fpsoma},\fgM{\soma},\fgM{\randsammuon},\fgM{\opsomapre},\fgM{\opsomapost},\fgA{\adamw},\fgS{\sgd{}$^\dagger$},\fgS{\sam{}$^\dagger$},\fgS{\fsam{}$^\dagger$}},
  x tick label style={rotate=35, anchor=east, font=\scriptsize}, tick label style={font=\scriptsize}, label style={font=\footnotesize},
  title={nanoGPT (GPT-2 small)}, title style={font=\footnotesize, yshift=0.55cm}, xlabel={Pretraining optimizer},
  ylabel={Forgetting $\Delta$PT (FineWeb loss rise)}, scaled y ticks=false,
  legend style={font=\scriptsize, at={(0.5,1.02)}, anchor=south, legend columns=5, draw=none},
  error bars/y dir=both, error bars/y explicit, error bars/error bar style={line width=0.4pt}]
\addplot[bar width=\fgbw, bar shift=\fgsA, fill=fg-codeparrot, draw=none, mark=none] table[col sep=comma, x=idx, y=codeparrot, y error=codeparrot_sd]{\curveroot/data/forgetting_nanogpt.csv};
\addlegendentry{codeparrot}
\addplot[bar width=\fgbw, bar shift=\fgsB, fill=fg-stackmathqa, draw=none, mark=none] table[col sep=comma, x=idx, y=stackmathqa, y error=stackmathqa_sd]{\curveroot/data/forgetting_nanogpt.csv};
\addlegendentry{stackmathqa}
\addplot[bar width=\fgbw, bar shift=\fgsC, fill=fg-musicpile, draw=none, mark=none] table[col sep=comma, x=idx, y=musicpile, y error=musicpile_sd]{\curveroot/data/forgetting_nanogpt.csv};
\addlegendentry{musicpile}
\addplot[bar width=\fgbw, bar shift=\fgsD, fill=fg-tulu3, draw=none, mark=none] table[col sep=comma, x=idx, y=tulu3, y error=tulu3_sd]{\curveroot/data/forgetting_nanogpt.csv};
\addlegendentry{tulu3}
\addplot[only marks, bar shift=0pt, mark=diamond*, mark size=2.6pt, color=black, fill=black] table[col sep=comma, x=idx, y=avg, y error=avg_sd]{\curveroot/data/forgetting_nanogpt.csv};
\addlegendentry{average}
\end{axis}
\end{tikzpicture}

%% file: paper_figures_v2/forgetting_pythia70m.tex
\providecolor{muonrow}{RGB}{208,236,214}\providecolor{adamrow}{RGB}{255,214,214}\providecolor{sgdrow}{RGB}{208,226,247}
\providecommand{\fgM}[1]{\colorbox{muonrow}{\strut #1}}
\providecommand{\fgA}[1]{\colorbox{adamrow}{\strut #1}}
\providecommand{\fgS}[1]{\colorbox{sgdrow}{\strut #1}}
\definecolor{fg-codeparrot}{HTML}{0072B2}
\definecolor{fg-stackmathqa}{HTML}{E69F00}
\definecolor{fg-musicpile}{HTML}{009E73}
\definecolor{fg-tulu3}{HTML}{CC79A7}
\pgfplotstableread[col sep=comma]{\curveroot/data/forgetting_pythia70m.csv}\forgetTabPythia
% bar width computed from the axis width (0.9\textwidth): 4 bars fill 92% of a category, so there is no extra space between the groups
\pgfmathsetlengthmacro{\fgbw}{0.65*0.9*\textwidth/11/4}
\pgfmathsetlengthmacro{\fgsA}{-1.5*\fgbw}\pgfmathsetlengthmacro{\fgsB}{-0.5*\fgbw}
\pgfmathsetlengthmacro{\fgsC}{0.5*\fgbw}\pgfmathsetlengthmacro{\fgsD}{1.5*\fgbw}
\begin{tikzpicture}
\begin{axis}[
  ybar, width=0.9\textwidth, height=0.30\textwidth, bar width=\fgbw, ymin=0, xmin=-0.5, xmax=10.5, enlarge x limits=false,
  ymajorgrids, grid style={gray!25}, xtick=data, xticklabels={\fgM{\muon},\fgM{\specsammuon},\fgM{\fpsoma},\fgM{\soma},\fgM{\randsammuon},\fgM{\opsomapre},\fgM{\opsomapost},\fgA{\adamw},\fgS{\sgd{}$^\dagger$},\fgS{\sam{}$^\dagger$},\fgS{\fsam{}$^\dagger$}},
  x tick label style={rotate=35, anchor=east, font=\scriptsize}, tick label style={font=\scriptsize}, label style={font=\footnotesize},
  title={Pythia-70M}, title style={font=\footnotesize, yshift=0.55cm}, xlabel={Pretraining optimizer},
  ylabel={Forgetting $\Delta$PT (FineWeb loss rise)}, scaled y ticks=false,
  legend style={font=\scriptsize, at={(0.5,1.02)}, anchor=south, legend columns=5, draw=none},
  error bars/y dir=both, error bars/y explicit, error bars/error bar style={line width=0.4pt}]
\addplot[bar width=\fgbw, bar shift=\fgsA, fill=fg-codeparrot, draw=none, mark=none] table[col sep=comma, x=idx, y=codeparrot, y error=codeparrot_sd]{\curveroot/data/forgetting_pythia70m.csv};
\addlegendentry{codeparrot}
\addplot[bar width=\fgbw, bar shift=\fgsB, fill=fg-stackmathqa, draw=none, mark=none] table[col sep=comma, x=idx, y=stackmathqa, y error=stackmathqa_sd]{\curveroot/data/forgetting_pythia70m.csv};
\addlegendentry{stackmathqa}
\addplot[bar width=\fgbw, bar shift=\fgsC, fill=fg-musicpile, draw=none, mark=none] table[col sep=comma, x=idx, y=musicpile, y error=musicpile_sd]{\curveroot/data/forgetting_pythia70m.csv};
\addlegendentry{musicpile}
\addplot[bar width=\fgbw, bar shift=\fgsD, fill=fg-tulu3, draw=none, mark=none] table[col sep=comma, x=idx, y=tulu3, y error=tulu3_sd]{\curveroot/data/forgetting_pythia70m.csv};
\addlegendentry{tulu3}
\addplot[only marks, bar shift=0pt, mark=diamond*, mark size=2.6pt, color=black, fill=black] table[col sep=comma, x=idx, y=avg, y error=avg_sd]{\curveroot/data/forgetting_pythia70m.csv};
\addlegendentry{average}
\end{axis}
\end{tikzpicture}

%% file: paper_tables/table_forgetting_nanogpt.tex
% Forgetting table, nanoGPT (GPT-2 small). Values are hard-coded; needs \usepackage{booktabs} and \usepackage{graphicx}.
\begin{table*}[!t]
  \centering
  \caption{Catastrophic Forgetting of nanoGPT (GPT-2 small) models pretrained with different optimizers: held-out FineWeb-Edu loss before ($L_{\mathrm{before}}$) and after
  ($L_{\mathrm{after}}$) fine-tuning on four datasets (lower is better). $\Delta\%_{\mathrm{adj}}$ is the smoothed relative increase of the average
  $L_{\mathrm{after}}$. Mean $\pm$ std over 3 seeds. Row colour marks the base optimizer family: \muon-based (\colorbox{muonrow}{Green}), \adamw-based (\colorbox{adamrow}{Red}), \sgd-based (\colorbox{sgdrow}{Blue}).}
  \label{tab:forgetting-nanogpt}
  \vspace{2pt}
  \resizebox{\ifdim\width>\textwidth\textwidth\else\width\fi}{!}{%
  \footnotesize\setlength{\tabcolsep}{4.5pt}
  \begin{tabular}{lccccccc}
    \toprule
    & & \multicolumn{4}{c}{$L_{\mathrm{after}}$ by fine-tuning dataset} & & \\
    \cmidrule(lr){3-6}
    Optimizer & $L_{\mathrm{before}}$ & codeparrot & stackmathqa & musicpile & tulu3 & Average & $\Delta\%_{\mathrm{adj}}$ \\
    \midrule
    \rowcolor{muonrow} \muon & 3.144{\scriptsize$\pm$0.000} & 3.516{\scriptsize$\pm$0.074} & 3.526{\scriptsize$\pm$0.004} & 3.259{\scriptsize$\pm$0.000} & 3.332{\scriptsize$\pm$0.001} & 3.409{\scriptsize$\pm$0.020} & 4.20{\scriptsize$\pm$0.31} \\
    \rowcolor{muonrow} \specsammuon & 3.147{\scriptsize$\pm$0.000} & 3.386{\scriptsize$\pm$0.003} & 3.561{\scriptsize$\pm$0.006} & 3.251{\scriptsize$\pm$0.001} & 3.331{\scriptsize$\pm$0.040} & 3.382{\scriptsize$\pm$0.009} & 3.73{\scriptsize$\pm$0.15} \\
   \rowcolor{muonrow} \fpsoma & 3.146{\scriptsize$\pm$0.000} & 3.393{\scriptsize$\pm$0.004} & 3.515{\scriptsize$\pm$0.084} & 3.251{\scriptsize$\pm$0.001} & 3.308{\scriptsize$\pm$0.002} & 3.367{\scriptsize$\pm$0.022} & 3.50{\scriptsize$\pm$0.35} \\
    \rowcolor{muonrow} \soma & 3.144{\scriptsize$\pm$0.001} & 3.491{\scriptsize$\pm$0.086} & 3.511{\scriptsize$\pm$0.005} & 3.258{\scriptsize$\pm$0.002} & 3.330{\scriptsize$\pm$0.001} & 3.397{\scriptsize$\pm$0.023} & 4.03{\scriptsize$\pm$0.35} \\
    \rowcolor{muonrow} \randsammuon & 3.144{\scriptsize$\pm$0.001} & 3.558{\scriptsize$\pm$0.009} & 3.526{\scriptsize$\pm$0.007} & 3.260{\scriptsize$\pm$0.000} & 3.333{\scriptsize$\pm$0.001} & 3.419{\scriptsize$\pm$0.004} & 4.36{\scriptsize$\pm$0.06} \\
    \rowcolor{muonrow} \opsomapre & 3.144{\scriptsize$\pm$0.001} & 3.451{\scriptsize$\pm$0.081} & 3.514{\scriptsize$\pm$0.003} & 3.259{\scriptsize$\pm$0.001} & 3.330{\scriptsize$\pm$0.001} & 3.388{\scriptsize$\pm$0.020} & 3.88{\scriptsize$\pm$0.31} \\
    \rowcolor{muonrow} \opsomapost & 3.144{\scriptsize$\pm$0.001} & 3.493{\scriptsize$\pm$0.079} & 3.516{\scriptsize$\pm$0.005} & 3.259{\scriptsize$\pm$0.001} & 3.331{\scriptsize$\pm$0.001} & 3.400{\scriptsize$\pm$0.021} & 4.06{\scriptsize$\pm$0.32} \\
    \rowcolor{adamrow} \adamw & 3.142{\scriptsize$\pm$0.000} & 3.279{\scriptsize$\pm$0.076} & \textbf{3.388}{\scriptsize$\pm$0.001} & \textbf{3.248}{\scriptsize$\pm$0.001} & \textbf{3.301}{\scriptsize$\pm$0.000} & \textbf{3.304}{\scriptsize$\pm$0.019} & 2.58{\scriptsize$\pm$0.30} \\
    \rowcolor{sgdrow} \sgd & 3.185{\scriptsize$\pm$0.000} & 3.250{\scriptsize$\pm$0.000} & 3.396{\scriptsize$\pm$0.001} & 3.271{\scriptsize$\pm$0.001} & 3.323{\scriptsize$\pm$0.000} & 3.310{\scriptsize$\pm$0.001} & 1.97{\scriptsize$\pm$0.01} \\
    \rowcolor{sgdrow} \sam & 3.185{\scriptsize$\pm$0.000} & \textbf{3.249}{\scriptsize$\pm$0.001} & 3.395{\scriptsize$\pm$0.002} & 3.271{\scriptsize$\pm$0.001} & 3.323{\scriptsize$\pm$0.000} & 3.310{\scriptsize$\pm$0.000} & \textbf{1.96}{\scriptsize$\pm$0.01} \\
    \rowcolor{sgdrow} \fsam & 3.185{\scriptsize$\pm$0.000} & 3.251{\scriptsize$\pm$0.000} & 3.395{\scriptsize$\pm$0.002} & 3.271{\scriptsize$\pm$0.000} & 3.322{\scriptsize$\pm$0.000} & 3.310{\scriptsize$\pm$0.000} & \textbf{1.96}{\scriptsize$\pm$0.01} \\
    \bottomrule
  \end{tabular}}
\end{table*}

%% file: paper_tables/table_forgetting_pythia.tex
% Forgetting table, Pythia-70M. Values are hard-coded; needs \usepackage{booktabs} and \usepackage{graphicx}.
\begin{table*}[!t]
  \centering
  \caption{Catastrophic Forgetting of Pythia-70M models pretrained with different optimizers: held-out FineWeb-Edu loss before ($L_{\mathrm{before}}$) and after
  ($L_{\mathrm{after}}$) fine-tuning on four datasets (lower is better). $\Delta\%_{\mathrm{adj}}$ is the smoothed relative increase of the average
  $L_{\mathrm{after}}$. Mean $\pm$ std over 3 seeds. Row colour marks the base optimizer family: \muon-based (\colorbox{muonrow}{Green}), \adamw-based (\colorbox{adamrow}{Red}), \sgd-based (\colorbox{sgdrow}{Blue}).}
  \label{tab:forgetting-pythia70m}
  \vspace{2pt}
  \resizebox{\ifdim\width>\textwidth\textwidth\else\width\fi}{!}{%
  \footnotesize\setlength{\tabcolsep}{4.5pt}
  \begin{tabular}{lccccccc}
    \toprule
    & & \multicolumn{4}{c}{$L_{\mathrm{after}}$ by fine-tuning dataset} & & \\
    \cmidrule(lr){3-6}
    Optimizer & $L_{\mathrm{before}}$ & codeparrot & stackmathqa & musicpile & tulu3 & Average & $\Delta\%_{\mathrm{adj}}$ \\
    \midrule
   \rowcolor{muonrow} \muon & 3.515{\scriptsize$\pm$0.000} & 4.361{\scriptsize$\pm$0.005} & 4.464{\scriptsize$\pm$0.015} & 3.732{\scriptsize$\pm$0.001} & 3.888{\scriptsize$\pm$0.056} & 4.111{\scriptsize$\pm$0.013} & 8.45{\scriptsize$\pm$0.19} \\
    \rowcolor{muonrow} \specsammuon & 3.516{\scriptsize$\pm$0.001} & 4.360{\scriptsize$\pm$0.002} & 4.478{\scriptsize$\pm$0.026} & 3.733{\scriptsize$\pm$0.001} & 3.856{\scriptsize$\pm$0.004} & 4.107{\scriptsize$\pm$0.006} & 8.36{\scriptsize$\pm$0.09} \\
    \rowcolor{muonrow} \fpsoma & 3.516{\scriptsize$\pm$0.001} & 4.361{\scriptsize$\pm$0.002} & 4.480{\scriptsize$\pm$0.016} & 3.733{\scriptsize$\pm$0.001} & 3.852{\scriptsize$\pm$0.001} & 4.107{\scriptsize$\pm$0.004} & 8.36{\scriptsize$\pm$0.07} \\
     \rowcolor{muonrow} \soma & 3.517{\scriptsize$\pm$0.000} & 4.360{\scriptsize$\pm$0.004} & 4.481{\scriptsize$\pm$0.018} & 3.734{\scriptsize$\pm$0.001} & 3.855{\scriptsize$\pm$0.004} & 4.107{\scriptsize$\pm$0.003} & 8.36{\scriptsize$\pm$0.05} \\
     \rowcolor{muonrow} \randsammuon & 3.514{\scriptsize$\pm$0.000} & 4.358{\scriptsize$\pm$0.007} & 4.468{\scriptsize$\pm$0.027} & 3.732{\scriptsize$\pm$0.001} & 3.857{\scriptsize$\pm$0.002} & 4.104{\scriptsize$\pm$0.008} & 8.35{\scriptsize$\pm$0.12} \\
     \rowcolor{muonrow} \opsomapre & 3.516{\scriptsize$\pm$0.000} & 4.362{\scriptsize$\pm$0.005} & 4.489{\scriptsize$\pm$0.012} & 3.733{\scriptsize$\pm$0.000} & 3.859{\scriptsize$\pm$0.003} & 4.111{\scriptsize$\pm$0.003} & 8.42{\scriptsize$\pm$0.04} \\
    \rowcolor{muonrow} \opsomapost & 3.515{\scriptsize$\pm$0.000} & 4.359{\scriptsize$\pm$0.007} & 4.472{\scriptsize$\pm$0.023} & 3.733{\scriptsize$\pm$0.001} & 3.858{\scriptsize$\pm$0.001} & 4.106{\scriptsize$\pm$0.007} & 8.36{\scriptsize$\pm$0.11} \\
   \rowcolor{adamrow} \adamw & 3.528{\scriptsize$\pm$0.001} & 3.910{\scriptsize$\pm$0.001} & 3.914{\scriptsize$\pm$0.002} & \textbf{3.686}{\scriptsize$\pm$0.000} & \textbf{3.734}{\scriptsize$\pm$0.043} & 3.811{\scriptsize$\pm$0.011} & 4.01{\scriptsize$\pm$0.14} \\
   \rowcolor{sgdrow} \sgd & 3.631{\scriptsize$\pm$0.000} & \textbf{3.636}{\scriptsize$\pm$0.000} & \textbf{3.844}{\scriptsize$\pm$0.001} & 3.840{\scriptsize$\pm$0.037} & 3.813{\scriptsize$\pm$0.001} & \textbf{3.783}{\scriptsize$\pm$0.009} & \textbf{2.12}{\scriptsize$\pm$0.13} \\
   \rowcolor{sgdrow} \sam & 3.633{\scriptsize$\pm$0.000} & 3.637{\scriptsize$\pm$0.000} & 3.847{\scriptsize$\pm$0.000} & 3.864{\scriptsize$\pm$0.003} & 3.814{\scriptsize$\pm$0.002} & 3.791{\scriptsize$\pm$0.001} & 2.20{\scriptsize$\pm$0.01} \\
   \rowcolor{sgdrow} \fsam & 3.633{\scriptsize$\pm$0.000} & 3.637{\scriptsize$\pm$0.000} & 3.845{\scriptsize$\pm$0.001} & 3.844{\scriptsize$\pm$0.037} & 3.813{\scriptsize$\pm$0.001} & 3.785{\scriptsize$\pm$0.009} & \textbf{2.12}{\scriptsize$\pm$0.13} \\
    \bottomrule
  \end{tabular}}
\end{table*}

%% file: paper_tables/table_runtime_pretraining.tex
% Pretraining runtime (from the existing logs); row colour = base optimizer family. Values are hard-coded.
% Needs in the PREAMBLE:   \usepackage{booktabs}   \usepackage{graphicx}   \usepackage{xcolor}   \usepackage{colortbl}   and the paper's name macros (\muon, \soma, ...)
\makeatletter
\@ifundefined{rowcolor}{\PackageError{table}{Row colours need the colortbl package}{Add \string\usepackage{colortbl} (after xcolor) to the preamble.}}{}
\makeatother
\definecolor{adamrow}{RGB}{255,214,214}   % AdamW family (red)
\definecolor{sgdrow}{RGB}{208,226,247}    % SGD family   (blue)
\definecolor{muonrow}{RGB}{208,236,214}   % Muon family  (green)

\begin{table*}[!t]
  \centering
  \caption{Pretraining runtime of the optimizers: wall-clock seconds per epoch (100 optimizer steps). Row colour marks the base optimizer family: \muon-based (\colorbox{muonrow}{Green}), \adamw-based (\colorbox{adamrow}{Red}), \sgd-based (\colorbox{sgdrow}{Blue}).}
  \label{tab:runtime-pretraining}
  \vspace{2pt}
  \resizebox{\ifdim\width>\textwidth\textwidth\else\width\fi}{!}{%
  \footnotesize\setlength{\tabcolsep}{5pt}
  \begin{tabular}{lccc}
    \toprule
    Optimizer & FineWeb-Edu NanoGPT & FineWeb-Edu Pythia & CIFAR-10 Airbench CNN \\
    & s / epoch & s / epoch & s / epoch \\
    \midrule
    \rowcolor{muonrow} \muon & \textbf{265.2}{\scriptsize$\pm$3.6} & 287.2{\scriptsize$\pm$2.0} & 23.6{\scriptsize$\pm$0.3} \\
    \rowcolor{muonrow} \specsammuon & 772.5{\scriptsize$\pm$6.7} & 236.8{\scriptsize$\pm$1.9} & 23.9{\scriptsize$\pm$0.4} \\
    \rowcolor{muonrow} \fpsoma & 641.8{\scriptsize$\pm$129.0} & 240.8{\scriptsize$\pm$4.2} & 23.9{\scriptsize$\pm$0.6} \\
    \rowcolor{muonrow} \soma & 272.1{\scriptsize$\pm$1.8} & 294.4{\scriptsize$\pm$4.8} & 23.2{\scriptsize$\pm$2.1} \\
    \rowcolor{muonrow} \randsammuon & 267.2{\scriptsize$\pm$1.1} & 298.2{\scriptsize$\pm$2.6} & 23.1{\scriptsize$\pm$0.5} \\
    \rowcolor{muonrow} \opsomapre & 278.2{\scriptsize$\pm$0.6} & 297.1{\scriptsize$\pm$0.3} & 22.5{\scriptsize$\pm$1.4} \\
    \rowcolor{muonrow} \opsomapost & 300.1{\scriptsize$\pm$3.3} & 164.5{\scriptsize$\pm$0.8} & 13.6{\scriptsize$\pm$1.1} \\
    \rowcolor{muonrow} \msoma & 267.5{\scriptsize$\pm$1.7} & 281.7{\scriptsize$\pm$1.2} & \textbf{21.9}{\scriptsize$\pm$0.9} \\
    \rowcolor{adamrow} \adamw & 269.2{\scriptsize$\pm$5.6} & \textbf{185.5}{\scriptsize$\pm$6.4} & 22.5{\scriptsize$\pm$0.8} \\
    \rowcolor{adamrow} \saw & 842.8{\scriptsize$\pm$24.2} & 278.6{\scriptsize$\pm$38.0} & 22.8{\scriptsize$\pm$0.5} \\
    \rowcolor{adamrow} \fsaw & 637.4{\scriptsize$\pm$215.3} & 309.8{\scriptsize$\pm$2.8} & 22.9{\scriptsize$\pm$1.4} \\
    \rowcolor{sgdrow} \sgd & 276.3{\scriptsize$\pm$16.5} & 285.4{\scriptsize$\pm$3.0} & 22.9{\scriptsize$\pm$0.2} \\
    \rowcolor{sgdrow} \sam & 456.4{\scriptsize$\pm$3.0} & 237.3{\scriptsize$\pm$3.3} & 22.7{\scriptsize$\pm$1.9} \\
    \rowcolor{sgdrow} \fsam & 458.8{\scriptsize$\pm$0.7} & 236.7{\scriptsize$\pm$4.1} & 23.5{\scriptsize$\pm$0.8} \\
    \rowcolor{sgdrow} \msam & 333.1{\scriptsize$\pm$90.9} & 287.0{\scriptsize$\pm$3.0} & 22.2{\scriptsize$\pm$1.0} \\
    \bottomrule
  \end{tabular}}
\end{table*}

%% file: paper_tables/table_runtime_finetuning.tex
% Fine-tuning runtime (from the existing logs); row colour = base optimizer family. Values are hard-coded.
% Needs in the PREAMBLE:   \usepackage{booktabs}   \usepackage{graphicx}   \usepackage{xcolor}   \usepackage{colortbl}   and the paper's name macros (\muon, \soma, ...)
\makeatletter
\@ifundefined{rowcolor}{\PackageError{table}{Row colours need the colortbl package}{Add \string\usepackage{colortbl} (after xcolor) to the preamble.}}{}
\makeatother
\definecolor{adamrow}{RGB}{255,214,214}   % AdamW family (red)
\definecolor{sgdrow}{RGB}{208,226,247}    % SGD family   (blue)
\definecolor{muonrow}{RGB}{208,236,214}   % Muon family  (green)

\begin{table*}[!t]
  \centering
  \caption{Fine-tuning runtime of the optimizers: wall-clock seconds per fine-tuning run, taken from the fixed-base fine-tuning study (per job the median over its successful runs, then mean $\pm$ std over 3 pretrained bases $\times$ 3 seeds). Runs shared the GPUs with other jobs, so only large differences are meaningful. Row colour marks the base optimizer family: \muon-based (\colorbox{muonrow}{Green}), \adamw-based (\colorbox{adamrow}{Red}), \sgd-based (\colorbox{sgdrow}{Blue}).}
  \label{tab:runtime-finetuning}
  \vspace{2pt}
  \resizebox{\ifdim\width>\textwidth\textwidth\else\width\fi}{!}{%
  \footnotesize\setlength{\tabcolsep}{5pt}
  \begin{tabular}{lcc}
    \toprule
    Optimizer & FineWeb-Edu NanoGPT & FineWeb-Edu Pythia \\
    & s / run & s / run \\
    \midrule
    \rowcolor{muonrow} \muon & 48.6{\scriptsize$\pm$18.2} & 25.4{\scriptsize$\pm$7.0} \\
    \rowcolor{muonrow} \specsammuon & 82.0{\scriptsize$\pm$33.7} & 58.6{\scriptsize$\pm$21.4} \\
    \rowcolor{muonrow} \fpsoma & 84.2{\scriptsize$\pm$49.9} & 50.9{\scriptsize$\pm$20.2} \\
    \rowcolor{muonrow} \soma & 36.0{\scriptsize$\pm$2.5} & 27.8{\scriptsize$\pm$6.8} \\
    \rowcolor{muonrow} \randsammuon & \textbf{31.2}{\scriptsize$\pm$0.1} & 25.7{\scriptsize$\pm$5.8} \\
    \rowcolor{muonrow} \opsomapre & 42.0{\scriptsize$\pm$16.0} & 26.7{\scriptsize$\pm$5.7} \\
    \rowcolor{muonrow} \opsomapost & 34.8{\scriptsize$\pm$0.0} & 22.0{\scriptsize$\pm$0.1} \\
    \rowcolor{adamrow} \adamw & 33.7{\scriptsize$\pm$11.6} & 25.0{\scriptsize$\pm$5.6} \\
    \rowcolor{sgdrow} \sgd & 36.2{\scriptsize$\pm$13.2} & \textbf{21.1}{\scriptsize$\pm$0.2} \\
    \rowcolor{sgdrow} \sam & 84.3{\scriptsize$\pm$51.9} & 41.2{\scriptsize$\pm$6.6} \\
    \rowcolor{sgdrow} \fsam & 79.9{\scriptsize$\pm$53.5} & 44.3{\scriptsize$\pm$12.7} \\
    \bottomrule
  \end{tabular}}
\end{table*}